\newcommand{\comment}[1]{}
\date{}
\theoremstyle{plain}
\newtheorem{theorem}{Theorem}
\newtheorem{lemma}{Lemma}[section]
\newtheorem{corollary}{Corollary}
\theoremstyle{definition}
\newtheorem{assumption}{Assumption}
\crefname{equation}{}{}
\Crefname{equation}{}{}
\crefname{equation}{}{}
\Crefname{equation}{}{}
\newcommand{\ps}[1]{ \textcolor{blue}{{\small [PS: #1]}}}
\newcommand{\yj}[1]{ \textcolor{purple}{\small[\textsc{YJ:} #1]}}
\definecolor{gr}{rgb}{0.25, 0.25, 0.25}
\newcommand{\ibd}{\mathbf{I}_d}
\newcommand{\hb}{\mathbf{H}}
\newcommand{\nub}{\overline{\nu}}
\newcommand{\rb}{\mathbf{r}}
\newcommand{\ec}{M}
\newcommand{\ic}{m}
\newcommand{\kbar}{{\overline{K}}}
\newcommand{\lff}{f}
\newcommand{\lf}{F_\ic}
\newcommand{\tilo}{\tilde{\mathcal{O}}}
\newcommand{\tb}{\mathbf{t}}
\newcommand{\db}{\mathbf{d}}
\newcommand{\ssbl}{\overline{\mathbf{S}}}
\newcommand{\qbl}{\overline{\mathbf{q}}}
\newcommand{\qbt}{\widetilde{\mathbf{q}}}
\newcommand{\rbl}{\overline{\mathbf{r}}}
\newcommand{\dbl}{\overline{\mathbf{d}}}
\newcommand{\rbt}{\widetilde{\mathbf{r}}}
\newcommand{\wb}{\mathbf{w}}
\newcommand{\xb}{\mathbf{x}}
\newcommand{\tbb}{\mathbf{T}}
\newcommand{\hbbt}{\widehat{\mathbf{H}}}
\newcommand{\hbbl}{\widetilde{\mathbf{H}}}
\newcommand{\ib}{\mathbf{I}}
\newcommand{\bdat}{\mathcal{B}_\ic}
\newcommand{\defeq}{\vcentcolon=}
\newcommand{\lr}{\eta}
\newcommand{\lrt}{\widetilde{\eta}}
\newcommand{\expt}{\mathbb{E}}
\newcommand{\nbr}[1]{{\left\| {#1} \right\|}}
\newcommand{\bigto}[1]{\widetilde{\mathcal{O}}\left({#1}\right)}
\newcommand\inner[2]{\langle #1, #2 \rangle}
\newcommand{\cycp}{CyCP\xspace}
\newcommand{\st}{\mathcal{S}}
\theoremstyle{plain}
\newtheorem*{thm*}{Theorem}
\newtheorem{coro}{Corollary}
\newcommand{\norm}[1]{\left\lVert#1\right\rVert^{2}}
\crefname{equation}{}{}
\Crefname{equation}{}{}
\crefname{clm}{claim}{claims}
\Crefname{clm}{Claim}{Claims}
\Crefname{coro}{Corollary}{Corollaries}
\Crefname{sec}{Section}{Sections}
\crefname{app}{appendix}{appendices}
\Crefname{app}{Appendix}{Appendices}
\crefname{prop}{proposition}{propositions}
\Crefname{prop}{Proposition}{Propositions}
\Crefname{propty}{Property}{Properties}
\crefname{figure}{fig.}{figures}
\Crefname{figure}{Fig.}{Figures}
\crefname{defn}{definition}{definitions}
\Crefname{defn}{Definition}{Definitions}
\crefname{fact}{fact}{facts}
\Crefname{fact}{Fact}{Facts}
\crefname{appendix}{appendix}{appendices}
\Crefname{appendix}{Appendix}{Appendices}
\crefname{algo}{algorithm}{algorithms}
\Crefname{algo}{Algorithm}{Algorithms}
\crefname{algorithm}{algorithm}{algorithms}
\Crefname{algorithm}{Algorithm}{Algorithms}
\crefname{tbl}{table}{table}
\Crefname{tbl}{Table}{Table}
\crefname{table}{table}{table}
\Crefname{table}{Table}{Table}
\crefname{algorithm}{algorithm}{algorithms}
\Crefname{algorithm}{Algorithm}{Algorithms}
\crefname{conj}{conjecture}{conjectures}
\Crefname{conj}{Conjecture}{Conjectures}
\crefname{obs}{observation}{observations}
\Crefname{obs}{Observation}{Observations}
\newcommand{\bx}{{\bf x}}
\newcommand{\by}{{\bf y}}
\newcommand{\mc}{\mathcal}
\newcommand{\mco}{\mathcal O}
\newcommand{\mbb}{\mathbb}
\newcommand{\mbf}{\mathbf}
\newcommand{\mbe}{\mathbb E}
\newcommand{\lp}{\left(}
\newcommand{\rp}{\right)}
\newcommand{\lcb}{\left\{}
\newcommand{\rcb}{\right\}}
\newcommand{\lnr}{\left\|}
\newcommand{\rnr}{\right\|}
\newcommand{\lan}{\left\langle}
\newcommand{\ran}{\right\rangle}
\newcommand{\G}{\nabla}
\newcommand{\nn}{\nonumber}
\title{\textbf{On the Convergence of Federated Averaging \\
with Cyclic Client Participation}}
\author{
Yae Jee Cho \\
\small Carnegie Mellon University\\
\small \texttt{\href{mailto:yaejeec@andrew.cmu.edu}{yaejeec@andrew.cmu.edu}} 
\and
Pranay Sharma \\
\small Carnegie Mellon University\\
\small \texttt{\href{mailto:pranaysh@andrew.cmu.edu}{pranaysh@andrew.cmu.edu}} 
\and
Gauri Joshi \\
\small Carnegie Mellon University\\
\small \texttt{\href{mailto:gaurij@andrew.cmu.edu}{gaurij@andrew.cmu.edu}}\\
\and
Zheng Xu \\
\small Google Research\\
\small \texttt{\href{mailto:xuzheng@google.com}{xuzheng@google.com}} 
\and
Satyen Kale\\
\small Google Research\\
\small \texttt{\href{mailto:satyenkale@google.com}{satyenkale@google.com}} 
\and
Tong Zhang\\
\small Google Research and HKUST\\
\small \texttt{\href{mailto:tozhang@google.com}{tozhang@google.com}} 
}
\begin{document}
\maketitle


\begin{abstract}
Federated Averaging (FedAvg) and its variants are the most popular optimization algorithms in federated learning (FL). Previous convergence analyses of FedAvg either assume full client participation or partial client participation where the clients can be uniformly sampled. However, in practical cross-device FL systems, only a subset of clients that satisfy local criteria such as battery status, network connectivity, and maximum participation frequency requirements (to ensure privacy) are available for training at a given time. As a result, client availability follows a \textit{natural cyclic pattern}. 
We provide (to our knowledge) the first theoretical framework to analyze the convergence of FedAvg with cyclic client participation with several different client optimizers such as GD, SGD, and shuffled SGD. Our analysis discovers that cyclic client participation can achieve a faster asymptotic convergence rate than vanilla FedAvg with uniform client participation under suitable conditions, providing valuable insights into the design of client sampling protocols. 
\end{abstract}

\section{Introduction} \label{sec:intro}
Federated learning (FL) is a distributed learning framework that enables edge clients (e.g., mobile phones, tablets) to collaboratively train a machine learning (ML) model without sharing their local data~\cite{mcmahan2017communication}. 
In cross-device FL \citep{kairouz2019advances}, millions of mobile devices are orchestrated by a central server for training, and only a subset of client devicess will participate in each communication round due to intermittent connectivity and resource constraints~\cite{bonawitz2019towards}.

Federated Averaging (FedAvg) \citep{mcmahan2017communication} and its variants \citep{reddi2020adaptive,sahu2019federated,wang2021field} are the most popular algorithms in FL. In each communication round of the generalized FedAvg framework \citep{reddi2020adaptive,wang2021field}: 1) the server broadcasts the current model to a subset of clients, 2) clients update the model with local data and send back the local model update, and 3) the server aggregates clients' model updates and computes the new global model. This algorithm is popular in practice for various reasons including the compatibility with FL system implementation \citep{bonawitz2019towards} and additional privacy techniques such as differential privacy \citep{mcmahan2017learning} and secure aggregation \citep{bonawitz2016practical}.

The convergence of (generalized) FedAvg (also known as local SGD) has been studied in many recent works \citep{li2019on,woodworth2020minibatch,wang2022unreasonable,karimireddy2019scaffold} due to its popularity in practice. 
While these analyses tackle the theoretical challenge of data heterogeneity, they assume either full client participation where all clients will participate every round, or partial client participation where the clients are chosen uniformly at random from the entire set of clients. However, in practical cross-device FL systems, clients can only participate in training when local criteria such as being idle, plugged in for charging,
and on an unmetered network are satisfied \citep{bonawitz2019towards,hard2018federated,paulik2021federated,huba2022papaya}. Works like \citet{yang2018applied,eichner2019semi,zhu2021diurnal} observe client participation to have a diurnal pattern, and  \citet{balle2020privacy,kairouz2019advances,wang2021field} discuss the difficulty of controlling the sampling of clients for participation. Motivated by differential privacy \citep{kairouz2021practical}, \citet{mcmahan2022federated} seeks to limit the contribution of each client by allowing it to participate at most once in a large time window. For these reasons, clients typically participate in training with a \textit{cyclic pattern} in practical FL systems. 



In this work, we provide the first (to the best of our knowledge) convergence analysis of federated averaging with cyclic client participation. We consider that clients are implicitly divided into groups, and the groups become available to the server in a cyclic order. We show that for a global PL objective~\cite{haddadpour2019local}, instead of the standard $\mathcal{O}\left({1/T}\right)$ rate of error convergence achieved by FedAvg, where $T$ is the number of communication rounds, cyclic client participation can achieve a faster $\bigto{1/T^2}$ convergence under suitable conditions, where $\bigto{\cdot}$ subsumes all log-terms and constants. This key insight is similar to that obtained by a recent work \cite{yun2022shuf} on the convergence of mini-batch and local-update shuffle SGD, which shows the fast convergence of local data shuffling at clients under the full (rather than cyclic and partial) client participation setting (see \Cref{sec:relshuff} for more details). 

Our analysis framework covers several cases of cyclic participation and different client optimizers: 
1) it includes the subsampling of a subset of clients from each group that becomes cyclically available, 2) it captures how the number of groups within a cycle or the data heterogeneity characteristics of the client groups affect convergence, and 3) it covers different client local procedures including gradient descent (GD), stochastic gradient descent (SGD), and shuffled SGD (SSGD). As a result of this generality, several well-studied FedAvg variants such as standard FedAvg with partial client participation~\cite{li2019on, div2022parcli} 
, minibatch RR and local RR~\cite{yun2022shuf} can become special cases of our framework. We show that our bounds match with the bounds from prior works in these special cases, corroborating the validity of our results. We also present preliminary experimental results to demonstrate that cyclic client participation indeed achieves better performance in terms of test accuracy and training loss convergence compared to standard FedAvg. 

\section{Related Work}
\label{sec:related}


\subsection{Client Participation in FL.} 
Due to the large total number of clients in cross-device FL, it is inevitable to select only a subset of clients per training round. Therefore, there has been a plethora of work related to client participation in FL~\cite{kairouz2019advances,li2020federated_spmag}. Most work has focused on analyzing FedAvg with unbiased partial client participation~\cite{yang2021achieving_iclr,div2022parcli} and showing a convergence rate of $\mc{O}\left({1/T}\right)$. While some work in FL has also considered biased partial client participation for flexible client participation~\cite{ruan2020flexible} or loss-dependent client participation~\cite{yjc2020csfl,jack2019afl}, cyclic participation patterns have not been considered in these previous work. 

Another related line of work is the analyses on arbitrary client participation presented in recent work~\cite{wang2022arbicli,dimit2021arbifl}. \citet{wang2022arbicli} proposes classes of different client participation patterns where cyclic client participation goes under the regularized participation class. However, due to the generality of the formulation, their analysis does not capture important characteristics such as how the ordering of the clients or the number of client groups within a cycle affects the convergence. \citet{dimit2021arbifl} analyzes FedAvg with clients sending their local updates in an asynchronous manner, where each client has its own-defined cycle interval for sending its updates. However, such framework does not simulate the cyclic pattern that a realistic FL system observes where groups of clients sequentially become available to the server.   

Cyclic client participation has only recently been viewed in FL through the lens of privacy~\cite{kairouz2021privswor,choquette2022multi} and communication-efficiency~\cite{zhu2022agesel}. While \citet{kairouz2021privswor} shows that cyclic client participation can improve privacy guarantees in FL, its convergence properties are not examined. \citet{zhu2022agesel} shows that selecting clients based on their participation frequencies can speed up convergence with the rate $\mathcal{O}\left({1/TV}\right)$ where $V$ is a constant depending on the variance arising from the data heterogeneity with partial client participation. However, the exact rate of the convergence speed-up is unclear due to the lack of bounds for the variable $V$. In contrast to this prior work, we provide the convergence for cyclic client participation in FL where the speed-up rate is clear (at the rate $\bigto{1/T^2}$) and the conditions under which it can be achieved are identified. This speedup relies on analyzing FedAvg with cyclic participation from the perspective of shuffling-based methods which we explain in more detail below.

\comment{
    \begin{itemize}
        \item Initial works \cite{konecny2016federated, mcmahan2017communication}; survey papers \cite{kairouz2019advances, li2020federated_spmag}
        \item Local SGD guarantees: \cite{stich2018local, khaled2020tighter, koloskova2020unified}
        \item partial client participation: \cite{yang2021achieving_iclr}
    \end{itemize}}
\subsection{Shuffling-based methods.} \label{sec:relshuff}
The initial progress on shuffling-based methods was made by \cite{gurbuzbalaban2019convergence_siamopt, gurbuzbalaban2021random_mathprog} for strongly-convex quadratics.
The general idea in these, and subsequent works is that since shuffling-based methods involve using each component function \textit{exactly once} in each epoch, the progress made by these methods within an epoch approximates that of full-batch gradient descent. 

The literature on shuffling-based methods mainly focuses on three kinds of epochs: (i) random reshuffling (RR), where the data is shuffled after every epoch, (ii) shuffle once (SO), where the data is shuffled just once at the beginning, and (iii) incremental gradient (IG) method, in which the data is not shuffled at all, and follows a predetermined order in each epoch. We shall see in \cref{sec:pf} (and more so in \cref{theo1:GD}) that cyclic client participation essentially approximates an incremental gradient method at the level of the server, with each client interpreted as a sample.


Recent work has established upper and lower bounds for shuffle SGD under these shuffling schemes. For RR (and SO), \cite{safran2020good_colt} showed a lower bound of $\mco (\frac{1}{n^2 K^2} + \frac{1}{K^3})$ for strongly convex quadratic ($K$ is the number of epochs, $n$ is the number of samples), while \cite{rajput2020closing_icml} showed $\mco (\frac{1}{n K^2})$ lower bound for general strongly-convex $F$ with smooth $\{ f_i \}$. Matching upper bounds have been achieved in the large epoch regime by \cite{ahn2020sgd_neurips, mishchenko2020random_neurips} for smooth PL functions and \cite{nagaraj2019sgd_icml} for smooth strongly convex functions. For IG, \cite{nguyen2021shuffl_jmlr} showed $\mco(\nicefrac{1}{K^2})$ rate for strongly-convex $F$ with smooth $\{ f_i \}$. The improved dependence on $K$ in all these works requires $K$ to be larger than $\mco(\kappa^a)$, where $\kappa$ is the condition number of the problem, and $a \in [1,2]$. This large epoch requirement has been shown to be essential in \cite{safran2021random_neurips}.



\section{Problem Formulation}\vspace{-0.3em}
\label{sec:pf}

\textbf{System Model and Objectives.} Consider a cross-device FL setting where we have $M$ total clients. Each client $\ic\in[\ec]$ has its local training dataset $\bdat$ and its corresponding local empirical loss function $F_\ic(\wb)=\frac{1}{|\bdat|}\sum_{\xi \in \bdat} \ell(\wb,\xi)$, where $\ell(\wb,\xi)$ is the loss value for the model $\wb \in \mathbb{R}^d$ at data sample $\xi$. The optimization task is identical to that of standard FL~\cite{mcmahan2017communication, kairouz2019advances} where the global objective is $F(\wb)=\frac{1}{M}\sum_{\ic=1}^{\ec}F_\ic(\wb)$ and the server aims to find the model that achieves $\min_{\wb}F(\wb)$. Throughout the paper, all vector and matrix norms are Euclidean and spectral norms, respectively. 

\begin{wrapfigure}{!r}{0.5\textwidth} \small
\centering \vspace{-2em}
\includegraphics[width=0.48\textwidth]{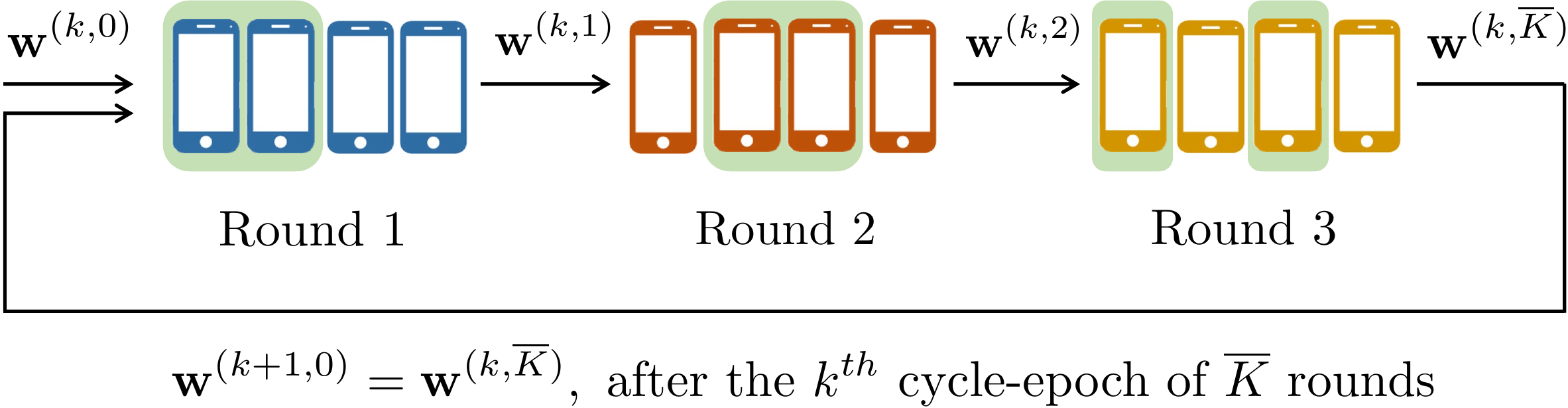}
\caption{Illustration of cyclic client participation (CyCP) with $M = 12$ clients divided into $\kbar = 3$ groups. In each communication round, $N = 2$ clients are selected for training from the client group available at that time. All groups are traversed once in a \emph{cycle-epoch} consisting of $\kbar$ communication rounds. \label{fig:swor_illust}} \vspace{-2em}
\end{wrapfigure}

\textbf{Cyclic Client Participation (\cycp).} We consider that the $M$ clients are divided into $\kbar$ non-overlapping client groups such that each group contains $M/\kbar$ clients, as illustrated in \Cref{fig:swor_illust}. The client groups are denoted by $\sigma(i),~i\in[\kbar]$, where each $\sigma(i)$ contains the associated clients' indices. The groups and the order in which they are traversed by the server (say, $\sigma(1),~...~,\sigma(\kbar)$) are pre-determined and fixed throughout training to simulate a cyclic structure of client participation. In each communication round, once a client group $\sigma(i)$ becomes available, the server selects a subset of $N$ clients from $\sigma(i)$ uniformly at random without replacement. As a result of the cyclic structure, and subsampling within each available client group, once selected, a client cannot participate in training at least for the next  $\kbar-1$ rounds. For brevity, we call this cyclic client participation framework as \cycp throughout the paper.

Observe that the CyCP framework reflects several practical FL scenarios mentioned in \cref{sec:intro}. Each client can participate at most once in $\kbar$ consecutive communication rounds in CyCP, which satisfies the privacy requirements~\cite{kairouz2021privswor,choquette2022multi}. A timer on each client can be used to enforce that clients can only participate in training again after a certain period, and this period corresponds to $\kbar$. Even without enforcing the timer-based criterion, CyCP captures the natural participation pattern due to clients coming from different time zones or preference of charging their devices~\cite{paulik2021federated,huba2022papaya,yang2018applied}. 

We introduce the term ``cycle-epoch'' to refer to the interval in which the server goes through all the client groups $\{ \sigma(i),~i\in[\kbar] \}$ once sequentially. In other words, each cycle-epoch consists of $\kbar$ communication rounds. Formally, we set $k$ as the index for the cycle-epoch, and $i\in[\kbar]$ as the index for the currently available client group within a cycle-epoch. The server sends the global model $\wb^{(k,i-1)}$ to the set $\st^{(k,i)}$ of $N$ clients, selected from the client group $\sigma(i)$, to perform local training. We consider three different types of client local updates which we explain in detail below.  


\textbf{Client Local Update.} Each client $m \in \st^{(k,i)}$ initializes its local model as $\wb_m^{(k,i-1,0)}=\wb^{(k,i-1)}$ and performs local update(s).
 The global model is updated as:
\begin{align}
\wb^{(k,i)}=\wb^{(k,i-1)}+\Delta^{(k,i-1)}
\end{align}
where $\Delta^{(k,i-1)}$ is the aggregate of local updates from clients in $\st^{(k,i)}$. We consider three different client local update procedures for our \cycp framework. 
\vspace{-1em}
\begin{itemize}[leftmargin=*]
    \item[(i)] Local Gradient Descent (GD): Selected clients perform a single GD step to update their local model. Therefore,
    \begin{align}
        \Delta^{(k,i-1)}=-\frac{\eta}{N} \sum_{m\in\st^{(k,i)}}\nabla F_m(\wb^{(k,i-1)}) \nn
    \end{align}
The resulting global algorithm is referred to as FedSGD in \citep{mcmahan2017communication}.

   \item[(ii)] Local Stochastic Gradient Descent (Local SGD): To avoid the cost of computing full gradients, 
   each client performs $\tau$ local updates to its model using stochastic gradients $\nabla F_m (\wb_m^{(k,i-1,l)} ,\xi_m^{(k,i-1,l)})$ 
   computed using a minibatch $\xi_m^{(k,i-1,l)}$ sampled uniformly at random from client $\ic$'s local dataset $\mathcal{B}_\ic$. 
   Thus, the client's model update is 
   \begin{align}
    \Delta^{(k,i-1)}\hspace{-0.3em}=\hspace{-0.3em}-\frac{\lr}{N}\hspace{-0.5em}\sum_{m\in\mathcal{S}^{(k,i)}}\hspace{-0.3em}\sum_{l=0}^{\tau-1} \nabla F_m(\wb_m^{(k,i-1,l)},\xi_m^{(k,i-1,l)})\nn
     \end{align} 
     with \scalebox{0.9}{$\wb_m^{(k,i-1,l+1)}=\wb_m^{(k,i-1,l)}-\lr\nabla F_m(\wb_m^{(k,i-1,l)},\xi_m^{(k,i-1,l)})$.}
   
   \item[(iii)] Local Shuffled SGD (SSGD): Recent works in FL \cite{yun2022shuf,malinovsky2021random} propose the use of local SSGD, where clients partition their local datasets into $B$ disjoint \emph{components}, that is, the local loss at client $m$ can be expressed as $F_m(\wb)=\frac{1}{B}\sum_{l=0}^{B-1} F_{m,l}(\wb)$. We define $\mathcal{P}_B$ to be the set of all permutations of $\{0,...,B-1\}$. In each round, the client performs local updates by going over all the components, in an order decided by the random permutation $\pi_m^k\sim\text{Unif}(\mathcal{P}_B)$. 
   The resulting model update is 
\begin{align}
\Delta^{(k,i-1)}=-\frac{\lr}{N}\sum_{m\in\mathcal{S}^{(k,i)}}\sum_{l=0}^{B-1}\nabla F_{m,\pi_m^k(l)}(\wb_m^{(k,i-1,l)}) \nn
\end{align}
with \scalebox{0.9}{$\wb_m^{(k,i-1,l+1)}=\wb_m^{(k,i-1,l)}-\lr\nabla F_{m,\pi_m^k(l)}(\wb_m^{(k,i-1,l)})$.}
\end{itemize} 
Further details of our framework of FL with \cycp are shown in \Cref{algo1}. 


\textbf{Special Cases of \cycp.} The \cycp framework covers different algorithms such as standard FedAvg with partial client participation or minibatch RR and local RR presented in \cite{yun2022shuf}. When $\kbar=1$, the \cycp setting becomes \emph{standard FedAvg} with partial client participation where in each round, $N$ clients are sampled from the same entire client population. 
When $\kbar=1$ and $N=M$, \cycp with local GD becomes identical to minibatch RR \cite{yun2022shuf} with $M$ clients, each with a single component. Both converge exponentially fast to the optimum.
Another special case is when we have $\kbar=1$ with $N=M$ but for local SSGD, in which case we have local RR \cite{yun2022shuf} with $B$ components at each client, with synchronization of the updates happening for every $B$ components. We show in \Cref{sec:theo} that our theoretical results match the bounds accordingly for these special cases.

\setlength{\textfloatsep}{1em}
\begin{algorithm}[!h] \small
\caption{\small \cycp Framework in FL}\label{algo1}
\renewcommand{\algorithmicloop}{\textbf{Global server do:}}
\begin{algorithmic}[1]
\STATE \textbf{Input: }Global Model $\wb^{(1,0)}$, Client groups $\sigma(i),i\in[\kbar]$ 
\STATE \textbf{Output: }Global Model $\wb^{K+1,0}$
\STATE \colorbox{green!8}{\hrulefill For $k\in[K]$ cycle-epochs do:~~~~\small{\# \textit{Cyclic Participation}}}
\\
\STATE \colorbox{green!8}{\hrulefill \hspace*{1em} \hrulefill For $i\in[\kbar]$ do:~~~~~~~~~~~~~{\small{\# $T=K\kbar$ \textit{comm. rounds}}}}\\
\STATE \hspace*{2em} Sample $N$ clients from client set $\sigma(i)$ uniformly at random w/o replacement to get client set $\mathcal{S}^{(k,i)}$. 
\STATE \hspace*{2em} Send global model $\wb^{(k,i-1)}$ to clients in $\mathcal{S}^{(k,i)}$.\\
\STATE \hspace*{2em} {Clients $\ic\in\mathcal{S}^{(k,i)}$ in parallel do:}\\
\STATE \hspace*{3em} {$\wb_m^{(k+1,0)} \leftarrow$ \textbf{LocalUpdate}($m,~\wb^{(k,i-1)}, case$)}
\\
\STATE \hspace*{2em} $\wb^{(k,i)}=\frac{1}{N}\sum_{m\in\st^{(k,i)}}\wb_\ic^{(k+1,0)}$\\
\STATE \hspace*{1em} $\wb^{(k+1,0)}=\wb^{(k,\kbar)}$\\
\hspace*{1em}\\
\textbf{LocalUpdate}($m,~\wb,~case$):
\STATE \hspace*{1em} {Set local model $\wb_\ic=\wb$}\\
\STATE {\hspace*{1em} {if $case==LocalGD$:}}
\STATE {\hspace*{2em}Update $\wb_\ic\leftarrow\wb_\ic-\lr\nabla F_\ic(\wb_\ic)$} \\
\STATE {\hspace*{1em} {elif $case==Local SGD$:}}
\STATE {\hspace*{2em}For $j\in[\tau]$ do:}\\
\STATE {\hspace*{3em}Sample mini-batch $\xi$ from local dataset $\mathcal{B}_m$} \\
\STATE {\hspace*{3em}Update $\wb_\ic\leftarrow\wb_\ic-\lr\nabla F_\ic(\wb_\ic,~\xi_\ic)$} \\
\STATE {\hspace*{1em} {elif $case==Shuffled SGD$:}}\\
\STATE {\hspace*{2em}Sample $\pi_m^k\sim\text{Unif}(\mathcal{P}_B)$}\\
\STATE {\hspace*{2em}For $j\in[B]$ do:}
\STATE {\hspace*{3em}Update $\wb_\ic\leftarrow\wb_\ic-\lr\nabla F_{\ic,{\pi_\ic^k(j-1)}}(\wb_\ic)$} 
\end{algorithmic} 
\end{algorithm}

\section{Convergence Analysis}
\label{sec:theo}
In this section, we provide and compare the convergence bounds for \cycp in FL for the three client local update methods described above, and provide insights into how the achieved complexities with \cycp ($\kbar>1$) compare with standard FedAvg ($\kbar=1$). 
All the proofs are deferred to \Cref{app:CCP_localGD}-\ref{app:ccp+ssgd}.

\subsection{Assumptions}
\label{sec:theo-ass}
First, we present the assumptions used for the convergence guarantees in this work. 
\begin{assumption}[Smoothness of $F_m(\wb),~\forall~m$] The clients' local objective functions $F_1(\wb),~...,F_M(\wb)$, are all $L$-smooth, that is, $\|\nabla F_m(\wb)-\nabla F_m(\wb')\| \leq L \|\wb-\wb'\|$ for all $m$, $\wb$ and $\wb'$.
\label{as1}
\end{assumption} 
\begin{assumption}[$\mu$-Polyak-Łojasiewicz $F(\wb)$]
For some $\mu > 0$, the global objective satisfies $\frac{1}{2}\|\nabla F(\wb)\|^2\geq\mu(F(\wb)- \min_{\wb'} F(\wb'))$ for all $\wb$. \label{as2}
\end{assumption}
\Cref{as1}, \ref{as2} are common in the optimization and FL literature~\cite{haddadpour2019convergence,karimi2020linpl,haddadpour2019local,gower2021sgdpl}. While we restrict ourselves to PL functions for brevity and for ease of comparison with prior work \cite{yun2022shuf}, the analyses can be generalized to general nonconvex functions using techniques proposed in \cite{li2021convergence_KL}. 

Next, we present the assumptions over the client groups $\sigma(1), \dots, \sigma(\kbar)$.


\begin{assumption}[Intra-group \& Inter-Group Data Heterogeneity] There exist constants $\gamma,~\alpha\geq0$, such that for all $\wb$, for all $i\in[\kbar]$ and for all $m\in\sigma(i)$, $\| \nabla F_m(\wb)-\frac{1}{|\sigma(i)|}\sum_{m\in\sigma(i)}\nabla F_m(\wb) \| \leq \gamma$, and $\| \frac{1}{|\sigma(i)|}\sum_{m\in\sigma(i)}\nabla F_m(\wb)-\nabla F(\wb) \| \leq \alpha$.
\label{as3}
\end{assumption}

\Cref{as3} 
bounds the data heterogeneity across clients \textit{within a group} by $\gamma$ and the data heterogeneity \textit{across groups} by $\alpha$.
%
\Cref{as3} also implies the commonly used data heterogeneity assumption used in previous  FL literature~\cite{yu2019linear,koloskova2020unified,wang2020tackling,wang2020slowmo,reddi2020adaptive} as follows: 
\begin{lemma}
If \Cref{as3} is true, there exists $\nu=\gamma+\alpha\geq0$ such that $\nbr{\nabla F_m(\wb)-\frac{1}{M}\sum_{i=1}^M\nabla F_i(\wb)}\leq \nu$ for all clients $m\in[M]$, and for all $\wb$. \label{lem1}
\end{lemma}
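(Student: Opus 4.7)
The plan is to use a straightforward triangle-inequality decomposition. For any client $m \in [M]$, there is a unique group index $i \in [\kbar]$ with $m \in \sigma(i)$. Writing $\bar{\nabla} F_{\sigma(i)}(\wb) \defeq \frac{1}{|\sigma(i)|}\sum_{m' \in \sigma(i)} \nabla F_{m'}(\wb)$ for the group-average gradient, I would add and subtract this quantity:
\begin{align*}
\nabla F_m(\wb) - \nabla F(\wb) = \bigl( \nabla F_m(\wb) - \bar{\nabla} F_{\sigma(i)}(\wb) \bigr) + \bigl( \bar{\nabla} F_{\sigma(i)}(\wb) - \nabla F(\wb) \bigr).
\end{align*}
Since $F(\wb) = \frac{1}{M}\sum_{i=1}^M F_i(\wb)$ implies $\nabla F(\wb) = \frac{1}{M}\sum_{i=1}^M \nabla F_i(\wb)$, the second bracket is precisely the inter-group deviation quantity appearing in Assumption 3.

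Applying the triangle inequality and then invoking the two bounds in Assumption 3 directly gives
\begin{align*}
\| \nabla F_m(\wb) - \nabla F(\wb) \| \leq \gamma + \alpha,
\end{align*}
so setting $\nu \defeq \gamma + \alpha$ completes the proof. Since $m$ was an arbitrary client and $\wb$ an arbitrary point, the bound holds uniformly over all $m \in [M]$ and all $\wb \in \mathbb{R}^d$.

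There is no real obstacle here: the lemma is essentially a one-line consequence of splitting the overall heterogeneity into its intra-group and inter-group components and applying the triangle inequality. The only subtlety worth flagging is the implicit use of equal group sizes $|\sigma(i)| = M/\kbar$, which is part of the CyCP setup; this is what makes the simple unweighted average $\frac{1}{\kbar}\sum_i \bar{\nabla} F_{\sigma(i)}(\wb)$ coincide with $\nabla F(\wb) = \frac{1}{M}\sum_{m=1}^M \nabla F_m(\wb)$, so the inter-group bound from Assumption 3 applies without modification.
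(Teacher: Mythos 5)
Your proof is correct and is exactly the argument the paper intends: the lemma follows immediately from adding and subtracting the group-average gradient and applying the triangle inequality to the two bounds in Assumption 3. Your closing remark about equal group sizes is unnecessary, since the second bound in Assumption 3 is stated directly between the group average and $\nabla F(\wb)$, so no identity relating the unweighted average of group averages to $\nabla F(\wb)$ is ever needed.
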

Using \Cref{as3} instead of the standard assumption allows us to derive tighter convergence bounds in terms of $\gamma$ and $\alpha$, and separate the effect of the two kinds of heterogeneity, as we discuss in subsequent sections.

\subsection{Convergence for \cycp with Local GD}
First, we start with providing the convergence of the global model in \cycp with local GD.
\begin{theorem}[Convergence with CyCP+GD] With Assumptions \ref{as1}, \ref{as2}, \ref{as3}, the choice of step-size $\lr=\log(MT^2/\kbar^2)/\mu N T$, and number of communication rounds $T\geq 7\kappa\kbar\log{(MT^2/\kbar^2)}$ where $\kappa=L/\mu$: 
\begin{align}
 &\expt[F(\wb^{(K,0)})]-F^*\leq\frac{\kbar^2(F(\wb^{(0,0)})-F^*)}{MT^2}+\tilo\left(\frac{\kappa^2(\kbar-1)^2\alpha^2}{\mu T^2}\right)+\tilo\left(\frac{\kbar\kappa\gamma^2}{\mu NT}\left(\frac{M/\kbar-N}{M/\kbar-1}\right)\right), \label{eq:theo-locgd-comm}
\end{align}
where $\tilo(\cdot)$ subsumes all log-terms and constants. 
\label{theo1:GD}
\end{theorem}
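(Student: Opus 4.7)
The plan is to analyze one full cycle-epoch (of $\kbar$ consecutive rounds) as a single ``super-step'' at the server and then iterate the resulting recursion across the $K = T/\kbar$ cycle-epochs. At the server level, cycling through the $\kbar$ groups in a fixed order with subsampling within each behaves like an incremental-gradient (IG) method where each group plays the role of one ``sample'' whose gradient estimate is a size-$N$ sample mean from within that group; the bound should then emerge from a standard IG-under-PL style analysis, with the roles of samples-per-epoch and number-of-epochs played by $\kbar$ and $K$, with additional variance reduction coming from the size-$N$ subsampling within each group.

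First I would write the cycle-epoch update in aggregated form
\begin{equation}
\wb^{(k+1,0)} = \wb^{(k,0)} - \eta \sum_{i=1}^{\kbar} \bar{g}^{(k,i)}, \qquad \bar{g}^{(k,i)} \defeq \frac{1}{N}\sum_{m \in \st^{(k,i)}} \nabla F_m(\wb^{(k,i-1)}),
\end{equation}
and decompose each $\bar{g}^{(k,i)}$ around the start-of-epoch iterate $\wb^{(k,0)}$ as
\begin{equation}
\bar{g}^{(k,i)} = \nabla F(\wb^{(k,0)}) + e^{\mathrm{grp}}_i(\wb^{(k,0)}) + e^{\mathrm{drift}}_i + e^{\mathrm{samp}}_i,
\end{equation}
where $e^{\mathrm{grp}}_i = \nabla F_{\sigma(i)} - \nabla F$ is the deterministic inter-group bias (bounded by $\alpha$ via \Cref{as3}; crucially, it satisfies the telescoping identity $\sum_{i=1}^{\kbar} e^{\mathrm{grp}}_i \equiv 0$ since the groups have equal size and partition $[M]$), $e^{\mathrm{drift}}_i = \nabla F_{\sigma(i)}(\wb^{(k,i-1)}) - \nabla F_{\sigma(i)}(\wb^{(k,0)})$ captures intra-epoch drift, and $e^{\mathrm{samp}}_i$ is the zero-mean sampling error from drawing $N$ clients without replacement from the size-$M/\kbar$ group $\sigma(i)$, independent across $i$.

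Next I would bound each error component: the sampling variance satisfies $\expt\|e^{\mathrm{samp}}_i\|^2 \leq \frac{\gamma^2}{N}\cdot\frac{M/\kbar-N}{M/\kbar-1}$ by the standard finite-population without-replacement variance formula applied to \Cref{as3}; the drift $\|\wb^{(k,i-1)}-\wb^{(k,0)}\|^2$ is bounded recursively in $i$, its dominant contribution being the accumulated inter-group bias $O(\eta^2(i-1)^2\alpha^2)$, which together with $L$-smoothness gives $\|e^{\mathrm{drift}}_i\|^2 \lesssim L^2\eta^2(i-1)^2\alpha^2$ plus lower-order terms; and $e^{\mathrm{grp}}_i$ is handled via its telescoping. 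I would then apply $L$-smoothness to $F(\wb^{(k+1,0)}) - F(\wb^{(k,0)})$, take conditional expectation over the samplings $\st^{(k,1)},\ldots,\st^{(k,\kbar)}$, substitute the three bounds above, and combine with the PL inequality to obtain a per-cycle-epoch descent of the schematic form
\begin{equation}
\expt\bigl[F(\wb^{(k+1,0)})\bigr] - F^* \leq (1-\rho)\bigl(\expt\bigl[F(\wb^{(k,0)})\bigr] - F^*\bigr) + C_1\,L^2\eta^3(\kbar-1)^2\alpha^2 + C_2\,L\eta^2\kbar\cdot\frac{\gamma^2}{N}\cdot\frac{M/\kbar-N}{M/\kbar-1},
\end{equation}
valid under the stability condition $\eta \lesssim 1/(L\kbar)$, which is guaranteed by the hypothesis $T \geq 7\kappa\kbar\log(MT^2/\kbar^2)$.

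Finally, I would iterate this recursion over $K=T/\kbar$ cycle-epochs and sum the resulting geometric series; plugging in the prescribed $\eta=\log(MT^2/\kbar^2)/(\mu N T)$ is precisely what balances the linear-contraction factor and the two noise terms to produce the three terms in \cref{eq:theo-locgd-comm}. The main obstacle is the drift step: getting the tight $(\kbar-1)^2$ factor in the $\alpha^2$-term (rather than $\kbar^2$ or worse) requires propagating the telescoping of $e^{\mathrm{grp}}_i$ carefully \emph{within} a cycle-epoch and keeping the sampling-variance and drift contributions decoupled so that they combine additively rather than multiplicatively. A secondary challenge is tuning the single step-size $\eta$ so that the linear contraction matches the $1/T^2$ decay of the $\alpha^2$-term, which is what forces the specific logarithmic prefactor $\log(MT^2/\kbar^2)$ in the chosen step size.
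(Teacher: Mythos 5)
Your proposal is correct and follows essentially the same strategy as the paper's proof: treat each cycle-epoch as a single noisy gradient step whose first-order bias vanishes by the telescoping of the equal-sized group means, control the within-group subsampling via the without-replacement variance formula (yielding the $\frac{\gamma^2}{N}\frac{M/\kbar-N}{M/\kbar-1}$ factor), bound the intra-epoch drift under $\lr\lesssim 1/(LN\kbar)$ so that the $\alpha$-dependence enters only at second order in $\lr$ with the accumulated-bias factor $(\kbar-1)$, and then combine smoothness, Young's inequality, and the PL condition into a per-epoch linear contraction that is unrolled over $K=T/\kbar$ epochs. The only difference is cosmetic bookkeeping: the paper expresses the drift exactly via Hessian integrals and matrix products $\prod_j(\ibd-\lr\ssbl^{(k,j)})$ (Lemmas \ref{lemma:noisy_GD} and \ref{lemma:error_terms_GD}) in the style of the shuffle-SGD literature, whereas you decompose each round's aggregated gradient into group-bias, drift, and sampling components and bound the drift recursively — the two are equivalent.
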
  
Although it might appear that the bound becomes worse with increasing $\kbar$ due to it appearing in the numerators of the terms, since $T=K \kbar$ (see \Cref{algo1}), a large $\kbar$ has no adverse impact on the convergence.


\textbf{Convergence Dependence on $\gamma^2$ and $\alpha$.} \Cref{theo1:GD} shows that CyCP+GD converges at the rate of $\tilo\left(\frac{\kbar\kappa\gamma^2}{\mu NT}\left(\frac{M/\kbar-N}{M/\kbar-1}\right)\right)$ which depends on $\gamma^2$, the intra-group data heterogeneity. Consequently, a large intra-group data heterogeneity $\gamma$ leads to worse convergence. Conversely, if $\gamma \simeq 0$, CyCP+GD can achieve $\tilo\left(1/T^2\right)$ convergence, due to the $\tilo\left(1/T\right)$ domintnat term becoming zero. Hence, in the \cycp settings where clients within the same group have similar data distributions (i.e., $\gamma$ is close to $0$)
, CyCP+GD can yield a faster convergence rate compared to standard FedAvg ($\kbar=1$). An example of a setting where this can naturally occur in realistic FL scenarios is when the cyclic patterns follow the diurnal-nocturnal pattern, also shown in~\cite{zhu2021diurnal}. It is also worth noting that the term with inter-group data heterogeneity $\alpha$ in \Cref{theo1:GD} decays at the rate of $\bigto{1/T^2}$.
Therefore, in \cycp settings, the intra-group data heterogeneity $\gamma$ has a more significant contribution to the convergence error than the inter-group data heterogeneity $\alpha$.

\textbf{Convergence Dependence on $\kbar$.} \Cref{theo1:GD} also shows that even for $\gamma\neq0$, CyCP+GD can gain a $\tilo\left(1/T^2\right)$ convergence rate when $\kbar = M/N$. 
This is a faster rate than the standard FedAvg (the setting with $\kbar=1$) which has $\tilo\left(1/T\right)$ rate. While the convergence rates for the cases of $\kbar=1$ and $\kbar=M/N$ are clear from \Cref{theo1:GD}, it is yet unclear what happens in the middle regime of $1<\kbar<M/N$. For this, we compare the total cost of CyCP+GD and standard FedAvg to achieve $\epsilon$ error. We define the total communication and computation cost in one communication round of GD (which involves computing $N$ gradients and communicating $N$ vectors to the server) as $c_{\text{GD}}$
. Then taking into account only the dominant term in \Cref{eq:theo-locgd-comm}, the total cost $C_{\text{GD}}$ to achieve an $\epsilon$ error is
\begin{align}
C_{\text{GD}}(\epsilon)=\bigto{\frac{c_{\text{GD}}\kbar\gamma^2}{\epsilon N}\left(\frac{M/\kbar-N}{M/\kbar-1}\right)} \label{eq:totalc:gd}
\end{align}
We compare $C_{\text{GD}}(\epsilon)$ with $\kbar>1$ and $\kbar=1$ denoted as $C_{\text{GD}|\kbar>1}(\epsilon),~C_{\text{GD}|\kbar=1}(\epsilon)$ respectively and derive the following result:
\begin{corollary} For the total cost defined as \Cref{eq:totalc:gd}, for $\kbar < M/N$, we have that $C_{\text{GD}|\kbar>1}(\epsilon)>C_{\text{GD}|\kbar=1}(\epsilon)$. 
\label{theo:1-1:kbar}
\end{corollary}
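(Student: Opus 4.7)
The plan is to directly compare the dominant-term expressions in \Cref{eq:totalc:gd}. All factors except $h(\kbar) := \kbar(M/\kbar - N)/(M/\kbar - 1)$ cancel in the ratio $C_{\text{GD}|\kbar>1}(\epsilon)/C_{\text{GD}|\kbar=1}(\epsilon)$, so it suffices to prove $h(\kbar) > h(1)$ for every $\kbar$ with $1 < \kbar < M/N$. Multiplying the numerator and denominator of the inner fraction by $\kbar$ rewrites $h(\kbar) = \kbar(M - N\kbar)/(M - \kbar)$, and under the hypothesis $\kbar < M/N \leq M$ both $h(\kbar)$ and $h(1) = (M-N)/(M-1)$ are strictly positive, so the inequality can be cleared without flipping signs.

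Forming the common denominator and subtracting then leaves a single fraction whose denominator $(M - \kbar)(M - 1)$ is strictly positive. After expanding the numerator and collecting like powers of $\kbar$, I would factor it into the compact form
\begin{align*}
\kbar(M - N\kbar)(M - 1) - (M - \kbar)(M - N) = (\kbar - 1)\bigl[M(M - N\kbar) - N(M - \kbar)\bigr].
\end{align*}
Since $\kbar > 1$, the first factor is positive, so the sign of $h(\kbar) - h(1)$ reduces entirely to the sign of the bracket $M(M - N\kbar) - N(M - \kbar)$.

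The main obstacle is this last sign check: $\kbar < M/N$ alone makes each of $M - N\kbar$ and $M - \kbar$ positive but does not immediately pin down the sign of their signed combination. I would handle it by regrouping the bracket as $(M - N)(M - N\kbar) - N\kbar(N - 1)$; the first summand is strictly positive by $N < M$ and $\kbar < M/N$, and in the cross-device regime where $M$ is substantially larger than $N$ it dominates the subtractive correction $N\kbar(N-1)$ (which vanishes outright when $N = 1$). Equivalently, the bracket is positive precisely when $\kbar < M(M-N)/[N(M-1)]$, a condition that is absorbed by $\kbar < M/N$ in the regime of interest. Once this sign is established, chaining the three positive quantities gives $h(\kbar) > h(1)$, and reintroducing the common prefactor $c_{\text{GD}}\gamma^2/(\epsilon N)$ recovers $C_{\text{GD}|\kbar>1}(\epsilon) > C_{\text{GD}|\kbar=1}(\epsilon)$.
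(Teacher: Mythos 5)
Your reduction to the scalar function $h(\kbar)=\kbar(M-N\kbar)/(M-\kbar)$ and the factorization $\kbar(M-N\kbar)(M-1)-(M-\kbar)(M-N)=(\kbar-1)\bigl[M(M-N\kbar)-N(M-\kbar)\bigr]$ are both correct, and this is essentially the same route as the paper (which reaches the same quadratic and locates its roots via the discriminant rather than by factoring out $(\kbar-1)$). The problem is the last step. You correctly observe that the bracket is positive precisely when $\kbar < M(M-N)/[N(M-1)]$, but this condition is \emph{not} absorbed by $\kbar < M/N$: since $M(M-N)/[N(M-1)] = \tfrac{M}{M-1}\bigl(\tfrac{M}{N}-1\bigr)$, the threshold is strictly below $M/N$ whenever $N>1$, leaving a window of width roughly $1$ just below $M/N$ where your hypothesis holds but the bracket is negative. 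Appealing to $M\gg N$ does not help, because the threshold tends to $M/N-1$ from above as $M$ grows, so the width of the bad window does not shrink. Concretely, $M=12$, $N=5$, $\kbar=2$ is a legitimate \cycp configuration ($\kbar\mid M$, $N\le M/\kbar$, $\kbar<M/N=2.4$) for which the bracket equals $12\cdot 2-5\cdot 10=-26<0$ and indeed $h(2)=2/5<7/11=h(1)$, i.e.\ the claimed inequality reverses.

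What actually closes the argument is integrality, not scale separation: in the regime where the comparison with $\kbar=M/N$ is meaningful, $M/N$ is an integer, so $\kbar<M/N$ forces $\kbar\le M/N-1$, and $M/N-1<\tfrac{M}{M-1}\bigl(\tfrac{M}{N}-1\bigr)$ holds unconditionally. At $\kbar=M/N-1$ the bracket evaluates to $M-N>0$, and since the bracket is decreasing in $\kbar$ it stays positive for all smaller $\kbar$ as well, which gives the strict inequality. This is exactly the step the paper performs when it rearranges the benefit condition to $M<N+N\kbar(1-1/M)$ and checks that it fails at $\kbar=M/N-1$ while holding at $\kbar=M/N$. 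Your write-up needs to replace the ``absorbed by $\kbar<M/N$'' claim with this integrality argument (or explicitly assume $N\mid M$ and $\kbar\le M/N-1$); as written, the sign check on which the whole proof rests is unjustified and, for non-integer $M/N$, false.
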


\Cref{theo:1-1:kbar} shows that with the number of groups set to the middle range, i.e., $1<\kbar<M/N$, \cycp does not incur a smaller cost compared to standard FedAvg ($\kbar=1$). Hence, for CyCP+GD to incur a lower cost
compared to standard FedAvg, the \textit{necessary condition} is having $\kbar=M/N$. There may be some scenarios in which $\kbar$ is a naturally occurring quantity that the server does not have control over. It is worth noting that in these cases, $N$ which is the number of selected clients per round, can be chosen accordingly by the server to pay a lower cost than standard FedAvg.

\paragraph{Matching Bounds with Minibath RR~\cite{yun2022shuf}.} Recall that with $\kbar=1$ and $N=M$, CyCP+GD in \Cref{algo1} becomes analogous to the minibatch RR algorithm with $M$ clients where each client has a single component. In this case, our bound in \Cref{theo1:GD} is only left with the first term that decays with the rate $\bigto{1/MT^2}$ which exactly matches minibatch RR's bound in [Theorem 1]~\cite{yun2022shuf} which shows exponential convergence.

\subsection{Convergence  for \cycp with Local SGD}
Next, we present the convergence for \cycp with local SGD. Local SGD introduces additional technical challenges compared to GD for deriving the convergence analysis and requires the following additional assumption over the stochastic gradients:

\begin{assumption}[Bounded Variance]
For local objective $F_m(\wb)$, the local stochastic gradient $\nabla F_m(\wb,\xi_\ic)$ computed using a mini-batch $\xi_m$, sampled uniformly at random from $\bdat$, has bounded variance, that is, 
$\expt[\|\nabla F_m(\wb,\xi_\ic)-\nabla \lf(\wb)\|^2]\leq\sigma^2$, for all $m\in[M]$. 
\label{as5}
\end{assumption}
\Cref{as5} is commonly used in the stochastic optimization literature \cite{stich2018local, basu2019qsparse, li2019on, ruan2020flexible}. Now we present the convergence bound for local SGD.
\begin{theorem}[Convergence with CyCP+SGD] With Assumptions \ref{as1}, \ref{as2}, \ref{as3}, and \ref{as5} and step-size $\lr=\log(MT^2/\kbar^2)/\tau\mu NT$, for $T\geq 10\kappa\kbar\log{(MT^2/\kbar^2)}$ communication rounds where $\kappa=L/\mu$, the convergence error is bounded as:
\begin{align}
\begin{aligned}
    \expt[F(\wb^{(K,0)})]-F^*\leq\frac{\kbar^2(F(\wb^{(0,0)})-F^*)}{MT^2}+\bigto{\frac{\kappa^2 \kbar(\kbar-1)\alpha^2}{\mu T^2}}+\bigto{\frac{\kbar\kappa\gamma^2}{\mu N T}\left(\frac{M/\kbar-N}{M/\kbar-1}\right)} \\
    +\bigto{\frac{\kbar\kappa\sigma^2}{\mu\tau N T}}+\bigto{\frac{\kappa^2(\tau-1)\nu^2}{\mu\tau N^2T^2}}
 \label{eq:8-0-0}
\end{aligned}
\end{align}
where $\tilo(\cdot)$ subsumes all log-terms and constants. \label{theo2:locSGD}
\end{theorem}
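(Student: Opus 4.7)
The plan is to extend the proof of \Cref{theo1:GD} by treating the client drift (from $\tau$ local updates per round) and the mini-batch noise (introduced by \Cref{as5}) as two additional error sources that the CyCP+GD analysis does not encounter. I would begin by tracking the global iterate over a full cycle-epoch: after the $\kbar$ rounds spanning $\wb^{(k,0)}$ to $\wb^{(k+1,0)}$,
\[
\wb^{(k+1,0)} = \wb^{(k,0)} - \frac{\lr}{N}\sum_{i=1}^{\kbar} \sum_{m \in \st^{(k,i)}} \sum_{l=0}^{\tau-1} \nabla F_m\bigl(\wb_m^{(k,i-1,l)}, \xi_m^{(k,i-1,l)}\bigr).
\]
For each round $i$ I would add and subtract the group-averaged true gradient $\lr\tau|\sigma(i)|^{-1}\sum_{m\in\sigma(i)}\nabla F_m(\wb^{(k,0)})$ inside the sum. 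This decomposes $\Delta^{(k,i-1)}$ into (a) a deterministic term whose sum over $i=1,\dots,\kbar$ equals $-\lr\tau\kbar\nabla F(\wb^{(k,0)})$ and supplies the main descent direction, (b) a zero-mean intra-group subsampling error, (c) an inter-group discrepancy that measures how $\wb^{(k,i-1)}$ drifts away from $\wb^{(k,0)}$ across the cycle-epoch, (d) a client-drift error measuring $\wb_m^{(k,i-1,l)}-\wb^{(k,i-1)}$, and (e) a conditionally zero-mean mini-batch noise with per-gradient variance at most $\sigma^2$ by \Cref{as5}.

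I would bound each source separately. The intra-group subsampling error mirrors the CyCP+GD analysis and yields the $\bigto{\kbar\kappa\gamma^2/(\mu NT)\cdot(M/\kbar-N)/(M/\kbar-1)}$ term via the variance of uniform-without-replacement sampling from a group of size $M/\kbar$. The inter-group discrepancy telescopes across a cycle-epoch because every group is visited exactly once (the incremental-gradient structure already exploited in \Cref{theo1:GD}), producing the $\bigto{\kappa^2\kbar(\kbar-1)\alpha^2/(\mu T^2)}$ term. The zero-mean mini-batch noise, summed over $\kbar\tau N$ conditionally independent stochastic gradients per cycle-epoch, contributes $\bigto{\kbar\kappa\sigma^2/(\mu\tau NT)}$. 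Finally, $\expt\lVert\wb_m^{(k,i-1,l)}-\wb^{(k,i-1)}\rVert^2$ can be controlled by unrolling the local SGD recursion and invoking \Cref{lem1} to convert the pair $(\gamma,\alpha)$ into the combined $\nu=\gamma+\alpha$, yielding a drift bound of order $\lr^2\tau^2\nu^2+\lr^2\tau\sigma^2$; plugging in the stated step-size produces the residual $\bigto{\kappa^2(\tau-1)\nu^2/(\mu\tau N^2 T^2)}$.

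Combining all sources via $L$-smoothness and the PL inequality from \Cref{as2} yields a per-cycle-epoch recursion of the form $\expt[F(\wb^{(k+1,0)})]-F^* \leq \bigl(1-c\mu\lr\tau\kbar\bigr)\bigl(\expt[F(\wb^{(k,0)})]-F^*\bigr)+(\text{noise})$. I would unroll this over $K=T/\kbar$ cycle-epochs and calibrate $\lr=\log(MT^2/\kbar^2)/(\tau\mu NT)$ so that the geometric contraction and the accumulated noise balance, while the lower bound $T\geq 10\kappa\kbar\log(MT^2/\kbar^2)$ keeps $\lr$ small enough for the smoothness-based descent inequality to hold. The main obstacle is preserving the fast $\bigto{1/T^2}$ decay on the inter-group term in the presence of both local steps and stochastic noise: a naive local-SGD-style argument would couple the client drift into the telescoping inter-group sum and destroy the acceleration. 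My plan is therefore to treat the drift as a separate additive perturbation bounded via \Cref{lem1}'s $\nu=\gamma+\alpha$ split, so its contribution enters the final bound only at order $1/T^2$ with an additional $1/N^2$ damping factor, while the inter-group telescoping retains the shuffling-style speed-up already established in \Cref{theo1:GD}.
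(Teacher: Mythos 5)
Your proposal follows essentially the same route as the paper's proof: the paper likewise rewrites each cycle-epoch as a noisy full-gradient step (via the integrated-Hessian identity $\nabla F_m(\wb^{(k,i-1)})=\nabla F_m(\wb^{(k,0)})+\hb_m^{(k,i)}(\wb^{(k,i-1)}-\wb^{(k,0)})$), isolates exactly your error sources (b)--(e) — without-replacement subsampling variance for the $\gamma^2$ term, the partial-sum heterogeneity bound of \Cref{lem0-0} for the $O(\lr^2)\alpha^2$ term, the bundled local-drift-plus-minibatch term $\db_m^{(k,i)}$, and the within-cycle model drift of \Cref{lem2-3} — and then combines them through smoothness and the PL recursion with the same step-size calibration. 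The only cosmetic differences are that the paper merges your (d) and (e) into a single error term and makes the decomposition exact through matrix products $\prod_j(\ibd-\lrt\ssbl^{(k,j)})$ rather than an add-and-subtract argument; the mechanism and resulting rates are identical.
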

Again, although it might appear that the bound becomes worse with increasing $\kbar$, since $T=K \kbar$ in \Cref{algo1},
a large $\kbar$ has no adverse impact on the convergence.


\textbf{Convergence Dependence on $\gamma^2$ and $\sigma^2$.} In \Cref{theo2:locSGD}, the dominant $\bigto{1/T}$ terms are dependent on two factors: the intra-group data heterogeneity $\gamma^2$ (which also appeared for local GD) and the stochastic gradient variance $\sigma^2$. Due to this, for $\sigma > 0$, even with $\kbar=M/N$ or $\gamma^2 \simeq 0$, we do not achieve the $\bigto{1/T^2}$ convergence rate, as we did in the local GD case. Hence, even with \cycp, the best we can achieve when using local SGD is the convergence rate of $\bigto{1/T}$. Seeing this result, one might wonder if there is any advantage at all of performing Local SGD with \cycp. We answer this question below by comparing the cost of CyCP with Local SGD to that of standard FedAvg.

\textbf{Does \cycp ($\kbar>1$) with Local SGD Ever Help for FL?} At first glance of \Cref{theo2:locSGD} one may think that \cycp does not improve the convergence rate with local SGD case due to stochastic gradient variance appearing in one of the dominant terms $\bigto{\frac{\kbar\kappa\sigma^2}{\mu\tau N T}} + \bigto{\frac{\kbar\kappa\gamma^2}{\mu N T}\left(\frac{M/\kbar-N}{M/\kbar-1}\right)}$. However, we show in \Cref{theo:2-1:kbar} that this is not always the case. Similar to how we defined $c_{GD}$ in the previous section, we definte the total communication and computation cost in one communication round with Local SGD as $c_{\text{SGD}}$. Formally, taking into account only the dominant terms in \Cref{eq:8-0-0}, the total cost to achieve an $\epsilon$ error for the local SGD case is:
\begin{align}
C_{\text{SGD}}(\epsilon)\hspace{-0.2em}=\hspace{-0.2em}\bigto{\hspace{-0.2em}\frac{c_{\text{SGD}}\kbar\gamma^2}{\epsilon N}\hspace{-0.2em}\left(\frac{M/\kbar-N}{M/\kbar-1}\right)\hspace{-0.2em}}\hspace{-0.2em}+\hspace{-0.2em}\bigto{\hspace{-0.2em}\frac{c_{\text{SGD}}\sigma^2\kbar}{\epsilon N\tau}\hspace{-0.2em}} \label{eq:totalc:lsgd}
\end{align}
We denote the costs for \cycp and standard FedAvg as $C_{\text{SGD}|\kbar=1},~C_{\text{SGD}|\kbar>1}$ respectively. Now we show the conditions to have $C_{\text{SGD}|\kbar>1}<C_{\text{SGD}|\kbar=1}$, i.e., have \cycp incur a lower cost than standard FedAvg.

\begin{corollary}
Suppose we have $N = M/\kbar$, and the intra-group data heterogeneity $\gamma$ satisfies $\gamma^2 \geq M\sigma^2/N\tau$. Then, we get $C_{\text{SGD}|\kbar} \leq C_{\text{SGD}|\kbar=1}$. \label{theo:2-1:kbar}
\end{corollary}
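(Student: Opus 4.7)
\textbf{Proof proposal for Corollary \ref{theo:2-1:kbar}.}
The plan is to specialize the cost formula in \eqref{eq:totalc:lsgd} to the choice $N = M/\kbar$, observe that this kills the intra-group heterogeneity term in CyCP, and then compare what remains with the $\kbar=1$ baseline under the same per-round client budget $N$.

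First I would substitute $N = M/\kbar$ into the subsampling factor appearing in the $\gamma^2$-term of \eqref{eq:totalc:lsgd}: since then $M/\kbar - N = 0$, the factor $\frac{M/\kbar - N}{M/\kbar - 1}$ vanishes identically. Thus the CyCP cost collapses to
\begin{align*}
C_{\text{SGD}|\kbar>1}(\epsilon) \;=\; \bigto{\frac{c_{\text{SGD}}\,\sigma^2\,\kbar}{\epsilon\, N\,\tau}},
\end{align*}
i.e., only the stochastic-gradient-variance term survives. For the baseline standard FedAvg, I plug $\kbar=1$ into \eqref{eq:totalc:lsgd} (keeping the same per-round budget $N$) to obtain
\begin{align*}
C_{\text{SGD}|\kbar=1}(\epsilon) \;=\; \bigto{\frac{c_{\text{SGD}}\,\gamma^2}{\epsilon\, N}\cdot\frac{M-N}{M-1}} \;+\; \bigto{\frac{c_{\text{SGD}}\,\sigma^2}{\epsilon\, N\,\tau}}.
\end{align*}

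Next I would set up the desired inequality $C_{\text{SGD}|\kbar>1}(\epsilon) \le C_{\text{SGD}|\kbar=1}(\epsilon)$, cancel the common $c_{\text{SGD}}/(\epsilon N)$ factor, and isolate $\gamma^2$. After moving the single-round variance term to the CyCP side, the condition becomes
\begin{align*}
\frac{\sigma^2(\kbar-1)}{\tau} \;\le\; \gamma^2 \cdot \frac{M-N}{M-1}.
\end{align*}
Using $N = M/\kbar$, a short calculation gives $M - N = M(\kbar-1)/\kbar$, so the right-hand side simplifies to $\gamma^2 \cdot M(\kbar-1)/(\kbar(M-1))$. Cancelling $(\kbar-1)$ and rearranging, the inequality is equivalent to $\gamma^2 \ge \sigma^2\kbar(M-1)/(M\tau)$. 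Since $\kbar = M/N$ and $(M-1)/M \le 1$, this is implied by the hypothesis $\gamma^2 \ge M\sigma^2/(N\tau)$, which concludes the proof.

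The argument is essentially algebraic; the only substantive observation is that $N = M/\kbar$ forces the subsampling factor $(M/\kbar-N)/(M/\kbar-1)$ to vanish, which removes the dominant heterogeneity term from the CyCP cost. There is no real obstacle beyond tracking which hidden constants are absorbed into $\bigto{\cdot}$; I would make sure the comparison is done at the level of the leading-order expressions only, consistent with how $C_{\text{SGD}}$ is defined from the dominant terms of \eqref{eq:8-0-0}.
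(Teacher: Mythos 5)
Your proposal is correct and follows essentially the same route as the paper: specialize the dominant-term cost \eqref{eq:totalc:lsgd} to $N=M/\kbar$ so that the factor $(M/\kbar-N)/(M/\kbar-1)$ vanishes, compare the surviving variance term against the $\kbar=1$ baseline, and reduce the comparison to $\gamma^2 \geq (M-1)\sigma^2/(N\tau)$, which the hypothesis $\gamma^2\geq M\sigma^2/(N\tau)$ implies. The paper's appendix reaches the same threshold by first writing the comparison for general $\kbar$ and then observing that only $\kbar=M/N$ makes the right-hand side positive, but the algebra and conclusion are identical to yours.
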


\Cref{theo:2-1:kbar} shows that \cycp+SGD can indeed incur a lower cost to achieve $\epsilon$ error compared to standard FedAvg ($\kbar=1$) when the intra-group data heterogeneity is sufficiently larger than the stochastic gradient variance divided by the number of local iterations. Note that the condition $\gamma^2 \geq M\sigma^2/N\tau$ in \Cref{theo:2-1:kbar} can be satisfied by increasing the minibatch size $b$ (which decreases the variance $\sigma^2$) or increasing the number of local iterations $\tau$.
\vspace{-1em}

\paragraph{Matching Bounds with Standard FedAvg.} For $\kbar=1$, CyCP+SGD recovers Standard FedAvg with Local SGD, and our bound in \Cref{eq:8-0-0} with full client participation follows the order of $\bigto{\kappa^2 \nu^2/\mu M T^2}+\bigto{\kappa\sigma^2/\mu\tau M T}$. We show that this bound matches the last iterate bound in \cite{qu2020lin} which assumes full client participation, and bounded norm of the stochastic gradient with parameter $G$ that reads
$\bigto{\kappa^2 G^2/\mu T^2} + \bigto{\kappa \sigma^2/\mu \tau T}$ where their learning rate doesn't decay with $M$ as our case, leading to the lack of the $1/M$ in their bounds. The difference in the first term is due to their work assuming the bounded norm of the stochastic gradient $G$ while we assume only the bounded variance of the stochastic gradient.  


\subsection{Convergence for \cycp with Local SSGD}
For the last scenario of \cycp, we present results on the convergence properties of the global model with local SSGD. We slightly modify the local loss definition of each client as  $F_m(\wb)=\frac{1}{B}\sum_{l=0}^{B-1} F_{m,l}(\wb)$ so that each client has $B$ loss components.
For local SSGD, clients perform local updates sequentially over $F_{m,\pi_m(l)}(\wb),~l\in[0,...,B-1]$ where $\pi_m\sim\text{Unif}(\mathcal{P}_B)$ is a random permutation over the $B$ components and $\pi_m(l)$ denotes the $l$-th element of this permutation. For SSGD, in lieu of \cref{as5}, we need the following intra-client component heterogeneity assumption, which is commonly used in the shuffled SGD literature~\cite{yun2022shuf,malinovsky2021random}: 

\begin{assumption}[Intra-Client Component Heterogeneity]
There exists a constant $\overline{\nu}\geq0$ such that for each client $m\in[M]$, and each component $l\in[0,...,B-1]$ of its local dataset, $\| \nabla F_{m,l}(\wb)-\nabla F_m(\wb) \| \leq \overline{\nu}$, for all $\wb$. \label{as6}
\end{assumption}

Now we present our convergence results for the SSGD case. 
\begin{theorem}[Convergence with CyCP+SSGD] With Assumptions \ref{as1}, \ref{as2}, \ref{as3}, and \ref{as6}, and $\lr=\log(MBT^2/\kbar^2)/\mu BT$ and cycle-epoch $T\geq 10\kappa\kbar\log{(MBT^2/\kbar^2)}$ where $\kappa=L/\mu$, with probability at least $1-\delta$, the convergence error is bounded as:\vspace{-0.5em}
\begin{align}
\begin{aligned}
    \expt[F(\wb^{(K,0)})]-F^*\leq\frac{\kbar^2(F(\wb^{(0,0)})-F^*)}{MBT^2}+\bigto{\frac{\kappa^2 (B-1)^2\nu^2}{\mu B^2 T^2}}+\bigto{\frac{\kappa^2(\kbar-1)^2\alpha^2}{\mu T^2}}
    \\
    +\bigto{\frac{\kappa^2 \overline{\nu}^2}{\mu T^2}\left(\frac{(B^{3/2}-1)^2}{B^4}+\frac{(B-1)^2}{B^3}\right)}+\bigto{\frac{\kappa\kbar\gamma^2}{\mu N T}\left(\frac{M/\kbar-N}{M/\kbar-1}\right)}
    \end{aligned} \label{eq:8-0-1}
\end{align}
where $\tilo(\cdot)$ subsumes all log-terms and constants. \label{theo3:ssgd}
\end{theorem}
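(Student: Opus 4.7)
The strategy is to reduce one cycle-epoch of \cycp+SSGD to a two-layer shuffling analysis: the $\kbar$ communication rounds within a cycle-epoch are treated as an incremental-gradient (IG) sweep over groups (exactly as in the proof of \Cref{theo1:GD}), while the $B$ local steps that each selected client takes are treated as an intra-client random-reshuffling (RR) pass (in the spirit of \cite{mishchenko2020random_neurips,yun2022shuf}). Composing these two layers with the step-size $\lr=\log(MBT^2/\kbar^2)/(\mu B T)$ and a PL descent recursion will yield the stated bound.

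\textbf{Step 1 (intra-client RR expansion).} Fix a round $(k,i)$ and a selected client $m\in\st^{(k,i)}$. I unroll the $B$ shuffled local steps $\wb_m^{(k,i-1,l+1)}=\wb_m^{(k,i-1,l)}-\lr\,\nabla F_{m,\pi_m^k(l)}(\wb_m^{(k,i-1,l)})$ and write the client update as
\begin{align*}
-\tfrac{1}{\lr}\bigl(\wb_m^{(k,i-1,B)}-\wb^{(k,i-1)}\bigr)=B\,\nabla F_m(\wb^{(k,i-1)})+E_{m}^{(k,i)},
\end{align*}
where $E_m^{(k,i)}$ is the intra-client shuffling/drift error. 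Using $L$-smoothness to bound $\|\wb_m^{(k,i-1,l)}-\wb^{(k,i-1)}\|^2$ and then taking expectation over $\pi_m^k\sim\mathrm{Unif}(\mathcal P_B)$, the RR variance identity (a sampling-without-replacement analogue of Wald's inequality) produces two kinds of terms: one scaling with the intra-client component heterogeneity $\overline{\nu}^2$ via \Cref{as6}, and one scaling with $\nu^2=(\gamma+\alpha)^2$ via \Cref{lem1} which measures how far $\nabla F_m$ departs from $\nabla F$. Careful tracking of the partial-sum variance under sampling without replacement is what produces the $(B^{3/2}-1)^2/B^4$ and $(B-1)^2/B^3$ shapes on the $\overline{\nu}^2$ term, and the $(B-1)^2/B^2$ shape on the $\nu^2$ term.

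\textbf{Step 2 (inter-group IG expansion).} Telescoping the $\kbar$ rounds of a cycle-epoch gives
\begin{align*}
\wb^{(k+1,0)}-\wb^{(k,0)}=-\lr B\sum_{i=1}^{\kbar}\tfrac{1}{N}\sum_{m\in\st^{(k,i)}}\nabla F_m(\wb^{(k,i-1)})+\lr\sum_{i=1}^{\kbar}\tfrac{1}{N}\sum_{m\in\st^{(k,i)}}E_m^{(k,i)}.
\end{align*}
I now split the first sum as $-\lr B\kbar\nabla F(\wb^{(k,0)})$ plus three error contributions, reusing the machinery of \Cref{theo1:GD}: (i) an \emph{IG-drift} error from traversing $\sigma(1),\dots,\sigma(\kbar)$ at the evolving iterates $\wb^{(k,i-1)}$ rather than at $\wb^{(k,0)}$, bounded by smoothness plus \Cref{as3} and contributing the $\kappa^2(\kbar-1)^2\alpha^2/(\mu T^2)$ term; (ii) a \emph{subsampling} error from drawing $N$ clients without replacement within each $\sigma(i)$, contributing the $\kappa\kbar\gamma^2/(\mu NT)\cdot(M/\kbar-N)/(M/\kbar-1)$ term; and (iii) the aggregated intra-client error $\sum_i E_m^{(k,i)}/N$ from Step 1. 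The $1/(MB)$ factor on the leading exponential term comes from the full $NB$-sample averaging that effectively happens per round once both layers are combined.

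\textbf{Step 3 (PL descent \& unrolling).} Plugging the expansion of Step 2 into the $L$-smooth one-step inequality, taking conditional expectations over $\{\pi_m^k,\st^{(k,i)}\}$, and invoking \Cref{as2}, I derive a per-cycle-epoch recursion of the form
\begin{align*}
\Ex[F(\wb^{(k+1,0)})]-F^*\le(1-c\,\lr B\kbar\mu)\bigl(\Ex[F(\wb^{(k,0)})]-F^*\bigr)+\mathcal{E},
\end{align*}
where $\mathcal E$ collects all $\lr^2$-order terms identified in Steps 1--2. Unrolling over $k=0,\dots,K-1$ with $T=K\kbar$, the choice $\lr=\log(MBT^2/\kbar^2)/(\mu BT)$ drives the exponential factor to $\kbar^2/(MBT^2)$ and sums the remaining geometric series into the stated $1/T$ and $1/T^2$ error terms. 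The lower bound $T\ge 10\kappa\kbar\log(MBT^2/\kbar^2)$ is exactly the condition that makes $\lr L\le 1$ so the contraction is valid. Finally, the ``with probability $1-\delta$'' qualifier is obtained by a standard Markov step on the non-negative error process, with $\log(1/\delta)$ absorbed into $\tilo(\cdot)$.

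\textbf{Main obstacle.} The delicate part is Step 1: bounding the intra-client RR error \emph{tightly} enough to get the sublinear-in-$B$ coefficients on $\overline{\nu}^2$ rather than the naive $\mathcal O(B^2)$ coming from a worst-case permutation. This forces a careful partial-sum variance calculation under sampling without replacement, and the answer must then compose cleanly with the IG-drift across the $\kbar$ groups so that the $\alpha^2$-term from Step 2 keeps the crisp $\kappa^2(\kbar-1)^2/\mu T^2$ scaling rather than picking up stray $B$-factors. Getting the dependencies on $B$, $\kbar$, and $N$ to decouple across the two shuffling layers is the technical crux.
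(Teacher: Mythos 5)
Your overall architecture --- write the cycle-epoch update as a noisy full-gradient step, decompose the noise into an intra-client shuffling layer and an inter-group cyclic layer, then run an $L$-smooth/PL descent recursion and unroll --- is the same as the paper's, and your Step 2 correctly identifies the sources of the $\alpha^2$ and $\gamma^2$ terms. The genuine gap is in Step 1. You propose to bound the intra-client error $E_m^{(k,i)}$ by taking expectation over $\pi_m^k$ and invoking a without-replacement variance identity on the partial sums of component gradients. But after linearizing the local trajectory, $E_m^{(k,i)}$ is a sum of terms of the form $\bigl(\prod_{t}(\ibd-\lr\hbbt_{m,t}^{(k,i-1)})\bigr)\hbbt_{m,l+1}^{(k,i-1)}\sum_{j\le l}\nabla F_{m,\pi_m^k(j)}(\wb^{(k,0)})$, in which the integrated-Hessian factors $\hbbt_{m,t}^{(k,i-1)}$ are evaluated along the local iterates and therefore depend on the \emph{same} permutation as the partial gradient sums. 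The expectation does not factor, so the clean RR variance identity cannot be applied to the partial sums in isolation; this is exactly the obstacle you flag as the ``technical crux'' but do not resolve. The paper's resolution is to bound the matrix products uniformly in operator norm (using $\lr\le 1/(10LB\kbar)$, giving constants like $e^{1/10}$) and then control the permuted partial sums $\|\sum_{j=0}^{l}\nabla F_{m,\pi_m^k(j)}(\wb^{(k,0)})-(l+1)\nabla F_m(\wb^{(k,0)})\|\le\nub\sqrt{8(l+1)\log(4MBK/\delta)}$ via a high-probability concentration bound for sampling without replacement (Lemma~8 of \cite{yun2022shuf}), applied with a union bound over all $m,l,k$. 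That concentration step is what produces the $(B^{3/2}-1)^2$ shape and the $\log(4MBK/\delta)$ factors on the $\overline{\nu}^2$ terms.

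Relatedly, your explanation of the ``with probability $1-\delta$'' qualifier is incorrect: it is not obtained by a Markov step on the final error (which would inflate the bound by $1/\delta$, not $\log(1/\delta)$, and would anyway convert a statement about the expectation into a weaker one). The $1-\delta$ is the event on which the uniform partial-sum concentration holds, and the theorem's expectation is taken conditionally on that event. Without replacing your in-expectation RR argument by this high-probability uniform bound (or some other device that decouples the permutation-dependent Hessian products from the partial sums), Step 1 does not go through as written.
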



Again, since $T = K \kbar$, increasing $\kbar$ does not impact the bound above adversely.

\textbf{Dependency on $\kbar$ is Identical to CyCP+GD.} \Cref{theo3:ssgd} shows that the dominant term is $\bigto{\frac{\kappa\kbar\gamma^2}{\mu N T}\left(\frac{M/\kbar-N}{M/\kbar-1}\right)}$ which is the same dominant term for the local GD's convergence rate in \Cref{theo1:GD}. This dominant term exists as long $\gamma\neq0$ or $\kbar\neq M/N$. Hence, for $1<\kbar<M/N$, as with local GD (see \Cref{theo:1-1:kbar}), \cycp+SSGD does not yield a lower cost than standard FedAvg.
Also like CyCP+GD, $\kbar=M/N$ is \textit{necessary} for CyCP+SSGD to get any cost reduction compared to standard FedAvg. We give a more detailed comparison between the costs of the two different local update procedures below.

\paragraph{Can CyCP+SSGD be better than CyCP+GD?} We have seen above that CyCP+SSGD and CyCP+GD converge at the same rate due to the same dominant term which is non-zero for $\gamma^2\neq0$ and $\kbar<M/N$. For $\kbar=M/N$, however, the dominant term goes to $0$ and CyCP+SSGD and CyCP+GD become comparable. Again, we define the total communication and computation cost in one communication round with local SSGD as $c_{\text{SSGD}}$. Then, for $\kbar=M/N$, we have that the total cost for CyCP+GD and CyCP+SSGD to achieve an $\epsilon$ error denoted as $C_{\text{GD}|\kbar=M/N}(\epsilon),~C_{\text{SSGD}|\kbar=M/N}(\epsilon)$ respectively, is
\begin{flalign}
&C_{\text{GD}|\kbar=M/N}(\epsilon)\hspace{-0.1em}=\hspace{-0.1em}\bigto{\hspace{-0.1em}\frac{c_{\text{SSGD}}}{\sqrt{\epsilon}}\left(\hspace{-0.1em}\frac{\kbar}{\sqrt{M}}+{\kbar\alpha}\hspace{-0.1em}\right)\hspace{-0.1em}} \label{eq:ccpgd}\\
&\begin{aligned}
&C_{\text{SSGD}|\kbar=M/N}(\epsilon)\hspace{-0.1em}=\hspace{-0.1em}\bigto{\hspace{-0.1em}\frac{c_{\text{SSGD}}}{\sqrt{\epsilon}}\hspace{-0.1em}\left(\hspace{-0.1em}\hspace{-0.1em}\frac{\kbar}{\sqrt{MB}}+{\nu}+{\kbar\alpha}+\frac{\nub}{\sqrt{B}}\hspace{-0.1em}\right)\hspace{-0.1em}}
\end{aligned}\label{eq:ccpssgd}
\end{flalign}
With these costs, we have that CyCP+SSGD only incurs a lower cost than the CyCP+GD case for $\kbar=M/N$ when 
\begin{align}
1-\frac{1}{\sqrt{B}}-\nu-\frac{\nub}{\sqrt{B}}>0,~B>1 \label{eq:condssgd}
\end{align}
The only certain condition in which \Cref{eq:condssgd} can be satisfied is when $\nu\simeq\nub\simeq0$. Hence, with $\kbar=M/N$, only when there is close to 0 data heterogeneity across clients and intra-client component heterogeneity within each client is when CyCP+SSGD can incur a lower cost compared to CyCP+GD. This also aligns well with the theoretical results presented in \cite{yun2022shuf,woodworth2020local}.

\textbf{Matching Bounds with Special Cases of CyCP+SSGD.} Special cases of CyCP+SSGD become analogous to different algorithms such as CyCP+GD or local RR \cite{yun2022shuf}. With $B=1$, since $\nub=0$ , we recover CyCP+GD (\Cref{theo1:GD}). Another special case is when $\kbar=1,~N=M$ for CyCP+SSGD. In this case, for each communication round, we have full client participation where each client performs SSGD over its local components, which becomes analogous to the local RR algorithm proposed and theoretically analyzed in \cite{yun2022shuf}. Note that in this case, the bound in \Cref{theo3:ssgd} matches the bound for local RR presented in \cite{yun2022shuf}. We present a more detailed comparison in the following paragraph. 

\paragraph{Is CyCP+SSGD better than Local RR?} Since local RR assumes full client participation, 
for a fair comparison, we compare local RR with CyCP+SSGD with $\kbar=M/N$. 
The dominant term in CyCP+SSGD's convergence bound in \Cref{theo3:ssgd} becomes zero and we are left with the terms having a convergence rate of $\bigto{1/T^2}$. We compare the two algorithms in terms of the total cost to achieve $\epsilon$ error, where the total cost for local RR is: 
\begin{flalign}
C_{\text{LocalRR}}(\epsilon)=\bigto{\frac{\kbar c_{\text{SSGD}}}{\sqrt{\epsilon}}\left(\frac{1}{\sqrt{MB}}+\nu+\frac{\overline{\nu}}{\sqrt{B}}\right)} \label{eq:localrrfcp}
\end{flalign}
Note that for local RR, both the computation and communication cost is $\kbar$ times that of CyCP+SSGD, since in local RR all clients' participate in each communication round while in CyCP+SSGD only $M/\kbar$ clients participate per communication round. With CyCP+SSGD's cost in \Cref{eq:ccpssgd} for $\kbar=M/N$ and local RR's cost in \Cref{eq:localrrfcp}, we have the following theoretical result that compares the two methods:
\begin{corollary} For a sufficiently large $M$ such that 
\begin{align}
M>N\left(1+\frac{\alpha}{\gamma+\frac{\nub}{\sqrt{B}}}\right)
\end{align}
where $\kbar=M/N$ for CyCP+SSGD, CyCP+SSGD is \textbf{always} better than local RR in terms of the total cost taken to gain epsilon error.   \label{theo:compssgdlrr}
\end{corollary}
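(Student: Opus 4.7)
The plan is to directly compare the two cost expressions already derived in the excerpt, namely \Cref{eq:ccpssgd} for CyCP+SSGD at $\kbar = M/N$ and \Cref{eq:localrrfcp} for local RR, and reduce the inequality $C_{\text{SSGD}|\kbar=M/N}(\epsilon) < C_{\text{LocalRR}}(\epsilon)$ to the stated condition on $M$. Since the corollary asks when CyCP+SSGD strictly beats local RR, the proof is essentially algebraic manipulation; the substantive work has already been done in deriving Theorem \ref{theo3:ssgd} and the per-round cost accounting that places the extra factor of $\kbar$ in front of $c_{\text{SSGD}}$ for local RR (because local RR uses all $M = \kbar N$ clients per round, whereas CyCP+SSGD uses only $N$).

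First I would write out the inequality $C_{\text{SSGD}|\kbar=M/N}(\epsilon) < C_{\text{LocalRR}}(\epsilon)$ explicitly, divide through by the common factor $c_{\text{SSGD}}/\sqrt{\epsilon}$, and observe that the ``clean'' term $\kbar/\sqrt{MB}$ appears identically on both sides and therefore cancels. This leaves the data-heterogeneity-dependent portion
\begin{align*}
\nu + \kbar\alpha + \frac{\overline{\nu}}{\sqrt{B}} < \kbar\nu + \frac{\kbar\overline{\nu}}{\sqrt{B}},
\end{align*}
which rearranges to $\kbar\alpha < (\kbar-1)\bigl(\nu + \overline{\nu}/\sqrt{B}\bigr)$.

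Next I would invoke \Cref{lem1}, which gives $\nu = \gamma + \alpha$, and substitute. The $\alpha$-terms on the right subsume the $\kbar\alpha$ on the left except for a single leftover $\alpha$, yielding $\alpha < (\kbar-1)\bigl(\gamma + \overline{\nu}/\sqrt{B}\bigr)$. Dividing by $\gamma + \overline{\nu}/\sqrt{B}$ (positive whenever the bound is non-vacuous) and using $\kbar = M/N$ then gives exactly $M/N > 1 + \alpha/(\gamma + \overline{\nu}/\sqrt{B})$, i.e., the stated threshold on $M$. Under this condition every step above is reversible and the cost inequality holds strictly, so CyCP+SSGD incurs a strictly lower total cost than local RR.

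I do not anticipate a serious obstacle, since the heavy lifting — deriving the $\widetilde{\mathcal{O}}(1/T^2)$ terms of Theorem \ref{theo3:ssgd}, inverting them to obtain the round complexity, and multiplying by the appropriate per-round cost $c_{\text{SSGD}}$ or $\kbar\,c_{\text{SSGD}}$ — has already been packaged into \Cref{eq:ccpssgd} and \Cref{eq:localrrfcp}. The only care needed is in the identification of the symbol $\nu$ appearing in \Cref{eq:ccpssgd} and \Cref{eq:localrrfcp} with the $\nu = \gamma + \alpha$ of \Cref{lem1}; this is what makes the otherwise symmetric-looking inequality collapse to a condition involving only $\alpha$, $\gamma$, and $\overline{\nu}/\sqrt{B}$ rather than $\nu$ alone, and it is what determines the precise form of the threshold in the corollary.
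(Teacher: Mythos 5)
Your proposal is correct and follows essentially the same route as the paper: cancel the common $\kbar/\sqrt{MB}$ term from the two cost expressions, reduce to $\kbar\alpha < (\kbar-1)(\nu + \nub/\sqrt{B})$, and use $\nu = \gamma + \alpha$ together with $\kbar = M/N$ to obtain the stated threshold on $M$. The paper substitutes $\kbar = M/N$ slightly earlier in the algebra, but the manipulation and the key observation (that $\nu = \gamma+\alpha$ collapses the inequality to one in $\alpha$, $\gamma$, $\nub/\sqrt{B}$) are identical.
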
 \vspace{-0.5em}

\Cref{theo:compssgdlrr} shows that with $\kbar=M/N$, and sufficiently large $M$, CyCP+SSGD is always preferred over local RR to achieve a lower cost. Since $\kbar=M/N$, a larger $M$ indicates a larger $\kbar$. Observe that the lower bound on $M$ in \Cref{theo:compssgdlrr} becomes smaller for a smaller $\alpha$ and a larger $\gamma$. This indicates that it is preferred that the client groups have smaller inter-group data heterogeneity but larger intra-group data heterogeneity, for CyCP+SSGD to beat local RR.


\section{Experimental Results} 
\paragraph{Setup.} We train ML models on standard datasets using FedAvg with \cycp for different client local updated procedures to see how cyclicity affects the performance of FL. We experiment with image classification using an MLP for the FMNIST~\cite{xiao2017fmnist} dataset and EMNIST dataset~\cite{cohen2017emnist} with 62 labels where we have 100 and 500 clients in total and select 5 and 10 clients per communication round respectively. We use the Dirichlet distribution $\text{Dir}_{K}(\alpha)$ \citep{hsu2019noniid} to partition the data across clients where $\alpha$ determines the degree of the data heterogeneity across clients. Smaller $\alpha$ indicates larger data heterogeneity. We experiment with three different seeds for the randomness in the dataset partition across clients and present the averaged results. Due to space constraints, further details of the experiments and results showing the training losses are presented in \Cref{app:exp}.

\begin{figure}[!h] \small
\centering
\begin{subfigure}{.157\textwidth}
\includegraphics[width=1\textwidth]{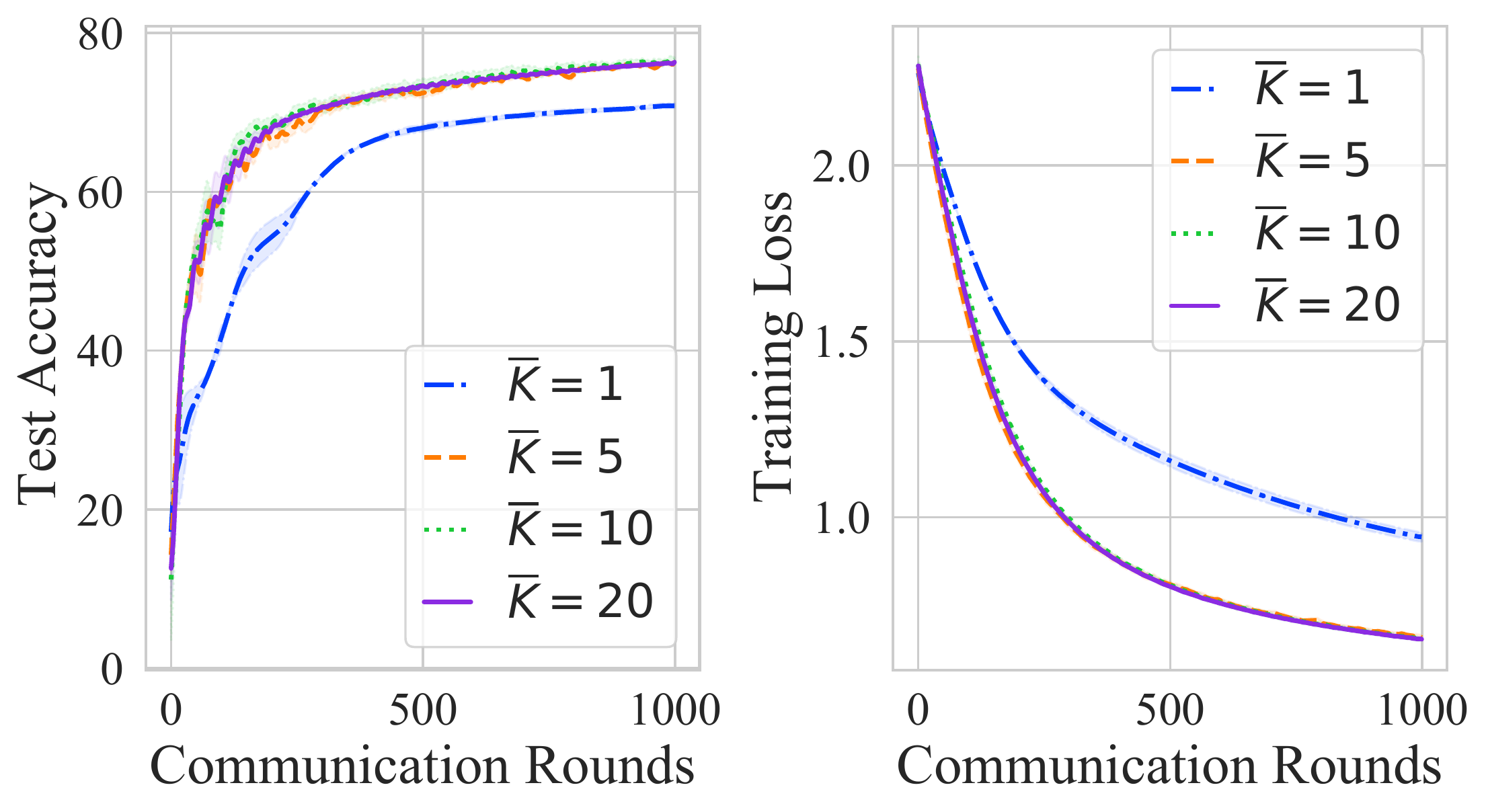} \caption{\small GD}
\end{subfigure}
\begin{subfigure}{.157\textwidth}
\includegraphics[width=1\textwidth]{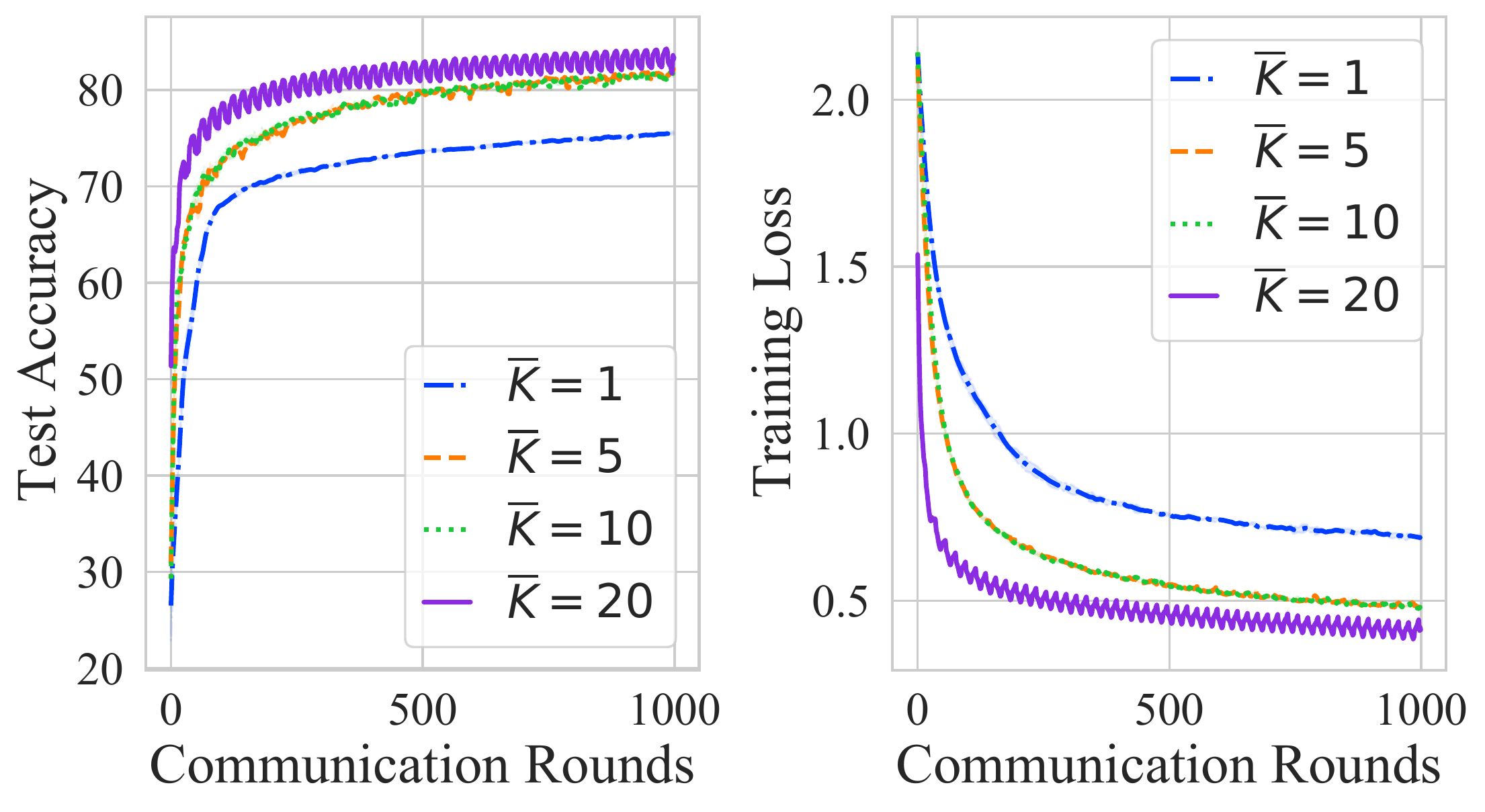} \caption{\small SGD }
\end{subfigure}
\begin{subfigure}{.157\textwidth}
\includegraphics[width=1\textwidth]{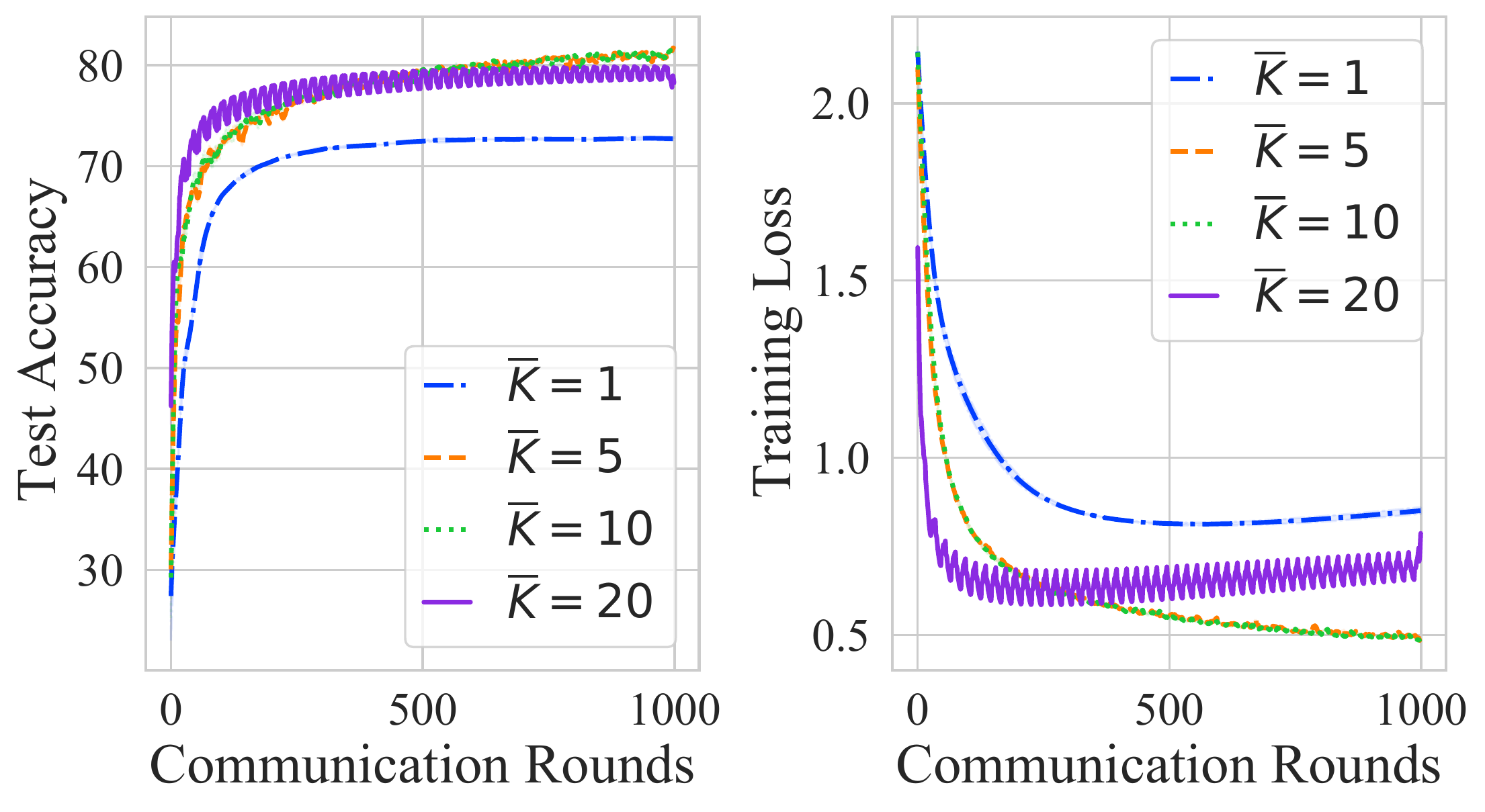} \caption{\small SSGD}
\end{subfigure} \vspace{-1em}
 \caption{\small Test accuracy for FMNIST for high data heterogeneity ($\alpha=0.5$). \cycp ($\kbar>1$) shows a higher test accuracy performance of $5$-$10$\% improvement compared to $\kbar=1$ (Standard FedAvg) for all different client local procedures. 
 }\label{fig:fmnist_1} 
\end{figure}

\begin{figure}[!h]\small
\centering
\begin{subfigure}{.157\textwidth}
\includegraphics[width=1\textwidth]{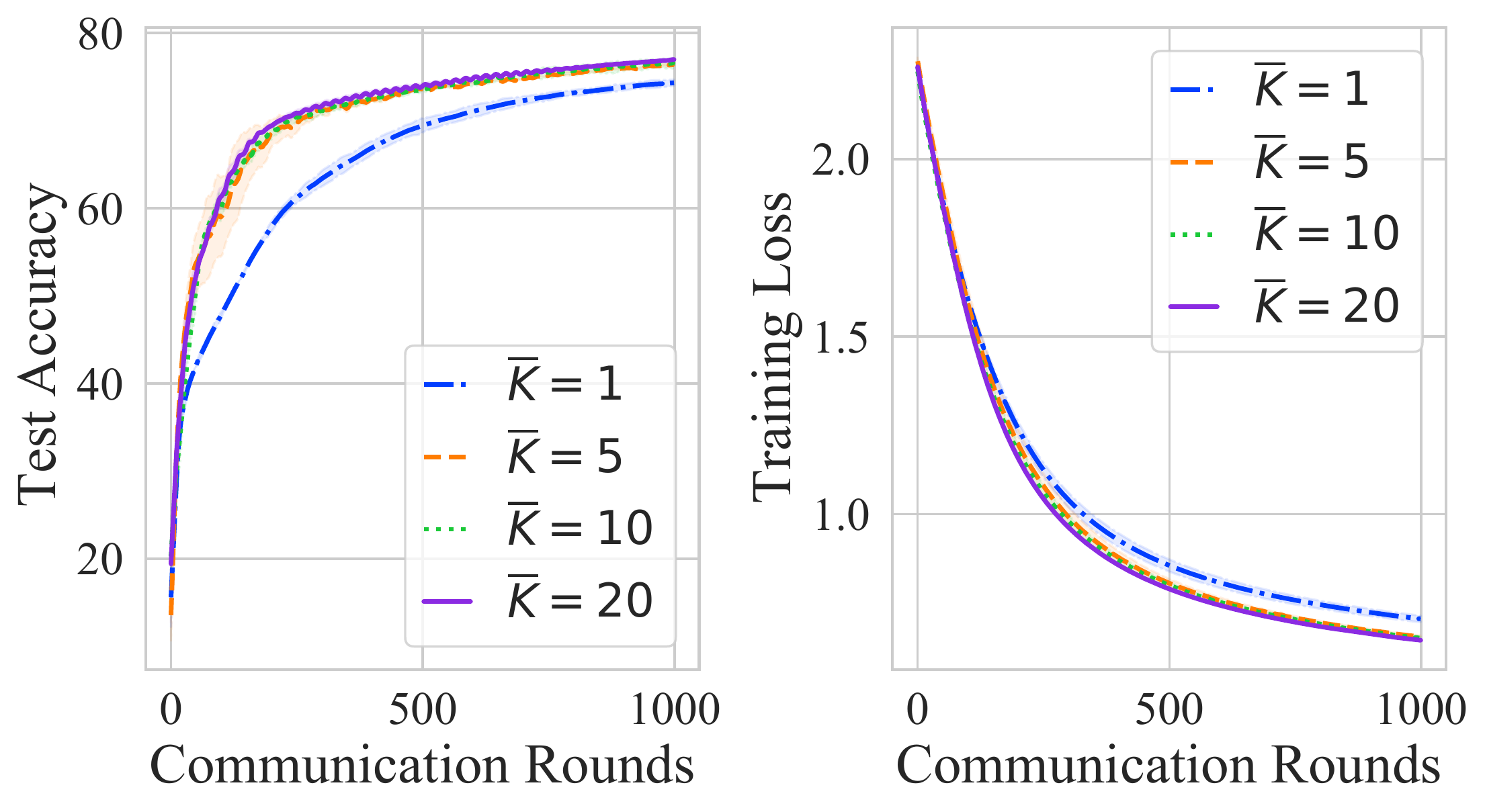} \caption{\small GD}
\end{subfigure}
\begin{subfigure}{.157\textwidth}
\includegraphics[width=1\textwidth]{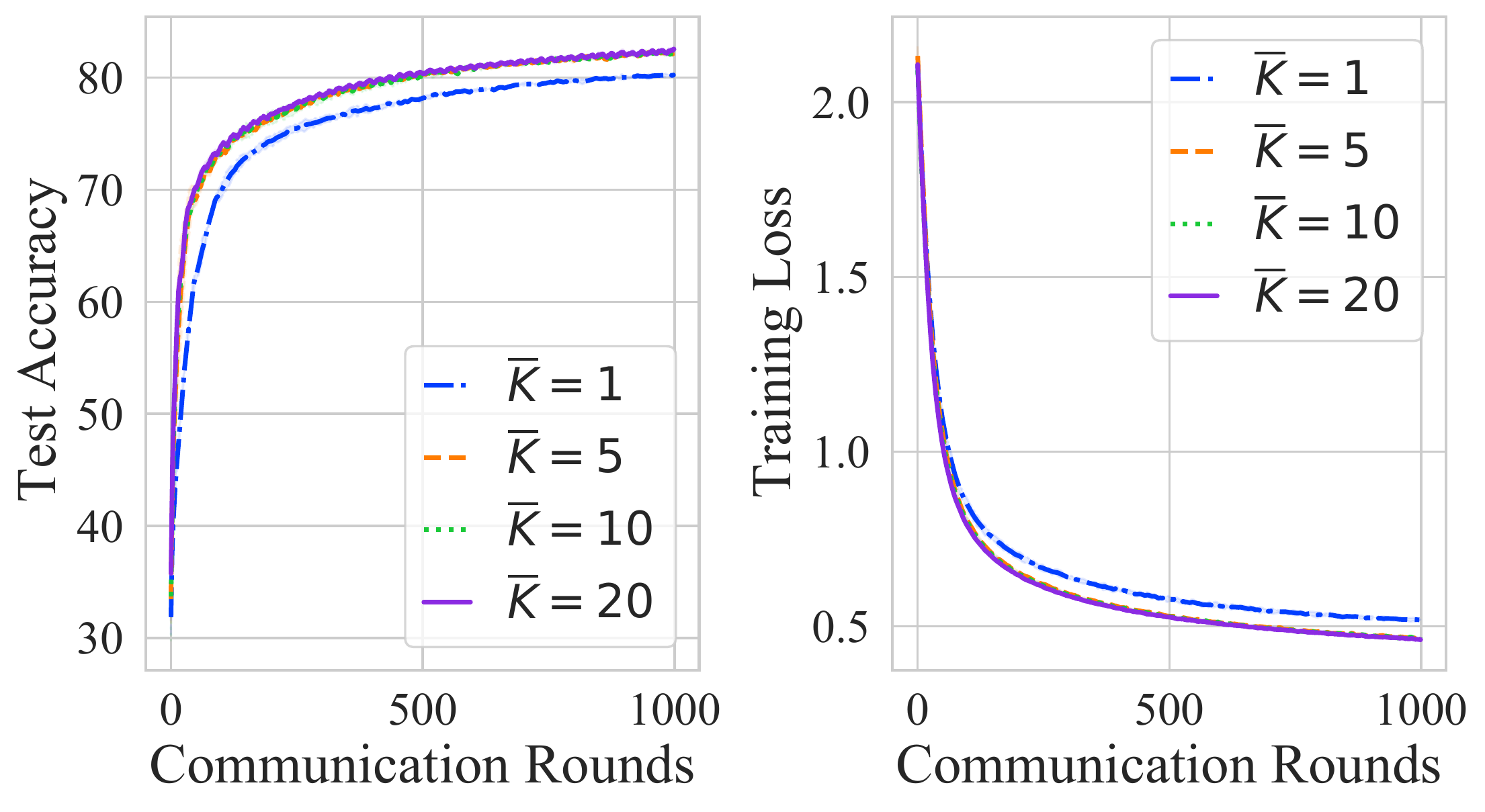} \caption{\small SGD}
\end{subfigure}
\begin{subfigure}{.157\textwidth}
\includegraphics[width=1\textwidth]{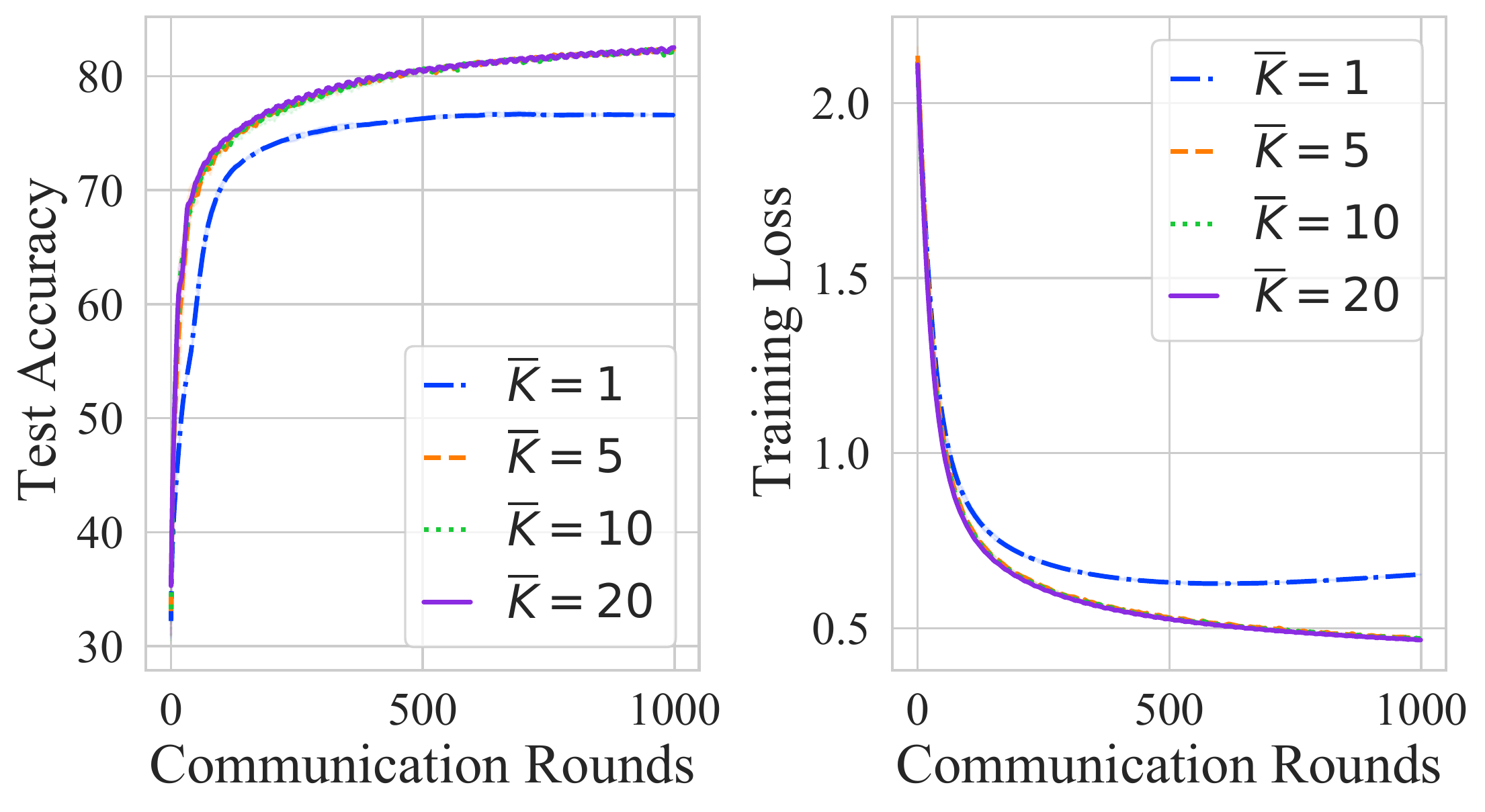} \caption{\small SSGD}
\end{subfigure} 
 \caption{\small Test accuracy for FMNIST for low data heterogeneity ($\alpha=2.0$). Being consistent with the high data heterogeneity case in \Cref{fig:fmnist_1}, \cycp ($\kbar>1$) shows a higher test accuracy performance compared to $\kbar=1$ (Standard FedAvg) for all different client local procedures with the improvement of $2$-$8$\%. However, the performance gap between \cycp and standard FedAvg is lower than when there is higher data heterogeneity. 
 }
 \label{fig:fmnist_2}  
\end{figure}

\paragraph{Effect of $\kbar$ and Data Heterogeneity.} We show in \Cref{fig:fmnist_1} the test accuracy for the FMNIST dataset for different $\kbar$ values and client local procedures for high data heterogeneity ($\alpha=0.5$). Recall that $\kbar=1$ is analogous to standard FedAvg and $1<\kbar\leq M/N$ (where $M/N=20$ for the FMNIST case) implies cyclic client participation. A higher $\kbar$ represents the server visiting more client groups within a single cycle. We show that for high data heterogeneity, for all different client local procedures, a higher $\kbar$ achieves better test accuracy by approximately $5$-$10\%$ improvement. Although our theoretical results suggest that \cycp sees improvement in convergence for only $\kbar=M/N$, we observe improvement even for $M/N>\kbar>1$. This can be due to our theoretical results being on PL-objectives while the landscape of DNN may not necessarily fall into this category~\cite{qu2020pllandscape}. For lower data heterogeneity results shown in \Cref{fig:fmnist_2}, the improvement for $\kbar>1$ compared to standard FedAvg is approximately $2$-$8$\%. The improvement is less than that for the high data heterogeneity case which aligns with the theoretical results in \Cref{sec:theo} which shows that increasing $\kbar$ decreases the dominant term that is dependent on the intra-group data heterogeneity. In \Cref{fig:emnist}, we show that the performance gap between \cycp and standard FedAvg is even higher due to the data heterogeneity being even higher than the FMNIST case.

\begin{figure}[!t] \small
\centering
\begin{subfigure}{.157\textwidth}
\includegraphics[width=1\textwidth]{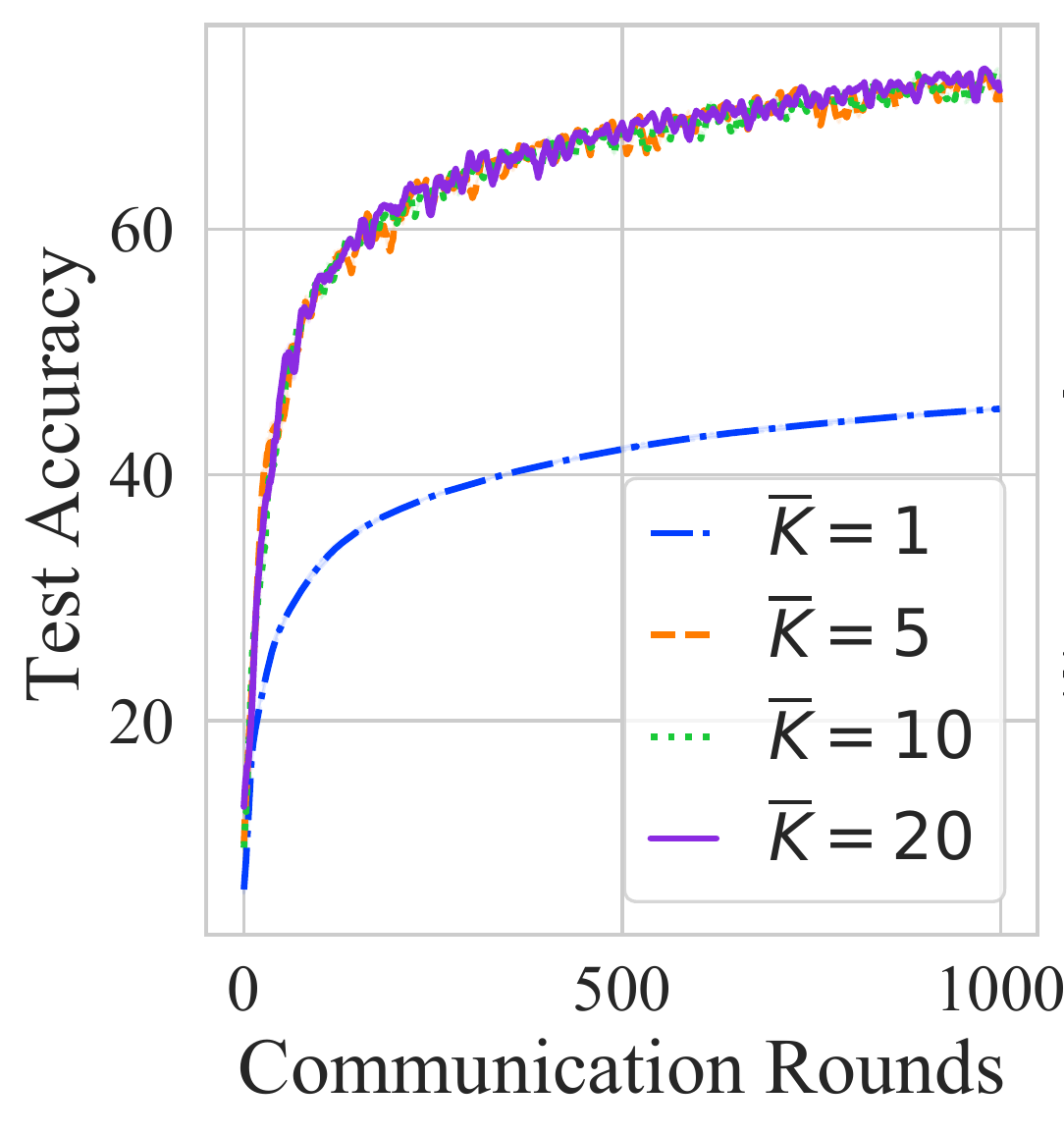} \caption{\small GD}
\end{subfigure}\hspace*{-0.2em}
\begin{subfigure}{.157\textwidth}
\includegraphics[width=1\textwidth]{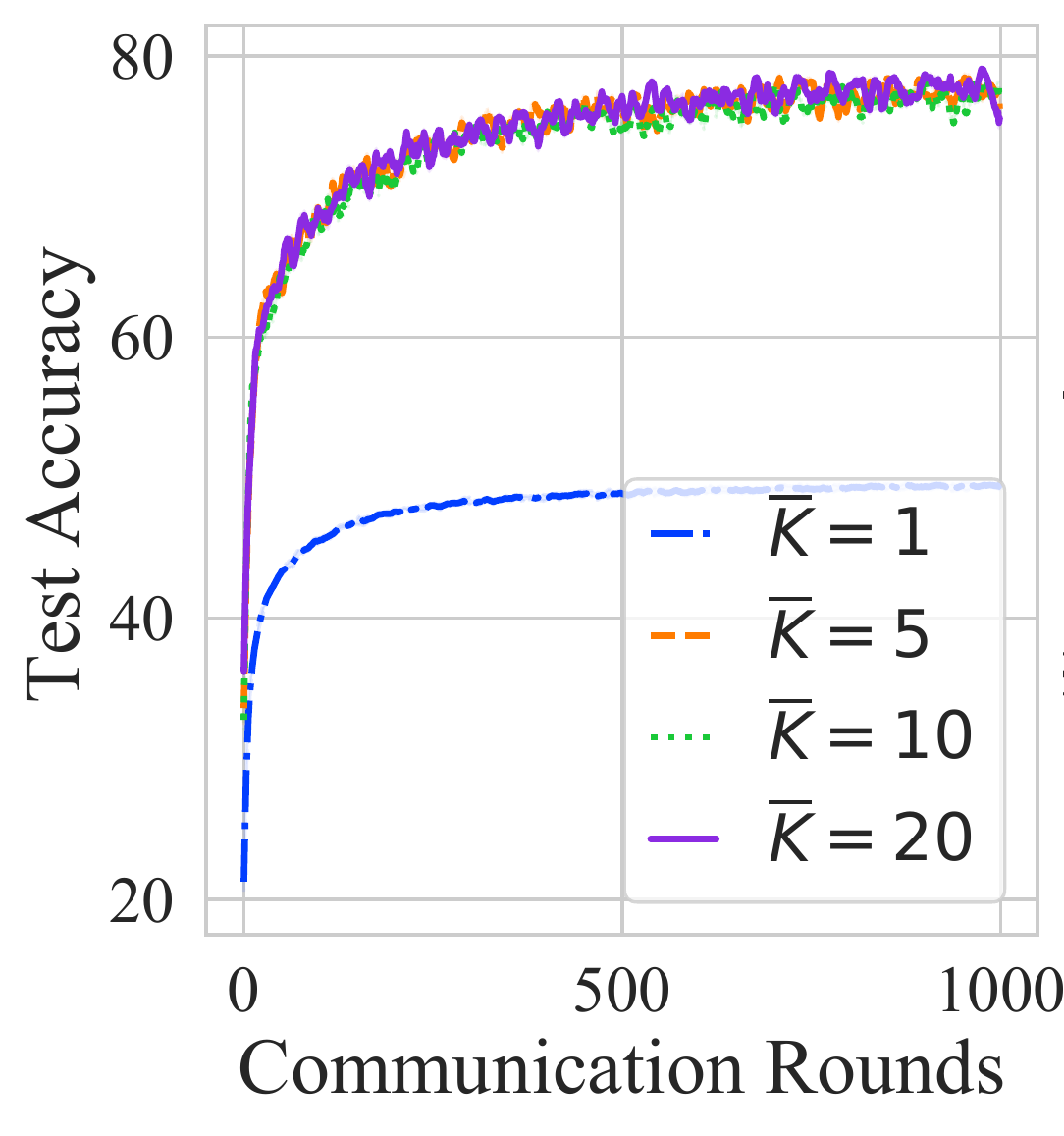} \caption{\small Local SGD}
\end{subfigure}\hspace*{-0.2em}
\begin{subfigure}{.157\textwidth}
\includegraphics[width=1\textwidth]{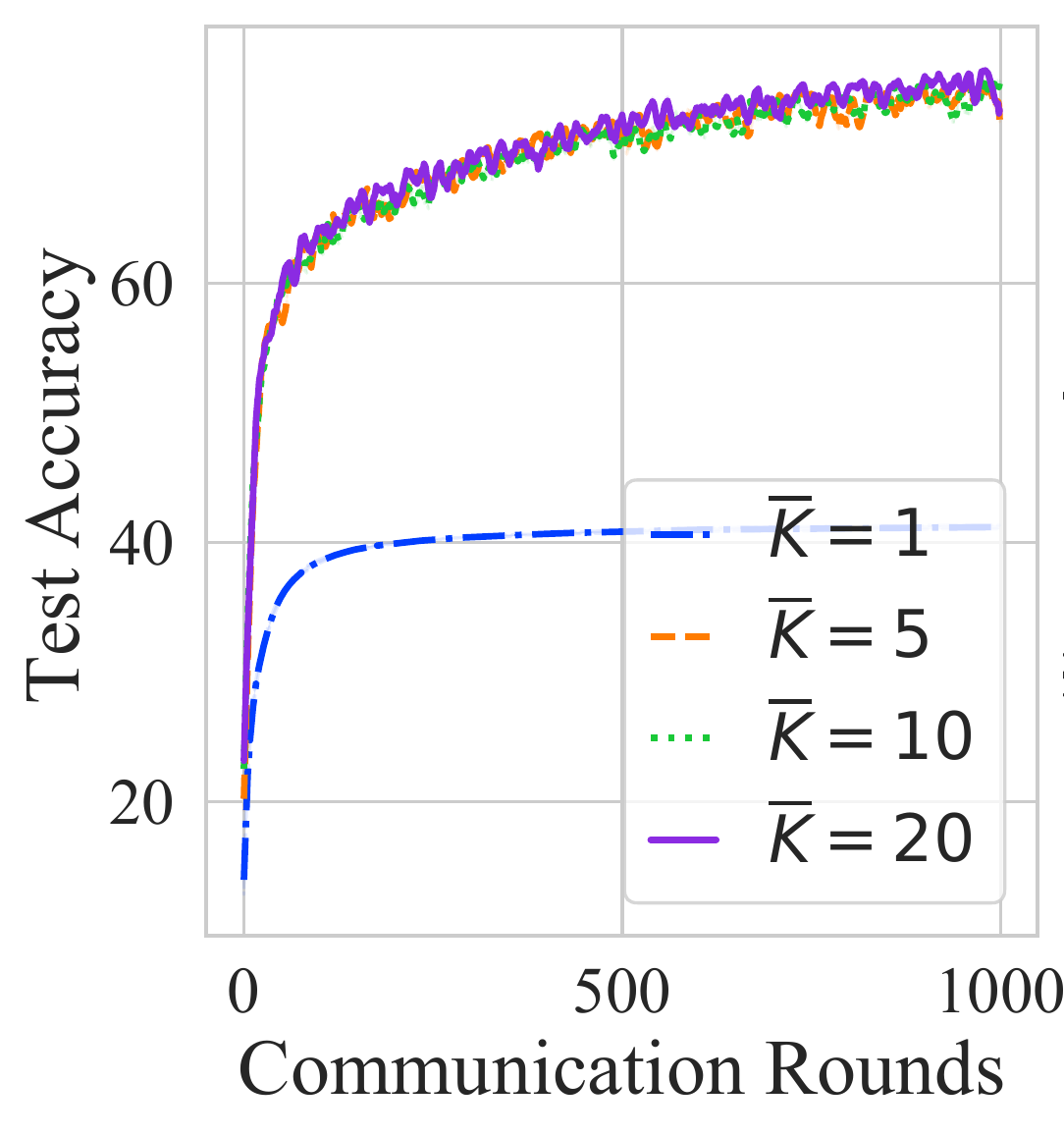} \caption{\small Shuffled SGD}
\end{subfigure}  \vspace{-0.5em}
 \caption{\small Test accuracy for EMNIST with high data heterogeneity ($\alpha=0.05$). \cycp shows a higher test accuracy performance to Standard FedAvg for all different client local procedures. The improvement gap is larger than the FMNIST case which is due to the EMNIST dataset partitioned with more data heterogeneity.}
 \label{fig:emnist}
\end{figure}

\textbf{Difference Across Client Local Procedures.} One distinct characteristic that can be observed in \Cref{fig:fmnist_1}(b)-(c) and \Cref{fig:emnist}(b)-(c) which are the results for the local SGD and SSGD with high data heterogeneity is that for the highest $\kbar$ there are oscillations in the test accuracy curve. This is due to the \cycp where as we increase $\kbar$, the inter-group data heterogeneity also becomes higher causing oscillation as the server sequentially visits the group for training. This behavior has also been observed in previous work where the server trains sequentially in a cyclic manner from different groups in \cite{zhu2021diurnal}. We show similar behavior of oscillation in the training loss curves shown in \Cref{fig:fmnist_3} and \Cref{fig:fmnist_4} in \Cref{app:exp}.

\section{Concluding Remarks}
Cyclic client participation is frequently observed in practical FL systems~\cite{kairouz2021privswor,zhu2021diurnal}, but its effect on the convergence of FedAvg is not yet well-understood. In this paper, we formulate a new framework to analyze the convergence of FedAvg with cyclic client participation for PL-objectives. Our analysis covers different client local procedures such as GD, SGD, and shuffled SGD. The analysis allows us to understood how FedAvg convergence is affected by different characteristics of the system such local update procedures, data heterogeneity within and across groups of clients that cyclically become available, and the length of the cycle $\kbar$. We discover conditions in which cyclic client participation converges faster than standard FedAvg. We also provide comparisons across different client local procedures and algorithms that are special cases of our framework with cost analyses to achieve an $\epsilon$ error. Interesting future work includes extending the analysis to non-PL-objectives and general optimizers such as including momentum, and to cases where the client groups are not disjoint and not fixed throughout training.

\clearpage
\newpage
\comment{
\textbf{Existing upper bounds on Non-\cycp with Local SGD}
One of the most common algorithms studied in FL is the non-\cycp with local SGD. Note that with $\kbar=1$ our algorithm becomes analogous to this algorithm, and the bound in \Cref{eq:8-0-0}, with $\lr=\log(MT^2)/\tau\mu N T$ \Cref{theo2:locSGD} becomes
\begin{align}
&\begin{aligned}
    &F(\wb^{(K,0)})-F^*\leq\frac{(F(\wb^{(0,0)})-F^*)}{MT^2}+\bigto{\frac{\kappa^2 \nu^2}{\mu M T^2}}\\
    &+\bigto{\frac{\kappa\gamma^2}{\mu N T}\left(\frac{M-N}{M-1}\right)}+\bigto{\frac{\kappa\sigma^2}{\mu\tau N T}}\\
    &+\bigto{\frac{\kappa^2(\tau-1)\nu^2}{\mu\tau N^2T^2}}
    \end{aligned} \label{eq:11-0-0}
\end{align} where $T\geq10\kappa\log{(M\kbar^2T^2)}$.

The first last iterate bound we compare with our bound is \cite[Theorem 1]{} which assumes full client participation, and bounded norm of the stochastic gradient with parameter $G$. Note that the third term in \Cref{eq:11-0-0} goes to 0 for full client participation and the first two terms in \Cref{eq:11-0-0} are due to the lower bound on $K$ and our small learning rate which is not directly comparable with  \cite[Theorem 1]{}. However, we do observe that the last two terms in \Cref{eq:11-0-0} that appear due to the noise coming from the local update steps match their last iterate bound that reads
\begin{align}
    \expt[F(\wb^{(K,0)})]-F^*\leq&\bigto{\frac{\kappa\sigma^2}{\mu M T}}+\bigto{\frac{\kappa^2\tau^2G^2}{\mu T^2}}
\end{align}
with an additional improvement by $1/\tau$ due to the learning rate being inversely proportional to $\tau$ and the difference in the last term's $\nu^2/\tau M^2$ and $G^2$ due to them assuming bounded norm of the stochastic gradient while we assume only the bounded variance of the stochastic gradient. 

The next bound we compare with is \cite[Theorem1]{} which assumes partial client participation, and the same assumptions regarding the noise from local updates with our work. Admittedly, this work bounds the squared norm of the global objective gradient with the minimum over the iterates, and a direct comparison   With learning rate set as $\lr=1/L\tau \sqrt{T}$ in their work, the last iterate bound reads
\begin{align}
\begin{aligned}
\min_{k\in\{0,...,T-1\}}\expt\nbr{\nabla F(\wb^{(k,0)})}^2\leq&\frac{L(F(\wb^{(0,0)})-F^*)}{\sqrt{T}}\\
+\bigto{\frac{\nu^2}{ T}}+\bigto{\frac{\nu^2}{N\sqrt{T}}\left(\frac{M-N}{M-1}\right)}\\
+\bigto{\frac{\sigma^2}{\tau N \sqrt{T}}}
\end{aligned}
\end{align}
}
\comment{
\ps{If there is a single machine with $M B$ samples, running for $K$ epochs, the rate of RR would be $O(1/(M B K^2))$. When can we achieve this rate?}\\
\yj{We achieve the same rate as Local RR when it has $M=1,~N=MB$ set in its Theorem 2, for our case it will be $M=1, N=1, B=NB,~\kbar=1$. You can see by plugging in these parameters that they achieve the same rate. Note that $\tau,~\nu$ in their theorem is $\nu,~\nub$ in ours respectively.}\\
\ps{Based on what you wrote, the expression above reduces roughly to
\begin{align}
    F(\wb^{(K,0)})-F^* \leq \frac{F(\wb^{(0,0)})-F^*}{NBK^2} + \bigto{\frac{\kappa^2 \nu^2}{\mu K^2}} + \bigto{\frac{\kappa^2 \overline{\nu}^2}{\mu N B K^2}}
\end{align}
The second term is worse than $\frac{1}{NBK^2}$. In local RR, this problem doesn't seem to arise. More importantly, in this case, our result should reduce to shuffle SGD.
}
\yj{I think it does reduce to the rate you mentioned because for Local RR Theorem 2, if we replace the parameters accordingly ($M=1,N=MB, B=N$), it achieves $\bigto{\frac{\kappa^2 \nu^2}{\mu K^2}}$ the same as what we have in our Theorem 4.4. I think you should clarify when you talk about  LocalRR whether you mean the $B=N$ scenario or not because that makes a different from our SWOR algorithm.}

\ps{Suppose, $N = 1, \kbar = M$, this setting is the same as going through all the $MB$ samples in a sequence. In this case, RR gives $\frac{1}{MBK^2}$ rate. What we get is
\begin{align}
\begin{aligned}
    F(\wb^{(K,0)})-F^* & \leq\frac{F(\wb^{(0,0)})-F^*}{MBK^2}+\bigto{\frac{\kappa^2 \nu^2}{\mu M^2 K^2}}+\bigto{\frac{\kappa^2\nu^2}{\mu M K^2}} +\bigto{\frac{\kappa^2 \overline{\nu}^2}{\mu M^2 B K^2}} \\
    &= \frac{F(\wb^{(0,0)})-F^*}{MBK^2} + \bigto{\frac{\kappa^2\nu^2}{\mu M K^2}} + \bigto{\frac{\kappa^2 \overline{\nu}^2}{\mu M^2 B K^2}} \label{eq:13}
\end{aligned}
\end{align}
The thing to remember is that depending on the shuffling scheme in single-client SGD, the convergence rate varies: given $n$ samples and $K$ epochs RR and shuffle-once (SO) give $\frac{1}{n K^2}$ rate, while incremental gradient (IG) gives $\frac{1}{K^2}$. Our current scheme sits somewhere in between IG and RR. We shuffle client samples every time, but the order in which clients are processed is fixed over time. That could explain our rate in \eqref{eq:13}, which lies somewhere between $\frac{1}{M K^2}$ and $\frac{1}{MBK^2}$. It would be interesting to see if we can get $\frac{1}{MBK^2}$ rate by shuffling client clusters once in the beginning.
}

\ps{
\begin{itemize}
    \item In some regimes, the $\frac{\kappa\sigma^2}{\mu\tau N K}$ term would be dominant. However, this term is worse than the $\frac{\sigma^2}{\mu\tau N K}$ bound in \cite[Corollary~2.3]{stich2018local}, \cite[Lemma~15]{koloskova2020unified}. Some reviewers will most likely point this out and we need to explain/improve this. 
    \begin{itemize}
        \item One potential explanation is that both these works (\cite[Corollary~2.3]{stich2018local}, \cite[Lemma~15]{koloskova2020unified}) analyze convergence in terms of a weighted average of iterates, rather than the last iterate. \cite[Section~3.2.1]{yun2022shuf} also discusses this point.
        \item Separate client and server learning rates might help here.
    \end{itemize}
    \yj{$\rightarrow$ this can be explained by the fact that the referred work assumes strongly-convex functions while we assume PL-functions. We can add the analysis version for strongly-convex functions.}
    \item The last two terms can be subsumed in the penultimate term. We also need the local-updates term of LocalSGD with our last term. 
    \begin{itemize}
        \item It seems that our local-updates error depends only on heterogeneity $\nu$, while in localSGD, it depends on both stochastic gradient noise and heterogeneity. Can we explain this difference?
        \item Our dependence on $\kappa$ is worse (quadratic, as opposed t linear in \cite[Lemma~15]{koloskova2020unified}), but we also have no. of participating clients $N$ in the denominator, which they don't. However, we should find some other papers which utilize two-sided learning rates and might have even better higher-order terms.
    \end{itemize}
    \yj{$\rightarrow$ The dependency on $\sigma$ doesn't appear using this analysis technique because we don't get the $\sigma$ term when deriving the terms in \Cref{eq:5-0-3}. The dependency on $\kappa$ is worse we are getting $K^2$ in the denominator in the other terms which the referred work doesn't as well as the $N$. Moreover, we are using the last iterate bound instead of that of the average of the iterates.}
    \item A more general question is regarding how to extend this shuffling based analysis to more general optimizers, like momentum, or the general optimizers considered in FedNova \cite{wang2020tackling}.
    \item We should compare with the results in \cite{mohtashami22characterizing}.
\end{itemize}
}
}

\newpage
\bibliography{dist_sgd}

\begin{thebibliography}{60}
\providecommand{\natexlab}[1]{#1}
\providecommand{\url}[1]{\texttt{#1}}
\expandafter\ifx\csname urlstyle\endcsname\relax
  \providecommand{\doi}[1]{doi: #1}\else
  \providecommand{\doi}{doi: \begingroup \urlstyle{rm}\Url}\fi

\bibitem[Ahn et~al.(2020)Ahn, Yun, and Sra]{ahn2020sgd_neurips}
Ahn, K., Yun, C., and Sra, S.
\newblock Sgd with shuffling: optimal rates without component convexity and
  large epoch requirements.
\newblock \emph{Advances in Neural Information Processing Systems},
  33:\penalty0 17526--17535, 2020.

\bibitem[Avdyukhin \& Kasiviswanathan(2021)Avdyukhin and
  Kasiviswanathan]{dimit2021arbifl}
Avdyukhin, D. and Kasiviswanathan, S.~P.
\newblock Federated learning under arbitrary communication patterns.
\newblock In \emph{Proceedings of the 38th International Conference on Machine
  Learning}, 2021.

\bibitem[Balle et~al.(2020)Balle, Kairouz, McMahan, Thakkar, and
  Guha~Thakurta]{balle2020privacy}
Balle, B., Kairouz, P., McMahan, B., Thakkar, O., and Guha~Thakurta, A.
\newblock Privacy amplification via random check-ins.
\newblock \emph{Advances in Neural Information Processing Systems},
  33:\penalty0 4623--4634, 2020.

\bibitem[Basu et~al.(2019)Basu, Data, Karakus, and Diggavi]{basu2019qsparse}
Basu, D., Data, D., Karakus, C., and Diggavi, S.
\newblock Qsparse-local-sgd: Distributed sgd with quantization, sparsification,
  and local computations.
\newblock In \emph{Advances in Neural Information Processing Systems}, pp.\
  14695--14706, 2019.

\bibitem[Bonawitz et~al.(2016)Bonawitz, Ivanov, Kreuter, Marcedone, McMahan,
  Patel, Ramage, Segal, and Seth]{bonawitz2016practical}
Bonawitz, K., Ivanov, V., Kreuter, B., Marcedone, A., McMahan, H.~B., Patel,
  S., Ramage, D., Segal, A., and Seth, K.
\newblock Practical secure aggregation for federated learning on user-held
  data.
\newblock In \emph{NIPS Workshop on Private Multi-Party Machine Learning},
  2016.

\bibitem[Bonawitz et~al.(2019)Bonawitz, Eichner, Grieskamp, Huba, Ingerman,
  Ivanov, Kiddon, Konecny, Mazzocchi, McMahan, Overveldt, Petrou, Ramage, and
  Roselander]{bonawitz2019towards}
Bonawitz, K., Eichner, H., Grieskamp, W., Huba, D., Ingerman, A., Ivanov, V.,
  Kiddon, C., Konecny, J., Mazzocchi, S., McMahan, H.~B., Overveldt, T.~V.,
  Petrou, D., Ramage, D., and Roselander, J.
\newblock {Towards Federated Learning at Scale: System Design}.
\newblock \emph{SysML}, April 2019.
\newblock URL \url{https://www.sysml.cc/doc/2019/193.pdf}.

\bibitem[Cho et~al.(2020)Cho, Wang, and Joshi]{yjc2020csfl}
Cho, Y.~J., Wang, J., and Joshi, G.
\newblock Client selection in federated learning: Convergence analysis and
  power-of-choice selection strategies.
\newblock \emph{arXiv:2010.01243}, abs/2010.01243, 2020.
\newblock URL \url{http://arxiv.org/abs/2010.01243}.

\bibitem[Choquette-Choo et~al.(2022)Choquette-Choo, McMahan, Rush, and
  Thakurta]{choquette2022multi}
Choquette-Choo, C.~A., McMahan, H.~B., Rush, K., and Thakurta, A.
\newblock Multi-epoch matrix factorization mechanisms for private machine
  learning.
\newblock \emph{arXiv preprint arXiv:2211.06530}, 2022.

\bibitem[Cohen et~al.(2017)Cohen, Afshar, Tapson, and van
  Schaik]{cohen2017emnist}
Cohen, G., Afshar, S., Tapson, J., and van Schaik, A.
\newblock {EMNIST}: an extension of {MNIST} to handwritten letters.
\newblock \emph{arXiv preprint arXiv:1702.05373}, 2017.

\bibitem[Eichner et~al.(2019)Eichner, Koren, McMahan, Srebro, and
  Talwar]{eichner2019semi}
Eichner, H., Koren, T., McMahan, B., Srebro, N., and Talwar, K.
\newblock Semi-cyclic stochastic gradient descent.
\newblock In \emph{International Conference on Machine Learning}, pp.\
  1764--1773. PMLR, 2019.

\bibitem[Goetz et~al.(2019)Goetz, Malik, Bui, Moon, Liu, and
  Kumar]{jack2019afl}
Goetz, J., Malik, K., Bui, D., Moon, S., Liu, H., and Kumar, A.
\newblock Active federated learning.
\newblock \emph{ArXiv}, 2019.

\bibitem[Gower et~al.(2021)Gower, Sebbouh, and Loizou]{gower2021sgdpl}
Gower, R.~M., Sebbouh, O., and Loizou, N.
\newblock {SGD for Structured Nonconvex Functions: Learning Rates, Minibatching
  and Interpolation}.
\newblock \emph{International Conference on Artificial Intelligence and
  Statistics (AISTATS)}, 2021.

\bibitem[Gurbuzbalaban et~al.(2019)Gurbuzbalaban, Ozdaglar, and
  Parrilo]{gurbuzbalaban2019convergence_siamopt}
Gurbuzbalaban, M., Ozdaglar, A., and Parrilo, P.~A.
\newblock Convergence rate of incremental gradient and incremental newton
  methods.
\newblock \emph{SIAM Journal on Optimization}, 29\penalty0 (4):\penalty0
  2542--2565, 2019.

\bibitem[G{\"u}rb{\"u}zbalaban et~al.(2021)G{\"u}rb{\"u}zbalaban, Ozdaglar, and
  Parrilo]{gurbuzbalaban2021random_mathprog}
G{\"u}rb{\"u}zbalaban, M., Ozdaglar, A., and Parrilo, P.~A.
\newblock Why random reshuffling beats stochastic gradient descent.
\newblock \emph{Mathematical Programming}, 186\penalty0 (1):\penalty0 49--84,
  2021.

\bibitem[Haddadpour \& Mahdavi(2019)Haddadpour and
  Mahdavi]{haddadpour2019convergence}
Haddadpour, F. and Mahdavi, M.
\newblock On the convergence of local descent methods in federated learning.
\newblock \emph{arXiv preprint arXiv:1910.14425}, 2019.

\bibitem[Haddadpour et~al.(2019)Haddadpour, Kamani, Mahdavi, and
  Cadambe]{haddadpour2019local}
Haddadpour, F., Kamani, M.~M., Mahdavi, M., and Cadambe, V.
\newblock Local {SGD} with periodic averaging: Tighter analysis and adaptive
  synchronization.
\newblock In \emph{Advances in Neural Information Processing Systems}, pp.\
  11080--11092, 2019.

\bibitem[Hard et~al.(2018)Hard, Rao, Mathews, Ramaswamy, Beaufays, Augenstein,
  Eichner, Kiddon, and Ramage]{hard2018federated}
Hard, A., Rao, K., Mathews, R., Ramaswamy, S., Beaufays, F., Augenstein, S.,
  Eichner, H., Kiddon, C., and Ramage, D.
\newblock Federated learning for mobile keyboard prediction.
\newblock \emph{arXiv preprint arXiv:1811.03604}, 2018.

\bibitem[Hsu et~al.(2019)Hsu, Qi, and Brown]{hsu2019noniid}
Hsu, T.-M.~H., Qi, H., and Brown, M.
\newblock Measuring the effects of non-identical data distribution for
  federated visual classification.
\newblock In \emph{International Workshop on Federated Learning for User
  Privacy and Data Confidentiality in Conjunction with NeurIPS 2019
  (FL-NeurIPS'19)}, December 2019.

\bibitem[Huba et~al.(2022)Huba, Nguyen, Malik, Zhu, Rabbat, Yousefpour, Wu,
  Zhan, Ustinov, Srinivas, et~al.]{huba2022papaya}
Huba, D., Nguyen, J., Malik, K., Zhu, R., Rabbat, M., Yousefpour, A., Wu,
  C.-J., Zhan, H., Ustinov, P., Srinivas, H., et~al.
\newblock Papaya: Practical, private, and scalable federated learning.
\newblock \emph{Proceedings of Machine Learning and Systems}, 4:\penalty0
  814--832, 2022.

\bibitem[Jhunjhunwala et~al.(2022)Jhunjhunwala, Sharma, Nagarkatti, and
  Joshi]{div2022parcli}
Jhunjhunwala, D., Sharma, P., Nagarkatti, A., and Joshi, G.
\newblock Fedvarp: Tackling the variance due to partial client participation in
  federated learning.
\newblock \emph{arXiv}, 2022.

\bibitem[Kairouz et~al.(2019)Kairouz, McMahan, Avent, Bellet, Bennis, Bhagoji,
  Bonawitz, Charles, Cormode, Cummings, D'Oliveira, Rouayheb, Evans, Gardner,
  Garrett, Gascon, Ghazi, Gibbons, Gruteser, Harchaoui, He, He, Huo,
  Hutchinson, Hsu, Jaggi, Javidi, Joshi, Khodak, Konecny, Korolova, Koushanfar,
  Koyejo, Lepoint, Liu, Mittal, Mohri, Nock, Ozgur, Pagh, Raykova, Qi, Ramage,
  Raskar, Song, Song, Stich, Sun, Suresh, Tramer, Vepakomma, Wang, Xiong, Xu,
  Yang, Yu, Yu, and Zhao]{kairouz2019advances}
Kairouz, P., McMahan, H.~B., Avent, B., Bellet, A., Bennis, M., Bhagoji, A.~N.,
  Bonawitz, K., Charles, Z., Cormode, G., Cummings, R., D'Oliveira, R. G.~L.,
  Rouayheb, S.~E., Evans, D., Gardner, J., Garrett, Z., Gascon, A., Ghazi, B.,
  Gibbons, P.~B., Gruteser, M., Harchaoui, Z., He, C., He, L., Huo, Z.,
  Hutchinson, B., Hsu, J., Jaggi, M., Javidi, T., Joshi, G., Khodak, M.,
  Konecny, J., Korolova, A., Koushanfar, F., Koyejo, S., Lepoint, T., Liu, Y.,
  Mittal, P., Mohri, M., Nock, R., Ozgur, A., Pagh, R., Raykova, M., Qi, H.,
  Ramage, D., Raskar, R., Song, D., Song, W., Stich, S.~U., Sun, Z., Suresh,
  A.~T., Tramer, F., Vepakomma, P., Wang, J., Xiong, L., Xu, Z., Yang, Q., Yu,
  F.~X., Yu, H., and Zhao, S.
\newblock Advances and open problems in federated learning.
\newblock \emph{arXiv preprint arXiv:1912.04977}, 2019.

\bibitem[Kairouz et~al.(2021{\natexlab{a}})Kairouz, McMahan, Song, Thakkar,
  Thakurta, and Xu]{kairouz2021practical}
Kairouz, P., McMahan, B., Song, S., Thakkar, O., Thakurta, A., and Xu, Z.
\newblock Practical and private (deep) learning without sampling or shuffling.
\newblock In \emph{International Conference on Machine Learning}, pp.\
  5213--5225. PMLR, 2021{\natexlab{a}}.

\bibitem[Kairouz et~al.(2021{\natexlab{b}})Kairouz, McMahan, Song, Thakkar,
  Thakurta, and Xu]{kairouz2021privswor}
Kairouz, P., McMahan, B., Song, S., Thakkar, O., Thakurta, A., and Xu, Z.
\newblock Practical and private (deep) learning without sampling or shuffling.
\newblock \emph{arXiv preprint arXiv:2103.00039}, December 2021{\natexlab{b}}.

\bibitem[Karimi et~al.(2020)Karimi, Nutini, and Schmidt]{karimi2020linpl}
Karimi, H., Nutini, J., and Schmidt, M.
\newblock Linear convergence of gradient and proximal-gradient methods under
  the polyak-Łojasiewicz condition.
\newblock \emph{CoRR}, abs/1608.04636, 2020.
\newblock URL \url{http://arxiv.org/abs/1608.04636}.

\bibitem[Karimireddy et~al.(2019)Karimireddy, Kale, Mohri, Reddi, Stich, and
  Suresh]{karimireddy2019scaffold}
Karimireddy, S.~P., Kale, S., Mohri, M., Reddi, S.~J., Stich, S.~U., and
  Suresh, A.~T.
\newblock {SCAFFOLD}: Stochastic controlled averaging for on-device federated
  learning.
\newblock \emph{arXiv preprint arXiv:1910.06378}, 2019.

\bibitem[Koloskova et~al.(2020)Koloskova, Loizou, Boreiri, Jaggi, and
  Stich]{koloskova2020unified}
Koloskova, A., Loizou, N., Boreiri, S., Jaggi, M., and Stich, S.~U.
\newblock A unified theory of decentralized {SGD} with changing topology and
  local updates.
\newblock In \emph{Proceedings of 37th International Conference on Machine
  Learning}, 2020.

\bibitem[Li et~al.(2020{\natexlab{a}})Li, Sahu, Talwalkar, and
  Smith]{li2020federated_spmag}
Li, T., Sahu, A.~K., Talwalkar, A., and Smith, V.
\newblock Federated learning: Challenges, methods, and future directions.
\newblock \emph{IEEE Signal Processing Magazine}, 37\penalty0 (3):\penalty0
  50--60, 2020{\natexlab{a}}.

\bibitem[Li et~al.(2020{\natexlab{b}})Li, Huang, Yang, Wang, and
  Zhang]{li2019on}
Li, X., Huang, K., Yang, W., Wang, S., and Zhang, Z.
\newblock On the convergence of fedavg on non-iid data.
\newblock In \emph{International Conference on Learning Representations
  (ICLR)}, July 2020{\natexlab{b}}.
\newblock URL \url{https://arxiv.org/abs/1907.02189}.

\bibitem[Li et~al.(2021)Li, Milzarek, and Qiu]{li2021convergence_KL}
Li, X., Milzarek, A., and Qiu, J.
\newblock Convergence of random reshuffling under the
  kurdyka-$\{$$\backslash$L$\}$ ojasiewicz inequality.
\newblock \emph{arXiv preprint arXiv:2110.04926}, 2021.

\bibitem[Malinovsky et~al.(2021)Malinovsky, Sailanbayev, and
  Richt{\'a}rik]{malinovsky2021random}
Malinovsky, G., Sailanbayev, A., and Richt{\'a}rik, P.
\newblock Random reshuffling with variance reduction: New analysis and better
  rates.
\newblock \emph{arXiv preprint arXiv:2104.09342}, 2021.

\bibitem[McMahan \& Thakurta(2022)McMahan and Thakurta]{mcmahan2022federated}
McMahan, B. and Thakurta, A.
\newblock Federated learning with formal differential privacy guarantees.
\newblock \emph{Google AI Blog}, 2022.

\bibitem[McMahan et~al.(2017)McMahan, Moore, Ramage, Hampson, and
  y~Arcas]{mcmahan2017communication}
McMahan, H.~B., Moore, E., Ramage, D., Hampson, S., and y~Arcas, B.~A.
\newblock {Communication-Efficient Learning of Deep Networks from Decentralized
  Data}.
\newblock \emph{International Conference on Artificial Intelligenece and
  Statistics (AISTATS)}, April 2017.
\newblock URL \url{https://arxiv.org/abs/1602.05629}.

\bibitem[McMahan et~al.(2018)McMahan, Ramage, Talwar, and
  Zhang]{mcmahan2017learning}
McMahan, H.~B., Ramage, D., Talwar, K., and Zhang, L.
\newblock Learning differentially private recurrent language models.
\newblock \emph{International Conference on Learning Representations}, 2018.

\bibitem[Mishchenko et~al.(2020)Mishchenko, Khaled, and
  Richt{\'a}rik]{mishchenko2020random_neurips}
Mishchenko, K., Khaled, A., and Richt{\'a}rik, P.
\newblock Random reshuffling: Simple analysis with vast improvements.
\newblock \emph{Advances in Neural Information Processing Systems},
  33:\penalty0 17309--17320, 2020.

\bibitem[Nagaraj et~al.(2019)Nagaraj, Jain, and
  Netrapalli]{nagaraj2019sgd_icml}
Nagaraj, D., Jain, P., and Netrapalli, P.
\newblock Sgd without replacement: Sharper rates for general smooth convex
  functions.
\newblock In \emph{International Conference on Machine Learning}, pp.\
  4703--4711. PMLR, 2019.

\bibitem[Nguyen et~al.(2021)Nguyen, Tran-Dinh, Phan, Nguyen, and
  Van~Dijk]{nguyen2021shuffl_jmlr}
Nguyen, L.~M., Tran-Dinh, Q., Phan, D.~T., Nguyen, P.~H., and Van~Dijk, M.
\newblock A unified convergence analysis for shuffling-type gradient methods.
\newblock \emph{The Journal of Machine Learning Research}, 22\penalty0
  (1):\penalty0 9397--9440, 2021.

\bibitem[Paulik et~al.(2021)Paulik, Seigel, Mason, Telaar, Kluivers, van Dalen,
  Lau, Carlson, Granqvist, Vandevelde, et~al.]{paulik2021federated}
Paulik, M., Seigel, M., Mason, H., Telaar, D., Kluivers, J., van Dalen, R.,
  Lau, C.~W., Carlson, L., Granqvist, F., Vandevelde, C., et~al.
\newblock Federated evaluation and tuning for on-device personalization: System
  design \& applications.
\newblock \emph{arXiv preprint arXiv:2102.08503}, 2021.

\bibitem[Qu et~al.(2020)Qu, Lin, Kalagnanam, Li, Zhou, and Zhou]{qu2020lin}
Qu, Z., Lin, K., Kalagnanam, J., Li, Z., Zhou, J., and Zhou, Z.
\newblock Federated learning’s blessing: Fedavg has linear speedup.
\newblock \emph{https://arxiv.org/abs/2007.05690}, 2020.

\bibitem[Qu et~al.(2021)Qu, Lin, Kalagnanam, Li, Zhou, and
  Zhou]{qu2020pllandscape}
Qu, Z., Lin, K., Kalagnanam, J., Li, Z., Zhou, J., and Zhou, Z.
\newblock Loss landscapes and optimization in over-parameterized non-linear
  systems and neural networks.
\newblock \emph{https://arxiv.org/abs/2003.00307}, 2021.

\bibitem[Rajput et~al.(2020)Rajput, Gupta, and
  Papailiopoulos]{rajput2020closing_icml}
Rajput, S., Gupta, A., and Papailiopoulos, D.
\newblock Closing the convergence gap of sgd without replacement.
\newblock In \emph{International Conference on Machine Learning}, pp.\
  7964--7973. PMLR, 2020.

\bibitem[Reddi et~al.(2021)Reddi, Charles, Zaheer, Garrett, Rush,
  Kone{\v{c}}n{\`y}, Kumar, and McMahan]{reddi2020adaptive}
Reddi, S., Charles, Z., Zaheer, M., Garrett, Z., Rush, K., Kone{\v{c}}n{\`y},
  J., Kumar, S., and McMahan, H.~B.
\newblock Adaptive federated optimization.
\newblock In \emph{International Conference on Learning Representations
  (ICLR)}, 2021.

\bibitem[Ruan et~al.(2020)Ruan, Zhang, Liang, and Joe-Wong]{ruan2020flexible}
Ruan, Y., Zhang, X., Liang, S.-C., and Joe-Wong, C.
\newblock Towards flexible device participation in federated learning for
  non-iid data.
\newblock \emph{ArXiv}, 2020.

\bibitem[Safran \& Shamir(2020)Safran and Shamir]{safran2020good_colt}
Safran, I. and Shamir, O.
\newblock How good is sgd with random shuffling?
\newblock In \emph{Conference on Learning Theory}, pp.\  3250--3284. PMLR,
  2020.

\bibitem[Safran \& Shamir(2021)Safran and Shamir]{safran2021random_neurips}
Safran, I. and Shamir, O.
\newblock Random shuffling beats sgd only after many epochs on ill-conditioned
  problems.
\newblock \emph{Advances in Neural Information Processing Systems},
  34:\penalty0 15151--15161, 2021.

\bibitem[Sahu et~al.(2020)Sahu, Li, Sanjabi, Zaheer, Talwalkar, and
  Smith]{sahu2019federated}
Sahu, A.~K., Li, T., Sanjabi, M., Zaheer, M., Talwalkar, A., and Smith, V.
\newblock Federated optimization for heterogeneous networks.
\newblock In \emph{Proceedings of the 3rd MLSys Conference}, January 2020.

\bibitem[Stich(2019)]{stich2018local}
Stich, S.~U.
\newblock Local {SGD} converges fast and communicates little.
\newblock In \emph{International Conference on Learning Representations
  (ICLR)}, 2019.

\bibitem[Wang et~al.(2020{\natexlab{a}})Wang, Liu, Liang, Joshi, and
  Poor]{wang2020tackling}
Wang, J., Liu, Q., Liang, H., Joshi, G., and Poor, H.~V.
\newblock {Tackling the Objective Inconsistency Problem in Heterogeneous
  Federated Optimization}.
\newblock \emph{preprint}, May 2020{\natexlab{a}}.
\newblock URL \url{https://arxiv.org/abs/2007.07481}.

\bibitem[Wang et~al.(2020{\natexlab{b}})Wang, Tantia, Ballas, and
  Rabbat]{wang2020slowmo}
Wang, J., Tantia, V., Ballas, N., and Rabbat, M.
\newblock {SlowMo}: Improving communication-efficient distributed {SGD} with
  slow momentum.
\newblock In \emph{International Conference on Learning Representations},
  2020{\natexlab{b}}.
\newblock URL \url{https://openreview.net/forum?id=SkxJ8REYPH}.

\bibitem[Wang et~al.(2021)Wang, Charles, Xu, Joshi, McMahan, Al-Shedivat,
  Andrew, Avestimehr, Daly, Data, et~al.]{wang2021field}
Wang, J., Charles, Z., Xu, Z., Joshi, G., McMahan, H.~B., Al-Shedivat, M.,
  Andrew, G., Avestimehr, S., Daly, K., Data, D., et~al.
\newblock A field guide to federated optimization.
\newblock \emph{arXiv preprint arXiv:2107.06917}, 2021.

\bibitem[Wang et~al.(2022)Wang, Das, Joshi, Kale, Xu, and
  Zhang]{wang2022unreasonable}
Wang, J., Das, R., Joshi, G., Kale, S., Xu, Z., and Zhang, T.
\newblock On the unreasonable effectiveness of federated averaging with
  heterogeneous data.
\newblock \emph{arXiv preprint arXiv:2206.04723}, 2022.

\bibitem[Wang \& Ji(2022)Wang and Ji]{wang2022arbicli}
Wang, S. and Ji, M.
\newblock A unified analysis of federated learning with arbitrary client
  participation.
\newblock In \emph{Advances in Neural Information Processing Systems}, 2022.

\bibitem[Woodworth et~al.(2020{\natexlab{a}})Woodworth, Patel, Stich, Dai,
  Bullins, McMahan, Shamir, and Srebro]{woodworth2020local}
Woodworth, B., Patel, K.~K., Stich, S.~U., Dai, Z., Bullins, B., McMahan,
  H.~B., Shamir, O., and Srebro, N.
\newblock Is local {SGD} better than minibatch {SGD}?
\newblock In \emph{Proceedings of the 37th International Conference on Machine
  Learning}, 2020{\natexlab{a}}.

\bibitem[Woodworth et~al.(2020{\natexlab{b}})Woodworth, Patel, and
  Srebro]{woodworth2020minibatch}
Woodworth, B.~E., Patel, K.~K., and Srebro, N.
\newblock Minibatch vs local sgd for heterogeneous distributed learning.
\newblock \emph{Advances in Neural Information Processing Systems},
  33:\penalty0 6281--6292, 2020{\natexlab{b}}.

\bibitem[Xiao et~al.(2017)Xiao, Rasul, and Vollgraf]{xiao2017fmnist}
Xiao, H., Rasul, K., and Vollgraf, R.
\newblock Fashion-mnist: a novel image dataset for benchmarking machine
  learning algorithms.
\newblock \emph{https://arxiv.org/abs/1708.07747}, aug 2017.

\bibitem[Yang et~al.(2021)Yang, Fang, and Liu]{yang2021achieving_iclr}
Yang, H., Fang, M., and Liu, J.
\newblock Achieving linear speedup with partial worker participation in non-iid
  federated learning.
\newblock In \emph{International Conference on Learning Representations}, 2021.

\bibitem[Yang et~al.(2018)Yang, Andrew, Eichner, Sun, Li, Kong, Ramage, and
  Beaufays]{yang2018applied}
Yang, T., Andrew, G., Eichner, H., Sun, H., Li, W., Kong, N., Ramage, D., and
  Beaufays, F.
\newblock Applied federated learning: Improving google keyboard query
  suggestions.
\newblock \emph{arXiv preprint arXiv:1812.02903}, 2018.

\bibitem[Yu et~al.(2019)Yu, Jin, and Yang]{yu2019linear}
Yu, H., Jin, R., and Yang, S.
\newblock On the linear speedup analysis of communication efficient momentum
  sgd for distributed non-convex optimization.
\newblock In \emph{Proceedings of the International Conference on Machine
  Learning (ICML)}, jun 2019.

\bibitem[Yun et~al.(2022)Yun, Rajput, and Sra]{yun2022shuf}
Yun, C., Rajput, S., and Sra, S.
\newblock Minibatch vs local sgd with shuffling: Tight convergence bounds and
  beyond.
\newblock \emph{International Conference on Learning Representations (ICLR)},
  2022.

\bibitem[Zhu et~al.(2021)Zhu, Xu, Chen, Kone{\v{c}}n{\`y}, Hard, and
  Goldstein]{zhu2021diurnal}
Zhu, C., Xu, Z., Chen, M., Kone{\v{c}}n{\`y}, J., Hard, A., and Goldstein, T.
\newblock Diurnal or nocturnal? federated learning of multi-branch networks
  from periodically shifting distributions.
\newblock In \emph{International Conference on Learning Representations}, 2021.

\bibitem[Zhu et~al.(2022)Zhu, Zhang, and Wang]{zhu2022agesel}
Zhu, F., Zhang, J., and Wang, X.
\newblock Communication-efficient local sgd with age-based worker selection.
\newblock \emph{arXiv preprint arXiv:2210.17073}, December 2022.

\end{thebibliography}
\bibliographystyle{icml2023}
\newpage
\appendix

\onecolumn

\section{Additional Experimental Details and Results} \label{app:exp}

\paragraph{Additional Experimental Setup Details.} All experiments are conducted on clusters equipped with one NVIDIA TitanX GPU. The algorithms are implemented in PyTorch 1. 11. 0. The code used for all experiments is included in the supplementary material. For all experiments, we do a grid search over the required hyperparameters to find the best-performing ones and then fix the hyperparameters and only change $\kbar$. Specifically, we do a grid search over the learning rate: $\lr\in\{0.05, 0.01, 0.005, 0.001\}$, batch size: $b\in\{32, 64, 128\}$, and local iterations: $\tau\in\{5, 10, 30, 50\}$ to find the hyper-parameters with the highest test accuracy for each benchmark. For the deep multi-layer perceptron used for our experiments, we use a network with 2 hidden layers of units $[64,30]$ with dropout after the first hidden layer where the input is the normalized flattened image and the output consists of the label space. For all experiments, the data is partitioned to $80\%/10\%/10\%$ for training/validation/test data, where the training data then is again partitioned across the clients heterogeneously.

\paragraph{Training Losses for the Results in \Cref{fig:fmnist_1} and \Cref{fig:fmnist_2}.} We present the training loss curves for the test accuracy results shown in \Cref{fig:fmnist_1} and \Cref{fig:fmnist_2} in \Cref{fig:fmnist_3} and \Cref{fig:fmnist_4} respectively. We see that the implications are consistent to what we have observed for the test accuracy where a higher $\kbar>1$ leads to faster convergence. The convergence improvement gap is larger when we have high data heterogenetiy as shown in \Cref{fig:fmnist_3} compared to the improvement gap for lower data heterogeneity shown in \Cref{fig:fmnist_4}. Moreover, for the SGD and SSGD client local procedures for high dataheterogeneity (\Cref{fig:fmnist_3}(b)-(c)) for the highest $\kbar=20$, we see the oscilliations that were also observed in the test accuracy curves. This is due to the cyclic participation of the clients which the client groups have heterogeneous data. Such oscilliation is not observed for lower data heterogeneity and lower $\kbar$ values.

\begin{figure}[!h] \small
\centering
\begin{subfigure}{.2\textwidth}
\includegraphics[width=1\textwidth]{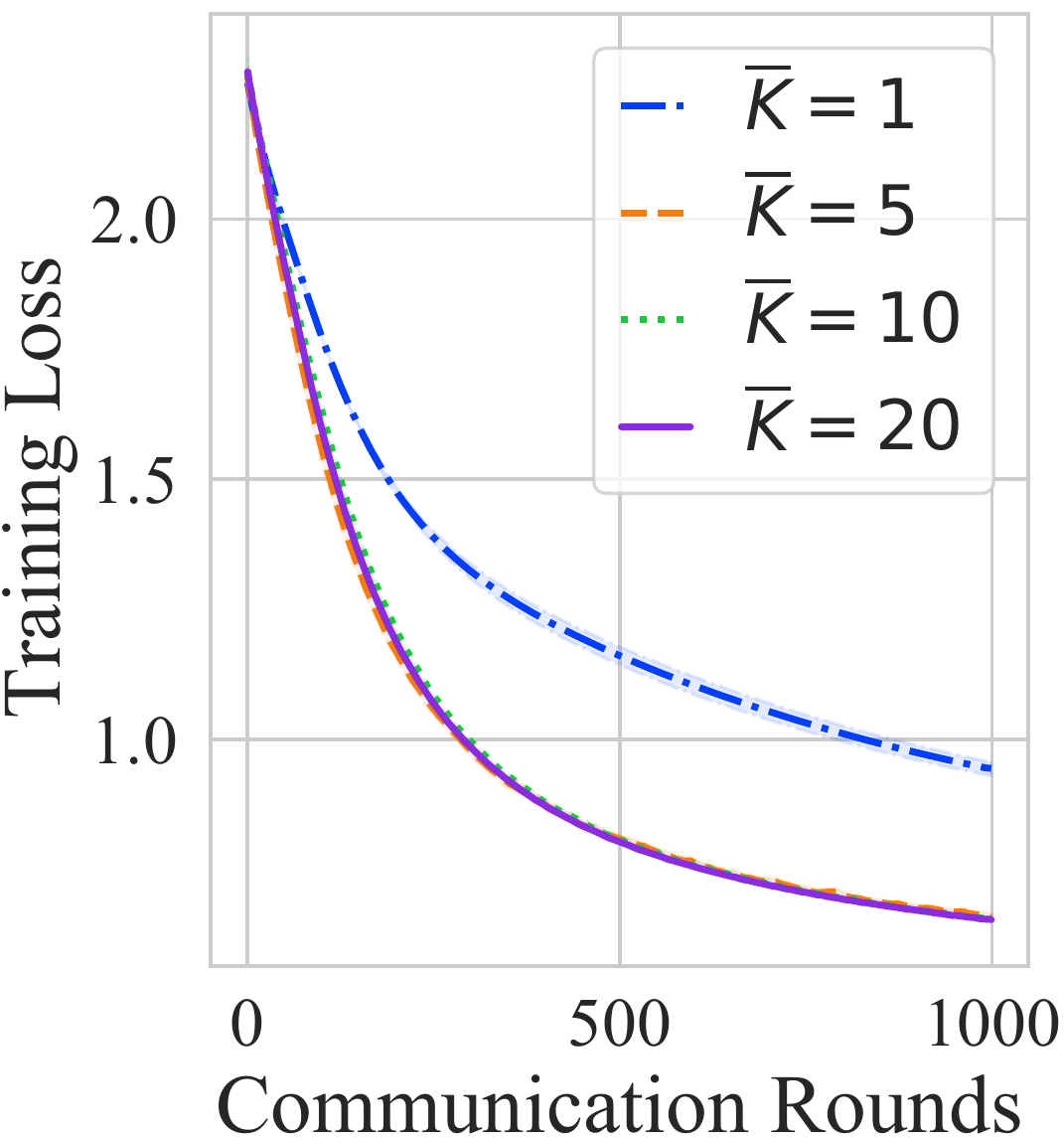} \caption{\small GD}
\end{subfigure}
\begin{subfigure}{.2\textwidth}
\includegraphics[width=1\textwidth]{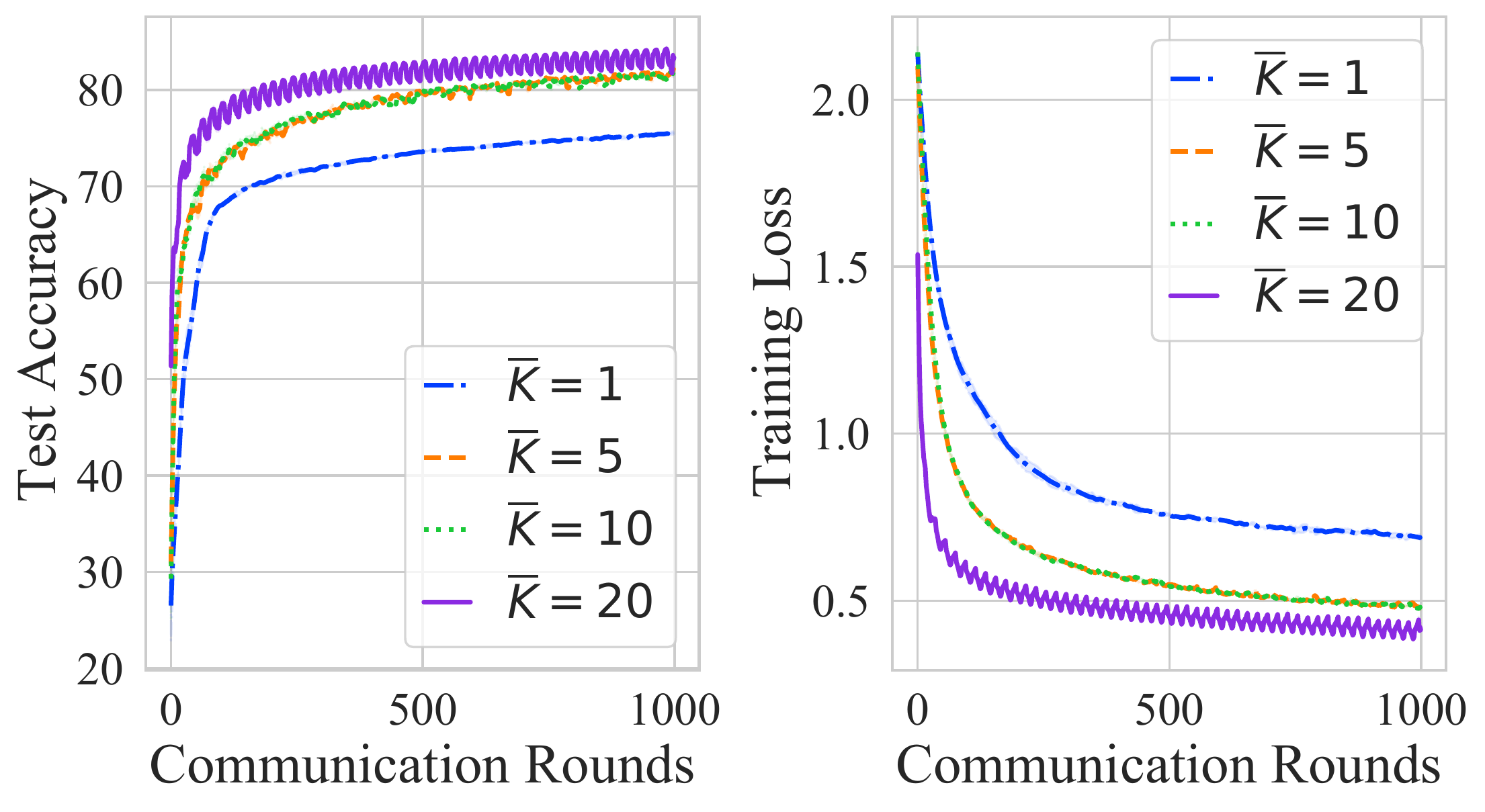} \caption{\small SGD }
\end{subfigure}
\begin{subfigure}{.2\textwidth}
\includegraphics[width=1\textwidth]{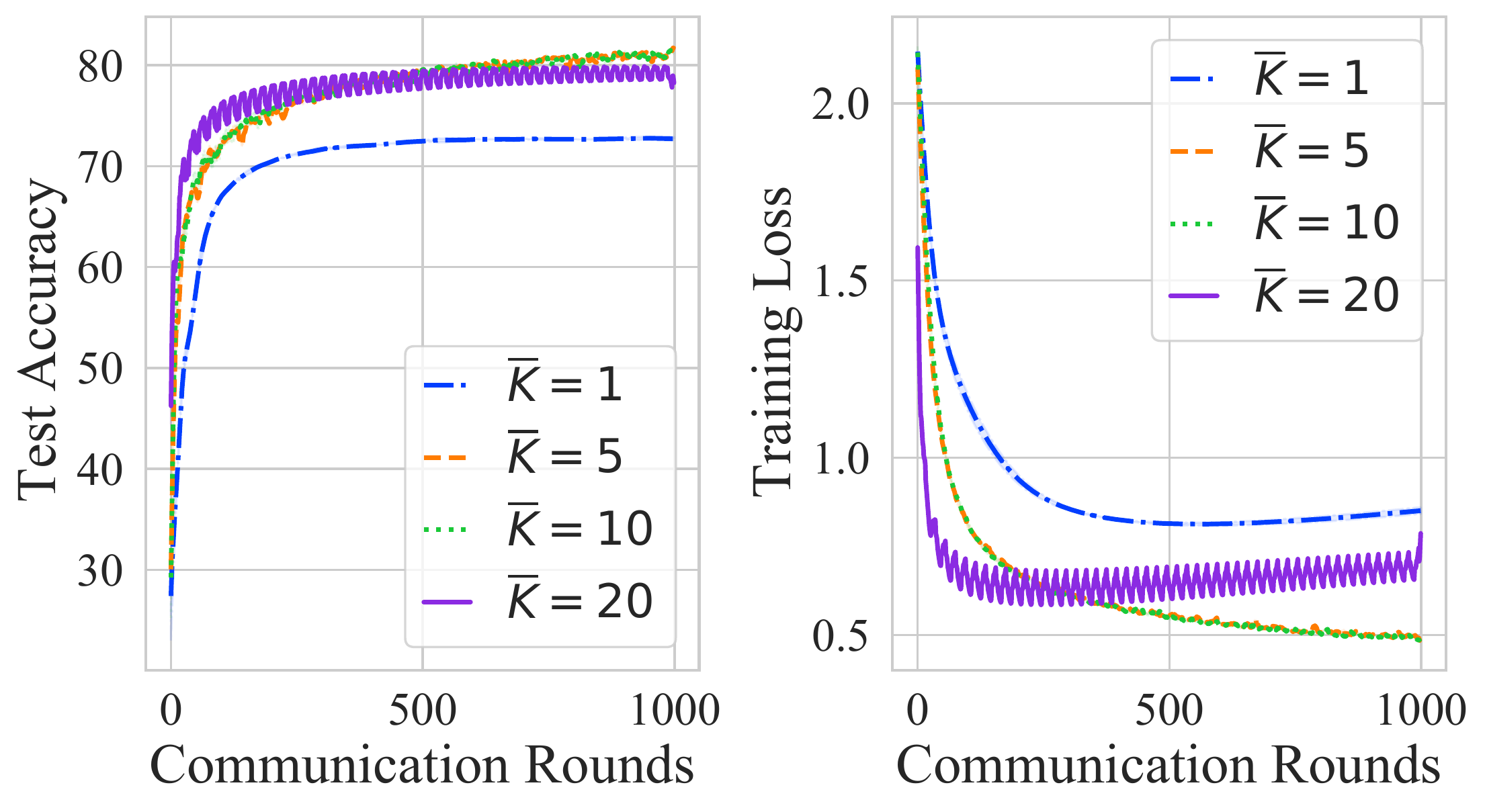} \caption{\small SSGD}
\end{subfigure}
 \caption{\small Training loss for FMNIST for high data heterogeneity ($\alpha=0.5$).
 }\label{fig:fmnist_3}
\end{figure}

\begin{figure}[!h]\small
\centering
\begin{subfigure}{.2\textwidth}
\includegraphics[width=1\textwidth]{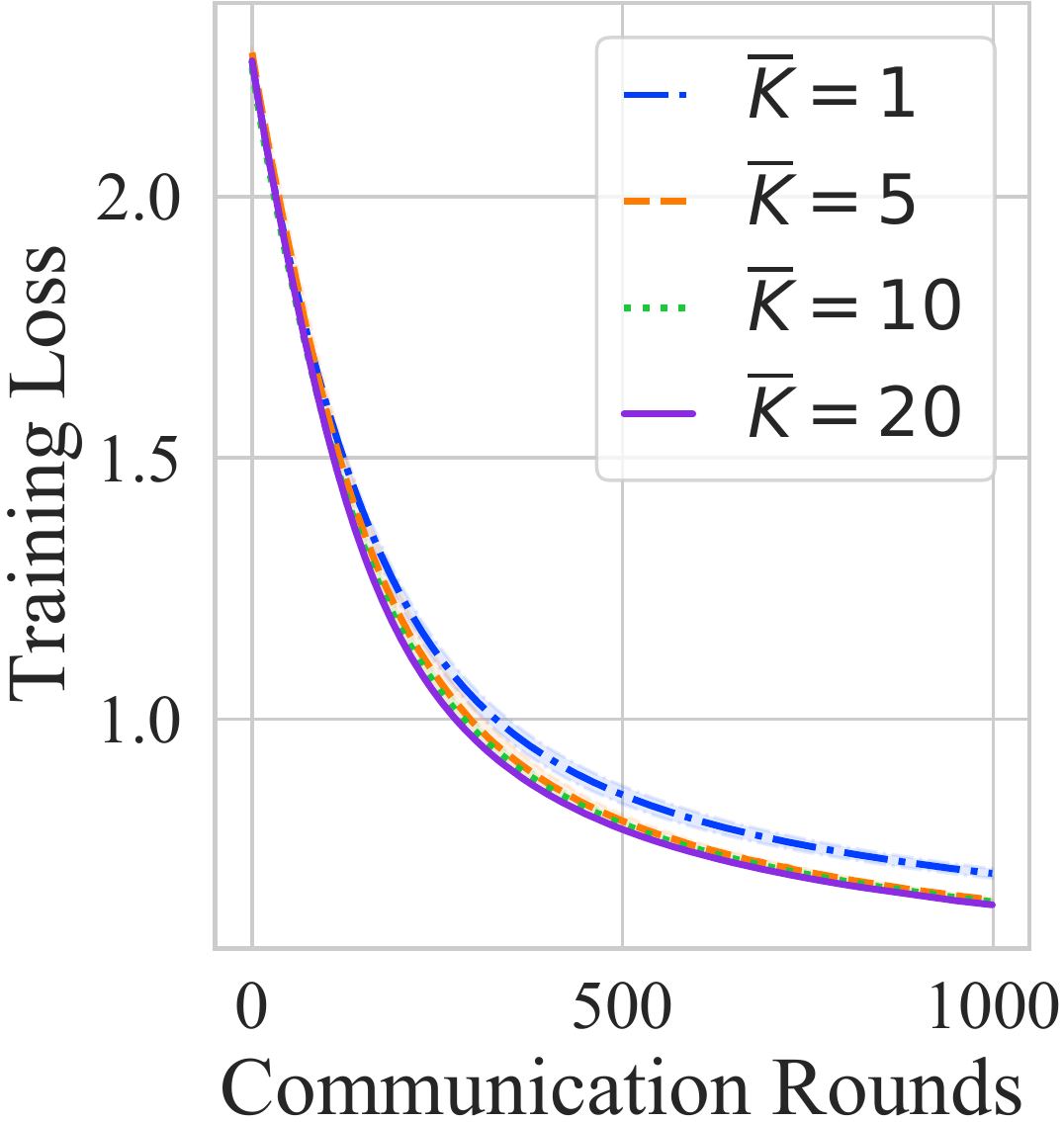} \caption{\small GD ($\alpha=2.0$)}
\end{subfigure}
\begin{subfigure}{.2\textwidth}
\includegraphics[width=1\textwidth]{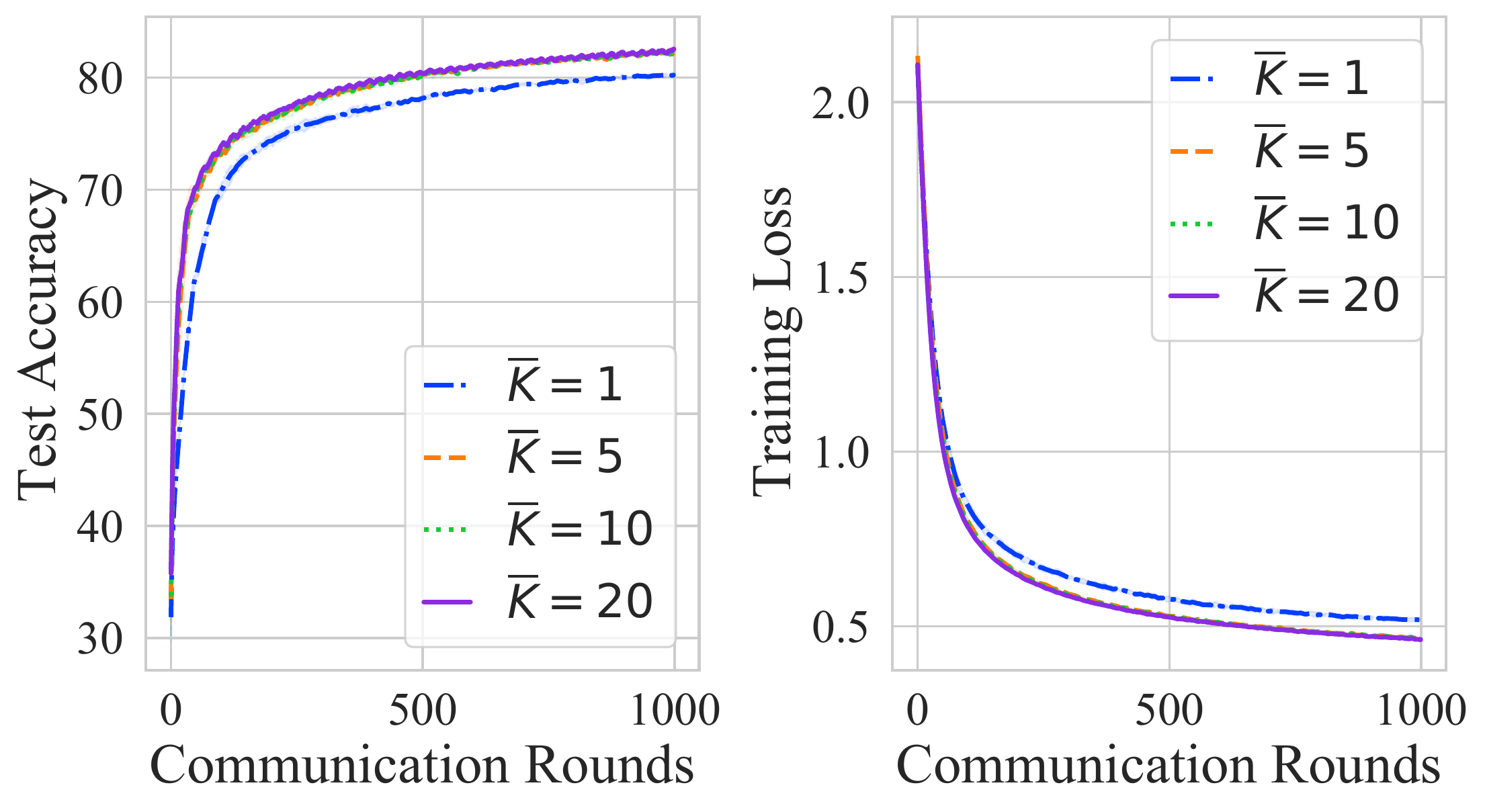} \caption{\small SGD ($\alpha=2.0$)}
\end{subfigure}
\begin{subfigure}{.2\textwidth}
\includegraphics[width=1\textwidth]{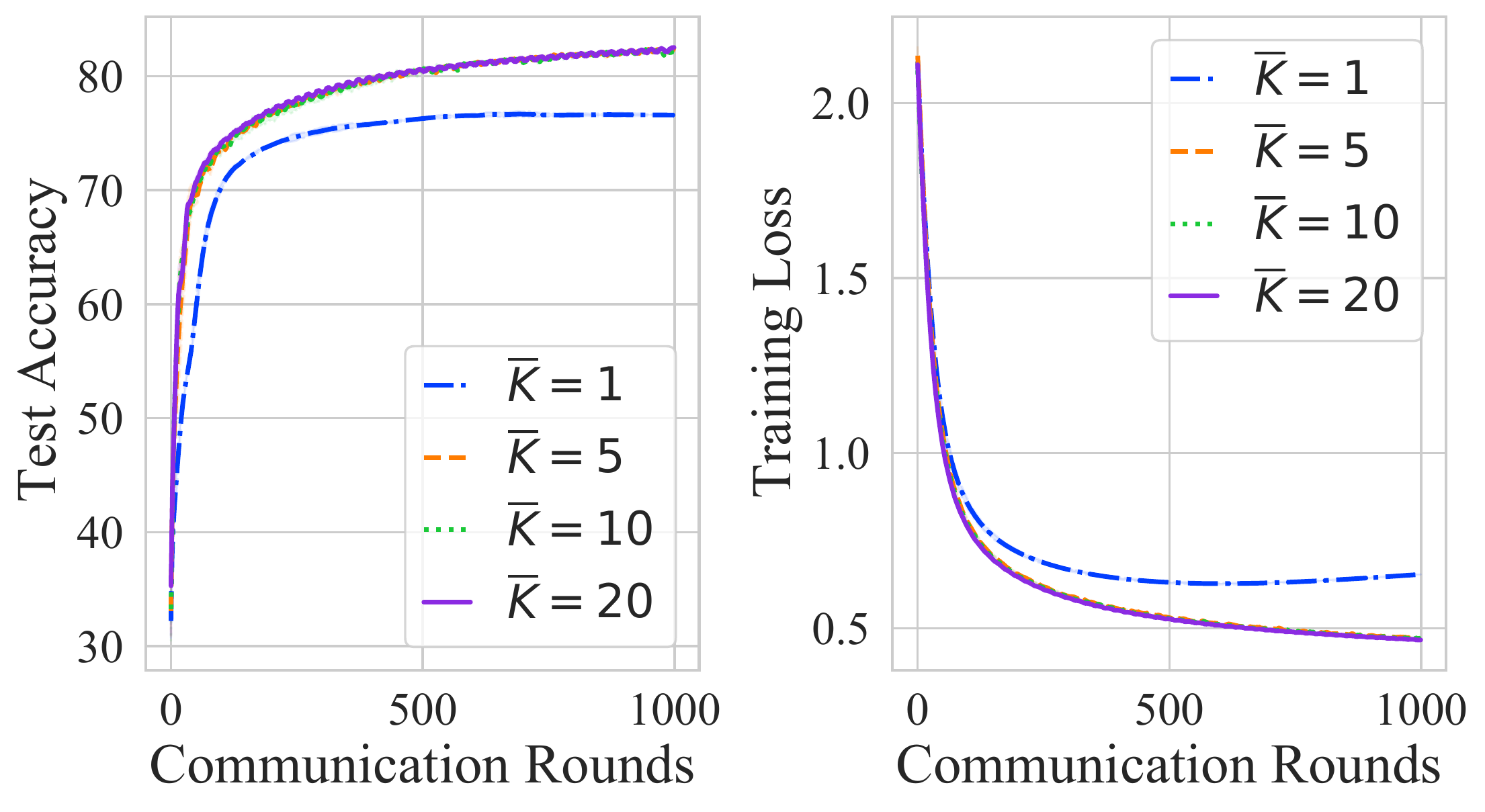} \caption{\small SSGD ($\alpha=2.0$)}
\end{subfigure} 
 \caption{\small Training loss for FMNIST for low data heterogeneity ($\alpha=2.0$). 
 }
 \label{fig:fmnist_4}  \vspace{-1em}
\end{figure}





\newpage
\section{Useful Inequalities}


\begin{lemma}[Young's inequality]
\label{lem:Young}
Given two same-dimensional vectors $\mbf u, \mbf v \in \mbb R^d$, the Euclidean inner product can be bounded as follows:
$$\lan \mbf u, \mbf v \ran \leq \frac{\norm{\mbf u}}{2 \gamma} + \frac{\gamma \norm{\mbf v}}{2}$$
for every constant $\gamma > 0$.
\end{lemma}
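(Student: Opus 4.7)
The plan is to derive the inequality from the nonnegativity of the squared norm of a suitably scaled difference, i.e., a one-line polarization argument. First, I would start from the identity $\norm{\mbf a - \mbf b} = \norm{\mbf a} - 2\lan \mbf a, \mbf b \ran + \norm{\mbf b}$, which holds for any $\mbf a, \mbf b \in \mbb R^d$ (recall that $\norm{\cdot}$ here denotes the squared Euclidean norm by the paper's macro). Since $\norm{\mbf a - \mbf b} \geq 0$, rearranging yields the unweighted bound $\lan \mbf a, \mbf b \ran \leq \tfrac{1}{2}\norm{\mbf a} + \tfrac{1}{2}\norm{\mbf b}$.

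Next, I would introduce the weight parameter $\gamma > 0$ through the substitution $\mbf a = \mbf u / \sqrt{\gamma}$ and $\mbf b = \sqrt{\gamma}\,\mbf v$, which is well-defined because $\gamma > 0$. Bilinearity of the inner product gives $\lan \mbf a, \mbf b \ran = \lan \mbf u, \mbf v \ran$, while the quadratic homogeneity of the squared norm gives $\norm{\mbf a} = \norm{\mbf u}/\gamma$ and $\norm{\mbf b} = \gamma\,\norm{\mbf v}$. Substituting into the unweighted bound produces exactly the claimed inequality $\lan \mbf u, \mbf v \ran \leq \tfrac{\norm{\mbf u}}{2\gamma} + \tfrac{\gamma\,\norm{\mbf v}}{2}$.

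There is essentially no technical obstacle. The only point requiring attention is the positivity assumption $\gamma > 0$, needed both so that $\sqrt{\gamma}$ is real (for the substitution) and to avoid division by zero in the final bound. As a minor alternative, one could equivalently start from $\norm{\mbf u - \gamma\,\mbf v} \geq 0$, expand via polarization, and divide both sides by $2\gamma$; this bypasses square roots entirely and reads slightly cleaner, but leads to the identical final inequality.
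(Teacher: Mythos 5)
Your proof is correct: the polarization identity $\norm{\mbf u/\sqrt{\gamma} - \sqrt{\gamma}\,\mbf v} \geq 0$ (or equivalently expanding $\norm{\mbf u - \gamma \mbf v} \geq 0$ and dividing by $2\gamma$) gives exactly the claimed bound, and you correctly track that $\norm{\cdot}$ denotes the \emph{squared} norm here. The paper states this standard lemma without proof, so there is nothing to compare against; your argument is the canonical one.
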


\comment{
\begin{lemma}[Strong Concavity]
A function $g: \mc X \times \mc Y$ is strongly concave in $\by$, if there exists a constant $\mu > 0$, such that for all $\bx \in \mc X$, and for all $\by, \by' \in \mc Y$, the following inequality holds.
$$g(\bx, \by) \leq g(\bx, \by') + \langle \nabla_y g(\bx, \by'), \by' - \by \rangle - \frac{\mu}{2} \norm{\by - \by'}.$$
\end{lemma}}

\begin{lemma}[Jensen's inequality]
\label{lem:jensens}
Given a convex function $f$ and a random variable $X$, the following holds.
$$f \lp \mbe [X] \rp \leq \mbe \left[ f(X) \right].$$
\end{lemma}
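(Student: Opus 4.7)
The plan is to invoke the supporting hyperplane characterization of convex functions and then take expectations. This is the textbook route, and for a result stated as a standalone lemma (rather than something requiring a novel argument) it is the cleanest approach.

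First I would set $\mu = \mbe[X]$, assuming the expectation exists and is finite (the inequality is vacuous otherwise, with the right-hand side being $+\infty$ by the Jensen direction of comparison). By convexity of $f$, at the point $\mu$ there exists a subgradient $g \in \partial f(\mu)$, that is, a vector satisfying
\begin{equation*}
f(x) \geq f(\mu) + \lan g, x - \mu \ran \quad \text{for all } x
\end{equation*}
in the domain of $f$. This is the supporting hyperplane / subgradient inequality, which is the defining property of convex functions used in essentially every proof of Jensen's inequality.

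Next I would substitute $x = X$ (the random variable) into this pointwise inequality, obtaining $f(X) \geq f(\mu) + \lan g, X - \mu \ran$ almost surely. Taking expectations of both sides, using linearity of expectation and the fact that $g$ and $f(\mu)$ are deterministic, gives
\begin{equation*}
\mbe[f(X)] \geq f(\mu) + \lan g, \mbe[X] - \mu \ran = f(\mu) + \lan g, 0 \ran = f(\mbe[X]),
\end{equation*}
which is exactly the claim.

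There is essentially no obstacle here, since the statement as written is the standard Jensen inequality; the only subtlety is whether one wants to handle the non-differentiable case carefully (hence invoking a subgradient rather than the gradient) and whether one needs to address integrability. Both are easy to address: convex functions on $\mbb R^d$ always admit subgradients at interior points of their domain, and if $\mbe[f(X)]$ or $\mbe[X]$ fail to exist, the inequality either holds trivially or is interpreted in the extended-real sense. In the context of this paper the lemma is used only as a convenient inequality in expectation bounds, so this plain two-line proof suffices.
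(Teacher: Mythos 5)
Your proof is correct: the subgradient (supporting hyperplane) argument followed by linearity of expectation is the standard and complete proof of Jensen's inequality, and your handling of the non-differentiable case and integrability caveats is appropriate. The paper states this lemma as a known fact without proof, so there is no alternative argument to compare against; your two-line derivation fully suffices for the role the lemma plays here.
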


\begin{lemma}[Sum of squares]
\label{lem:sum_of_squares}
For a positive integer $K$, and a set of vectors $\xb_1, \hdots, \xb_K$, the following holds:
\begin{align*}
    \norm{\sum_{k=1}^K \xb_k} \leq K \sum_{k=1}^K \norm{\xb_k}.
\end{align*}
\end{lemma}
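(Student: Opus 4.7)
The statement to prove, recalling the macro $\norm{\cdot} = \lVert \cdot \rVert^2$, is the standard inequality
$\lVert \sum_{k=1}^K \xb_k \rVert^2 \leq K \sum_{k=1}^K \lVert \xb_k \rVert^2$.
There are at least two natural paths: a direct Cauchy--Schwarz argument applied to the pairing of $(\xb_1, \dots, \xb_K)$ with the all-ones vector, or an application of Jensen's inequality to the convex function $\lVert \cdot \rVert^2$. I would take the Jensen route since it is clean and can even cite \Cref{lem:jensens} already stated in the excerpt.

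The plan is as follows. First, I would rewrite the left-hand side as
$\bigl\lVert \sum_{k=1}^K \xb_k \bigr\rVert^2 = K^2 \bigl\lVert \tfrac{1}{K} \sum_{k=1}^K \xb_k \bigr\rVert^2$.
Next, I would apply Jensen's inequality (\Cref{lem:jensens}) to the convex function $f(\mathbf{u}) = \lVert \mathbf{u} \rVert^2$, viewing $\tfrac{1}{K} \sum_{k=1}^K \xb_k$ as the expectation of a uniformly distributed random variable over $\{\xb_1, \ldots, \xb_K\}$. This yields
$\bigl\lVert \tfrac{1}{K} \sum_{k=1}^K \xb_k \bigr\rVert^2 \leq \tfrac{1}{K} \sum_{k=1}^K \lVert \xb_k \rVert^2$.
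Multiplying through by $K^2$ recovers exactly the claimed bound.

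As an alternative route in case one prefers to avoid invoking Jensen, I would expand the square directly via the inner-product structure, writing $\lVert \sum_k \xb_k \rVert^2 = \sum_{k,l} \lan \xb_k, \xb_l \ran$, and then bound each cross term using the AM--GM inequality $\lan \xb_k, \xb_l \ran \leq \tfrac{1}{2}(\lVert \xb_k \rVert^2 + \lVert \xb_l \rVert^2)$. Summing over the $K^2$ pairs $(k, l)$ and collecting terms gives $K \sum_k \lVert \xb_k \rVert^2$, which matches the desired right-hand side.

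There is no real obstacle here: the statement is a textbook Cauchy--Schwarz/Jensen consequence and requires no structural assumptions beyond $\xb_k$ living in an inner product space. The only thing worth being careful about is the factor bookkeeping, specifically making sure one does not accidentally prove the tighter (and incorrect in general) bound $\lVert \sum_k \xb_k \rVert^2 \leq \sum_k \lVert \xb_k \rVert^2$, which would require orthogonality. The factor of $K$ on the right is exactly what the Jensen/AM--GM step produces, so the plan above yields the stated inequality in a few lines.
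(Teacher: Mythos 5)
Your proof is correct: both the Jensen route (via \Cref{lem:jensens} applied to the convex function $\lVert\cdot\rVert^2$ after factoring out $K^2$) and the alternative expansion of the square with the AM--GM bound on cross terms yield exactly the stated factor of $K$. The paper states this lemma as a standard fact without proof, so there is nothing to diverge from; your argument is the textbook one and fills the gap cleanly.
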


\comment{
\begin{lemma}[Quadratic growth condition \cite{karimi2020linpl}]
\label{lem:quad_growth}
If function $g$ satisfies Assumptions \ref{as1}, \ref{as2}, then for all $x$, the following conditions holds
\begin{align*}
    g(x) - \min_{z} g(z) & \geq \frac{\mu}{2} \norm{x_p - x}, \\
    \norm{\G g(x)} & \geq 2 \mu \lp g(x) - \min_z g(z) \rp.
\end{align*}
\end{lemma}}

\comment{
\begin{lemma}
\label{lem:smooth_convex}
For $L$-smooth, convex function $g$, the following inequality holds
\begin{align}
    \mbe \lnr \G g (\by) - \G g (\bx) \rnr^2 \leq 2 L \left[ g (\by) - g (\bx) - \G g(\bx)^\top (\by - \bx) \right].
\end{align}
\end{lemma}}

\begin{lemma}[Variance for without replacement sampling] 
\label{lemma:WOR_sampling}
For a positive integer $K$ and a set of vectors $\xb_1,...,\xb_K$ with the mean of the vectors being $\overline{\xb}=\frac{1}{K}\sum_{k=1}^K\xb_k$ and we have a mini-batch $\mathcal{N}$ with size $N$  sampled uniformly at random without replacement from $[K]$, then we have
\begin{align}
    \expt\left[\nbr{\frac{1}{N}\sum_{k\in\mathcal{N}}\xb_k-\overline{\xb}}\right]^2=\frac{(K-N)}{NK(K-1)}\sum_{k=1}^K\nbr{\xb_k-\overline{\xb}}^2
\end{align}
\end{lemma}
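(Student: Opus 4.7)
The plan is to expand the squared norm using sampling indicators and exploit the second-moment structure of uniform sampling without replacement. The key simplifying observation is that after centering the $\xb_k$'s, the off-diagonal cross terms can be eliminated using the identity $\sum_k(\xb_k - \overline{\xb}) = 0$, which avoids any direct manipulation of covariances.

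Concretely, I would first set $\mathbf{y}_k \vcentcolon= \xb_k - \overline{\xb}$, so that $\sum_{k=1}^K \mathbf{y}_k = 0$ and $\tfrac{1}{N}\sum_{k\in\mathcal{N}}\xb_k - \overline{\xb} = \tfrac{1}{N}\sum_{k\in\mathcal{N}}\mathbf{y}_k$. Introducing sampling indicators $Z_k = \mathbf{1}\{k \in \mathcal{N}\}$, the quantity of interest becomes $\tfrac{1}{N}\sum_{k=1}^K Z_k \mathbf{y}_k$. Since $\mathcal{N}$ is a uniformly random $N$-subset of $[K]$, a simple counting argument gives $\expt[Z_k^2] = \expt[Z_k] = N/K$ and, for $j\neq k$, $\expt[Z_j Z_k] = \binom{K-2}{N-2}/\binom{K}{N} = \tfrac{N(N-1)}{K(K-1)}$. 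Expanding the squared norm then yields
\begin{align*}
\expt\bigl\|\tfrac{1}{N}\textstyle\sum_k Z_k \mathbf{y}_k\bigr\|^2 \;=\; \frac{1}{N^2}\sum_{k}\expt[Z_k^2]\|\mathbf{y}_k\|^2 \;+\; \frac{1}{N^2}\sum_{j\neq k}\expt[Z_j Z_k]\langle \mathbf{y}_j,\mathbf{y}_k\rangle.
\end{align*}
Applying the centering identity in the form $\sum_{j\neq k}\langle \mathbf{y}_j,\mathbf{y}_k\rangle = \bigl\|\sum_k \mathbf{y}_k\bigr\|^2 - \sum_k\|\mathbf{y}_k\|^2 = -\sum_k\|\mathbf{y}_k\|^2$ and collecting the two pieces gives the coefficient $\tfrac{1}{N^2}\bigl(\tfrac{N}{K} - \tfrac{N(N-1)}{K(K-1)}\bigr) = \tfrac{K-N}{NK(K-1)}$ multiplying $\sum_k\|\xb_k-\overline{\xb}\|^2$, which is exactly the claimed identity.

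There is no real conceptual obstacle here; the argument is pure bookkeeping. The only place one could slip is in computing $\expt[Z_j Z_k]$, and the centering trick is what keeps the calculation short --- without it one would have to write out $\mathrm{Cov}(Z_j, Z_k) = -\tfrac{N(K-N)}{K^2(K-1)}$ explicitly and verify the cancellation by hand. Since this lemma is a standalone statement of a classical fact about finite-population sampling, the proof should occupy no more than a few lines once the indicator-variable expansion is set up.
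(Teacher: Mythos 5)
Your proof is correct, and the calculation checks out: with centered vectors $\mathbf{y}_k=\xb_k-\overline{\xb}$, the coefficient $\tfrac{1}{N^2}\bigl(\tfrac{N}{K}-\tfrac{N(N-1)}{K(K-1)}\bigr)=\tfrac{K-N}{NK(K-1)}$ is exactly what the lemma claims. The paper states this lemma in its list of useful inequalities without providing any proof (it is treated as a classical fact about finite-population sampling), so there is nothing to compare against; your indicator-variable expansion combined with the centering identity $\sum_{j\neq k}\langle\mathbf{y}_j,\mathbf{y}_k\rangle=-\sum_k\|\mathbf{y}_k\|^2$ is the standard and cleanest way to establish it.
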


\newpage

\section{Proofs for the CyCP+Local GD Case}
\label{app:CCP_localGD}
The proof of \cref{theo1:GD} is presented in this section. For simplicity, the proof is presented as follows: first, in \cref{sec:noisy_GD}, the model update steps are shown to be noisy gradient descent steps. Next, in \cref{sec:int_results_GD}, we present some intermediate results, which shall be used in the analysis, followed by the proof of \cref{theo1:GD} in \cref{sec:proof_theo1_GD}. Finally, in \cref{sec:proofs_int_results_GD}, we present the proofs of the intermediate results.

We define the $\sigma$-algebra generated by the randomness in the algorithm till cycle epoch $k$ as follows: $\mc F_k \triangleq \sigma \lcb \{ \wb^{(1,i)} \}_{i=1}^\kbar, \{ \wb^{(2,i)} \}_{i=1}^\kbar, \dots, \{ \wb^{(k-1,i)} \}_{i=1}^\kbar \rcb$. We use $\expt_k [\cdot]$ as the shorthand for the expectation $\expt [\cdot | \mc F_k]$.

\subsection{Global Model Updates as Noisy Gradient Descent Step}
\label{sec:noisy_GD}

With \cycp, with each client doing a single GD update, (case (i) in \Cref{algo1}), the update rule for the global model ($k \in [K], i \in [\kbar]$) is given as
\begin{align}
    \wb^{(k,i)}-\wb^{(k,i-1)}=-\eta \sum_{m\in\st^{(k,i)}}\nabla F_m(\wb^{(k,i-1)}) \label{eq:gd}
\end{align}
Recall that $\wb^{(k+1,0)} = \wb^{(k,\kbar)}$. Therefore, we can unroll \eqref{eq:gd} to get the following result.

\begin{lemma}
\label{lemma:noisy_GD}
\begin{align}
    \expt[\wb^{(k+1,0)} | \mc F_k]-\wb^{(k,0)} &= -\lr\kbar N\nabla F(\wb^{(k,0)}) + \lr^2 \expt[\rbl^{(k,0)} | \mc F_k], \label{eq:3-0-1b}
\end{align}
where  $\rbl^{(k,0)} \triangleq \sum_{i=1}^{\kbar-1}\left(\prod_{j=i+2}^{\kbar} (\ibd-\lr\ssbl^{(k,j)})\right)\ssbl^{(k,i+1)}\sum_{j'=1}^i \qbl^{(k,j')}$, with 
\begin{align*}
    \left.\begin{matrix}
        \ssbl^{(k,i)} \defeq \sum_{m\in\st^{(k,i)}} \int_{0}^{1}\nabla^2 F_m (\wb^{(k,0)} + t (\wb^{(k,i-1)} - \wb^{(k,0)})) \,dt \\
        \qbl^{(k,i)} \defeq \sum_{m\in\st^{(k,i)}} \nabla F_m(\wb^{(k,0)})
    \end{matrix}\right\}, \text{ for all } i \in [\kbar].
\end{align*}
\end{lemma}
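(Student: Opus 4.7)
The plan is to Taylor-expand each local gradient around $\wb^{(k,0)}$ to rewrite the per-round update as an affine recursion in the displacement, then peel off the first-order-in-$\lr$ term whose conditional mean is exactly the global gradient.

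First, the fundamental theorem of calculus applied to $\nabla F_m$ (which is differentiable by Assumption~\ref{as1}) gives
\begin{align*}
\nabla F_m(\wb^{(k,i-1)}) = \nabla F_m(\wb^{(k,0)}) + \left(\int_0^1 \nabla^2 F_m\bigl(\wb^{(k,0)} + t(\wb^{(k,i-1)}-\wb^{(k,0)})\bigr)\,dt\right)(\wb^{(k,i-1)}-\wb^{(k,0)}).
\end{align*}
Summing over $m\in\st^{(k,i)}$ and substituting into the GD update, the displacement $\db^{(k,i)} \defeq \wb^{(k,i)}-\wb^{(k,0)}$ satisfies the affine recursion
\begin{align*}
\db^{(k,i)} = (\ibd-\lr\ssbl^{(k,i)})\,\db^{(k,i-1)} - \lr\qbl^{(k,i)}, \qquad \db^{(k,0)} = \mathbf{0}.
\end{align*}

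Second, I would ansatz $\db^{(k,i)} = -\lr\sum_{l=1}^{i}\qbl^{(k,l)} + \lr^2\mathbf{R}^{(k,i)}$, plug into the recursion, and match orders of $\lr$; the first-order terms cancel identically, producing
\begin{align*}
\mathbf{R}^{(k,i)} = (\ibd-\lr\ssbl^{(k,i)})\,\mathbf{R}^{(k,i-1)} + \ssbl^{(k,i)}\sum_{l=1}^{i-1}\qbl^{(k,l)}, \qquad \mathbf{R}^{(k,1)} = \mathbf{0}.
\end{align*}
Unrolling this (checked most easily by computing $\mathbf{R}^{(k,2)}, \mathbf{R}^{(k,3)}, \mathbf{R}^{(k,4)}$ and then inducting) gives
\begin{align*}
\mathbf{R}^{(k,\kbar)} = \sum_{i=1}^{\kbar-1}\left(\prod_{j=i+2}^{\kbar}(\ibd-\lr\ssbl^{(k,j)})\right)\ssbl^{(k,i+1)}\sum_{j'=1}^{i}\qbl^{(k,j')} = \rbl^{(k,0)},
\end{align*}
so that $\wb^{(k+1,0)}-\wb^{(k,0)} = \db^{(k,\kbar)} = -\lr\sum_{l=1}^{\kbar}\qbl^{(k,l)} + \lr^2\rbl^{(k,0)}$.

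Finally, I take conditional expectation given $\mc F_k$. Since $\wb^{(k,0)}$ is $\mc F_k$-measurable and $\st^{(k,i)}$ is drawn uniformly without replacement, of size $N$, from the deterministic group $\sigma(i)$ of size $M/\kbar$, linearity gives $\expt[\qbl^{(k,i)}\mid\mc F_k] = (N\kbar/M)\sum_{m\in\sigma(i)}\nabla F_m(\wb^{(k,0)})$; summing over $i\in[\kbar]$, and using that the $\sigma(i)$'s partition $[M]$, yields $\sum_{i=1}^{\kbar}\expt[\qbl^{(k,i)}\mid\mc F_k] = N\kbar\,\nabla F(\wb^{(k,0)})$. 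Combining this with the identity for $\db^{(k,\kbar)}$ gives the claim. The main obstacle is purely bookkeeping: matching the unrolled remainder $\mathbf{R}^{(k,\kbar)}$ to the precise product/sum structure declared for $\rbl^{(k,0)}$, i.e.\ getting the index ranges $j\in[i+2,\kbar]$ and $j'\in[1,i]$ and the bracketing right. No substantive analytic ingredient is needed beyond $L$-smoothness to ensure the Hessian integrals are well-defined.
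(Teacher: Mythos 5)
Your proposal is correct and follows essentially the same route as the paper: the integrated-Hessian Taylor expansion of each local gradient around $\wb^{(k,0)}$, the resulting affine recursion in the displacement $\wb^{(k,i)}-\wb^{(k,0)}$, unrolling to isolate the $\mathcal{O}(\lr)$ term $-\lr\sum_i\qbl^{(k,i)}$ plus the $\lr^2\rbl^{(k,0)}$ remainder, and the conditional expectation over the without-replacement sampling to recover $-\lr\kbar N\nabla F(\wb^{(k,0)})$. Your ansatz-and-match-orders presentation of the remainder recursion is just a bookkeeping variant of the paper's direct unrolling and lands on the identical expression for $\rbl^{(k,0)}$.
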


\begin{proof}
Note that \Cref{eq:gd} is a rescaled version of the update rule in the main paper in the sense that the update is scaled up by $N$ but instead we downscale the step-size by setting the learning rate as $\lr=\log(MK^2)/\mu N \kbar K$. We can reformulate the gradient of the local objective as:
\begin{align}
    \nabla F_\ic(\wb^{(k,i-1)})&=\nabla F_\ic(\wb^{(k,0)})+\nabla F_\ic(\wb^{(k,i-1)})-\nabla F_\ic(\wb^{(k,0)}) \nn \\
    \Rightarrow \nabla F_\ic(\wb^{(k,i-1)})-\nabla F_\ic(\wb^{(k,0)}) &= \underbrace{\int_{0}^{1}\nabla^2 F_m (\wb^{(k,0)} + t (\wb^{(k,i-1)} - \wb^{(k,0)})) \,dt}_{\defeq \hb_{m}^{(k,i)}} \ (\wb^{(k,i-1)} - \wb^{(k,0)}) \label{eq:0-1}
\end{align}
Note that $\hb_{m}^{(k,i)}$ exists due to our assumption that the local objectives $F_m(\cdot),~m\in[M]$ are differentiable and $L$-smooth (see Appendix D.2 in \cite{yun2022shuf}), and we can thus show that $\|\hb_{m}^{(k,i)}\|\leq L,~\forall m,k,i~$. 
Using the expression in \Cref{eq:0-1} we get
\begin{align}
    \wb^{(k,i)}-\wb^{(k,i-1)}=-\eta \sum_{m\in\st^{(k,i)}}\left(\nabla F_\ic(\wb^{(k,0)})+ \hb_{m}^{(k,i)}(\wb^{(k,i-1)}-\wb^{(k,0)})\right).
\end{align}
Defining $\qbl^{(k,i)}\defeq\sum_{m\in\st^{(k,i)}}\nabla F_m(\wb^{(k,0)})$ and $\ssbl^{(k,i)}\defeq\sum_{m\in\st^{(k,i)}}\hb_m^{(k,i)}$ we have
\begin{align}
    \wb^{(k,i)}-\wb^{(k,i-1)} &= -\eta\qbl^{(k,i)}-\eta\ssbl^{(k,i)}(\wb^{(k,i-1)}-\wb^{(k,0)}) \nn \\
    \Rightarrow \wb^{(k,i)}-\wb^{(k,0)} &= \lp \mathbf{I} - \eta\ssbl^{(k,i)} \rp(\wb^{(k,i-1)}-\wb^{(k,0)}) -\eta\qbl^{(k,i)}. \label{eq:3-0}
\end{align}
Unrolling \Cref{eq:3-0}, we get
\begin{align}
    \wb^{(k+1,0)}-\wb^{(k,0)}=-\lr\sum_{i=1}^{\kbar}\qbl^{(k,i)}+\lr^2\underbrace{\sum_{i=1}^{\kbar-1}\left(\prod_{j=i+2}^{\kbar} (\ibd-\lr\ssbl^{(k,j)})\right)\ssbl^{(k,i+1)}\sum_{j'=1}^i \qbl^{(k,j')}}_{\defeq \rbl^{(k,0)}}  \label{eq:3-0-1}
\end{align}
Conditioning on $\mc F_k$, the only randomness in \eqref{eq:3-0-1} is owing to the random client sets $\{ \st^{(k,i)} \}_{i}$.  Therefore,
\begin{align}
    \expt[\wb^{(k+1,0)} | \mc F_k]-\wb^{(k,0)} &= -\lr\sum_{i=1}^\kbar\expt_k\left[\sum_{m\in\st^{(k,i)}}\nabla F_m(\wb^{(k,0)})\right]+\lr^2\expt_k\left[\rbl^{(k,0)}\right], \nn
\end{align}
Computing the expectation finishes the proof.
\end{proof}

Interpreting $\lr\kbar N$ as the effective learning rate, the update in \eqref{eq:3-0-1b} is a noisy gradient descent step. The bulk of the proof is concerned with bounding the noise term $\rbl^{(k,0)}$.

\subsection{Intermediate Results}
\label{sec:int_results_GD}

\begin{lemma}[Bound on the Sum of Gradients over Client Groups]
If the client functions $F_m$ satisfy \cref{as3}, then for arbitrary $i\in[\kbar]$ and any $\wb$
\begin{align}
\nbr{\sum_{j=1}^i\frac{1}{M/\kbar}\sum_{m\in\sigma(j)}\nabla F_m(\wb)}\leq i\alpha +i\nbr{\nabla F(\wb)}. \nn
\end{align} \label{lem0-0}
\end{lemma}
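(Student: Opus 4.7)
The plan is to apply the triangle inequality after adding and subtracting $i\nabla F(\wb)$ inside the norm, exploiting the inter-group heterogeneity bound from \Cref{as3}.

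First I would rewrite each group-averaged gradient as a deviation from the global gradient: since $|\sigma(j)|=M/\kbar$, \Cref{as3} says $\big\|\tfrac{1}{M/\kbar}\sum_{m\in\sigma(j)}\nabla F_m(\wb) - \nabla F(\wb)\big\|\leq \alpha$ for each $j\in[\kbar]$. Denoting $\eb_j \defeq \tfrac{1}{M/\kbar}\sum_{m\in\sigma(j)}\nabla F_m(\wb) - \nabla F(\wb)$, we have $\|\eb_j\|\leq \alpha$, and
\begin{align*}
\sum_{j=1}^i \frac{1}{M/\kbar}\sum_{m\in\sigma(j)}\nabla F_m(\wb) \;=\; i\,\nabla F(\wb) + \sum_{j=1}^i \eb_j.
\end{align*}

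Then, applying the triangle inequality twice,
\begin{align*}
\left\|\sum_{j=1}^i \frac{1}{M/\kbar}\sum_{m\in\sigma(j)}\nabla F_m(\wb)\right\| \;\leq\; i\,\|\nabla F(\wb)\| + \sum_{j=1}^i \|\eb_j\| \;\leq\; i\,\|\nabla F(\wb)\| + i\,\alpha,
\end{align*}
which is the desired bound.

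There is no real obstacle here; the lemma is essentially a direct consequence of the inter-group heterogeneity bound in \Cref{as3} combined with the triangle inequality. The only point worth noting is that this bound is deliberately coarse (it uses $\|\nabla F(\wb)\|$ rather than a potentially tighter quantity), because it will be applied later inside the analysis of the noise term $\rbl^{(k,0)}$ from \Cref{lemma:noisy_GD}, where having a clean separation between the inter-group heterogeneity $\alpha$ and the global gradient norm will make it convenient to feed into a PL-based recursion.
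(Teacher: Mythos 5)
Your proposal is correct and matches the paper's own proof essentially verbatim: both add and subtract $\nabla F(\wb)$ inside each summand, apply the triangle inequality, and invoke the inter-group bound $\alpha$ from \Cref{as3} term by term. No gaps.
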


\begin{lemma}[Bounds on the Error Terms arrising due to \cycp]
\label{lemma:error_terms_GD}
Under \cref{as1}, \ref{as3}, the error in \eqref{eq:3-0-1b} can be bounded as
\begin{align}
    \nbr{\expt_k[\rbl^{(k,0)}]}
    & \leq \frac{3N^2L\kbar(\kbar-1)}{5} \lp \alpha + \nbr{\nabla F(\wb^{(k,0)})} \rp, \label{eq:3-1-0} \\
    \expt_k\left[\nbr{\rbl^{(k,0)}}^2\right] & \leq \frac{(\kbar-1)^2 \kbar^2 N^3 L^2}{2} \left[ \left(\frac{M/\kbar-N}{M/\kbar-1}\right)\gamma^2 + 2N \left(\nbr{\nabla F(\wb^{(k,0)})}^2+{\alpha^2}\right) \right]. \label{eq:3-2-3}
\end{align}
\end{lemma}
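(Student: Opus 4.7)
\textbf{Proof proposal for \cref{lemma:error_terms_GD}.}

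The plan is to work directly from the definition $\rbl^{(k,0)}=\sum_{i=1}^{\kbar-1}\bigl(\prod_{j=i+2}^{\kbar}(\ibd-\lr\ssbl^{(k,j)})\bigr)\ssbl^{(k,i+1)}\sum_{j'=1}^{i}\qbl^{(k,j')}$ and exploit two structural facts. First, because each $F_m$ is $L$-smooth, the integral representation of $\hb_m^{(k,i)}$ guarantees $\|\hb_m^{(k,i)}\|\le L$ and hence $\|\ssbl^{(k,i)}\|\le NL$, so the product of ``almost-identity'' factors can be bounded by $(1+\lr NL)^{\kbar-i-1}$. Under the step-size condition $\lr=\log(MT^2/\kbar^2)/\mu NT$ and the lower bound on $T$ given in \cref{theo1:GD}, $\lr NL\kbar$ is at most a small absolute constant, so I will verify that this product is uniformly bounded by some constant (e.g.\ $6/5$), which is where the numerical factor $3/5$ in \eqref{eq:3-1-0} ultimately comes from. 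Second, conditional on $\mc F_k$ the client sets $\st^{(k,1)},\ldots,\st^{(k,\kbar)}$ are drawn independently across groups, which makes $\ssbl^{(k,i+1)}$ independent of $\qbl^{(k,j')}$ whenever $j'\le i$.

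For the first bound I would first peel off the product factor using the uniform estimate above, then exploit the cross-group independence to factor $\expt_k[\ssbl^{(k,i+1)}\qbl^{(k,j')}]=\expt_k[\ssbl^{(k,i+1)}]\expt_k[\qbl^{(k,j')}]$. The expectations reduce to group averages: $\expt_k[\ssbl^{(k,i+1)}]=N\hb_{i+1}$ with $\|\hb_{i+1}\|\le L$, and $\expt_k[\qbl^{(k,j')}]=N\cdot\frac{1}{M/\kbar}\sum_{m\in\sigma(j')}\nabla F_m(\wb^{(k,0)})$. Applying \cref{lem0-0} to the inner sum yields $\|\sum_{j'=1}^{i}\expt_k[\qbl^{(k,j')}]\|\le Ni(\alpha+\|\nabla F(\wb^{(k,0)})\|)$, and summing the resulting $N^2L\cdot i$ over $i=1,\ldots,\kbar-1$ produces the $\tfrac{N^2L\kbar(\kbar-1)}{2}$ scaling; the extra slack from the product factor pushes $1/2$ up to $3/5$, giving \eqref{eq:3-1-0}.

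For the second-moment bound I would first apply Jensen/Cauchy--Schwarz to pull the outer sum over $i$ out at the cost of a $(\kbar-1)$ factor, then bound the product factor and $\|\ssbl^{(k,i+1)}\|\le NL$ deterministically, reducing the task to controlling $\expt_k\|\sum_{j'=1}^{i}\qbl^{(k,j')}\|^2$. I would split this into squared-mean plus variance, where the variance of the sum equals the sum of variances by cross-group independence. For each $j'$, \cref{lemma:WOR_sampling} applied to sampling $N$ clients without replacement from $\sigma(j')$ (of size $M/\kbar$) gives $\text{Var}(\qbl^{(k,j')})\le N\tfrac{M/\kbar-N}{M/\kbar-1}\gamma^2$ after invoking the intra-group bound in \cref{as3}. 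The squared mean $\|\sum_{j'=1}^{i}\expt_k[\qbl^{(k,j')}]\|^2$ is handled via \cref{lem0-0} and the $(a+b)^2\le 2a^2+2b^2$ inequality, yielding the $2N(\|\nabla F(\wb^{(k,0)})\|^2+\alpha^2)$ piece. Summing over $i\in[\kbar-1]$, the dominant arithmetic-progression sums $\sum i$ and $\sum i^2$ combine with the prefactors $(\kbar-1)\cdot N^2L^2$ to give the advertised $(\kbar-1)^2\kbar^2N^3L^2/2$ scaling in \eqref{eq:3-2-3}.

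The main obstacle I expect is careful bookkeeping: the product $\prod_{j=i+2}^{\kbar}(\ibd-\lr\ssbl^{(k,j)})$ is coupled with $\ssbl^{(k,i+1)}$ through random client sampling in adjacent groups, so I must be careful that independence is being invoked only across disjoint group indices, and that the uniform bound on the product survives taking expectation inside the norm. The second subtlety is making the constants match exactly ($3/5$ in the first bound, $1/2$ in the second): this requires quantifying ``$\lr NL\kbar$ is small'' using the explicit step-size and the lower bound on $T$ from \cref{theo1:GD}, which I would defer to the end of the proof as a calibration step rather than track through every inequality.
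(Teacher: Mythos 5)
Your plan tracks the paper's proof almost step for step: the uniform bound $(1+\lr NL)^{\kbar}\le e^{1/7}\le 6/5$ on the product, the deterministic bound $\nbr{\ssbl^{(k,i)}}\le NL$, \cref{lem0-0} for the group-averaged gradient sums, \cref{lemma:WOR_sampling} plus the intra-group part of \cref{as3} for the $\gamma^2$ term, and the sums $\sum_i i$ and $\sum_i i^2$ producing the stated constants are all exactly the paper's steps. For the second-moment bound your bias--variance split is sound: the deviations $\frac{1}{N}\sum_{m\in\st^{(k,j')}}\nabla F_m(\wb^{(k,0)})-\frac{1}{M/\kbar}\sum_{m\in\sigma(j')}\nabla F_m(\wb^{(k,0)})$ depend only on $\st^{(k,j')}$ and $\wb^{(k,0)}$, so they genuinely are conditionally independent and mean zero across $j'$; the paper instead uses $\nbr{\sum_{j'=1}^i\xb_{j'}}^2\le i\sum_{j'}\nbr{\xb_{j'}}^2$, which is cruder but still lands within the stated constant, so either route closes \eqref{eq:3-2-3}.

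The step that does not survive scrutiny is the claimed independence of $\ssbl^{(k,i+1)}$ (and of $\prod_{j\ge i+2}(\ibd-\lr\ssbl^{(k,j)})$) from $\qbl^{(k,j')}$ for $j'\le i$. The client sets are indeed sampled independently across groups, but $\ssbl^{(k,i+1)}=\sum_{m\in\st^{(k,i+1)}}\hb_m^{(k,i+1)}$, and each $\hb_m^{(k,i+1)}$ is an integral of $\nabla^2 F_m$ along the segment from $\wb^{(k,0)}$ to $\wb^{(k,i)}$; the point $\wb^{(k,i)}$ is a function of $\st^{(k,1)},\dots,\st^{(k,i)}$, i.e.\ of the very randomness that generates the $\qbl^{(k,j')}$ you want to decouple from. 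So the factorization $\expt_k[\ssbl^{(k,i+1)}\sum_{j'}\qbl^{(k,j')}]=\expt_k[\ssbl^{(k,i+1)}]\,\expt_k[\sum_{j'}\qbl^{(k,j')}]$ fails in general (it would hold for quadratics, where the Hessians are constant, but not under mere $L$-smoothness). This matters specifically for \eqref{eq:3-1-0}: once you bound the matrix factors in operator norm you are left with $\expt_k\nbr{\sum_{j'\le i}\qbl^{(k,j')}}$, and Jensen only gives $\expt_k\nbr{\cdot}\ge\nbr{\expt_k[\cdot]}$ --- the wrong direction for invoking unbiasedness of the subsampling --- while bounding $\expt_k\nbr{\cdot}$ directly via Cauchy--Schwarz reintroduces a $\gamma$ contribution that \eqref{eq:3-1-0} does not contain. (The paper's own write-up quietly swaps $\expt_k\nbr{\sum_{j'}\qbl^{(k,j')}}$ for $\nbr{\sum_{j'}\expt_k[\qbl^{(k,j')}]}$ at this point, so you have reproduced its implicit assumption rather than invented a new error; but a watertight version either has to carry the extra $\gamma$ term, which is dominated by terms already present in \cref{theo1:GD}, or supply an argument for the decoupling that you have not given.)
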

Looking at the bound in \eqref{eq:3-2-3}, we observe that in addition to the terms from the bound in \eqref{eq:3-1-0}, we also get the additional dependence on intra-group heterogeneity $\gamma$, due to subsampling of clients within each group.

\subsection{Proof of \cref{theo1:GD}}
\label{sec:proof_theo1_GD}

For ease of reference, we restate \cref{theo1:GD} here.

\begin{thm*}[Convergence of the Global Model with CyCP+Local GD (Case (i) in \Cref{algo1})]
Suppose the local client functions $\{ F_m \}$ satisfy \cref{as1} and \cref{as3}, while the global loss function satisfies \cref{as2}. If the learning rate satisfies $\lr \leq \frac{1}{7 L N \kbar}$, then the iterates generated by \Cref{algo1} with one local GD step at the clients satisfies
\begin{align}
    & \expt[F(\wb^{(K,0)})]-F^* \leq \left(1- \lr \kbar N \mu \right)^K (F(\wb^{(0,0)})-F^*) + \mco \lp \frac{\lr^2 L^2 N^2 (\kbar-1)^2 \alpha^2}{\mu} \rp + \mco \lp \kappa \lr \kbar \left(\frac{M/\kbar-N}{M/\kbar-1}\right) \gamma^2 \rp, \nn
\end{align}
where $\kappa=L/\mu$ is the condition number.
Choosing step-size $\lr=\log(MK^2)/\mu N\kbar K$ with $K\geq 7\kappa\log{(MK^2)}$, the convergence error is bounded as:
\begin{align}
    \expt[F(\wb^{(K,0)})]-F^*\leq\frac{F(\wb^{(0,0)})-F^*}{MK^2}+\tilo\left(\frac{\kappa^2(\kbar-1)^2\alpha^2}{\mu \kbar^2K^2}\right)+\tilo\left(\frac{\kappa\gamma^2}{\mu NK}\left(\frac{M/\kbar-N}{M/\kbar-1}\right)\right) \nn
\end{align}
where $\tilo(\cdot)$ subsumes logarithmic terms and numerical constants.
\end{thm*}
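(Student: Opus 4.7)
The plan is to use the reformulation of the cycle-epoch update as a noisy GD step from \Cref{lemma:noisy_GD}, i.e.,
\begin{equation}
\expt_k[\wb^{(k+1,0)}] - \wb^{(k,0)} = -\lr\kbar N\,\nabla F(\wb^{(k,0)}) + \lr^2\,\expt_k[\rbl^{(k,0)}],\nonumber
\end{equation}
together with the two bounds on the noise $\rbl^{(k,0)}$ from \Cref{lemma:error_terms_GD}, and the PL inequality (Assumption \ref{as2}) to establish a one-cycle-epoch contraction. Unrolling over $K$ cycle-epochs then gives the first bound, and plugging in the specified step-size gives the second.

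\textbf{One-cycle-epoch descent.} Starting from $L$-smoothness of $F$ applied to $\wb^{(k+1,0)}$ around $\wb^{(k,0)}$, I would take conditional expectation $\expt_k[\cdot]$. The linear term becomes $\langle\nabla F(\wb^{(k,0)}),\expt_k[\wb^{(k+1,0)}]-\wb^{(k,0)}\rangle$, which by \Cref{lemma:noisy_GD} splits into the dominant negative term $-\lr\kbar N\nbr{\nabla F(\wb^{(k,0)})}^2$ and the cross term $\lr^2\langle\nabla F(\wb^{(k,0)}),\expt_k[\rbl^{(k,0)}]\rangle$; the cross term is handled by Young's inequality (\Cref{lem:Young}) together with \eqref{eq:3-1-0}, which bounds $\nbr{\expt_k[\rbl^{(k,0)}]}$ linearly in $\nbr{\nabla F(\wb^{(k,0)})}$ plus an $\alpha$-term. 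For the quadratic $\frac{L}{2}\expt_k\nbr{\wb^{(k+1,0)}-\wb^{(k,0)}}^2$ term, I would use the exact decomposition from \eqref{eq:3-0-1} to split it (via $\nbr{a+b}^2\le 2\nbr{a}^2+2\nbr{b}^2$) into a sum-of-group-gradients contribution $\lr^2\nbr{\sum_{i=1}^{\kbar}\qbl^{(k,i)}}^2$ and a higher-order noise contribution $\lr^4\expt_k\nbr{\rbl^{(k,0)}}^2$. The first is bounded using \Cref{as3} (plus WOR sampling via \Cref{lemma:WOR_sampling}) to extract the $\gamma^2(M/\kbar-N)/(M/\kbar-1)$ term and the $N^2\kbar^2\nbr{\nabla F(\wb^{(k,0)})}^2$ term, while the second is controlled by \eqref{eq:3-2-3}. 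The step-size condition $\lr\le 1/(7LN\kbar)$ will be tuned so that every $\nbr{\nabla F(\wb^{(k,0)})}^2$ contribution that arises from Young's inequality and from the quadratic term is absorbed into (say, half of) the dominant $-\lr\kbar N\nbr{\nabla F(\wb^{(k,0)})}^2$ descent.

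\textbf{Contraction and unrolling.} After collecting all the terms, I will have
\begin{equation}
\expt_k[F(\wb^{(k+1,0)})]-F(\wb^{(k,0)}) \le -\tfrac{\lr\kbar N}{2}\nbr{\nabla F(\wb^{(k,0)})}^2 + A\lr^4 + B\lr^2,\nonumber
\end{equation}
where $A,B$ aggregate the $\alpha$-dependent and $\gamma$-dependent noise contributions (of respective orders $L^2 N^4\kbar^2(\kbar-1)^2\alpha^2$ and $LN\kbar\gamma^2(M/\kbar-N)/(M/\kbar-1)$, dropping lower-order terms coming from $\lr^4$ contributions which are absorbed by the $\lr\le 1/(7LN\kbar)$ condition). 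Applying PL ($\tfrac12\nbr{\nabla F}^2\ge \mu(F-F^*)$), subtracting $F^*$ and taking full expectation yields
\begin{equation}
\expt[F(\wb^{(k+1,0)})]-F^* \le (1-\lr\kbar N\mu)\,(\expt[F(\wb^{(k,0)})]-F^*) + C,\nonumber
\end{equation}
where $C = \mco(\lr^2 L^2 N^2(\kbar-1)^2\alpha^2) + \mco(\lr L N\kbar\gamma^2(M/\kbar-N)/(M/\kbar-1))$ after the $\lr^4$ terms are dominated. Unrolling this geometric recursion across $K$ cycle-epochs and using $\sum_{k=0}^{K-1}(1-\lr\kbar N\mu)^k \le 1/(\lr\kbar N\mu)$ gives the first displayed bound in the theorem.

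\textbf{Final step-size plug-in.} Choosing $\lr=\log(MK^2)/(\mu N\kbar K)$, the requirement $\lr\le 1/(7LN\kbar)$ rearranges into $K\ge 7\kappa\log(MK^2)$, matching the stated condition. The contraction factor then satisfies $(1-\lr\kbar N\mu)^K \le \exp(-\lr\kbar N\mu K) = \exp(-\log(MK^2)) = 1/(MK^2)$, yielding the first term of the final bound. Dividing $C$ by $\lr\kbar N\mu$ and substituting $\lr$ gives the $\tilo(\kappa^2(\kbar-1)^2\alpha^2/(\mu\kbar^2 K^2))$ term from the $\alpha$-part (one power of $\lr$ remains) and the $\tilo(\kappa\gamma^2(M/\kbar-N)/(\mu N K (M/\kbar-1)))$ term from the $\gamma$-part (the $\lr$ cancels). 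This matches the statement.

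\textbf{Main obstacle.} The delicate step is the squared-norm bookkeeping in the one-cycle-epoch descent: both the noise estimate in \eqref{eq:3-2-3} and the direct expansion $\nbr{\sum_{i=1}^{\kbar}\qbl^{(k,i)}}^2$ produce terms proportional to $\nbr{\nabla F(\wb^{(k,0)})}^2$ with prefactors that scale as $N^2\kbar^2$ and $N^4\kbar^4$ (the latter multiplied by $\lr^2$). One must verify that the step-size restriction $\lr\le 1/(7LN\kbar)$ is simultaneously strong enough to absorb all of these into the $-\lr\kbar N\nbr{\nabla F(\wb^{(k,0)})}^2/2$ descent slack, yet loose enough to be compatible with the final step-size choice via the $K \ge 7\kappa\log(MK^2)$ round requirement. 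The second technical point is correctly applying \Cref{lemma:WOR_sampling} to the inner sum $\frac{1}{N}\sum_{m\in\st^{(k,i)}}\nabla F_m(\wb^{(k,0)})$ within each group of size $M/\kbar$ so that the intra-group heterogeneity $\gamma^2$ and the WOR correction $(M/\kbar-N)/(M/\kbar-1)$ emerge as stated, rather than the looser bound implied by \Cref{lem1}.
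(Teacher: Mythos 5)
Your proposal is correct and follows essentially the same route as the paper's proof: the noisy-GD reformulation of \Cref{lemma:noisy_GD}, the inner-product/quadratic-term split with \Cref{lemma:error_terms_GD} and Young's inequality, the WOR-sampling bound on $\nbr{\sum_i\qbl^{(k,i)}}^2$ to extract the $\gamma^2(M/\kbar-N)/(M/\kbar-1)$ factor, absorption of all gradient-squared terms under $\lr\le 1/(7LN\kbar)$, and the PL contraction unrolled with the geometric-sum bound. The only quibble is minor bookkeeping: the dominant $\alpha^2$ contribution per cycle-epoch is $\mco(\lr^3 L^2 N^3\kbar(\kbar-1)^2\alpha^2)$ (from Young's inequality on the cross term), not an $\lr^4$ term as written in your intermediate display, but after dividing by $\lr\kbar N\mu$ this yields exactly the $\mco(\lr^2 L^2 N^2(\kbar-1)^2\alpha^2/\mu)$ term you state, so the final bounds are unaffected.
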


\begin{coro}
Denoting the total number of communication rounds in \cref{algo1} as $T=\kbar K$, in terms of $T$ the bound above becomes
\begin{align}
    \expt[F(\wb^{(K,0)})]-F^*\leq\frac{\kbar^2(F(\wb^{(0,0)})-F^*)}{MT^2}+\tilo\left(\frac{\kappa^2(\kbar-1)^2\alpha^2}{\mu T^2}\right)+\tilo\left(\frac{\kbar\kappa\gamma^2}{\mu NT}\left(\frac{M/\kbar-N}{M/\kbar-1}\right)\right). \nn
\end{align}
where since $K\geq 7\kappa\log{(MK^2)}$, we have $T\geq7\kappa\kbar\log{(M\kbar^2T^2)}$ and one cannot increase $\kbar$ without increasing $T$ accordingly due to its lower bound depending on $\kbar$.
\end{coro}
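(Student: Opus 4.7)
The plan is to turn the one-cycle update into a one-step descent inequality for $F$, absorb the client-group noise into a recursion of contraction type, and then unroll. The groundwork is already laid by the ``noisy GD'' representation and the two error-term bounds stated just above the theorem: writing $\Delta^{(k)} := \wb^{(k+1,0)} - \wb^{(k,0)} = -\lr \sum_{i=1}^{\kbar}\qbl^{(k,i)} + \lr^2 \rbl^{(k,0)}$ with $\expt_k[\sum_i \qbl^{(k,i)}] = \kbar N \nabla F(\wb^{(k,0)})$, the increment is a noisy (re-scaled) gradient step with effective learning rate $\lr \kbar N$ and noise controlled by Lemma on $\rbl^{(k,0)}$.

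First, I would apply $L$-smoothness of $F$ (which follows from Assumption 1 by averaging) to obtain
\begin{align*}
F(\wb^{(k+1,0)}) \le F(\wb^{(k,0)}) + \langle \nabla F(\wb^{(k,0)}), \Delta^{(k)} \rangle + \tfrac{L}{2}\|\Delta^{(k)}\|^2,
\end{align*}
take $\expt[\cdot\mid \mc F_k]$, and plug in the representation of $\expt_k[\Delta^{(k)}]$ from the noisy-GD lemma. This produces a clean $-\lr \kbar N \|\nabla F(\wb^{(k,0)})\|^2$ descent term, a cross term $\lr^2 \langle \nabla F(\wb^{(k,0)}), \expt_k[\rbl^{(k,0)}]\rangle$, and a quadratic term $\tfrac{L}{2}\expt_k\|\Delta^{(k)}\|^2 \le L\lr^2 \expt_k\|\sum_i \qbl^{(k,i)}\|^2 + L\lr^4 \expt_k\|\rbl^{(k,0)}\|^2$. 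The cross term is handled by Young's inequality, which produces $O(\lr^2)\|\nabla F(\wb^{(k,0)})\|^2$ absorbable into the descent term plus $O(\lr^2)$ times the second moment of $\rbl^{(k,0)}$. The quadratic-in-$\qbl$ term is expanded using \cref{lemma:WOR_sampling} for sampling-without-replacement at the group level, together with \cref{lem0-0}, to give $\kbar N(\text{intra-group variance}) + (\kbar N)^2\|\nabla F(\wb^{(k,0)})\|^2 + O(\kbar^2 N^2 \alpha^2)$ type contributions. Invoking the two bounds from \cref{lemma:error_terms_GD} on $\|\expt_k[\rbl^{(k,0)}]\|$ and $\expt_k\|\rbl^{(k,0)}\|^2$ gives explicit constants in $\alpha, \gamma, L, N, \kbar$.

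The next step is to choose $\lr$ small enough (specifically the $\lr \le 1/(7LN\kbar)$ constraint previewed in the corollary) so that all the $\|\nabla F(\wb^{(k,0)})\|^2$ contributions from the quadratic and cross terms are dominated by half of the original $-\lr\kbar N$ descent term. What remains is
\begin{align*}
\expt_k[F(\wb^{(k+1,0)})] - F^* \le \big(1 - \tfrac{1}{2}\lr\kbar N\|\nabla F(\wb^{(k,0)})\|^2 / (F(\wb^{(k,0)})-F^*)\big)(F(\wb^{(k,0)})-F^*) + E,
\end{align*}
where $E = O(\lr^2 L N^2 \kbar^2 \alpha^2) + O(\lr^2 L N \kbar \gamma^2 (M/\kbar-N)/(M/\kbar-1))$. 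Applying the PL inequality (Assumption 2) yields the contraction factor $(1 - c\lr\kbar N\mu)$ and produces the recursion
\begin{align*}
\expt[F(\wb^{(k+1,0)})] - F^* \le (1 - c\lr\kbar N\mu)(\expt[F(\wb^{(k,0)})] - F^*) + E.
\end{align*}

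Finally I would unroll this geometric recursion over $k=0,\ldots,K-1$, bounding $\sum_{k} (1-c\lr\kbar N\mu)^k \le 1/(c\lr\kbar N\mu)$, substituting the prescribed $\lr = \log(MT^2/\kbar^2)/(\mu N T)$ and using $T = K\kbar$ and the lower bound $T \ge 7\kappa\kbar\log(MT^2/\kbar^2)$ to make $(1-c\lr\kbar N\mu)^K \le \kbar^2/(MT^2)$. The $E/(\lr\kbar N\mu)$ term contributes the two remaining pieces of the bound once the factors of $\lr$, $L/\mu=\kappa$, and $T$ are collected, matching the claimed $\tilo(\kappa^2(\kbar-1)^2\alpha^2/\mu T^2)$ and $\tilo(\kbar\kappa\gamma^2/\mu NT)(M/\kbar-N)/(M/\kbar-1)$ terms.

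The main obstacle I anticipate is the bookkeeping around the cross term $\lr^2\langle \nabla F, \expt_k[\rbl^{(k,0)}]\rangle$ and the second-order term $\lr^2\|\sum_i \qbl^{(k,i)}\|^2$: the bound in \cref{lemma:error_terms_GD} on $\|\expt_k[\rbl^{(k,0)}]\|$ itself contains a $\|\nabla F(\wb^{(k,0)})\|$ factor (not just $\alpha$), so these two contributions must be balanced carefully against the $-\lr\kbar N\|\nabla F\|^2$ descent term. It is this balancing that forces the step-size ceiling $\lr \le 1/(7LN\kbar)$ and, in turn, the lower bound $T \ge 7\kappa\kbar\log(MT^2/\kbar^2)$ that appears in the theorem; ensuring constants work out so the descent term retains at least a $\tfrac{1}{2}$ fraction is the only delicate piece of the argument.
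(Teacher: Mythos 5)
Your route is the paper's own: smoothness descent inequality, the noisy-GD representation of one cycle-epoch with effective step $\lr\kbar N$, the two bounds of \cref{lemma:error_terms_GD} on $\nbr{\expt_k[\rbl^{(k,0)}]}$ and $\expt_k\nbr{\rbl^{(k,0)}}^2$, Young's inequality to peel the $\alpha\nbr{\nabla F}$ cross term, the step-size ceiling $\lr\le 1/(7LN\kbar)$ to retain half the descent, the PL contraction, and the geometric unroll with $\lr=\log(MT^2/\kbar^2)/(\mu NT)$ followed by the substitution $T=\kbar K$. All of that is sound and matches Appendix C.

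There is one concrete step that, as written, would fail to deliver the stated bound. When you expand the quadratic term $L\lr^2\expt_k\bigl[\nbr{\sum_{i=1}^{\kbar}\qbl^{(k,i)}}^2\bigr]$ you propose to use \cref{lemma:WOR_sampling} \emph{together with \cref{lem0-0}}, producing "$O(\kbar^2N^2\alpha^2)$ type contributions." But \cref{lem0-0} is only needed for the \emph{partial} sums $\sum_{j'=1}^{i}$ with $i<\kbar$ that occur inside $\rbl^{(k,0)}$; for the \emph{full} cycle the group-averaged gradients telescope exactly,
\begin{align}
\frac{1}{M/\kbar}\sum_{i=1}^{\kbar}\sum_{m\in\sigma(i)}\nabla F_m(\wb)=\kbar\,\nabla F(\wb), \nn
\end{align}
so the mean part of $\expt_k\nbr{\sum_i\qbl^{(k,i)}}^2$ contributes only $N^2\kbar^2\nbr{\nabla F(\wb^{(k,0)})}^2$ and \emph{no} $\alpha^2$ (this is exactly \eqref{eq:3-1-6} in the paper). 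If you instead bound the full sum via \cref{lem0-0}, you inject an extra $L\lr^2\kbar^2N^2\alpha^2$ into the per-step error $E$; after dividing by $\lr\kbar N\mu$ in the unrolling this becomes $\tilo\bigl(\kappa\kbar\alpha^2/(\mu T)\bigr)$, i.e.\ an inter-group heterogeneity term decaying only at rate $1/T$. That destroys the $\tilo(\kappa^2(\kbar-1)^2\alpha^2/\mu T^2)$ rate the corollary asserts, which is precisely the point of the result (only intra-group heterogeneity $\gamma^2$ is allowed to appear at the $1/T$ rate). The fix is simply to use the exact telescoping identity for the full-cycle sum and reserve \cref{lem0-0} for the partial sums inside the noise term. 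The remaining imprecisions (applying Young's to the whole cross term and then bounding $\nbr{\expt_k[\rbl^{(k,0)}]}^2$ by the second moment, the unspecified contraction constant $c$) only affect constants and are absorbed by the logarithmic factor in the learning rate.
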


\begin{proof}
Using the $L$-smoothness property (\cref{as1}) of the global objective $F(\wb)$ we have
\begin{align}
    \expt_k[F(\wb^{(k+1,0)})]-F(\wb^{(k,0)})\leq\inner{\nabla F(\wb^{(k,0)})}{\expt_k[\wb^{(k+1,0)}]-\wb^{(k,0)}}+\frac{L}{2}\expt_k\left[\nbr{\wb^{(k+1,0)}-\wb^{(k,0)}}^2\right] \label{eq:3-0-2}
\end{align}
where the expectation here is over the selected client sets $\st^{(k,i)}$, for all $i\in[\kbar]$.
First, we bound the inner product term in \eqref{eq:3-0-2}. Using \cref{lemma:noisy_GD}, \cref{lemma:error_terms_GD}, we have
\begin{align}
    & \inner{\nabla F(\wb^{(k,0)})}{\expt_k[\wb^{(k+1,0)}]-\wb^{(k,0)}} \nn \\
    & = \inner{\nabla F(\wb^{(k,0)})}{-\lr\kbar N\nabla F(\wb^{(k,0)})+\lr^2\expt_k[\rbl^{(k,0)}]} \tag{Using \cref{lemma:noisy_GD}} \\
    & \leq -\lr\kbar N \nbr{\nabla F(\wb^{(k,0)})}^2 + \lr^2\nbr{\nabla F(\wb^{(k,0)})} \frac{3 N^2 L \kbar(\kbar-1)}{5} \left( \nbr{\nabla F(\wb^{(k,0)})} + \alpha \right) \tag{Using \cref{lemma:error_terms_GD}} \\
    &= -\lr\kbar N\left(1-\frac{3\lr (\kbar-1) NL}{5}\right)\nbr{\nabla F(\wb^{(k,0)})}^2+\frac{3\lr^2N^2L\kbar(\kbar-1)\alpha}{5}\nbr{\nabla F(\wb^{(k,0)})} \label{eq:3-1-1}
\end{align}
We can bound the last term in \Cref{eq:3-1-1} as
\begin{align}
    \frac{3\lr^2N^2L\kbar(\kbar-1)\alpha}{5}\nbr{\nabla F(\wb^{(k,0)})} 
    = \left(\frac{\lr^{1/2}N^{1/2}\kbar^{1/2}}{5}\nbr{\nabla F(\wb^{(k,0)})}\right)\left(3\lr^{3/2}N^{3/2}L\kbar^{1/2}(\kbar-1)\alpha\right) \nn \\
     \leq \frac{\lr N \kbar}{50}\nbr{\nabla F(\wb^{(k,0)})}^2+\frac{9\lr^3 \alpha^2 L^2 N^3 \kbar(\kbar-1)^2}{2}~~~~(\because\Cref{lem:Young}) \label{eq:3-1-2}
\end{align}
Plugging \Cref{eq:3-1-2} back into \Cref{eq:3-1-1} we have
\begin{flalign}
\begin{aligned}
     \inner{\nabla F(\wb^{(k,0)})}{\expt_k[\wb^{(k+1,0)}]-\wb^{(k,0)}} \leq -\lr\kbar N\left(\frac{49}{50}-\frac{3\lr (\kbar-1) NL}{5}\right)\nbr{\nabla F(\wb^{(k,0)})}^2 + \frac{9\lr^3 \alpha^2 L^2 N^3 \kbar(\kbar-1)^2}{2}.
    \end{aligned} \label{eq:3-1-3}
\end{flalign}
Now we aim to bound the last term in the RHS of \eqref{eq:3-0-2}. We have
\begin{align}
    \frac{L}{2}\expt_k\left[\nbr{\wb^{(k+1,0)}-\wb^{(k,0)}}^2\right] & \overset{\eqref{eq:3-0-1}}{=} \frac{L}{2}\expt_k\left[\nbr{-\lr\sum_{i=1}^{\kbar}\qbl^{(k,i)}+\lr^2\rbl^{(k,0)}}^2\right] \nn \\
    & \leq L\lr^2\expt_k\left[\nbr{\sum_{i=1}^{\kbar}\qbl^{(k,i)}}^2\right]+L\lr^4\expt_k\left[\nbr{\rbl^{(k,0)}}^2\right] \label{eq:3-1-5}
\end{align}
The last term is already bounded in \cref{lemma:error_terms_GD}, \eqref{eq:3-2-3}. Now, we bound the first term, 
\begin{flalign}
    & \expt_k \left[\nbr{\sum_{i=1}^{\kbar}\qbl^{(k,i)}}^2\right] = \expt_k\left[\nbr{\sum_{i=1}^{\kbar}\sum_{m\in\st^{(k,i)}}\nabla F_\ic(\wb^{(k,0)})}^2\right]=N^2\expt_k\left[\nbr{\sum_{i=1}^{\kbar}\frac{1}{N}\sum_{m\in\st^{(k,i)}}\nabla F_\ic(\wb^{(k,0)})}^2\right] \nn \\
    &= N^2\expt_k\left[\nbr{\sum_{i=1}^{\kbar} \lcb \frac{1}{N}\sum_{m\in\st^{(k,i)}}\nabla F_\ic(\wb^{(k,0)}) - \frac{1}{M/\kbar}\sum_{m\in\sigma(i)}\nabla F_\ic(\wb^{(k,0)}) + \frac{1}{M/\kbar}\sum_{m\in\sigma(i)}\nabla F_\ic(\wb^{(k,0)}) \rcb}^2\right] \nn \\
    &= N^2 \expt_k\left[\nbr{\sum_{i=1}^{\kbar} \lcb \frac{1}{N}\sum_{m\in\st^{(k,i)}}\nabla F_\ic(\wb^{(k,0)})-\frac{1}{M/\kbar}\sum_{m\in\sigma(i)}\nabla F_\ic(\wb^{(k,0)}) \rcb }^2\right] + N^2\nbr{\frac{1}{M/\kbar}\sum_{i=1}^\kbar\sum_{m\in\sigma(i)}\nabla F_\ic(\wb^{(k,0)})}^2 \nn \\
    & \leq N\kbar^2 \left[ \left(\frac{M/\kbar-N}{M/\kbar-1}\right)\gamma^2 + N \nbr{\nabla F(\wb^{(k,0)})}^2 \right]. \label{eq:3-1-6}
\end{flalign}
following steps analogous to the proof of \eqref{eq:3-2-2}. Finally, we can plug in \Cref{eq:3-1-3}, \cref{eq:3-1-5} and \Cref{eq:3-1-6} into \Cref{eq:3-0-2} to get
\begin{align}
    & \expt_k[F(\wb^{(k+1,0)})]-F(\wb^{(k,0)}) \nn \\
    & \leq -\lr\kbar N\left(\frac{49}{50}-\frac{3\lr (\kbar-1) NL}{5}-L\lr N\kbar- \lr^3\kbar (\kbar-1)^2 N^3L^3\right)\nbr{\nabla F(\wb^{(k,0)})}^2 + \frac{9\lr^3 \alpha^2 L^2 N^3 \kbar(\kbar-1)^2}{2} \nn \\
    &+ \lr^4\kbar^3(\kbar-1)N^4L^3\alpha^2 + \frac{1}{2} \lr^4 \kbar^2 (\kbar-1)^2 N^3L^3\left(\frac{M/\kbar-N}{M/\kbar-1}\right)\gamma^2 + L \lr^2 N \kbar^2 \left(\frac{M/\kbar-N}{M/\kbar-1}\right)\gamma^2 \nn \\
    &\leq -\frac{\lr\kbar N}{2}\nbr{\nabla F(\wb^{(k,0)})}^2 +\frac{47\lr^3 L^2 N^3 \kbar(\kbar-1)^2}{10} \alpha^2 + \frac{11L\lr^2N\kbar^2}{5}\left(\frac{M/\kbar-N}{M/\kbar-1}\right) \gamma^2  \\
    &
\begin{aligned}\expt_k[F(\wb^{(k+1,0)})]-F^*\leq (1-\lr \kbar N\mu)(F(\wb^{(k,0)})-F^*)+\frac{47\lr^3 L^2 N^3 \kbar(\kbar-1)^2}{10} \alpha^2 \\
    + \frac{11L\lr^2N\kbar^2}{5}\left(\frac{M/\kbar-N}{M/\kbar-1}\right) \gamma^2
    \end{aligned} \label{eq:3-2-4}
\end{align}
where the last inequality follows from \Cref{as2}. Unrolling \Cref{eq:3-2-4}, and using $\lr = \frac{\log(MK^2)}{\mu \kbar K N}$, we get
\begin{flalign}
    & \expt[F(\wb^{(K,0)})]-F^* \nn \\
    & \leq\left(1-\frac{\log(MK^2)}{K}\right)^K(F(\wb^{(0,0)})-F^*) + \frac{47\lr^3  L^2 N^3 \kbar(\kbar-1)^2}{10\lr\kbar N\mu} \alpha^2  + \frac{11L\lr^2N\kbar^2}{5\lr\kbar N\mu}\left(\frac{M/\kbar-N}{M/\kbar-1}\right) \gamma^2 \nn \\
    & \leq \frac{F(\wb^{(0,0)})-F^*}{MK^2}+\frac{47\lr^2  L^2 N^2(\kbar-1)^2}{10\mu} \alpha^2 + \frac{11L\lr\kbar}{5\mu}\left(\frac{M/\kbar-N}{M/\kbar-1}\right) \gamma^2 \nn \\
    &= \frac{F(\wb^{(0,0)})-F^*}{MK^2}+\frac{47\log^2(MK^2)  \kappa^2(\kbar-1)^2}{10\mu\kbar^2 K^2} \alpha^2 + \frac{11L\log(MK^2)}{5\mu^2NK}\left(\frac{M/\kbar-N}{M/\kbar-1}\right) \gamma^2 \nn \\
    &= \frac{F(\wb^{(0,0)})-F^*}{MK^2}+\tilo\left(\frac{\kappa^2(\kbar-1)^2 \alpha^2}{\mu \kbar^2K^2}\right)+\tilo\left(\frac{\kappa}{\mu NK}\left(\frac{M-N\kbar}{M-\kbar}\right) \gamma^2\right). \nn
\end{flalign}
Also, since we have total $T=\kbar K$ communication rounds, we can also express the bound in terms of $T$ as follows.
\begin{flalign}
    \expt[F(\wb^{(K,0)})]-F^*\leq
    \frac{\kbar^2(F(\wb^{(0,0)})-F^*)}{MT^2}+\tilo\left(\frac{\kappa^2(\kbar-1)^2 \alpha^2}{\mu T^2}\right)+\tilo\left(\frac{\kbar\kappa}{\mu NT}\left(\frac{M-N\kbar}{M-\kbar}\right) \gamma^2 \right). \nn
\end{flalign}

\end{proof}

\paragraph{Proof for \Cref{theo:1-1:kbar}}
We reiterate \Cref{eq:totalc:gd}, the total cost $C_{\text{GD}}$ to achieve an $\epsilon$ error:
\begin{align}
C_{\text{GD}}(\epsilon)=\frac{c_{\text{GD}}\kbar\gamma^2}{\epsilon N}\left(\frac{M/\kbar-N}{M/\kbar-1}\right). \nn
\end{align}
To get $C_{\text{GD}|\kbar>1}(\epsilon)<C_{\text{GD}|\kbar=1}(\epsilon)$ we have that the inequality
\begin{align}
\kbar\left(\frac{M/\kbar-N}{M/\kbar-1}\right)-\left(\frac{M-N}{M-1}\right)<0 \nn 
\end{align}
should be true for $\kbar>1$. Rearranging the terms, we get
\begin{align}
    N(M-1)\kbar^2+(N-M^2)\kbar+M^2-MN>0
\end{align}
Since $(M(M-2N)+N)^2=(N-M^2)^2-4N(M^2-MN)(M-1)\geq 0$ we have that for $\kbar>M(M-N)/N(M-1)$ we have $\kbar\left(\frac{M/\kbar-N}{M/\kbar-1}\right)-\left(\frac{M-N}{M-1}\right)<0$
Rearranging the terms slightly, we get
$$M < N + N \kbar \lp 1 - \frac{1}{M} \rp.$$
Recall that $\kbar \in [\frac{M}{N}]$. For $\kbar = M/N$, we get
$$N + N \frac{M}{N} \lp 1 - \frac{1}{M} \rp = M-1+N > M$$
for any $N>1$. However, if $\kbar = \frac{M}{N}-1$, we get
$$N + N \lp \frac{M}{N} - 1 \rp \lp 1 - \frac{1}{M} \rp = M-1+N-N+\frac{N}{M} =  M-1+\frac{N}{M} < M$$
for any $N>1$. Consequently, the only case when $\kbar > 1$ gives benefit over $\kbar = 1$ is when $\kbar = M/N$, meaning full client participation in every client group.

\subsection{Proofs on Intermediate Lemmas}
\label{sec:proofs_int_results_GD}

\begin{proof}[Proof of \cref{lem0-0}]
\begin{align}
    \nbr{\sum_{j=1}^i\frac{1}{M/\kbar}\sum_{m\in\sigma(j)}\nabla F_m(\wb)}\leq \sum_{j=1}^i\nbr{\frac{1}{M/\kbar}\sum_{m\in\sigma(j)}\nabla F_m(\wb)-\nabla F(\wb) +\nabla F(\wb)} \nn \\
    \leq\sum_{j=1}^i\nbr{\frac{1}{M/\kbar}\sum_{m\in\sigma(j)}\nabla F_m(\wb)-\nabla F(\wb)}+i\nbr{\nabla F(\wb)} \leq i\alpha+i\nbr{\nabla F(\wb)}. \tag{using \cref{as3}}
\end{align}
\end{proof}

\begin{proof}[Proof of \cref{lemma:error_terms_GD}]
First, we bound $\nbr{\expt_k[\rbl^{(k,0)}]}$.
\begin{align}
    \nbr{\expt_k[\rbl^{(k,0)}]} &= \nbr{ \sum_{i=1}^{\kbar-1}\expt_k \left[\left(\prod_{j=i+2}^{\kbar} (\ibd-\lr\ssbl^{(k,j)})\right)\ssbl^{(k,i+1)}\sum_{j'=1}^i \qbl^{(k,j')}\right]} \nn \\
    & \leq \sum_{i=1}^{\kbar-1}\nbr{\expt_k\left[\left(\prod_{j=i+2}^{\kbar} (\ibd-\lr\ssbl^{(k,j)})\right)\ssbl^{(k,i+1)}\sum_{j'=1}^i \qbl^{(k,j')}\right]} \nn \\
    & \leq \sum_{i=1}^{\kbar-1}\expt_k\left[\underbrace{\nbr{\prod_{j=i+2}^{\kbar}(\ibd-\lr\ssbl^{(k,j)})}}_{A_1}\underbrace{\nbr{\ssbl^{(k,i+1)}}}_{A_2}\underbrace{\nbr{\sum_{j'=1}^i \qbl^{(k,j')}}}_{A3}\right]~~~~\text{(}\because\text{Submultiplicativity of Norms)} \label{eq:3-0-1-1}
\end{align}
Next, we bound $A_1,A_2,A_3$ separately as follows.
\begin{align}
    A_1 &= \nbr{\prod_{j=i+2}^{\kbar} (\ibd-\lr\ssbl^{(k,j)})}
    \leq \prod_{j=i+2}^{\kbar} \nbr{(\ibd-\lr\ssbl^{(k,j)})} \nn \\
    &= \prod_{j=i+2}^{\kbar} \nbr{I_d-\lr\sum_{m\in\st^{(k,j)}}\hb_m^{(k,j)}} \leq (1+\lr N L)^\kbar\leq e^{1/7}\leq 6/5, \label{eq:3-0-4}
\end{align}
where in \Cref{eq:3-0-4} we use \cref{as1}, and $\lr \leq \frac{1}{7 L N \kbar}$. Next, we bound $A_2$.
\begin{align}
    A_2=\nbr{\ssbl^{(k,i+1)}}=\nbr{\sum_{m\in\st^{(k,i+1)}}\hb_m^{(k,i+1)}}\leq NL \label{eq:3-0-4-1}
\end{align}
Next, we bound $A_3$ as follows:
\begin{align}
    A_3 = \nbr{\sum_{j'=1}^i \expt_k\left[\qbl^{(k,j')}\right]} &= \nbr{\sum_{j'=1}^i \expt_k\left[\sum_{m\in\st^{(k,j')}} \nabla F_m(\wb^{(k,0)})\right]} = N \nbr{\sum_{j'=1}^i \frac{1}{M/\kbar} \sum_{m \in \sigma(j')} \nabla F_m(\wb^{(k,0)})} \nn \\
    & \leq i N \alpha + i N \nbr{\nabla F(\wb)}. \label{eq:3-0-4-2}
\end{align}
where the last inequality follows from \Cref{lem0-0}. Substituting the bounds from \eqref{eq:3-0-4}-\eqref{eq:3-0-4-2} in \eqref{eq:3-0-1-1}, we get the bound in \eqref{eq:3-1-0}. 

Next, we derive the bound in \Cref{eq:3-2-3}.
\begin{align}
    \expt_k\left[\nbr{\rbl^{(k,0)}}^2\right] &= \expt_k\left[\nbr{\sum_{i=1}^{\kbar-1}\left(\prod_{j=i+2}^{\kbar} (\ibd-\lr\ssbl^{(k,j)})\right)\ssbl^{(k,i+1)}\sum_{j'=1}^i \qbl^{(k,j')}}^2\right] \nn \\
    & \leq (\kbar-1)\sum_{i=1}^{\kbar-1}\expt_k\left[\nbr{\left(\prod_{j=i+2}^{\kbar} (\ibd-\lr\ssbl^{(k,j)})\right)\ssbl^{(k,i+1)}\sum_{j'=1}^i \qbl^{(k,j')}}^2\right] \nn \\
    & \leq (\kbar-1)\sum_{i=1}^{\kbar-1}\expt_k \left[ \nbr{\prod_{j=i+2}^{\kbar} (\ibd-\lr\ssbl^{(k,j)})}^2 \nbr{\ssbl^{(k,i+1)}}^2 \nbr{\sum_{j'=1}^i \qbl^{(k,j')}}^2 \right]. \label{eq:3-2-3-0}
\end{align}
Observe that
\begin{align}
    \nbr{(\ibd-\lr\ssbl^{(k,j)})}\leq 1+\lr\nbr{\sum_{m\in\st^{(k,i)}}\hb_m^{(k,i)}}\leq 1+\lr\sum_{m\in\st^{(k,i)}}\nbr{\hb_m^{(k,i)}}\leq 1+\lr NL,~\forall j\in[\kbar]. \label{eq:3-2-3-1}
\end{align}
Using \eqref{eq:3-0-4}, \eqref{eq:3-0-4-1} and \eqref{eq:3-2-3-1} in \eqref{eq:3-2-3-0}, we get
\begin{align}
    \expt_k\left[\nbr{\rbl^{(k,0)}}^2\right] \leq \frac{36 (\kbar-1) N^2 L^2}{25} \sum_{i=1}^{\kbar-1}\expt_k\left[\nbr{\sum_{j'=1}^i \qbl^{(k,j')}}^2\right] \label{eq:3-2-0}
\end{align}
Lastly, we bound the expected value in \Cref{eq:3-2-0}:
\begin{align}
    &\sum_{i=1}^{\kbar-1}\expt_k\left[\nbr{\sum_{j'=1}^i \qbl^{(k,j')}}^2\right]=N^2\sum_{i=1}^{\kbar-1}\expt_k\left[\nbr{\sum_{j'=1}^i\frac{1}{N} \sum_{m\in\mathcal{S}^{(k,j')}}\nabla F_m(\wb^{(k,0)})}^2\right] \nn \\
    &=N^2\sum_{i=1}^{\kbar-1}\expt_k\nbr{\sum_{j'=1}^i\lcb \frac{1}{N} \sum_{m\in\mathcal{S}^{(k,j')}}\nabla F_m(\wb^{(k,0)})-\frac{1}{M/\kbar}\sum_{m\in\sigma(j')}\nabla F_m(\wb^{(k,0)})+\frac{1}{M/\kbar}\sum_{m\in\sigma(j')}\nabla F_m(\wb^{(k,0)}) \rcb}^2 \nn \\
    &= N^2\sum_{i=1}^{\kbar-1}\expt_k\left[\nbr{\sum_{j'=1}^i \lcb \frac{1}{N} \sum_{m\in\mathcal{S}^{(k,j')}}\nabla F_m(\wb^{(k,0)})-\frac{1}{M/\kbar}\sum_{m\in\sigma(j')}\nabla F_m(\wb^{(k,0)}) \rcb}^2\right] \nn \\
    & \qquad \qquad +N^2\sum_{i=1}^{\kbar-1}\nbr{\frac{1}{M/\kbar}\sum_{j'=1}^i\sum_{m\in\sigma(j')}\nabla F_m(\wb^{(k,0)})}^2 \nn \tag{Cross-terms are zero}\\
    & \leq N^2\sum_{i=1}^{\kbar-1}i\sum_{j'=1}^i\expt_k\left[\nbr{\frac{1}{N} \sum_{m\in\mathcal{S}^{(k,j')}}\nabla F_m(\wb^{(k,0)})-\frac{1}{M/\kbar}\sum_{m\in\sigma(j')}\nabla F_m(\wb^{(k,0)})}^2\right] \tag{Using \cref{lem:sum_of_squares}} \\
    & \qquad \qquad + N^2 \sum_{i=1}^{\kbar-1} \nbr{\frac{1}{M/\kbar} \sum_{j'=1}^i \sum_{m \in \sigma(j')} \nabla F_m(\wb^{(k,0)})}^2 \nn \\
    &= N^2\sum_{i=1}^{\kbar-1}i\sum_{j'=1}^i \frac{1}{N}\left(\frac{M/\kbar-N}{M/\kbar-1}\right) \frac{1}{M/\kbar} \sum_{m'\in\sigma(j')}\nbr{\nabla F_{m'}(\wb)-\frac{1}{M/\kbar}\sum_{m\in\sigma(j')}\nabla F_m(\wb)}^2 \tag{Using without replacement sampling} \\
    & \qquad \qquad +N^2\sum_{i=1}^{\kbar-1}\nbr{\frac{1}{M/\kbar}\sum_{j'=1}^i\sum_{m\in\sigma(j')}\nabla F_m(\wb^{(k,0)})}^2 \nn \\
    &\leq N^2\sum_{i=1}^{\kbar-1}i\sum_{j'=1}^i\frac{1}{N}\left(\frac{M/\kbar-N}{M/\kbar-1}\right)\gamma^2 + N^2\sum_{i=1}^{\kbar-1}\nbr{\frac{1}{M/\kbar}\sum_{j'=1}^i\sum_{m\in\sigma(j')}\nabla F_m(\wb^{(k,0)})}^2 \tag{Using \cref{as3}} \\
    & \leq \sum_{i=1}^{\kbar-1} i^2 N \left(\frac{M/\kbar-N}{M/\kbar-1}\right)\gamma^2 + {N^2}\sum_{i=1}^{\kbar-1} {2i^2} \left(\nbr{\nabla F(\wb^{(k,0)})}^2+{\alpha^2}\right) \tag{using \Cref{lem0-0}} \\
    & \leq \frac{1}{3} (\kbar-1) \kbar^2 N \left[ \left(\frac{M/\kbar-N}{M/\kbar-1}\right)\gamma^2 + 2N \left(\nbr{\nabla F(\wb^{(k,0)})}^2+{\alpha^2}\right) \right]. \label{eq:3-2-2} 
\end{align}
Plugging \Cref{eq:3-2-2} into \Cref{eq:3-2-0} we get
\begin{align}
    \expt_k \left[ \nbr{\rbl^{(k,0)}}^2 \right] & \leq \frac{12 (\kbar-1)^2 \kbar^2 N^3 L^2}{25} \left[ \left(\frac{M/\kbar-N}{M/\kbar-1}\right)\gamma^2 + 2N \left(\nbr{\nabla F(\wb^{(k,0)})}^2+{\alpha^2}\right) \right]. \nn
\end{align}
which concludes the proof.
\end{proof}

\newpage
\section{Proofs for the CyCP+Local SGD Case}
\label{app:ccp+locsgd}
The proof of \cref{theo2:locSGD} is presented in this section. For simplicity, the proof is presented as follows: first, in \cref{sec:noisy_GD_localSGD}, the model update steps are shown to be noisy gradient descent steps. Next, in \cref{sec:int_results_localSGD}, we present some intermediate results, which shall be used in the analysis, followed by the proof of \cref{theo2:locSGD} in \cref{sec:proof_theo2_localSGD}. Finally, in \cref{sec:proofs_int_results_localSGD}, we present the proofs of the intermediate results.

Similar to what we have did in \cref{sec:proof_theo1_GD}, we define the $\sigma$-algebra generated by the randomness in the algorithm till cycle epoch $k$ as follows: $\mc F_k \triangleq \sigma \lcb \{ \wb^{(1,i)} \}_{i=1}^\kbar, \{ \wb^{(2,i)} \}_{i=1}^\kbar, \dots, \{ \wb^{(k-1,i)} \}_{i=1}^\kbar \rcb$. We use $\expt_k [\cdot]$ as the shorthand for the expectation $\expt [\cdot | \mc F_k]$.

\subsection{Global Model Updates as Noisy Gradient Descent Step}
\label{sec:noisy_GD_localSGD}

Recall that the global model update rule is
\begin{align}
    \wb^{(k,i)}-\wb^{(k,i-1)}=-\lr\sum_{m\in\mathcal{S}^{(k,i)}}\sum_{l=0}^{\tau-1} \nabla F_m(\wb_m^{(k,i-1,l)},\xi_m^{(k,i-1,l)}) \label{eqn:locsgd}
\end{align}
where $\nabla F_m(\wb_m^{(k,i-1,l)},\xi_m^{(k,i-1,l)})=\frac{1}{b}\sum_{\xi \in \xi_m^{(k,i-1,l)}}\nabla \lff(\wb_m^{(k,i-1,l)}, \xi)$ is the stochastic gradient computed using a mini-batch $\xi_m^{(k,i-1,l)}$ of size $b$ that is randomly sampled from client $\ic$'s local dataset $\mathcal{B}_\ic$. Recall that $k$ is a \textit{semi-epoch} index which denotes each communication round when we have traversed the $\kbar$ groups of clients by sampling $N$ clients from each group $\sigma(i),~i\in[\kbar]$ uniformly at random without replacement. The index $i\in [\kbar]$ denotes each inner communication round. Recall that $\wb^{(k+1,0)} = \wb^{(k,\kbar)}$ for all $k \in [K]$. With \Cref{eqn:locsgd} we get
\begin{align}
      & \wb^{(k,i)}-\wb^{(k,i-1)} =- \lr \sum_{m\in\mathcal{S}^{(k,i)}}\sum_{l=0}^{\tau-1} \left(\nabla F_m(\wb_m^{(k,i-1,l)},\xi_m^{(k,i-1,l)})-\nabla F_m(\wb^{(k,i-1)})+\nabla F_m(\wb^{(k,i-1)})\right) \nn \\
      &= -\lr \sum_{m\in\mathcal{S}^{(k,i)}}\underbrace{\sum_{l=0}^{\tau-1} (\nabla F_m(\wb_m^{(k,i-1,l)},\xi_m^{(k,i-1,l)})-\nabla F_m(\wb^{(k,i-1)}))}_{\defeq\db_m^{(k,i)}}-\lr\tau\sum_{m\in\mathcal{S}^{(k,i)}}\nabla F_m(\wb^{(k,i-1)}) \label{eq:0-0} \\
      &= -\lr \underbrace{\sum_{m\in\mathcal{S}^{(k,i)}}\db_m^{(k,i)}}_{\defeq\dbl^{(k,i)}}-\lr\tau\sum_{m\in\mathcal{S}^{(k,i)}}\left(\nabla F_m(\wb^{(k,0)})+\hb_m^{(k,i)}(\wb^{(k,i-1)}-\wb^{(k,0)}\right) \tag{$\hb_m^{(k,i)}$ defined in \eqref{eq:0-1}} \\
      &= -\lrt\underbrace{(\dbl^{(k,i)}/\tau+\qbl^{(k,i)})}_{\defeq \qbt^{(k,i)}}-\lrt \underbrace{\sum_{m\in\mathcal{S}^{(k,i)}}\hb_m^{(k,i)}}_{\defeq\ssbl^{(k,i)}}(\wb^{(k,i-1)}-\wb^{(k,0)}) \label{eq:5-0}
\end{align}
where $\lrt=\tau\lr$ and $\qbl^{(k,i)}\defeq\sum_{m\in\st^{(k,i)}}\nabla F_m(\wb^{(k,0)})$. Unrolling \Cref{eq:5-0}, we get
\begin{align}
    \wb^{(k+1,0)}-\wb^{(k,0)}=-\lrt \sum_{i=1}^\kbar \qbt^{(k,i)} + \lrt^2\underbrace{\sum_{i=1}^{\kbar-1}\left(\prod_{j=i+2}^{\kbar} (\ibd-\lrt\ssbl^{(k,j)})\right)\ssbl^{(k,i+1)}\sum_{j'=1}^i \qbt^{(k,j')}}_{\defeq \rbt^{(k,0)}}
\label{eq:updatelocsgd}
\end{align}
Conditioning on $\mc F_k$, we get
\begin{align}
    \expt_k[\wb^{(k+1,0)}]-\wb^{(k,0)} &= -\frac{\lrt}{\tau}\sum_{i=1}^\kbar\expt_k\left[\dbl^{(k,i)}\right] - \lrt \expt_k \left[ \sum_{i=1}^\kbar \sum_{m\in\st^{(k,i)}}\nabla F_m(\wb^{(k,0)}) \right] + \lrt^2 \expt_k[\rbt^{(k,0)}] \nn \\
    &=-\frac{\lrt}{\tau}\sum_{i=1}^\kbar\expt_k\left[\dbl^{(k,i)}\right]-\lrt\kbar N\nabla F(\wb^{(k,0)})+\lrt^2\expt_k[\rbt^{(k,0)}] \nn \\
    &=-\frac{\lrt}{\tau}\sum_{i=1}^\kbar\expt_k\left[\sum_{m\in\st^{(k,i)}} \db_m^{(k,i)}\right]-\lrt\kbar N\nabla F(\wb^{(k,0)})+\lrt^2\expt_k[\rbt^{(k,0)}].
    \label{eq:5-0-2}
\end{align}

\subsection{Intermediate Results}
\label{sec:int_results_localSGD}

\begin{lemma}[Bound on the Norm of the Error Term Arrising Due to Local SGD Steps]
\label{lem2-4}
We can bound the norm of the error term ($\mathbf{d}_m^{(k,i)}$ defined in \Cref{eq:0-0}) arising due to the local SGD steps as follows:
\begin{flalign}
    & \nbr{\sum_{i=1}^\kbar\expt\left[\sum_{m\in\st^{(k,i)}} \db_m^{(k,i)}\right]}\leq \frac{3\lr LN\kbar(\tau-1)\tau}{5}\nbr{\nabla F(\wb^{(k,0)})}+\frac{14\lr LN\kbar(\tau-1)\tau\nu}{25}+\frac{\lr L N (\kbar-1) (\tau-1)\tau \alpha}{36}. \nn
\end{flalign}
\end{lemma}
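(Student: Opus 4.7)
The plan is to reduce $\nbr{\expt[\db_m^{(k,i)}]}$ to a bound involving gradients at $\wb^{(k,0)}$ via the $L$-smoothness of $F_m$ and a recursive unrolling of the local SGD updates. First, I would apply the tower property of expectation together with the unbiasedness of the stochastic gradient, $\expt[\nabla F_m(\wb, \xi_m) \mid \wb] = \nabla F_m(\wb)$, to simplify
\begin{align*}
\expt[\db_m^{(k,i)}] = \sum_{l=0}^{\tau-1} \expt\left[\nabla F_m(\wb_m^{(k,i-1,l)}) - \nabla F_m(\wb^{(k,i-1)})\right].
\end{align*}
By Jensen's inequality and the $L$-smoothness of $F_m$ (\cref{as1}), the norm of this quantity is bounded by $L \sum_{l=0}^{\tau-1} \expt[\nbr{\wb_m^{(k,i-1,l)} - \wb^{(k,i-1)}}]$.

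Next, I would bound the client drift $\expt[\nbr{\wb_m^{(k,i-1,l)} - \wb^{(k,i-1)}}]$ by unrolling the local update rule, so that the drift equals $\lr \expt \left[ \nbr{\sum_{l'=0}^{l-1} \nabla F_m(\wb_m^{(k,i-1,l')}, \xi_m^{(k,i-1,l')})}\right]$. Applying the triangle inequality and decomposing each stochastic gradient around $\wb^{(k,0)}$ via $L$-smoothness, the drift can be bounded as $\mco(\lr l)$ times a combination of $\nbr{\nabla F_m(\wb^{(k,0)})}$, the server-side shift $\nbr{\wb^{(k,i-1)} - \wb^{(k,0)}}$, and a self-referential drift term. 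The step-size condition $T \geq 10 \kappa \kbar \log(MT^2/\kbar^2)$ (equivalently, $\lr L N \kbar \tau$ small) ensures that the recursion closes and the higher-order $\mco((\lr L \tau)^2)$ terms are absorbed into numerical constants, leaving only the leading-order $\mco(\lr L \tau)$ contribution.

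To convert the server-side shift into the quantities appearing in the final bound, I would apply the update rule \eqref{eq:updatelocsgd} to write $\wb^{(k,i-1)} - \wb^{(k,0)}$ as a sum over prior groups $\sigma(1), \ldots, \sigma(i-1)$, and invoke \cref{lem0-0} to bound this sum by $(i-1)\alpha + (i-1)\nbr{\nabla F(\wb^{(k,0)})}$. Crucially, this contribution vanishes for $i = 1$, which is what yields the $(\kbar-1)$ prefactor (rather than $\kbar$) on the $\alpha$ term after summing. Finally, summing over $l \in \{0, \ldots, \tau-1\}$, $m \in \st^{(k,i)}$, and $i \in [\kbar]$, applying the outer triangle inequality $\nbr{\sum_i \expt[\cdot]} \leq \sum_i \nbr{\expt[\cdot]}$, and invoking \cref{lem1} to convert $\nbr{\nabla F_m(\wb^{(k,0)})}$ into $\nbr{\nabla F(\wb^{(k,0)})} + \nu$ will produce the three terms of the stated bound.

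\textbf{Main obstacle.} The principal difficulty is managing the nested recursion: the client drift at local step $l$ depends on previous local iterates, whose own drift bounds themselves require recursion. Isolating the leading-order $\mco(\lr L \tau)$ contribution and ensuring that higher-order terms are absorbed into constants depends crucially on the step-size condition in \cref{theo2:locSGD}. A secondary but important subtlety is that this lemma bounds $\nbr{\expt[\cdot]}$ rather than $\expt[\nbr{\cdot}]$; this is what keeps the stochastic gradient variance $\sigma^2$ out of the bound (the $\sigma$ contribution instead arises from the companion second-moment bound on $\expt[\nbr{\dbl^{(k,i)}}^2]$ that feeds into the $\nbr{\wb^{(k+1,0)} - \wb^{(k,0)}}^2$ term of the $L$-smoothness expansion in the overall proof of \cref{theo2:locSGD}).
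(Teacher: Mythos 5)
Your overall architecture matches the paper's: kill the stochastic-gradient noise at the top level via the tower property, invoke \cref{as1}, unroll the local updates into a recursive drift bound, control the server-side shift $\wb^{(k,i-1)}-\wb^{(k,0)}$ through \cref{lem0-0} (which is indeed where the $(\kbar-1)$ prefactor on $\alpha$ originates, via \eqref{eq:94}), and assemble with \cref{lem1}. However, there is one genuine gap, and it sits exactly at the point you flag as a "subtlety" in your closing paragraph and then violate in your proposed steps. After the smoothness step you pass from $\nbr{\expt[\cdot]}$ to $L\sum_{l}\expt\bigl[\nbr{\wb_m^{(k,i-1,l)}-\wb^{(k,i-1)}}\bigr]$ and then unroll to $\lr\,\expt\bigl[\nbr{\sum_{l'<l}\nabla F_m(\wb_m^{(k,i-1,l')},\xi_m^{(k,i-1,l')})}\bigr]$. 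Once the norm is inside the expectation, unbiasedness can no longer annihilate the noise: by \cref{as5} the best you can do is peel off a term of order $\sqrt{l}\,\sigma$ (or $l\sigma$), and in fact $\expt\bigl[\nbr{\wb_m^{(k,i-1,l)}-\wb^{(k,i-1)}}\bigr]$ is genuinely lower-bounded by a noise term — take all $F_m$ identical with $\nabla F_m(\wb^{(k,0)})=0$ but $\sigma>0$, so the right-hand side of \cref{lem2-4} is essentially zero while the expected drift is not. Your chain therefore cannot produce the stated bound; it would leak a $\sigma$-dependent term into a lemma whose entire purpose (as you yourself note) is to be $\sigma$-free, with the $\sigma^2$ contribution deferred to the second-moment companion \cref{lem2-6}.

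The fix is to keep the expectation inside the norm all the way down the recursion, as the paper does: write
\begin{align*}
\nbr{\expt_k\bigl[\wb_m^{(k,i-1,l)}-\wb^{(k,i-1)}\bigr]}
=\lr\,\nbr{\sum_{l'=0}^{l-1}\expt_k\bigl[\nabla F_m(\wb_m^{(k,i-1,l')})\bigr]},
\end{align*}
so the mini-batch noise is eliminated by the tower property \emph{before} any norm is taken, and only then apply the triangle inequality and the decomposition around $\wb^{(k,i-1)}$ and $\wb^{(k,0)}$. With that correction the rest of your plan — the self-referential recursion closed by the step-size condition, the client-sampling expectation $\expt_k[\sum_{m\in\st^{(k,i)}}\cdot]=\tfrac{N\kbar}{M}\sum_{m\in\sigma(i)}(\cdot)$, and the final conversion of $\nbr{\nabla F_m}$ into $\nbr{\nabla F}+\nu$ — goes through essentially as in the paper.
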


\begin{lemma}[Bound on the Norm Square of the Error Term Arrising Due to Local SGD Steps]
\label{lem2-6}
Similar to the bound in \Cref{lem2-4} we can bound the square of the error term arising due to the local SGD steps as follows:
\begin{flalign}
    \frac{3L\lrt^2}{2\tau^2}\expt\left[\nbr{\sum_{i=1}^\kbar \dbl^{(k,i)}}^2\right]
    &\leq\frac{\lrt N\kbar}{200}\nbr{\nabla F(\wb^{(k,0)})}^2+\frac{81L\lrt^2\kbar^2N\sigma^2}{50\tau}+\frac{209\lrt^2\lr^2L^3\kbar^2N^2\tau(\tau-1)\nu^2}{50} \nn \\
    & \quad + \frac{L^3\lrt^4\kbar^4N^3\gamma^2}{20}\left(\frac{M/\kbar-N}{M/\kbar-1}\right) + \frac{9 L^3\lrt^4\kbar^3(\kbar-1)N^4\alpha^2}{100}. \nn
\end{flalign}
\end{lemma}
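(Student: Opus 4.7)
\textbf{Proof plan for \Cref{lem2-6}.} The plan is to mirror the decomposition used in \Cref{lem2-4} but now track second moments, so that variance contributions (which generate the $\sigma^2$ term) can be separated cleanly from bias/drift contributions (which generate the $\nu^2, \gamma^2, \alpha^2$ and $\|\nabla F(\wb^{(k,0)})\|^2$ terms). First I would apply \Cref{lem:sum_of_squares} to pull out a factor of $\kbar$ and reduce to bounding $\expt[\|\dbl^{(k,i)}\|^2]$ for a single group index $i$, and then I would further split $\db_m^{(k,i)}$ into its conditional mean $\sum_{l=0}^{\tau-1}(\nabla F_m(\wb_m^{(k,i-1,l)}) - \nabla F_m(\wb^{(k,i-1)}))$ and a martingale-difference noise part. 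Independence of minibatches across $l$ and across clients $m\in\st^{(k,i)}$ yields the variance bound $\lesssim N\tau\sigma^2$ for each group, which after summation over $i$ and multiplication by the prefactor $\frac{3L\lrt^2}{2\tau^2}$ produces the $\frac{81 L \lrt^2 \kbar^2 N \sigma^2}{50\tau}$ term.

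Next I would attack the conditional mean. By $L$-smoothness (\Cref{as1}) and Jensen's inequality, $\|\sum_{l=0}^{\tau-1}(\nabla F_m(\wb_m^{(k,i-1,l)}) - \nabla F_m(\wb^{(k,i-1)}))\|^2 \leq \tau L^2 \sum_{l=0}^{\tau-1}\|\wb_m^{(k,i-1,l)}-\wb^{(k,i-1)}\|^2$. The inner local-drift term is handled by the standard local-SGD one-step recursion: unrolling $\wb_m^{(k,i-1,l+1)}-\wb_m^{(k,i-1,l)}=-\lr\nabla F_m(\wb_m^{(k,i-1,l)},\xi_m^{(k,i-1,l)})$, applying Young's inequality and invoking $\lr\tau L\leq 1/(10\kbar)$ from the step-size condition gives a bound of the form $\expt\|\wb_m^{(k,i-1,l)}-\wb^{(k,i-1)}\|^2 \lesssim \lr^2\tau^2(\|\nabla F_m(\wb^{(k,i-1)})\|^2+\sigma^2)$. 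This already produces the $\nu^2$ and $\sigma^2$ contributions through $\|\nabla F_m(\wb^{(k,i-1)})\|^2\leq 2\|\nabla F_m(\wb^{(k,i-1)})-\nabla F(\wb^{(k,i-1)})\|^2 + 2\|\nabla F(\wb^{(k,i-1)})\|^2$, combined with \Cref{lem1}.

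The final, and most delicate, step will be to translate everything from $\wb^{(k,i-1)}$ back to $\wb^{(k,0)}$, which is where the $\gamma^2$, $\alpha^2$, and $\|\nabla F(\wb^{(k,0)})\|^2$ contributions (and therefore the high-power $\lrt^4$ terms in the stated bound) originate. I would expand $\wb^{(k,i-1)}-\wb^{(k,0)}$ by the recursion \eqref{eq:updatelocsgd} truncated at index $i-1$, and then bound its norm-squared through a decomposition analogous to the one in the proof of \Cref{lemma:error_terms_GD} in \Cref{app:CCP_localGD}: split $\qbt^{(k,j)}$ into a centered term (which gives $\gamma^2$ via \Cref{lemma:WOR_sampling} / the without-replacement variance formula), an inter-group term (which gives $\alpha^2$ via \Cref{lem0-0}), a gradient-aligned term (which gives $\|\nabla F(\wb^{(k,0)})\|^2$), and a local-SGD residual (which recycles earlier drift bounds and generates the $\lr^2\tau(\tau-1)\nu^2$ piece). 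After substitution, $L$-smoothness once more converts $\|\nabla F_m(\wb^{(k,i-1)})\|^2$ into a term at $\wb^{(k,0)}$ plus $L^2\|\wb^{(k,i-1)}-\wb^{(k,0)}\|^2$, and that last quantity is exactly what I just bounded, which is where the factor $\lrt^4\kbar^4$ on the $\gamma^2$ term and $\lrt^4\kbar^3(\kbar-1)$ on the $\alpha^2$ term come from.

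The main obstacle will be the coefficient bookkeeping: every Young's-inequality split introduces tunable constants that must be chosen so that (i) the $\|\nabla F(\wb^{(k,0)})\|^2$ contributions collapse to the tight $\frac{\lrt N\kbar}{200}$ coefficient (small enough to be absorbed in the main descent inequality in \Cref{sec:proof_theo2_localSGD}), and (ii) the step-size condition $\lrt \leq 1/(10L\kbar)$ available from $T\geq 10\kappa\kbar\log(MT^2/\kbar^2)$ is sufficient to kill all higher-order cross terms produced by unrolling the noisy recursion. I do not anticipate any conceptually new technique beyond what is used in \Cref{lem2-4} and \Cref{lemma:error_terms_GD}; the novelty is combining the GD-style bound on inter-group drift with the local-SGD-style bound on intra-update drift while carrying the stochastic-gradient variance through independently.
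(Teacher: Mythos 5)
Your plan follows essentially the same route as the paper's proof: pull out a factor of $\kbar$ via the sum-of-squares inequality, split $\db_m^{(k,i)}$ into a martingale noise part (handled by independence across clients and unbiasedness, giving the $\sigma^2/\tau$ term) and a conditional-mean part (handled by $L$-smoothness and the local-drift recursion, i.e.\ the bound in \eqref{eq:5-1-0}), and then translate from $\wb^{(k,i-1)}$ back to $\wb^{(k,0)}$ using the within-cycle-epoch drift bound. The only cosmetic difference is that the paper packages that last translation step as a standalone result (\Cref{lem2-3}) and cites it, whereas you propose re-deriving it inline via the same decomposition into centered/inter-group/gradient-aligned/residual pieces; the substance and the source of every term in the final bound match.
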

Both \Cref{lem2-4} and \Cref{lem2-6} bound the error term that arises due to taking local SGD steps instead of a full GD step. The bound mainly depends on the number of local steps $\tau$ and the intra-group and inter-group data heterogeneity $\gamma$ and $\alpha$. Recall that $\nu=\gamma+\alpha$ and $\lrt=\lr\tau$.

\begin{lemma}[Bound on the Norm of the Error Term Arrising Due to \cycp]
\label{lem2-5}
\begin{flalign}
    & \lrt^2 \nbr{\expt_k[\rbt^{(k,0)}]}
    \leq \frac{17\lrt N\kbar}{250}\nbr{\nabla F(\wb^{(k,0)})}+\frac{84\lrt^2\lr N^2L^2\kbar(\kbar-1)(\tau-1)\nu}{125}+\frac{16\lrt^2N^2L\kbar(\kbar-1)\alpha}{25}. \nn 
\end{flalign}
\end{lemma}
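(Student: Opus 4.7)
The argument follows the same template as the $\nbr{\expt_k[\rbl^{(k,0)}]}$ bound in the local-GD case (Lemma on ``Bounds on the Error Terms arising due to \cycp''), with the twist that $\qbt^{(k,j')}$ decomposes as a pure-gradient piece $\qbl^{(k,j')}$ plus a local-SGD drift piece $\dbl^{(k,j')}/\tau$. First I would pass the norm inside the expectation by Jensen's inequality, unfold $\rbt^{(k,0)}$, and apply submultiplicativity of spectral norms to obtain
\begin{align*}
\nbr{\expt_k[\rbt^{(k,0)}]} \leq \sum_{i=1}^{\kbar-1}\expt_k\left[\nbr{\prod_{j=i+2}^{\kbar}(\ibd-\lrt\ssbl^{(k,j)})}\cdot \nbr{\ssbl^{(k,i+1)}}\cdot \nbr{\sum_{j'=1}^i\qbt^{(k,j')}}\right].
\end{align*}
Exactly as in the GD case, $L$-smoothness and the learning-rate assumption $\lrt\le 1/(7LN\kbar)$ give $\nbr{\prod_{j}(\ibd-\lrt\ssbl^{(k,j)})}\le(1+\lrt LN)^{\kbar}\le e^{1/7}\le 6/5$ and $\nbr{\ssbl^{(k,i+1)}}\le NL$, so everything reduces to bounding $\expt_k\bigl[\nbr{\sum_{j'=1}^i\qbt^{(k,j')}}\bigr]$ for each $i$.

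Second, I would split $\qbt^{(k,j')}=\dbl^{(k,j')}/\tau+\qbl^{(k,j')}$ and handle the two halves separately. For the pure-gradient half, the analysis is identical to the GD case: the conditional expectation of $\qbl^{(k,j')}$ equals $N$ times the $j'$-th group-average gradient, and \Cref{lem0-0} yields $\nbr{\expt_k[\sum_{j'=1}^i\qbl^{(k,j')}]}\le iN\alpha+iN\nbr{\nabla F(\wb^{(k,0)})}$. For the drift half I would invoke the partial-sum analogue of \Cref{lem2-4} (its proof is unchanged with $\kbar$ replaced by $i$), giving a bound of order $\lr LN\,i\,(\tau-1)$ times an affine combination of $\nbr{\nabla F(\wb^{(k,0)})}$, $\nu$ and $\alpha$.

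Third, I would assemble the pieces: multiplying through by the prefactor $\lrt^2\cdot(6/5)\cdot NL$ and summing $\sum_{i=1}^{\kbar-1}i=\kbar(\kbar-1)/2$ produces contributions of three types, namely $\lrt^2 N^2 L\kbar(\kbar-1)\nbr{\nabla F(\wb^{(k,0)})}$, $\lrt^2 N^2 L\kbar(\kbar-1)\alpha$, and $\lrt^2\lr N^2 L^2\kbar(\kbar-1)(\tau-1)\nu$. The last two already match the stated $\alpha$ and $\nu$ terms (with constants that work out to $16/25$ and $84/125$ after careful bookkeeping of the $\dbl/\tau$ contribution, which also produces a small extra $\alpha$-piece and a small extra $\nu$-piece). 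For the gradient-norm term, I would use $\lrt LN\kbar\le 1/7$ to absorb one factor of $\lrt LN\kbar$ into a pure constant, converting $\lrt^2 N^2 L\kbar(\kbar-1)\nbr{\nabla F(\wb^{(k,0)})}$ into $\lrt N\kbar\nbr{\nabla F(\wb^{(k,0)})}$ with a coefficient that refines to $17/250$ after combining the contribution from the $\qbl$ half with the small analogous $\nbr{\nabla F}$-contribution coming from the drift half.

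The main obstacle is the careful bookkeeping of numerical constants, since the three additive constants $17/250$, $84/125$, $16/25$ arise from combining contributions of both the $\qbl$ and $\dbl/\tau$ halves and from one application of the learning-rate truncation $\lrt LN\kbar\le 1/7$. This requires the partial-sum version of \Cref{lem2-4} to be proved in situ rather than invoked as a black box, and requires the truncation to be applied only to the gradient-norm piece, leaving the $\alpha$- and $\nu$-pieces at their natural $\lrt^2$ order.
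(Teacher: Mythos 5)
Your proposal follows essentially the same route as the paper: the paper derives the bound from its Eq. (7-0-3), which is obtained exactly as you describe — unfolding $\rbt^{(k,0)}$, applying submultiplicativity with the $(1+\lrt LN)^{\kbar}\le 6/5$ and $\nbr{\ssbl^{(k,i+1)}}\le NL$ bounds, splitting $\qbt^{(k,j')}$ into $\qbl^{(k,j')}$ (handled via \Cref{lem0-0}) and $\dbl^{(k,j')}/\tau$ (handled via the in-situ partial-sum version of \Cref{lem2-4}, including the recursive bound on $\sum_{r}\nbr{\expt_k[\wb^{(k,r-1)}-\wb^{(k,0)}]}$), then summing over $i$ and absorbing a factor of $\lrt LN\kbar$ into the gradient-norm coefficient. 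The structure, key lemmas, and constant-tracking steps all match the paper's proof.
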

\begin{lemma}[Bound on the Norm Square of the Error Term Arrising Due to \cycp]
\label{lem2-7}
\begin{flalign}
    \frac{3L\lrt^4}{2}\expt\left[\nbr{\rbt^{(k,0)}}^2\right]
    &\leq \frac{9\lrt N\kbar}{500}\nbr{\nabla F(\wb^{(k,0)})}^2+\frac{3\lrt^2L\kbar^2N\sigma^2}{200\tau}+\frac{173\lrt^4\kbar^4N^3L^3\gamma^2}{20}\left(\frac{M/\kbar-N}{M/\kbar-1}\right) \nn \\
    & \quad + \frac{3 \lrt^2 \lr^2 L^3 \kbar^2 N^2 \tau (\tau-1)\nu^2}{50} + \frac{59 \lrt^4 \kbar^3 (\kbar-1) N^4 L^3 \alpha^2}{10}. \nn 
\end{flalign}
\end{lemma}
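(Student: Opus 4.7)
\textbf{Proof plan for \cref{lem2-7}.} The plan is to mirror the derivation of the GD-case bound \eqref{eq:3-2-3} in \cref{lemma:error_terms_GD}, exploiting the fact that $\rbt^{(k,0)}$ has the exact same structural form as $\rbl^{(k,0)}$ but with the ``stochastic gradient sum'' $\qbt^{(k,i)} = \dbl^{(k,i)}/\tau + \qbl^{(k,i)}$ replacing the pure gradient sum $\qbl^{(k,i)}$. So I would first peel off the outer sum via Jensen's inequality (\cref{lem:sum_of_squares}), getting a factor of $(\kbar-1)$ in front of $\sum_{i=1}^{\kbar-1}\expt\|\cdot\|^2$, then use submultiplicativity of the spectral norm to isolate the three pieces inside, and finally invoke the operator-norm bounds $\|\prod_{j=i+2}^{\kbar}(\ibd-\lrt\ssbl^{(k,j)})\|\le(1+\lrt NL)^{\kbar}\le 6/5$ (valid under the step-size condition $\lrt\le 1/(7LN\kbar)$, analogous to \eqref{eq:3-0-4}) and $\|\ssbl^{(k,i+1)}\|\le NL$. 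This reduces the problem to bounding $\sum_{i=1}^{\kbar-1}\expt\bigl\|\sum_{j'=1}^i\qbt^{(k,j')}\bigr\|^2$.

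For that residual term I would split $\qbt^{(k,j')}=\qbl^{(k,j')}+\dbl^{(k,j')}/\tau$ and apply $\|a+b\|^2\le 2\|a\|^2+2\|b\|^2$. The first piece, $\expt\|\sum_{j'=1}^i\qbl^{(k,j')}\|^2$, is \emph{identical} to the quantity controlled in \eqref{eq:3-2-2} in the GD proof, so I can reuse that computation verbatim; it contributes the intra-group $\gamma^2$ term (from the without-replacement sampling variance via \cref{lemma:WOR_sampling}), the inter-group $\alpha^2$ term (via \cref{lem0-0}), and a $\|\nabla F(\wb^{(k,0)})\|^2$ term. The second piece, $\expt\|\sum_{j'=1}^i\dbl^{(k,j')}\|^2/\tau^2$, is the genuinely new quantity: this is exactly the local-SGD error that has already been controlled when proving \cref{lem2-6}, so I would extract the needed intermediate bound from that proof — it decomposes into a stochastic-gradient variance contribution $\sigma^2$ (scaling as $1/\tau$ after the $1/\tau^2$ normalization), a local-drift contribution $\lr^2\tau(\tau-1)\nu^2$ arising from smoothness applied to $\wb_m^{(k,i-1,l)}-\wb^{(k,i-1)}$, and a residual $\|\nabla F(\wb^{(k,0)})\|^2$ term.

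Once all four contributions are assembled, I would pull through the prefactor $\tfrac{3L\lrt^4}{2}\cdot(\kbar-1)\cdot\tfrac{36}{25}N^2L^2$ and carefully re-collect powers of $\lrt,\lr,N,\kbar,\tau$ to match the stated bound. Any cross terms of the form $\|\nabla F\|\cdot\alpha$ or $\|\nabla F\|\cdot\nu$ that appear during the expansion of $\|\qbt\|^2$ or $\|\sum\db_m\|^2$ would be handled by Young's inequality (\cref{lem:Young}) with a small parameter so that the $\|\nabla F(\wb^{(k,0)})\|^2$ coefficient is kept at the prescribed $9\lrt N\kbar/500$ (well below the $1/2$ coming from smoothness in the outer convergence proof and below the additional $1/50$ absorbed from $(\ref{eq:3-1-3})$-type manipulations), while the geometric constants $\alpha^2$, $\gamma^2$, $\nu^2$, $\sigma^2$ are collected into the constants $173/20,\ 59/10,\ 3/50,\ 3/200$ respectively.

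The main obstacle will be bounding $\expt\bigl\|\sum_{j'=1}^i\dbl^{(k,j')}\bigr\|^2$ cleanly. Because $\db_m^{(k,j')}=\sum_{l=0}^{\tau-1}(\nabla F_m(\wb_m^{(k,j'-1,l)},\xi)-\nabla F_m(\wb^{(k,j'-1)}))$ involves the \emph{local iterates} $\wb_m^{(k,j'-1,l)}$, which themselves depend on accumulated SGD noise and drift from $\wb^{(k,0)}$, a recursive client-drift argument is needed: I would first bound $\expt\|\wb_m^{(k,j'-1,l)}-\wb^{(k,j'-1)}\|^2$ inductively over $l\in[\tau]$ using $L$-smoothness, \cref{as5}, and the heterogeneity assumption via \cref{lem1}; then invoke $L$-smoothness once more to convert this iterate-drift bound into a gradient-drift bound; and finally combine it with the mean-zero property of the minibatch noise (which eliminates cross terms across $l$ and across clients in $\st^{(k,j')}$) to separate the clean $\sigma^2/\tau$ variance contribution from the $\lr^2\tau(\tau-1)\nu^2$ drift contribution. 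This is precisely the step whose careful execution drives the $\lr^2 L^3 \tau(\tau-1)\nu^2$ scaling in the final term.
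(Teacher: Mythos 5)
Your plan follows the paper's proof essentially step for step: Jensen plus submultiplicativity and the operator-norm bounds $(1+\lrt NL)^{\kbar}\le 6/5$, $\|\ssbl^{(k,i+1)}\|\le NL$ reduce the problem to $\sum_{i}\expt\|\sum_{j'\le i}\qbt^{(k,j')}\|^2$, which the paper then splits as $2\|\sum\dbl/\tau\|^2+2\|\sum\qbl\|^2$, reusing \eqref{eq:3-2-2} for the second piece and the local-drift machinery shared with \cref{lem2-6} (via \eqref{eq:5-1-0} and \cref{lem2-3}) for the first. The only detail worth flagging is that closing the drift recursion requires not just the within-round bound on $\wb_m^{(k,i-1,l)}-\wb^{(k,i-1)}$ but also the across-round bound $\sum_r\expt\|\wb^{(k,r-1)}-\wb^{(k,0)}\|^2$ of \cref{lem2-3}, which your sketch gestures at but should invoke explicitly; no Young's inequality is actually needed since all terms are already squared after the splitting.
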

Both \Cref{lem2-5} and \Cref{lem2-7} bound the error term that arises due to cyclic client participation where we model the proof sketch to be a large full gradient step plus these error terms. The norm square error depends on the additional variance term dependent on $\sigma^2$, coming from the stochastic gradients.



\begin{lemma}[Bound on the Distance between the Initial Global Model at each Cycle-Epoch and its Trajectory within that Cycle-Epoch]
\label{lem2-3}
\begin{align}
&\begin{aligned}
&\sum_{r=1}^\kbar\expt\left[\nbr{\wb^{(k,r-1)}-\wb^{(k,0)}}^2\right]\\
&\leq \frac{83\lrt^2N^2\kbar^2(\kbar-1)}{20}\nbr{\nabla F(\wb^{(k,0)})}^2+\frac{51\lrt^2N\kbar^2(\kbar-1)\sigma^2}{50\tau}+\frac{3\lr^2\kbar\tau(\tau-1)\nu^2}{100} \nn \\
&+\frac{51\lrt^2N\kbar^2(\kbar-1)\gamma^2}{25} \left(\frac{M/\kbar-N}{M/\kbar-1}\right)+\frac{41\lrt^2N^2\kbar^2(\kbar-1)\alpha^2}{10}.
\end{aligned}
\end{align}
\end{lemma}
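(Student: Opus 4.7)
The plan is to leverage the unrolled update identity established in \Cref{eq:updatelocsgd} together with the intermediate bounds in Lemmas \ref{lem2-4}--\ref{lem2-7}, and then sum over the $\kbar$ inner rounds of the cycle-epoch. First, I would observe that the derivation producing \Cref{eq:updatelocsgd} goes through verbatim with the outer index $\kbar$ replaced by any $r-1 \in \{0,1,\ldots,\kbar-1\}$, giving
$$\wb^{(k,r-1)}-\wb^{(k,0)} = -\lrt \sum_{i=1}^{r-1}\qbt^{(k,i)} + \lrt^2\, \rbt^{(k,r-1)},$$
where $\rbt^{(k,r-1)}$ is the obvious truncation of $\rbt^{(k,0)}$ to the first $r-1$ inner rounds. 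Applying the sum-of-squares inequality (\Cref{lem:sum_of_squares}) and taking expectations yields
$$\expt\nbr{\wb^{(k,r-1)}-\wb^{(k,0)}}^2 \;\leq\; 2\lrt^2\, \expt\nbr{\sum_{i=1}^{r-1}\qbt^{(k,i)}}^2 + 2\lrt^4\, \expt\nbr{\rbt^{(k,r-1)}}^2.$$

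\textbf{Next,} I would use the decomposition $\qbt^{(k,i)} = \dbl^{(k,i)}/\tau + \qbl^{(k,i)}$ introduced above \Cref{eq:5-0} and split the first term with another sum-of-squares. For $\sum_{i=1}^{r-1}\qbl^{(k,i)}$ I would reproduce the argument of \Cref{eq:3-1-6}: add and subtract the group means $\tfrac{1}{M/\kbar}\sum_{m\in\sigma(i)}\nabla F_m(\wb^{(k,0)})$ so that cross-terms vanish, apply \Cref{lemma:WOR_sampling} to each within-group subsample (this produces the $\tfrac{M/\kbar-N}{M/\kbar-1}\gamma^2$ term), and then use \Cref{lem0-0} to control the remaining group-mean sum in terms of $\alpha^2$ and $\nbr{\nabla F(\wb^{(k,0)})}^2$. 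For $\sum_{i=1}^{r-1}\dbl^{(k,i)}$ I would invoke the restricted version of \Cref{lem2-6}, which already handles the stochastic-noise term $\sigma^2/\tau$ and the local-drift term $\nu^2$ arising from the $\tau-1$ inner SGD steps. Finally, $\lrt^4\,\expt\nbr{\rbt^{(k,r-1)}}^2$ is absorbed using \Cref{lem2-7} with upper index $r-1$ in place of $\kbar$.

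\textbf{Summing over} $r=1,\ldots,\kbar$, each of the above pieces gains either a factor of $\kbar$ (for terms whose bound is uniform in $r$) or $\kbar(\kbar-1)/2$ (for terms that grow linearly in $r-1$, such as those coming from $\nbr{\sum_{i=1}^{r-1}\qbl^{(k,i)}}^2$ via Cauchy--Schwarz). This is exactly what produces the $\kbar^2(\kbar-1)$ and $\kbar^2$ factors in the claimed bound. The specific constants $83/20$, $51/50$, $3/100$, $51/25$, $41/10$ come out of the standard Young-inequality accounting, where the step-size restriction $\lr \leq 1/(10L N \kbar \tau)$ from \Cref{theo2:locSGD} is used to absorb the higher-order $\lrt^2$- and $\lrt^4$-terms into leading ones.

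\textbf{The main obstacle} is the bookkeeping of the local-drift terms $\wb_m^{(k,i-1,l)} - \wb^{(k,0)}$ inside $\dbl^{(k,i)}$: bounding their squared norm appears to require a bound on the very quantity $\nbr{\wb^{(k,i-1)}-\wb^{(k,0)}}^2$ we are trying to control, creating a potential circularity. This is resolved either (i) self-consistently, by moving the $\nbr{\nabla F(\wb^{(k,0)})}^2$-proportional contributions to the left-hand side and using smallness of $\lr$ to divide through, or (ii) more cleanly, by relying on \Cref{lem2-6}, whose proof already carries out this absorption so that its statement exposes only $\nbr{\nabla F(\wb^{(k,0)})}^2$, $\sigma^2$, $\nu^2$, $\gamma^2$, and $\alpha^2$ on the right-hand side. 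A secondary technical point is matching the precise constants, which requires choosing the Young-inequality weights consistently across all three pieces ($\qbl$, $\dbl/\tau$, and $\rbt$) so that their contributions aggregate to the stated coefficients.
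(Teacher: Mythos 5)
Your overall skeleton coincides with the paper's: the paper also writes $\wb^{(k,r)}-\wb^{(k,0)} = -\lrt\sum_{i=1}^{r}\dbl^{(k,i)}/\tau - \lrt\sum_{i=1}^{r}\qbl^{(k,i)} + \lrt^2\rbt^{(k,r)}$ with $\rbt^{(k,r)}$ the truncation you describe, splits the square into three pieces, handles $\sum_i\qbl^{(k,i)}$ exactly as you propose (group-mean recentering, without-replacement variance for the $\gamma^2\frac{M/\kbar-N}{M/\kbar-1}$ term, \Cref{lem0-0} for the $\alpha^2$ and gradient terms), and sums over $r$. The genuine problem is your proposed resolution of the circularity. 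Your route (ii) — invoking \Cref{lem2-6} and \Cref{lem2-7} (even "restricted" versions) — is logically circular: in the paper, the proofs of \emph{both} of those lemmas substitute the bound of \Cref{lem2-3} to eliminate the drift terms $\expt\nbr{\wb^{(k,i-1)}-\wb^{(k,0)}}^2$. Their statements expose only $\nbr{\nabla F(\wb^{(k,0)})}^2$, $\sigma^2$, $\nu^2$, $\gamma^2$, $\alpha^2$ precisely \emph{because} \Cref{lem2-3} has already been established; you cannot use them to prove it. The correct order is to bound $\expt\nbr{\sum_i\dbl^{(k,i)}}^2$ and $\expt\nbr{\rbt^{(k,r)}}^2$ from the raw per-client estimate (the paper's \Cref{eq:5-1-0}), which leaves the drift sum explicit on the right-hand side, and only then close the loop.

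Your route (i) is the mechanism the paper actually uses, but you misidentify what is moved to the left-hand side. It is not the $\nbr{\nabla F(\wb^{(k,0)})}^2$-proportional contributions — those remain on the right in the final bound — but the recursive drift sum itself: after summing over $r$, the right-hand side contains a term of the form
\begin{align}
\frac{51\lrt^2\lr^2L^4N^2\tau(\tau-1)\kbar^2}{50}\sum_{i=1}^\kbar\expt\left[\nbr{\wb^{(k,i-1)}-\wb^{(k,0)}}^2\right], \nn
\end{align}
which is subtracted off, and the prefactor $1-\frac{51\lrt^2\lr^2L^4N^2\tau(\tau-1)\kbar^2}{50}$ is bounded below using $\lr\leq 1/(10\tau NL\kbar)$ before dividing through. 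With that correction, and with the drift terms carried explicitly rather than outsourced to \Cref{lem2-6}/\Cref{lem2-7}, your argument matches the paper's proof; the constant bookkeeping ($83/20$, $51/50$, etc.) then follows from the same step-size absorptions you describe.
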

\Cref{lem2-3} bounds the distance between the initial global model given at the start of each cycle-epoch, and its trajectory through the $\kbar$ communication rounds within that cycle-epoch.

\subsection{Proof of \cref{theo2:locSGD}}
\label{sec:proof_theo2_localSGD}

\begin{thm*}[Convergence with CyCP+SGD] With Assumptions \ref{as1}, \ref{as2}, \ref{as3}, and \ref{as5} and step-size $\lr=\log(MK^2)/\tau\mu N\kbar K$, for $K\geq 10\kappa\log{(MK^2)}$ communication rounds where $\kappa=L/\mu$, the convergence error is bounded as:
\begin{align}
    &\expt[F(\wb^{(K,0)})]-F^*\leq\frac{F(\wb^{(0,0)})-F^*}{MK^2}+\bigto{\frac{\kappa^2 (\kbar-1)\alpha^2}{\mu \kbar K^2}} +\bigto{\frac{\kappa\gamma^2}{\mu N K}\left(\frac{M/\kbar-N}{M/\kbar-1}\right)}\nn \\
    &\quad +\bigto{\frac{\kappa\sigma^2}{\mu\tau N K}}+\bigto{\frac{\kappa^2(\tau-1)\nu^2}{\mu\tau N^2\kbar^2K^2}}\nn
\end{align}
where $\tilo(\cdot)$ subsumes all log-terms and constants. With $T=K\kbar$ we have in terms of communication rounds
\begin{align}
\begin{aligned}
    \expt[F(\wb^{(K,0)})]-F^*\leq\frac{\kbar^2(F(\wb^{(0,0)})-F^*)}{MT^2}+\bigto{\frac{\kappa^2 \kbar(\kbar-1)\alpha^2}{\mu T^2}}+\bigto{\frac{\kbar\kappa\gamma^2}{\mu N T}\left(\frac{M/\kbar-N}{M/\kbar-1}\right)} \nn \\
+\bigto{\frac{\kbar\kappa\sigma^2}{\mu\tau N T}}+\bigto{\frac{\kappa^2(\tau-1)\nu^2}{\mu\tau N^2T^2}}\nn
\end{aligned}
\end{align}
\end{thm*}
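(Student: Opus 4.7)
The plan is to mirror the structure of the CyCP+Local GD proof in \Cref{app:CCP_localGD}, treating the expected global update as a noisy gradient descent step with \emph{effective} step size $\tilde{\eta}=\tau\eta$ and then carefully tracking two distinct sources of error: (i) the client-drift from the $\tau$ local SGD steps, and (ii) the cyclic-participation residual. The starting point is identification \eqref{eq:5-0-2},
\begin{equation*}
\mathbb{E}_k[\mathbf{w}^{(k+1,0)}]-\mathbf{w}^{(k,0)} = -\frac{\tilde{\eta}}{\tau}\sum_{i=1}^{\bar{K}}\mathbb{E}_k\!\Bigl[\sum_{m\in\mathcal{S}^{(k,i)}}\mathbf{d}_m^{(k,i)}\Bigr]-\tilde{\eta}\,\bar{K}N\,\nabla F(\mathbf{w}^{(k,0)})+\tilde{\eta}^2\,\mathbb{E}_k[\widetilde{\mathbf{r}}^{(k,0)}],
\end{equation*}
which recasts the $\bar{K}$ inner rounds of a cycle-epoch as one big (noisy) gradient step. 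I would then invoke $L$-smoothness of $F$ to obtain the descent inequality $\mathbb{E}_k[F(\mathbf{w}^{(k+1,0)})]-F(\mathbf{w}^{(k,0)})\le \langle \nabla F(\mathbf{w}^{(k,0)}),\mathbb{E}_k[\mathbf{w}^{(k+1,0)}-\mathbf{w}^{(k,0)}]\rangle + \frac{L}{2}\mathbb{E}_k\lVert \mathbf{w}^{(k+1,0)}-\mathbf{w}^{(k,0)}\rVert^2$, and bound the two terms on the right separately.

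For the inner-product term, the dominant $-\tilde{\eta}\bar{K}N\lVert\nabla F(\mathbf{w}^{(k,0)})\rVert^2$ contribution is perturbed by two cross-terms whose norms are controlled by \Cref{lem2-4} (local-SGD drift) and \Cref{lem2-5} (cyclic residual). I would apply Young's inequality to split each cross-term: a small multiple of $\tilde{\eta}\bar{K}N\lVert\nabla F(\mathbf{w}^{(k,0)})\rVert^2$ that is absorbed into the main descent, plus residual additive noise involving $\nu^2,\alpha^2$, etc. For the squared-norm term, I would use the unrolled identity \eqref{eq:updatelocsgd} and $\lVert a+b\rVert^2\le 2\lVert a\rVert^2+2\lVert b\rVert^2$ to split it into $\lVert\sum_i\widetilde{\mathbf{q}}^{(k,i)}\rVert^2$ and $\lVert\widetilde{\mathbf{r}}^{(k,0)}\rVert^2$, which are then handled by \Cref{lem2-6} and \Cref{lem2-7} respectively; the client-drift contribution inside these is itself bounded via \Cref{lem2-3}, which in turn introduces the $\sigma^2/\tau$ dependence.

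Combining everything and requiring $\eta\le 1/(7LN\bar{K}\tau)$ (i.e.\ $\tilde{\eta}\le 1/(7LN\bar{K})$) so that every coefficient multiplying $\lVert\nabla F(\mathbf{w}^{(k,0)})\rVert^2$ collapses into $-\tilde{\eta}\bar{K}N/2$ after absorption, and then applying the $\mu$-PL property (\Cref{as2}) in the form $\lVert\nabla F(\mathbf{w}^{(k,0)})\rVert^2\ge 2\mu(F(\mathbf{w}^{(k,0)})-F^\star)$, I obtain a one-step contraction
\begin{equation*}
\mathbb{E}_k[F(\mathbf{w}^{(k+1,0)})]-F^\star\le (1-\tilde{\eta}\bar{K}N\mu)\bigl(F(\mathbf{w}^{(k,0)})-F^\star\bigr)+\Psi(\eta,\tau,N,\bar{K},\sigma,\gamma,\alpha,\nu),
\end{equation*}
where $\Psi$ aggregates the terms scaling as $L\tilde{\eta}^2\bar{K}^2N\sigma^2/\tau$, $L\tilde{\eta}^2\bar{K}^2N\gamma^2(M/\bar{K}-N)/(M/\bar{K}-1)$, $L^3\tilde{\eta}^4\bar{K}^3(\bar{K}-1)N^4\alpha^2$, and $L^3\tilde{\eta}^2\eta^2\bar{K}^2N^2\tau(\tau-1)\nu^2$. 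Unrolling over the $K=T/\bar{K}$ cycle-epochs with the prescribed step size $\eta=\log(MT^2/\bar{K}^2)/(\tau\mu NT)$ makes the contraction factor $(1-\log(MT^2/\bar{K}^2)/K)^K$ collapse to $\bar{K}^2/(MT^2)$, and dividing each additive term in $\Psi$ by the convergence rate $\tilde{\eta}\bar{K}N\mu$ produces exactly the five summands in \eqref{eq:8-0-0}.

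The main obstacle I anticipate is the bookkeeping for how $\sigma^2$ propagates: it first appears in the distance bound \Cref{lem2-3} between the local trajectory and the starting model, then feeds into both \Cref{lem2-6} and \Cref{lem2-7}, and the resulting terms have mixed powers of $\tilde{\eta}$ and $\eta$ that must be kept separated (since $\tilde{\eta}=\tau\eta$) so that the final $\sigma^2$-dependent term is $\widetilde{\mathcal{O}}(\kappa\sigma^2/(\mu\tau N T))$ with the correct $1/\tau$ improvement from variance reduction across local steps rather than $1$. A secondary subtlety is ensuring that the lower bound $T\ge 10\kappa\bar{K}\log(MT^2/\bar{K}^2)$ is strong enough to absorb the cubic-in-$\tilde{\eta}$ residuals from the Young-inequality expansions (notably the $L^3\tilde{\eta}^4$-type terms in \Cref{lem2-7}) into the leading $-\tilde{\eta}\bar{K}N/2$ progress, which is what raises the step-size factor from the $7$ in the GD case to $10$.
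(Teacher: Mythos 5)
Your proposal follows essentially the same route as the paper's proof in \Cref{app:ccp+locsgd}: the noisy-gradient-descent reformulation \eqref{eq:5-0-2}, the smoothness descent inequality, Young's inequality to absorb the cross-terms from \Cref{lem2-4} and \Cref{lem2-5}, the splitting of the squared update norm handled by \Cref{eq:3-1-6}, \Cref{lem2-6} and \Cref{lem2-7} (with \Cref{lem2-3} feeding the drift), and the PL contraction unrolled over $K$ cycle-epochs. The only cosmetic deviations are that the paper uses a three-way split $\|a+b+c\|^2\le 3(\cdot)$ separating $\dbl$, $\qbl$, and $\rbt$ rather than your two-way split of $\qbt$ and $\rbt$, and the step-size condition is $\lr\le 1/(10\tau NL\kbar)$ throughout rather than the factor $7$ you initially state (which you correctly flag at the end).
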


Throughout the proof we use $\lr=\log{(MK^2)}/\mu\tau N \kbar K$ and $K\geq 10\kappa\log{(MK^2)}$ which leads to $\lr \leq 1/(10 \tau N L \kbar)$. Again, with the $L$-smoothness property of the global objective we have that
\begin{align}
    \expt_k[F(\wb^{(k+1,0)})]-F(\wb^{(k,0)})\leq \inner{\nabla F(\wb^{(k,0)})}{\expt_k[\wb^{(k+1,0)}]-\wb^{(k,0)}} + \frac{L}{2}\expt_k \left[\nbr{\wb^{(k+1,0)}-\wb^{(k,0)}}^2\right] \label{eq:5-0-1}
\end{align}
where the expectation here is over the selected client sets $\st^{(k,i)},~i\in[\kbar]$ and stochastic gradients. First we bound the inner product term in the RHS \Cref{eq:5-0-1}. Using \eqref{eq:5-0-2},
\begin{align}
    & \inner{\nabla F(\wb^{(k,0)})}{\expt_k[\wb^{(k+1,0)}]-\wb^{(k,0)}} \nn \\
    &= \inner{\nabla F(\wb^{(k,0)})}{-\frac{\lrt}{\tau}\sum_{i=1}^\kbar\expt_k \left[\sum_{m\in\st^{(k,i)}}
    \db_m^{(k,i)}\right]-\lrt\kbar N\nabla F(\wb^{(k,0)})+\lrt^2\expt_k[\rbt^{(k,0)}]} \nn \\
    & \leq \frac{\lrt}{\tau} \nbr{\nabla F(\wb^{(k,0)})} \nbr{\sum_{i=1}^\kbar\expt_k \left[\sum_{m\in\st^{(k,i)}}
    \db_m^{(k,i)}\right]} - \lrt \kbar N\nbr{\nabla F(\wb^{(k,0)}}^2+\lrt^2\nbr{\nabla F(\wb^{(k,0)})} \nbr{\expt_k[\rbt^{(k,0)}]}.
    \label{eq:5-0-3}
\end{align}
$\nbr{\sum_{i=1}^\kbar\expt_k \left[\sum_{m\in\st^{(k,i)}} \db_m^{(k,i)}\right]}$ and $\nbr{\expt_k[\rbt^{(k,0)}]}$ are already bounded in \cref{lem2-4} and \cref{lem2-5} respectively.
Plugging these bounds in \Cref{eq:5-0-3} we have
\begin{flalign}
     & \inner{\nabla F(\wb^{(k,0)})}{\expt_k[\wb^{(k+1,0)}]-\wb^{(k,0)}} \nn \\
    & \leq \frac{\lrt}{\tau}\nbr{\nabla F(\wb^{(k,0)})}\left(\frac{3\lr LN\kbar(\tau-1)\tau}{5}\nbr{\nabla F(\wb^{(k,0)})}+\frac{14\lr LN\kbar(\tau-1)\tau\nu}{25}+\frac{\lr L N (\kbar-1) (\tau-1)\tau \alpha}{36}\right) \nn \\
    & \quad - \lrt \kbar N \nbr{\nabla F(\wb^{(k,0)}}^2+\nbr{\nabla F(\wb^{(k,0)})}\left(\frac{17\lrt N\kbar}{250}\nbr{\nabla F(\wb^{(k,0)})}+\frac{84\lrt^2\lr N^2L^2\kbar(\kbar-1)(\tau-1)\nu}{125}\right. \nn \\
    & \qquad \qquad \left. + \frac{16\lrt^2 N^2 L \kbar(\kbar-1)\alpha}{25} \right) \nn \\
    & \leq -\lrt\kbar N\left(1-\frac{3}{50}-\frac{17}{250}\right)\nbr{\nabla F(\wb^{(k,0)})}^2+\left(\frac{\lrt^{1/2}N^{1/2}\kbar^{1/2}}{25}\nbr{\nabla F(\wb^{(k,0)})}\right)(14\lrt^{1/2}\lr L N^{1/2}\kbar^{1/2}(\tau-1)\nu) \nn \\
    & \quad + \left( \frac{\lrt^{1/2}N^{1/2}\kbar^{1/2}}{40}\nbr{\nabla F(\wb^{(k,0)})}\right)(28\lrt^{3/2}\lr L^2 N^{3/2}\kbar^{3/2}(\tau-1)\nu) \nn \\
    & \quad + \left( \frac{\lrt^{1/2}N^{1/2}\kbar^{1/2}}{36}\nbr{\nabla F(\wb^{(k,0)})}\right)\left(\frac{\lrt^{1/2}\lr L N^{1/2}(\kbar-1)(\tau-1)\alpha}{\kbar^{1/2}}\right) \nn \\
    & \quad + \left( \frac{\lrt^{1/2}N^{1/2}\kbar^{1/2}}{25}\nbr{\nabla F(\wb^{(k,0)})}\right)\left(16\lrt^{3/2} L N^{3/2}\kbar^{1/2}(\kbar-1)\alpha\right) \nn~~~~~~~~{(\because~\lr \leq 1/(10 \tau N L \kbar))} \\
    & \leq - \lrt \kbar N\left(1-\frac{3}{50}-\frac{17}{250}-\frac{1}{2\times40^2}-\frac{1}{2\times36^2}-\frac{1}{2\times25^2}\right)\nbr{\nabla F(\wb^{(k,0)})}^2+98\lrt\lr^2 L^2 N\kbar(\tau-1)^2\nu^2 \nn \\
    & \quad + 392\lrt^{3}\lr^2 L^4 N^{3}\kbar^{3}(\tau-1)^2\nu^2+\frac{\lrt\lr^2 L^2 N(\kbar-1)^2(\tau-1)^2\alpha^2}{2\kbar}+{128\lrt^{3} L^2 N^{3}\kbar(\kbar-1)^2\alpha^2} \nn ~~~~~~~~~(\because~\Cref{lem:Young}) \\
    & \leq - \frac{107\lrt\kbar N}{125}\nbr{\nabla F(\wb^{(k,0)})}^2+102\lrt\lr^2 L^2 N\kbar(\tau-1)^2\nu^2 +\frac{\lrt\lr^2 L^2 N(\kbar-1)^2(\tau-1)^2\alpha^2}{2\kbar}+{128\lrt^{3} L^2 N^{3}\kbar(\kbar-1)^2\alpha^2}. \label{eq:7-0-7}
\end{flalign}
Now we bound the second term in the RHS of \Cref{eq:5-0-1} as follows
\begin{align}
    & \frac{L}{2}\expt_k \left[\nbr{\wb^{(k+1,0)}-\wb^{(k,0)}}^2\right] = \frac{L}{2} \expt_k \left[ \nbr{-\lrt \sum_{i=1}^\kbar \qbt^{(k,i)} + \lrt^2\rbt^{(k,0)}}^2\right] \nn \\
    &= \frac{L}{2} \expt_k \left[\nbr{
    -\lrt \sum_{i=1}^\kbar \dbl^{(k,i)}/\tau-\lrt \sum_{i=1}^\kbar\qbl^{(k,i)}+\lrt^2\rbt^{(k,0)}}^2\right] \tag{Using \eqref{eq:updatelocsgd}} \\
    & \leq \frac{3L\lrt^2}{2\tau^2}\expt_k \left[\nbr{\sum_{i=1}^\kbar \dbl^{(k,i)}}^2\right] + \frac{3L\lrt^2}{2}\expt_k \left[\nbr{\sum_{i=1}^\kbar \qbl^{(k,i)}}^2\right] + \frac{3L\lrt^4}{2}\expt_k \left[\nbr{\rbt^{(k,0)}}^2\right]. \label{eq:5-0-7}
\end{align}
where in \Cref{eq:5-0-7} we use $\|a+b+c\|^2\leq3(\|a\|^2+\|b\|^2+\|c\|^2)$. Substituting the bounds from \Cref{eq:3-1-6}, \cref{lem2-6} and \cref{lem2-7} into \Cref{eq:5-0-7} we get
\begin{flalign}
&\begin{aligned}
&\frac{L}{2}\expt_k \left[\nbr{\wb^{(k+1,0)}-\wb^{(k,0)}}^2\right]\leq  \frac{\lrt N\kbar}{200}\nbr{\nabla F(\wb^{(k,0)})}^2+\frac{81L\lrt^2\kbar^2N\sigma^2}{50\tau}+\frac{209\lrt^2\lr^2L^3\kbar^2N^2\tau(\tau-1)\nu^2}{50} \nn \\
&+\frac{L^3\lrt^4\kbar^4N^3\gamma^2}{20}\left(\frac{M/\kbar-N}{M/\kbar-1}\right)+\frac{9 L^3\lrt^4\kbar^3(\kbar-1)N^4\alpha^2}{100}+\frac{3L\lrt^2N\gamma^2\kbar^2}{2}\left(\frac{M/\kbar-N}{M/\kbar-1}\right) \nn \\
&+\frac{3\lrt N\kbar}{20}\nbr{\nabla F(\wb^{(k,0)})}^2+\frac{9\lrt N\kbar}{500}\nbr{\nabla F(\wb^{(k,0)})}^2+\frac{3\lrt^2L\kbar^2N\sigma^2}{200\tau}+\frac{173\lrt^4\kbar^4N^3L^3\gamma^2}{20}\left(\frac{M/\kbar-N}{M/\kbar-1}\right) \nn \\
&+\frac{3\lrt^2\lr^2L^3\kbar^2N^2\tau(\tau-1)\nu^2}{50}+\frac{59\lrt^4\kbar^3(\kbar-1)N^4L^3\alpha^2}{10}
\end{aligned}\\
&\begin{aligned}
&\leq  \lrt N\kbar\left(\frac{1}{200}+\frac{3}{20}+\frac{9}{500}\right)\nbr{\nabla F(\wb^{(k,0)})}^2+\frac{L\lrt^2\kbar^2N\sigma^2}{\tau}\left(\frac{81}{50}+\frac{3}{200}\right)\\
&+(\lrt^2\lr^2L^3\kbar^2N^2\tau(\tau-1)\nu^2)\left(\frac{209}{50}+\frac{3}{50}\right)+L\lrt^2\kbar^2N\gamma^2\left(\frac{M/\kbar-N}{M/\kbar-1}\right)\left(\frac{174}{2000}+\frac{3}{2}\right) \nn \\
&+{L^3\lrt^4\kbar^3(\kbar-1)N^4\alpha^2}\left(\frac{9}{100}+\frac{59}{10}\right)
\end{aligned}~~~~~~~~{(\because~\lr \leq 1/(10 \tau N L \kbar))}\\
&\begin{aligned}
&\leq  \frac{173\lrt N\kbar}{1000}\nbr{\nabla F(\wb^{(k,0)})}^2+\frac{41L\lrt^2\kbar^2N\sigma^2}{25\tau}+\frac{41\lrt\lr^2L^2\kbar N\tau(\tau-1)\nu^2}{250}+\frac{8L\lrt^2\kbar^2N\gamma^2}{5}\left(\frac{M/\kbar-N}{M/\kbar-1}\right)\\
&+6L^3\lrt^4\kbar^3(\kbar-1)N^4\alpha^2
\end{aligned} \label{eq:7-3-1}
\end{flalign}
Using the bound derived in \Cref{eq:7-3-1} and \Cref{eq:7-0-7} for the upper bound in \Cref{eq:5-0-1} we have
\begin{flalign}
& \expt_k[F(\wb^{(k+1,0)})]-F(\wb^{(k,0)}) \nn \\
& \leq -\frac{\lrt\kbar N}{2}\nbr{\nabla F(\wb^{(k,0)})}^2+103\lrt\lr^2 L^2 N\kbar(\tau-1)\tau\nu^2+\frac{\lrt\lr^2 L^2 N(\kbar-1)^2(\tau-1)^2\alpha^2}{2\kbar} \nn \\
& \quad + {129\lrt^{3} L^2 N^{3}\kbar^2(\kbar-1)\alpha^2}+\frac{41L\lrt^2\kbar^2N\sigma^2}{25\tau}+\frac{8L\lrt^2\kbar^2N\gamma^2}{5}\left(\frac{M/\kbar-N}{M/\kbar-1}\right)
\end{flalign}
With \Cref{as2} we have 
\begin{flalign}
    & \expt[F(\wb^{(k+1,0)})]-F^* \nn \\
    & \leq (1-\lrt\kbar N\mu)(\expt[F(\wb^{(k,0)})]-F^*)+103\lrt\lr^2 L^2 N\kbar(\tau-1)\tau\nu^2+\frac{\lrt\lr^2 L^2 N(\kbar-1)^2(\tau-1)^2\alpha^2}{2\kbar} \nn \\
    & \quad + {129\lrt^{3} L^2 N^{3}\kbar^2(\kbar-1)\alpha^2}+\frac{41L\lrt^2\kbar^2N\sigma^2}{25\tau}+\frac{8L\lrt^2\kbar^2N\gamma^2}{5}\left(\frac{M/\kbar-N}{M/\kbar-1}\right),\label{eq:7-3-2}
\end{flalign}
and unrolling \Cref{eq:7-3-2} we have
\begin{flalign}
&\begin{aligned}
    &\expt[F(\wb^{(K,0)})]-F^* \nn \\
    & \leq\left(1-\frac{\log{(MK^2)}}{K}\right)^K(F(\wb^{(0,0)})-F^*)+\frac{103\lrt\lr^2 L^2 N\kbar(\tau-1)\tau\nu^2}{\lrt\kbar N\mu}
    +\frac{\lrt\lr^2 L^2 N(\kbar-1)(\tau-1)^2\alpha^2}{2\lrt\kbar N\mu} \nn \\
    & \quad +\frac{129\lrt^{3} L^2 N^{3}\kbar^2(\kbar-1)\alpha^2}{\lrt\kbar N\mu}
    +\frac{41L\lrt^2\kbar^2N\sigma^2}{25\tau\lrt\kbar N\mu}+\frac{8L\lrt^2\kbar^2N\gamma^2}{5\lrt\kbar N\mu}\left(\frac{M/\kbar-N}{M/\kbar-1}\right)
    \end{aligned} \\
    &\begin{aligned}
    &\leq\frac{F(\wb^{(0,0)})-F^*}{MK^2}+\frac{103\lr^2 L^2(\tau-1)\tau\nu^2}{\mu}+\frac{\lr^2 L^2 (\kbar-1)(\tau-1)^2\alpha^2}{2\kbar \mu}+\frac{129\lrt^{2} L^2 N^{2}\kbar(\kbar-1)\alpha^2}{\mu} \nn \\
    & \quad + \frac{41L\lrt\kbar\sigma^2}{25\tau\mu}+\frac{8L\lrt\kbar\gamma^2}{5\mu}\left(\frac{M/\kbar-N}{M/\kbar-1}\right)
    \end{aligned}\\
    &\begin{aligned}
    &=\frac{F(\wb^{(0,0)})-F^*}{MK^2}+\frac{103\log^2{(MK^2)}\kappa^2(\tau-1)\nu^2}{\mu\tau N^2\kbar^2K^2}
    +\frac{\log^2{(MK^2)}\kappa^2(\kbar-1) (\tau-1)\alpha^2}{2\mu\tau N^2\kbar^3K^2} \nn \\
    & \quad +\frac{196 \log^2{(MK^2)}\kappa^2(\kbar-1) \alpha^2}{\mu\kbar K^2}+\frac{41\log{(MK^2)}\kappa\sigma^2}{25\mu\tau N K}+\frac{8\log{(MK^2)}\kappa\gamma^2}{5\mu N K}\left(\frac{M/\kbar-N}{M/\kbar-1}\right)
    \end{aligned}\\
    &\begin{aligned}
    &=\frac{F(\wb^{(0,0)})-F^*}{MK^2}+\bigto{\frac{\kappa^2(\tau-1)\nu^2}{\mu\tau N^2\kbar^2K^2}}+\bigto{\frac{\kappa^2 (\tau-1)(\kbar-1)\alpha^2}{\mu\tau N^2\kbar^3K^2}}+\bigto{\frac{\kappa^2(\kbar-1) \alpha^2}{\mu\kbar K^2}}+\bigto{\frac{\kappa\sigma^2}{\mu\tau N K}}\\
    & \quad + \bigto{\frac{\kappa\gamma^2}{\mu N K}\left(\frac{M/\kbar-N}{M/\kbar-1}\right)}
    \end{aligned}
\end{flalign}

\subsection{Proof for \Cref{theo:2-1:kbar}}
We reiterate \Cref{eq:totalc:lsgd}, the total cost to achieve an $\epsilon$ error for the local SGD case in \Cref{algo1} as 
\begin{align}
    C_{\text{SGD}} = \frac{c_{\text{SGD}}\kbar}{\mu \epsilon N} \left[ \gamma^2 \left( \frac{M/\kbar-N}{M/\kbar-1} \right) + \frac{\sigma^2}{\tau} \right].
\end{align}
To get $C_{\text{SGD}|\kbar>1}<C_{\text{SGD}|\kbar=1}$, we need
\begin{align}
   \frac{c_{\text{SGD}}\kbar}{\mu \epsilon N}\left[ \gamma^2 \left( \frac{M/\kbar-N}{M/\kbar-1} \right) + \frac{\sigma^2}{\tau} \right] & <  \frac{c_{\text{SGD}}}{\mu \epsilon N}\left[ \gamma^2 \left( \frac{M-N}{M-1} \right) + \frac{\sigma^2}{\tau} \right] \nn \\
    \Rightarrow (\kbar - 1) \frac{\sigma^2}{\tau} & < \gamma^2 \lp \frac{M-N}{M-1} - \kbar \frac{M-N \kbar}{M-\kbar} \rp \nn \\
    &= \gamma^2 \frac{M (\kbar -1) \lp N \kbar - M + N - \frac{N \kbar}{M} \rp}{(M-1) (M-\kbar)}. \nn
\end{align}
The right-hand side is positive for $\kbar = M/N$. However, even for $\kbar = M/N-1$, on the r.h.s. we notice that
\begin{align*}
    N \kbar - M + N - \frac{N \kbar}{M} &= M - N - M + N - \frac{N}{M} \lp \frac{M}{N} - 1 \rp < 0.  
\end{align*}
Hence, except for $\kbar = M/N$, we can see that
for $\gamma\geq A_1\sigma^2/\tau$ to achieve $C_{\text{SGD}|\kbar>1} > C_{\text{SGD}|\kbar=1}$ we need to have
\begin{align}
\left(\frac{M}{N} - 1\right) \frac{\sigma^2}{\tau} & < \frac{A_1\sigma^2}{\tau} \lp \frac{M-N}{M-1}\rp \\
\Rightarrow \frac{M-N}{N}< A_1\lp \frac{M-N}{M-1}\rp\\
\therefore \frac{M-1}{N}< A_1
\end{align}
completing the proof.

\comment{
\begin{align}
C_{\text{LocalSGD}}=\frac{(C_c+C_e)\kbar\gamma^2}{\epsilon N}\left(\frac{M/\kbar-N}{M/\kbar-1}\right) +\frac{(C_c+C_e)\sigma^2}{\epsilon N\tau}\left(\kbar-1\right) 
\end{align}
To guarantee we have $C_{\text{LocalSGD}|\kbar>1}<C_{\text{LocalSGD}|\kbar=1}$, assuming $\gamma^2\geq A_1\sigma^2/\tau$, we need to satisfy the following condition:
\begin{align}
\frac{(C_c+C_e)\sigma^2}{\epsilon N\tau}\left[A_1\left(\frac{M-N\kbar}{M/\kbar-1}-\frac{M-N}{M-1}\right)+\left(\kbar-1\right)\right] <0\\
A_1\left(\frac{M-N\kbar}{M/\kbar-1}-\frac{M-N}{M-1}\right)+\left(\kbar-1\right)<0\\
(1-M)(A_1N+1)\kbar^2+(A_1M^2-A_1N+M^2-1)\kbar+M(1-M-A_1M+NA_1)<0 \label{eq:10-0-0}
\end{align}
Since 
\begin{align}
(A_1M^2-A_1N+M^2-1)^2-4M(1-M)(A_1N+1)(1-M-A_1M+NA_1)\\
=\left[(M(M-2N)+N)A_1+(M-1)^2\right]^2\geq 0
\end{align}
We have that for $(M^2-2MN+N)A_1+(M-1)^2\geq 0$ which is equivalent to $M>2N$ we have that \Cref{eq:10-0-0} holds for $\kbar<1$ and $\kbar>M(A_1(M-N)+M-1)/(M-1)(AN+1)$, and $1<M(A_1(M-N)+M-1)/(M-1)(AN+1)\leq M/N$ for $M/N \leq A_1$. Hence we prove that for a large number of total clients $M>2N$, and large enough $A_1\geq M/N$, a $M/N\geq\kbar>A_2=M(A_1(M-N)+M-1)/(M-1)(AN+1)>1$ guarantees that $C_{\text{SGD}|\kbar=1}>C_{\text{SGD}|\kbar>A_2}$ completing the proof.
}
\subsection{Proofs on Intermediate Lemmas}
\label{sec:proofs_int_results_localSGD}

\begin{proof}[Proof of \cref{lem2-4}]

\begin{flalign}
    &\nbr{\sum_{i=1}^\kbar\expt_k \left[\sum_{m\in\st^{(k,i)}}
    \db_m^{(k,i)}\right]}\leq\frac{N\kbar}{M}\sum_{i=1}^\kbar\sum_{m\in\sigma(i)}\nbr{\expt_k[\db_m^{(k,i)}]} \nn \\
    &=\frac{N\kbar}{M}\sum_{i=1}^\kbar\sum_{m\in\sigma(i)}\nbr{\expt_k \left[\sum_{l=0}^{\tau-1} (\nabla F_m(\wb_m^{(k,i-1,l)},\xi_m^{(k,i-1,l)})-\nabla F_m(\wb^{(k,i-1)}))\right]} \nn \\
    & \leq \frac{N\kbar}{M}\sum_{i=1}^\kbar\sum_{m\in\sigma(i)}\sum_{l=0}^{\tau-1}\nbr{\expt_k \left[ \nabla F_m(\wb_m^{(k,i-1,l)})-\nabla F_m(\wb^{(k,i-1)})\right]}.
\end{flalign}
Next, using \cref{as1},
\begin{flalign}
    &\begin{aligned}
    &\sum_{l=0}^{\tau-1}\nbr{\expt_k \left[\nabla F_m(\wb_m^{(k,i-1,l)})-\nabla F_m(\wb^{(k,i-1)})\right]}\leq L\sum_{l=0}^{\tau-1}\nbr{\expt_k \left[\wb_m^{(k,i-1,l)}-\wb^{(k,i-1)}\right]}
    \end{aligned} \nn \\
    &= L\lr\sum_{l=0}^{\tau-1}\nbr{\expt_k \left[\sum_{l'=0}^{l-1}\nabla F_m(\wb_m^{(k,i-1,l')},\xi_m^{(k,i-1,l')})\right]} \nn \\
    &\begin{aligned}\leq
    L\lr\sum_{l=0}^{\tau-1}\sum_{l'=0}^{l-1}\nbr{\expt_k \left[\nabla F_m(\wb_m^{(k,i-1,l')})-\nabla F_m(\wb^{(k,i-1)})+\nabla F_m(\wb^{(k,i-1)})\right]}
    \end{aligned} \nn \\
    &\begin{aligned}\leq
    L\lr\tau\sum_{l'=0}^{\tau-1}\nbr{\expt_k \left[\nabla F_m(\wb_m^{(k,i-1,l')})-\nabla F_m(\wb^{(k,i-1)})\right]}+L\lr\sum_{l=0}^{\tau-1}\sum_{l'=0}^{l-1}\nbr{\expt_k \left[\nabla F_m(\wb^{(k,i-1)})\right]}
    \end{aligned} \nn \\
    \Rightarrow & (1-L\lr\tau)\sum_{l=0}^{\tau-1}\nbr{\expt_k \left[\nabla F_m(\wb_m^{(k,i-1,l)})-\nabla F_m(\wb^{(k,i-1)})\right]} \nn \\
    &\leq \frac{L\lr\tau(\tau-1)}{2}\nbr{\expt_k \left[\nabla F_m(\wb^{(k,i-1)})-\nabla F_m(\wb^{(k,0)})+\nabla F_m(\wb^{(k,0)})\right]} \nn \\
    &\leq \frac{L^2\lr\tau(\tau-1)}{2}\nbr{\expt_k \left[\wb^{(k,i-1)}-\wb^{(k,0)}\right]}+\frac{L\lr\tau(\tau-1)}{2}\nbr{\nabla F_m(\wb^{(k,0)})}. \tag{Using \Cref{as1}}
\end{flalign}
Using $\lr$ such that $\lr \leq \frac{1}{10 L \tau}$, we get
\begin{align}
    & \sum_{l=0}^{\tau-1}\nbr{\expt_k \left[\nabla F_m(\wb_m^{(k,i-1,l)})-\nabla F_m(\wb^{(k,i-1)})\right]} \leq \frac{5 L \lr\tau(\tau-1)}{9} \left[ L \nbr{\expt_k [\wb^{(k,i-1)}]-\wb^{(k,0)}} + \nbr{\nabla F_m(\wb^{(k,0)})} \right]. \label{eq:C1-1}
\end{align}
\paragraph{Bounding $\nbr{\expt_k [\wb^{(k,i-1)}]-\wb^{(k,0)}}$.}

Analogous to \Cref{eq:updatelocsgd} we have that 
\begin{align}
\nbr{\wb^{(k,r)}-\wb^{(k,0)}}=\nbr{-\frac{\lrt}{\tau}\sum_{i=1}^r\dbl^{(k,i)}-\lrt\sum_{i=1}^r\qbl^{(k,i)}+\lrt^2\rbt^{(k,r)}}, \nn
\end{align}
where $\rbt^{(k,r)}\defeq\sum_{i=1}^{r-1}\left(\prod_{j=i+2}^{r} (\ibd-\lrt\ssbl^{(k,j)})\right)\ssbl^{(k,i+1)}\sum_{j'=1}^i \qbt^{(k,j')}$.
Hence we have
\begin{align}
    \nbr{\expt_k \left[\wb^{(k,r)}-\wb^{(k,0)}\right]}\leq \frac{\lrt}{\tau}\nbr{\sum_{i=1}^r\expt_k \left[\dbl^{(k,i)}\right]}+\lrt\nbr{\sum_{i=1}^r\expt_k \left[\qbl^{(k,i)}\right]}+\lrt^2\nbr{\expt_k \left[\rbt^{(k,r)}\right]} \label{eq:7-0-0}
\end{align}
First, we bound the first term in the RHS of \Cref{eq:7-0-0} as follows:
\begin{flalign}
&\begin{aligned}
 \frac{\lrt}{\tau}\nbr{\sum_{i=1}^r\expt_k \left[\dbl^{(k,i)}\right]}=\frac{\lrt}{\tau}\nbr{\sum_{i=1}^r\expt_k \left[\sum_{m\in\mathcal{S}^{(k,i)}}\sum_{l=0}^{\tau-1} (\nabla F_m(\wb_m^{(k,i-1,l)},\xi_m^{(k,i-1,l)})-\nabla F_m(\wb^{(k,i-1)}))\right]}
 \end{aligned} \nn \\
&\begin{aligned}
=
\frac{\lrt}{\tau}\nbr{\sum_{i=1}^r\expt_k \left[\sum_{m\in\mathcal{S}^{(k,i)}}\sum_{l=0}^{\tau-1}\nabla F_m(\wb_m^{(k,i-1,l)})-\nabla F_m(\wb^{(k,i-1)}))\right]}
 \end{aligned} \nn \\
&\leq\frac{\lrt N\kbar}{\tau M}\sum_{i=1}^r\sum_{m\in\sigma(i)}\sum_{l=0}^{\tau-1}\nbr{\expt_k \left[\nabla F_m(\wb_m^{(k,i-1,l)})-\nabla F_m(\wb^{(k,i-1)}))\right]} \nn \\
&\leq\frac{\lrt N\kbar}{\tau M}\sum_{i=1}^r\sum_{m\in\sigma(i)}\left(\frac{5\lr L(\tau-1)\tau}{9}\nbr{\nabla F_m(\wb^{(k,0)})}+\frac{5\lr L^2(\tau-1)\tau }{9}\nbr{\expt_k \left[\wb^{(k,i-1)}-\wb^{(k,0)}\right]}\right) \tag{Using \eqref{eq:C1-1}} \\
&=\frac{5\lrt \lr LN\kbar(\tau-1)}{9M}\sum_{i=1}^r \sum_{m\in\sigma(i)}\nbr{\nabla F_m(\wb^{(k,0)})}+\frac{5\lrt\lr N L^2(\tau-1)}{9}\sum_{i=1}^r\nbr{\expt_k \left[\wb^{(k,i-1)}-\wb^{(k,0)}\right]}. \label{eq:7-0-1}
\end{flalign}
We can bound the second term in the RHS of \Cref{eq:7-0-0} using \Cref{lem0-0} to get
\begin{align}
\lrt\nbr{\sum_{i=1}^r\expt_k \left[\qbl^{(k,i)}\right]}\leq r\lrt N\nbr{\nabla F(\wb^{(k,0)})}+{\alpha \lrt N r}. \label{eq:7-0-2}
\end{align}
Third, we can bound the third term in the RHS of \Cref{eq:7-0-0} as following:
\begin{flalign}
&\begin{aligned}
\lrt^2\nbr{\expt_k \left[\rbt^{(k,r)}\right]} = \lrt^2\nbr{\expt_k \left[\sum_{i=1}^{r-1}\left(\prod_{j=i+2}^{r} (\ibd-\lrt\ssbl^{(k,j)})\right)\ssbl^{(k,i+1)}\sum_{j'=1}^i \qbt^{(k,j')}\right]}
\end{aligned} \nn \\
&\begin{aligned}
\leq \lrt^2\sum_{i=1}^{r-1} \nbr{\expt_k \left[\prod_{j=i+2}^{r} (\ibd-\lrt\ssbl^{(k,j)})\right]} \nbr{\expt_k \left[\ssbl^{(k,i+1)}\right]} \nbr{\expt_k \left[\sum_{j'=1}^i \qbt^{(k,j')}\right]}~~~~\text{(}\because\text{Submultiplicativity of Norms)}  
\end{aligned} \nn \\
&\leq\frac{6\lrt^2NL}{5}\sum_{i=1}^{r-1}\nbr{\sum_{j'=1}^i \expt_k \left[ \dbl^{(k,j')}/\tau+\qbl^{(k,j')} \right]} \tag{Using \eqref{eq:3-0-4}, \eqref{eq:3-0-4-1} and $\lrt \leq \frac{1}{7 L N \kbar}$}  \nn \\
& \leq \frac{6\lrt^2 NL}{5}\sum_{i=1}^{r-1}\left(\frac{5 \lr LN\kbar(\tau-1)}{9M}\sum_{j'=1}^i\sum_{m\in\sigma(j')}\nbr{\nabla F_m(\wb^{(k,0)})}+\frac{5\lr N L^2(\tau-1)}{9}\sum_{j'=1}^i\nbr{\expt_k \left[\wb^{(k,j'-1)}-\wb^{(k,0)}\right]}\right) \nn 
\\
& \quad + \frac{6\lrt^2NL}{5}\sum_{i=1}^{r-1}\left(iN\nbr{\nabla F(\wb^{(k,0)})}+{i\alpha N}\right) \tag{Using \eqref{eq:7-0-1}, \eqref{eq:7-0-2}} \\
& \leq \frac{2\lrt^2\lr N^2L^2\kbar(\tau-1)(r-1)}{3M}\sum_{j'=1}^{r}\sum_{m\in\sigma(j')}\nbr{\nabla F_m(\wb^{(k,0)})}+\frac{2\lrt^2\lr N^2 L^3(\tau-1)(r-1)}{3}\sum_{j'=1}^{r-1}\nbr{\expt_k \left[\wb^{(k,j'-1)}-\wb^{(k,0)}\right]} \nn \\ & \quad + \frac{3\lrt^2N^2Lr(r-1)}{5}\nbr{\nabla F(\wb^{(k,0)})}+\frac{3\lrt^2N^2Lr(r-1)\alpha}{5}. \label{eq:7-0-3}
\end{flalign}
Plugging in \Cref{eq:7-0-1}, \Cref{eq:7-0-2}, and \Cref{eq:7-0-3} into \Cref{eq:7-0-0}, and summing over $r$, we have
\begin{flalign}
    &\nbr{\expt_k \left[\wb^{(k,r)}-\wb^{(k,0)}\right]} \nn \\
    & \leq \left(\frac{5\lrt \lr LN\kbar(\tau-1)}{9M}+\frac{2\lrt^2\lr N^2L^2\kbar(\tau-1)(r-1)}{3M}\right)\sum_{i=1}^r\sum_{m\in\sigma(i)}\nbr{\nabla F_m(\wb^{(k,0)})} \nn \\
    & \quad +\left(\frac{5\lrt\lr N L^2(\tau-1)}{9}+\frac{2\lrt^2\lr N^2 L^3(\tau-1)(r-1)}{3}\right)\sum_{i=1}^r\nbr{\expt_k \left[\wb^{(k,i-1)}-\wb^{(k,0)}\right]} \nn \\
    & \quad +\left(r\lrt N+\frac{3\lrt^2N^2Lr(r-1)}{5}\right)\nbr{\nabla F(\wb^{(k,0)})}+{\alpha \lrt N r} +\frac{3\lrt^2N^2Lr(r-1)\alpha}{5} \nn \\
    \therefore & \sum_{r=1}^\kbar\nbr{\expt_k \left[\wb^{(k,r-1)}-\wb^{(k,0)}\right]} \nn \\
    & \leq \left(\frac{5\lrt \lr LN\kbar(\kbar-1)(\tau-1)}{9M}+\frac{\lrt^2\lr N^2L^2\kbar^2(\tau-1)(\kbar-1)}{3M}\right)\sum_{i=1}^\kbar\sum_{m\in\sigma(i)}\nbr{\nabla F_m(\wb^{(k,0)})} \nn \\
    & \quad +\left(\frac{5\lrt\lr N L^2\kbar(\tau-1)}{9}+\frac{\lrt^2\lr N^2 L^3\kbar(\tau-1)(\kbar-1)}{3}\right)\sum_{i=1}^\kbar\nbr{\expt_k \left[\wb^{(k,i-1)}-\wb^{(k,0)}\right]} \nn \\
    & \quad +\left(\frac{\kbar(\kbar-1)\lrt N}{2}+\frac{\lrt^2N^2L\kbar^2(\kbar-1)}{5}\right)\nbr{\nabla F(\wb^{(k,0)})}+\frac{49\kbar(\kbar-1)\lrt N \alpha}{100} \nn \\
    & \leq \frac{31\lrt N \kbar(\kbar-1)}{500M}\sum_{i=1}^\kbar\sum_{m\in\sigma(i)}\nbr{\nabla F_m(\wb^{(k,0)})} + \frac{27\lrt N \kbar^2}{50}\nbr{\nabla F(\wb^{(k,0)})} \nn \\
    & \quad +\frac{\kbar(\kbar-1)\lrt N \alpha}{2}~~~~\left(\because \frac{5\lrt\lr N L^2\kbar(\tau-1)}{9}+\frac{\lrt^2\lr N^2 L^3\kbar(\tau-1)(\kbar-1)}{3}\leq 3/500\right) \nn \\
    & \leq \frac{31\lrt N \kbar^2}{50}\nbr{\nabla F(\wb^{(k,0)})}+\frac{31\lrt N \kbar(\kbar-1)\nu}{500}+\frac{\kbar(\kbar-1)\lrt N \alpha}{2}. ~~~~~~~~{(\because~\lr \leq 1/(10 \tau N L \kbar))} \label{eq:94}
\end{flalign}
Finally, we therefore have
\begin{flalign}
    &\nbr{\sum_{i=1}^\kbar\expt_k \left[\sum_{m\in\st^{(k,i)}}
    \db_m^{(k,i)}\right]} \nn \\
    &\leq\frac{N\kbar}{M}\sum_{i=1}^\kbar\sum_{m\in\sigma(i)}\left(\frac{5\lr L(\tau-1)\tau}{9}\nbr{\nabla F_m(\wb^{(k,0)})}+\frac{5\lr L^2(\tau-1)\tau }{9}\nbr{\expt_k \left[\wb^{(k,i-1)}-\wb^{(k,0)}\right]}\right) \tag{Using \eqref{eq:C1-1}} \\
    & \leq\frac{5\lr LN\kbar(\tau-1)\tau}{9}\nbr{\nabla F(\wb^{(k,0)})}+\frac{5\lr LN\kbar(\tau-1)\tau\nu}{9} \nn \\
    & \quad +\frac{5\lr N L^2(\tau-1)\tau }{9}\left(\frac{31\lrt N \kbar^2}{50}\nbr{\nabla F(\wb^{(k,0)})}+\frac{31\lrt N \kbar(\kbar-1)\nu}{500}+\frac{\kbar(\kbar-1)\lrt N \alpha}{2}\right)
    \tag{Using \eqref{eq:94}} \\
    & \begin{aligned}
    \leq\frac{3\lr LN\kbar(\tau-1)\tau}{5}\nbr{\nabla F(\wb^{(k,0)})}+\frac{14\lr LN\kbar(\tau-1)\tau\nu}{25}+\frac{\lr L N (\kbar-1) (\tau-1)\tau \alpha}{36}.
    \end{aligned} \nn
\end{flalign}
    
\end{proof}

\begin{proof}[Proof of \cref{lem2-5}]
From \eqref{eq:7-0-3} it follows that
\begin{flalign}
    & \lrt^2 \nbr{\expt_k[\rbt^{(k,0)}]}
    \leq \frac{2\lrt^2\lr N^2L^2\kbar(\kbar-1)(\tau-1)}{3M}\sum_{j'=1}^{\kbar}\sum_{m\in\sigma(j')}\nbr{\nabla F_m(\wb^{(k,0)})}+\frac{3\lrt^2N^2L\kbar(\kbar-1)}{5}\nbr{\nabla F(\wb^{(k,0)})} \nn \\
    & \quad + \frac{2\lrt^2\lr N^2 L^3(\tau-1)(\kbar-1)}{3}\sum_{j'=1}^{\kbar-1}\nbr{\expt_k \left[\wb^{(k,j'-1)}-\wb^{(k,0)}\right]} + \frac{3\lrt^2N^2L\kbar(\kbar-1)\alpha}{5} \nn \\
    & \leq\frac{2\lrt^2\lr N^2L^2\kbar(\kbar-1)(\tau-1)}{3M}\sum_{j'=1}^{\kbar}\sum_{m\in\sigma(j')}\nbr{\nabla F_m(\wb^{(k,0)})}+\frac{3\lrt^2N^2L\kbar(\kbar-1)}{5}\nbr{\nabla F(\wb^{(k,0)})} \nn \\
    & \quad + \frac{2\lrt^2\lr N^2 L^3(\tau-1)(\kbar-1)}{3}\left(\frac{31\lrt N \kbar^2}{50}\nbr{\nabla F(\wb^{(k,0)})}+\frac{31\lrt N \kbar^2\nu}{500}+\frac{\kbar(\kbar-1)\lrt N \alpha}{2}\right)
    \tag{Using \eqref{eq:94}} \\
    & \quad +\frac{3\lrt^2N^2L\kbar(\kbar-1)\alpha}{5} \nn \\
    & \leq \frac{2\lrt^2\lr N^2L^2\kbar(\kbar-1)(\tau-1)}{3} \left[ \nbr{\nabla F(\wb^{(k,0)})} + \nu \right] + \frac{3\lrt^2N^2L\kbar(\kbar-1)}{5}\nbr{\nabla F(\wb^{(k,0)})} \tag{Recall $\nu = \gamma + \alpha$, \cref{as3}} \\
    & \quad +\frac{2\lrt^2\lr N^2 L^3(\tau-1)(\kbar-1)}{3}\left(\frac{31\lrt N \kbar^2}{50}\nbr{\nabla F(\wb^{(k,0)})} + \frac{31\lrt N \kbar^2\nu}{500} + \frac{\kbar(\kbar-1)\lrt N \alpha}{2} \right) \nn \\
    & \quad + \frac{3\lrt^2N^2L\kbar(\kbar-1)\alpha}{5} \nn \\
    & \leq \lrt N (\kbar-1) \left(\frac{2\lrt^2 NL^2\kbar}{3}+\frac{3\lrt NL\kbar}{5}+\frac{31\lrt^3N^2L^3\kbar^2}{75}\right)\nbr{\nabla F(\wb^{(k,0)})} \nn \\
    & \quad + \frac{2\lrt^2\lr N^2L^2\kbar(\kbar-1)(\tau-1)}{3} \left( 1 + \frac{31\lrt NL\kbar}{500} \right) \nu + \lrt^2N^2L\kbar(\kbar-1) \left(\frac{\lrt^2NL^2\kbar}{3}+\frac{3}{5}\right) \alpha \nn \\
    & \leq \frac{17\lrt N\kbar}{250}\nbr{\nabla F(\wb^{(k,0)})}+\frac{84\lrt^2\lr N^2L^2\kbar(\kbar-1)(\tau-1)}{125} \nu + \frac{16\lrt^2N^2L\kbar(\kbar-1)}{25} \alpha. \nn
\end{flalign}
where the last two bounds are due to $\lr \leq 1/(10 \tau N L \kbar))$.
\end{proof}

\begin{proof}[Proof of \cref{lem2-3}]
We have
\begin{align}
    & \frac{L}{2} \expt_k \left[\nbr{\wb^{(k,r)}-\wb^{(k,0)}}^2\right] \nn \\
    &= \frac{L}{2}\expt_k \left[\nbr{
    -\lrt \sum_{i=1}^r \dbl^{(k,i)}/\tau-\lrt \sum_{i=1}^r\qbl^{(k,i)}+\lrt^2\underbrace{\left(\sum_{i=1}^{r-1}\left(\prod_{j=i+2}^{r} (\ibd-\lrt\ssbl^{(k,j)})\right)\ssbl^{(k,i+1)}\sum_{j'=1}^i \qbt^{(k,j')}\right)}_{\defeq \rbt^{(k,r)}}}^2\right] \nn \\
    & \leq \frac{3L\lrt^2}{2\tau^2}\expt_k \left[\nbr{\sum_{i=1}^r \dbl^{(k,i)}}^2\right]+
     \frac{3L\lrt^2}{2}\expt_k \left[\nbr{\sum_{i=1}^r \qbl^{(k,i)}}^2\right]+\frac{3L\lrt^4}{2}\expt_k \left[\nbr{\rbt^{(k,r)}}^2\right]. \label{eq:5-1-1}
\end{align}
where in \Cref{eq:5-1-1} we use $\|a+b+c\|^2\leq3(\|a\|^2+\|b\|^2+\|c\|^2)$. We first bound the last term in \Cref{eq:5-1-1} as the following:
\begin{align}
    \frac{3L\lrt^4}{2}\expt_k \left[\nbr{\rbt^{(k,r)}}^2\right] &= \frac{3L\lrt^4}{2} \expt_k \left[\nbr{\sum_{i=1}^{r-1}\left(\prod_{j=i+2}^{r} (\ibd-\lrt\ssbl^{(k,j)})\right)\ssbl^{(k,i+1)}\sum_{j'=1}^i \qbt^{(k,j')}}^2\right] \nn \\
    &\leq \frac{54N^2L^3\lrt^4(r-1)}{25}\sum_{i=1}^{r-1}\expt_k \left[\frac{2}{\tau^2}\nbr{\sum_{j'=1}^i\dbl^{(k,j')}}^2+2\nbr{\sum_{j'=1}^i\qbl^{(k,j')}}^2\right], \label{eq:5-1-2}
\end{align}
which follows from \eqref{eq:3-0-4}, \eqref{eq:3-0-4-1}. Note that the two terms in \Cref{eq:5-1-2} are in similar forms as in the first two terms in \Cref{eq:5-1-1}. We will come back to bounding \Cref{eq:5-1-2} after bounding the first two terms in \Cref{eq:5-1-1}. 
\begin{flalign}
    &\expt_k \left[\nbr{\sum_{i=1}^r \dbl^{(k,i)}}^2\right]
    =\expt_k \left[\nbr{\sum_{i=1}^r \sum_{m\in\mathcal{S}^{(k,i)}}\db_m^{(k,i)}}^2\right]
    \leq r\sum_{i=1}^r\expt_k \left[\nbr{\sum_{m\in\mathcal{S}^{(k,i)}}\db_m^{(k,i)}}^2\right] \nn \tag{Using \Cref{lem:sum_of_squares}}\\
    & = r\sum_{i=1}^r \expt_k \nbr{\sum_{m\in\mathcal{S}^{(k,i)}} \sum_{l=0}^{\tau-1} \lp \nabla F_m(\wb_m^{(k,i-1,l)},\xi_m^{(k,i-1,l)}) - \nabla F_m(\wb_m^{(k,i-1,l)}) + \nabla F_m(\wb_m^{(k,i-1,l)}) -\nabla F_m(\wb^{(k,i-1)}) \rp}^2 \nn \\
    &\leq \frac{rN^2\tau\kbar}{M}\sum_{i=1}^r\sum_{m\in\sigma(i)}\left(\frac{103\lr^2L^2(\tau-1)\tau\sigma^2}{200}+\frac{69\lr^2 L^4 \tau^2 (\tau-1)}{50}\expt_k \left[\nbr{\wb^{(k,i-1)}-\wb^{(k,0)}}^2\right]\right. \nn \\
    & \qquad \qquad \left. + \frac{69\lr^2L^2\tau^2(\tau-1)}{50}\expt_k \left[\nbr{\nabla F_m(\wb^{(k,0)})}^2\right]\right)+r^2N\tau\sigma^2 \tag{Using the bound in \eqref{eq:5-1-0}} \\
    &= \frac{69rN^2\tau^3(\tau-1)\kbar \lr^2 L^2}{50M}\sum_{i=1}^r\sum_{m\in\sigma(i)}\nbr{\nabla F_m(\wb^{(k,0)})}^2+\frac{69rN^2\tau^3(\tau-1)\kbar\lr^2 L^4}{50M}\sum_{i=1}^r\sum_{m\in\sigma(i)}\expt_k \left[\nbr{\wb^{(k,i-1)}-\wb^{(k,0)}}^2\right] \nn \\
    & \quad + r^2 N \tau \sigma^2\left(1+\frac{103N\tau(\tau-1)L^2\lr^2}{200}\right)
    \label{eq:5-1-2-0}
\end{flalign}
For the second term in \Cref{eq:5-1-1}, similar to how we got \Cref{eq:3-2-2}, we have that
\begin{flalign}
    &\expt_k \left[\nbr{\sum_{i=1}^r \qbl^{(k,i)}}^2\right]\leq \frac{1}{3} N r^2 \left[ 2 N \lp \alpha^2 + \nbr{\nabla F(\wb^{(k,0)})}^2 \rp + \gamma^2 \left( \frac{M/\kbar-N}{M/\kbar-1} \right) \right]. \label{eq:5-1-2-1}
\end{flalign}
We can use \Cref{eq:5-1-2-0} and \Cref{eq:5-1-2-1} to bound \Cref{eq:5-1-2} as following:
\begin{flalign}
    & \frac{3L\lrt^4}{2}\expt_k \left[\nbr{\rbt^{(k,r)}}^2\right] \nn \\
    & \leq \frac{54N^2L^3\lrt^4(r-1)}{25}\sum_{i=1}^{r-1}\left(\frac{69iN^2\tau(\tau-1)\kbar \lr^2 L^2}{25M}\sum_{j'=1}^i\sum_{m\in\sigma(j')}\nbr{\nabla F_m(\wb^{(k,0)})}^2\right. \nn \\
    & \quad \left.+\frac{69iN^2\tau(\tau-1)\kbar\lr^2 L^4}{25M}\sum_{j'=1}^i\sum_{m\in\sigma(j')}\expt_k \left[\nbr{\wb^{(k,j'-1)}-\wb^{(k,0)}}^2\right]+\frac{2i^2N\sigma^2}{\tau}\left(1+\frac{103N\tau(\tau-1)L^2\lr^2}{200}\right)\right. \nn \\
    & \quad \left. + 8 N^2 i^2 \nbr{\nabla F(\wb^{(k,0)})}^2 + 4 N i^2 \gamma^2\left(\frac{M/\kbar-N}{M/\kbar-1}\right)+8N^2i^2\alpha^2 \right) \nn \\
    &\leq\frac{3N^4L^5\tau(\tau-1)\kbar\lr^2\lrt^4r(r-1)^2}{M}\sum_{j'=1}^r\sum_{m\in\sigma(j')}\nbr{\nabla F_m(\wb^{(k,0)})}^2+\frac{29N^4L^3\lrt^4 (r-1)^2r^2}{5}\nbr{\nabla F(\wb^{(k,0)})}^2 \nn \\
    & \quad +\frac{3N^4L^7\tau(\tau-1)\kbar\lr^2 \lrt^4 (r-1)^2r}{M}\sum_{j'=1}^r\sum_{m\in\sigma(j')}\expt_k \left[\nbr{\wb^{(k,j'-1)}-\wb^{(k,0)}}^2\right]+\frac{72N^3L^3\lrt^4(r-1)^2r^2\gamma^2}{25}\left(\frac{M/\kbar-N}{M/\kbar-1}\right) \nn \\
    & \quad +\frac{36N^3L^3\lrt^4(r-1)^2r^2\sigma^2}{25\tau}\left(1+\frac{103N\tau(\tau-1)L^2\lr^2}{200}\right)+\frac{29N^4L^3\lrt^4 (r-1)^2r^2\alpha^2}{5}. \label{eq:5-1-2-2}
\end{flalign}
Finally, plugging in \Cref{eq:5-1-2-0}, \Cref{eq:5-1-2-1}, and \Cref{eq:5-1-2-2} to \Cref{eq:5-1-1} we have 
\begin{align}
    &\expt_k \left[\nbr{\wb^{(k,r)}-\wb^{(k,0)}}^2\right] \nn \\
    & \leq \frac{2 \lrt^2 \lr^2 L^2 N^2 \tau(\tau-1)\kbar r}{M}\left(2+3\lrt^2L^2N^2(r-1)^2\right)\sum_{j'=1}^r\sum_{m\in\sigma(j')}\nbr{\nabla F_m(\wb^{(k,0)})}^2 \nn \\
    & \quad + \frac{2\lrt^2\lr^2L^4N^2\tau(\tau-1)\kbar r}{M}\left(2+3\lrt^2L^2N^2(r-1)^2\right)\sum_{i=1}^r\sum_{m\in\sigma(i)}\expt_k \left[\nbr{\wb^{(k,i-1)}-\wb^{(k,0)}}^2\right] \nn \\
    & \quad +\frac{2\lrt^2Nr^2\sigma^2}{\tau}\left(1+\frac{103\lr^2N\tau(\tau-1)L^2}{200}\right)\left(\frac{3}{2}+\frac{36\lrt^2L^2N^2(r-1)^2}{25}\right) \nn \\
    & \quad +2\lrt^2Nr^2\gamma^2 \left(\frac{M/\kbar-N}{M/\kbar-1}\right)\left(3+\frac{72\lrt^2L^2N^2(r-1)^2}{25}\right) \nn \\
    & \quad +\lrt^2N^2r^2\nbr{\nabla F(\wb^{(k,0)})}^2\left(12+\frac{58\lrt^2L^2N^2(r-1)^2}{5}\right)+\lrt^2N^2r^2\alpha^2\left(12+\frac{58\lrt^2L^2N^2(r-1)^2}{5}\right) \nn 
\end{align}
With rearrangement of the terms, we have
\begin{align}
    \Rightarrow & \expt_k \left[\nbr{\wb^{(k,r-1)}-\wb^{(k,0)}}^2\right]\leq \frac{2\lrt^2\lr^2L^2N^2\tau(\tau-1)\kbar (r-1)}{M} \left(2 + 3 \lrt^2L^2N^2\kbar^2\right)\sum_{j'=1}^\kbar\sum_{m\in\sigma(j')}\nbr{\nabla F_m(\wb^{(k,0)})}^2 \nn \\
    & \quad + \frac{2 \lrt^2 \lr^2 L^4 N^2\tau(\tau-1)\kbar (r-1)}{M}\left(2+3\lrt^2L^2N^2\kbar^2\right)\sum_{i=1}^\kbar\sum_{m\in\sigma(i)}\expt_k \left[\nbr{\wb^{(k,i-1)}-\wb^{(k,0)}}^2\right] \nn \\
    & \quad +\frac{2\lrt^2N(r-1)^2\sigma^2}{\tau}\left(1+\frac{103\lr^2N\tau(\tau-1)L^2}{200}\right)\left(\frac{3}{2}+\frac{36\lrt^2L^2N^2\kbar^2}{25}\right) \nn \\
    & \quad + 2\lrt^2N(r-1)^2\gamma^2 \left(\frac{M/\kbar-N}{M/\kbar-1}\right)\left(3+\frac{72\lrt^2L^2N^2\kbar^2}{25}\right) \nn \\
    & \quad + \lrt^2N^2r^2\nbr{\nabla F(\wb^{(k,0)})}^2\left(12+\frac{58\lrt^2L^2N^2r^2}{5}\right)+\lrt^2N^2(r-1)^2\alpha^2\left(12+\frac{58\lrt^2L^2N^2r^2}{5}\right) \nn \\
    & \leq \frac{102\lrt^2\lr^2L^2N^2\tau(\tau-1)\kbar (r-1)}{25M}\sum_{j'=1}^\kbar\sum_{m\in\sigma(j')}\nbr{\nabla F_m(\wb^{(k,0)})}^2+\frac{61\lrt^2N^2(r-1)^2}{5}\nbr{\nabla F(\wb^{(k,0)})}^2 \nn \\
    & \quad + \frac{102\lrt^2\lr^2L^4N^2\tau(\tau-1)(r-1)}{25}\sum_{i=1}^\kbar\expt_k \left[\nbr{\wb^{(k,i-1)}-\wb^{(k,0)}}^2\right]+\frac{61\lrt^2N(r-1)^2\sigma^2}{20\tau} \nn \\
    & \quad + \frac{303\lrt^2N(r-1)^2\gamma^2}{50} \left(\frac{M/\kbar-N}{M/\kbar-1}\right)+\frac{61\lrt^2N^2(r-1)^2\alpha^2}{5} \nn  ~~~~~~~~{(\because~\lr \leq 1/(10 \tau N L \kbar))}
    \end{align}
Summing over $r=1,...,\kbar$ we have
\begin{align}
    \Rightarrow & \sum_{r=1}^\kbar \expt_k \left[\nbr{\wb^{(k,r-1)}-\wb^{(k,0)}}^2\right]\leq \frac{51\lrt^2\lr^2L^2N^2\tau(\tau-1)\kbar^2(\kbar-1}{50M}\sum_{j'=1}^\kbar\sum_{m\in\sigma(j')}\nbr{\nabla F_m(\wb^{(k,0)})}^2 \nn \\
    & \quad + \frac{61\lrt^2N^2\kbar^2(\kbar-1)}{15}\nbr{\nabla F(\wb^{(k,0)})}^2+\frac{51\lrt^2\lr^2L^4N^2\tau(\tau-1)\kbar^2}{50}\sum_{i=1}^\kbar\expt_k \left[\nbr{\wb^{(k,i-1)}-\wb^{(k,0)}}^2\right] \nn \\
    & \quad + \frac{61\lrt^2N\kbar^2(\kbar-1)\sigma^2}{60\tau}+\frac{101\lrt^2N\kbar^2(\kbar-1)\gamma^2}{50} \left(\frac{M/\kbar-N}{M/\kbar-1}\right)+\frac{61\lrt^2N^2\kbar^2(\kbar-1)\alpha^2}{15} \nn \\
    & \quad \therefore \left(1-\frac{51\lrt^2\lr^2L^4N^2\tau(\tau-1)\kbar^2}{50}\right)\sum_{r=1}^\kbar\expt_k \left[\nbr{\wb^{(k,r-1)}-\wb^{(k,0)}}^2\right] \nn \\
    & \leq \frac{51\lrt^2\lr^2L^2N^2\tau(\tau-1)\kbar^2(\kbar-1)}{50M}\sum_{j'=1}^\kbar\sum_{m\in\sigma(j')}\nbr{\nabla F_m(\wb^{(k,0)})}^2+\frac{61\lrt^2N^2\kbar^2(\kbar-1)}{15}\nbr{\nabla F(\wb^{(k,0)})}^2 \nn \\
    & \quad + \frac{61\lrt^2N\kbar^2(\kbar-1)\sigma^2}{60\tau}+\frac{101\lrt^2N\kbar^2(\kbar-1)\gamma^2}{50} \left(\frac{M/\kbar-N}{M/\kbar-1}\right)+\frac{61\lrt^2N^2\kbar^2(\kbar-1)\alpha^2}{15}
\end{align}
Again using $\lr \leq 1/(10 \tau N L \kbar))$, we have
\begin{flalign}
    & \sum_{r=1}^\kbar\expt_k \left[\nbr{\wb^{(k,r-1)}-\wb^{(k,0)}}^2\right] \nn \\
    & \leq \frac{103\lrt^2\lr^2L^2N^2\tau(\tau-1)\kbar^2(\kbar-1)}{100M}\sum_{j'=1}^\kbar\sum_{m\in\sigma(j')}\nbr{\nabla F_m(\wb^{(k,0)})}^2 \nn \\
    &  \quad + \frac{41\lrt^2N^2\kbar^2(\kbar-1)}{10}\nbr{\nabla F(\wb^{(k,0)})}^2+\frac{51\lrt^2N\kbar^2(\kbar-1)\sigma^2}{50\tau}+ \frac{203\lrt^2N\kbar^2(\kbar-1)\gamma^2}{100} \left(\frac{M/\kbar-N}{M/\kbar-1}\right) \nn \\
    & \quad +\frac{41\lrt^2N^2\kbar^2(\kbar-1)\alpha^2}{10} \nn \\
    & \leq \frac{83\lrt^2N^2\kbar^2(\kbar-1)}{20}\nbr{\nabla F(\wb^{(k,0)})}^2+\frac{51\lrt^2N\kbar^2(\kbar-1)\sigma^2}{50\tau}+\frac{3\lr^2\kbar\tau(\tau-1)\nu^2}{100}+\frac{51\lrt^2N\kbar^2(\kbar-1)\gamma^2}{25} \left(\frac{M/\kbar-N}{M/\kbar-1}\right) \nn \\
    & \quad +\frac{41\lrt^2N^2\kbar^2(\kbar-1)\alpha^2}{10}. \nn 
\end{flalign} 
\end{proof}

\begin{proof}[Proof of \cref{lem2-6}]
\begin{flalign}
    & \frac{3L\lrt^2}{2\tau^2}\expt_k \left[\nbr{\sum_{i=1}^\kbar \dbl^{(k,i)}}^2\right] = \frac{3L\lrt^2}{2\tau^2}\expt_k \left[\nbr{\sum_{i=1}^\kbar \sum_{m\in\mathcal{S}^{(k,i)}}\db_m^{(k,i)}}^2\right] \leq \frac{3L\lrt^2\kbar}{2\tau^2}\sum_{i=1}^\kbar\expt_k \left[\nbr{\sum_{m\in\mathcal{S}^{(k,i)}} \db_m^{(k,i)}}^2\right] \nn \\
    &=\frac{3L\lrt^2\kbar}{2\tau^2}\sum_{i=1}^\kbar\expt_k \left[\nbr{\sum_{m\in\mathcal{S}^{(k,i)}} \sum_{l=0}^{\tau-1} (\nabla F_m(\wb_m^{(k,i-1,l)},\xi_m^{(k,i-1,l)}) - \nabla F_m(\wb_m^{(k,i-1,l)}) + \nabla F_m(\wb_m^{(k,i-1,l)})-\nabla F_m(\wb^{(k,i-1)}))}^2\right] \nn \\
    &=\frac{3L\lrt^2\kbar}{2\tau^2}\sum_{i=1}^\kbar\expt_k \left[\nbr{\sum_{m\in\mathcal{S}^{(k,i)}} \sum_{l=0}^{\tau-1} (\nabla F_m(\wb_m^{(k,i-1,l)},\xi_m^{(k,i-1,l)}) -\nabla F_m(\wb^{(k,i-1,l)}))}^2\right] \tag{cross terms are zero} \\
    & \quad + \frac{3L\lrt^2\kbar}{2\tau^2}\sum_{i=1}^\kbar\expt_k \left[\nbr{\sum_{m\in\mathcal{S}^{(k,i)}} \sum_{l=0}^{\tau-1} (\nabla F_m(\wb_m^{(k,i-1,l)})-\nabla F_m(\wb^{(k,i-1)}))}^2\right] \nn \\
    & \leq \frac{3L\lrt^2\kbar}{2\tau^2}\sum_{i=1}^\kbar\expt_k \left[\sum_{m\in\mathcal{S}^{(k,i)}}\nbr{\sum_{l=0}^{\tau-1} (\nabla F_m(\wb_m^{(k,i-1,l)},\xi_m^{(k,i-1,l)}) -\nabla F_m(\wb^{(k,i-1,l)}))}^2\right] \tag{independence of stochastic gradients across clients} \\
    & \quad + \frac{3L\lrt^2\kbar N}{2\tau^2}\sum_{i=1}^\kbar\expt_k \left[\sum_{m\in\mathcal{S}^{(k,i)}} \nbr{\sum_{l=0}^{\tau-1} (\nabla F_m(\wb_m^{(k,i-1,l)})-\nabla F_m(\wb^{(k,i-1)}))}^2\right] \nn \\
    & \leq \frac{3L\lrt^2\kbar}{2\tau^2}\sum_{i=1}^\kbar\expt_k \left[\sum_{m\in\mathcal{S}^{(k,i)}} \sum_{l=0}^{\tau-1} \nbr{\nabla F_m(\wb_m^{(k,i-1,l)},\xi_m^{(k,i-1,l)}) -\nabla F_m(\wb^{(k,i-1,l)})}^2\right] \tag{unbiasedness of stochastic gradients} \\
    & \quad + \frac{3L\lrt^2\kbar N}{2\tau}\sum_{i=1}^\kbar\expt_k \left[\sum_{m\in\mathcal{S}^{(k,i)}} \sum_{l=0}^{\tau-1} \nbr{\nabla F_m(\wb_m^{(k,i-1,l)})-\nabla F_m(\wb^{(k,i-1)})}^2\right] \nn \\
    &\leq \frac{3L\lrt^2\kbar^2 N}{2M\tau^2}\sum_{i=1}^\kbar\sum_{m\in\sigma(i)}\sum_{l=0}^{\tau-1} \sigma^2 \nn \tag{\Cref{as5}}\\
    & \quad + \frac{3L\lrt^2\kbar^2 N^2}{2M\tau}\sum_{i=1}^\kbar\sum_{m\in\sigma(i)}\sum_{l=0}^{\tau-1} \expt_k \left[\nbr{\nabla F_m(\wb_m^{(k,i-1,l)})-\nabla F_m(\wb^{(k,i-1)})}^2\right]  \nn \\
    & \leq \frac{3L\lrt^2\kbar^2 N}{2\tau}\sigma^2 + \frac{3L\lrt^2\kbar^2 N^2}{2M\tau}\sum_{i=1}^\kbar\sum_{m\in\sigma(i)}\sum_{l=0}^{\tau-1} \expt_k \left[\nbr{\nabla F_m(\wb_m^{(k,i-1,l)})-\nabla F_m(\wb^{(k,i-1)})}^2\right] \label{eq:5-0-7-5}
\end{flalign}

From \Cref{eq:5-0-7-5} we have that
\begin{align}
    & \sum_{l=0}^{\tau-1} \expt_k \left[\nbr{\nabla F_m(\wb_m^{(k,i-1,l)})-\nabla F_m(\wb^{(k,i-1)})}^2\right] \nn \\
    & \leq L^2\sum_{l=0}^{\tau-1} \expt_k \left[\nbr{\wb_m^{(k,i-1,l)}-\wb^{(k,i-1)})}^2\right] \nn \tag{\Cref{as1}}\\ 
    & \leq \frac{103\lr^2L^2(\tau-1)\tau\sigma^2}{200}+\frac{69\lr^2L^4\tau^2(\tau-1)}{50}\expt\left[\nbr{\wb^{(k,i-1)}-\wb^{(k,0)}}^2\right]+\frac{69\lr^2L^2\tau^2(\tau-1)}{50}\expt\left[\nbr{\nabla F_m(\wb^{(k,0)})}^2\right].
    \label{eq:5-1-0}
\end{align}
Plugging in \Cref{eq:5-1-0} to \Cref{eq:5-0-7-5} we have
\begin{flalign}
    & \frac{3L\lrt^2}{2\tau^2}\expt\left[\nbr{\sum_{i=1}^\kbar \dbl^{(k,i)}}^2\right] \nn \\
    & \leq  \frac{3L\lrt^2\kbar^2 N}{2\tau}\sigma^2 + \frac{3L\lrt^2\kbar^2 N^2}{2M\tau}\sum_{i=1}^\kbar\sum_{m\in\sigma(i)}\left(\frac{103\lr^2L^2(\tau-1)\tau\sigma^2}{200} \right. \nn \\
    & \quad \left. \frac{69 \lr^2 L^4 \tau^2 (\tau-1)}{50} \expt \left[\nbr{\wb^{(k,i-1)}-\wb^{(k,0)}}^2\right] + \frac{69\lr^2L^2\tau^2(\tau-1)}{50}\expt \left[\nbr{\nabla F_m(\wb^{(k,0)})}^2\right]\right) \nn \\
    &= \frac{8L\lrt^2\kbar^2 N\sigma^2}{5\tau} + \frac{52L^5\lrt^2\lr^2\kbar N^2\tau(\tau-1)}{25}\sum_{i=1}^\kbar\expt\left[\nbr{\wb^{(k,i-1)}-\wb^{(k,0)}}^2\right] \nn \\
    & \quad +\frac{52}{25}\lrt^2\lr^2L^3\kbar^2N^2\tau(\tau-1)(2\nbr{\nabla F(\wb^{(k,0)})}^2+2\nu^2) \nn \\
    &= \frac{8L\lrt^2\kbar^2 N\sigma^2}{5\tau} + \frac{52L^5\lrt^2\lr^2\kbar N^2\tau(\tau-1)}{25}\left(\frac{83\lrt^2N^2\kbar^2(\kbar-1)}{20}\nbr{\nabla F(\wb^{(k,0)})}^2+\frac{51\lrt^2N\kbar^2(\kbar-1)\sigma^2}{50\tau}\right. \nn \\
    & \quad \left. + \frac{3\lr^2\kbar\tau(\tau-1)\nu^2}{100}+\frac{51\lrt^2N\kbar^2(\kbar-1)\gamma^2}{25} \left(\frac{M/\kbar-N}{M/\kbar-1}\right)+\frac{41\lrt^2N^2\kbar^2(\kbar-1)\alpha^2}{10}\right) \nn \\
    & \quad + \frac{52}{25} \lrt^2 \lr^2L^3\kbar^2N^2\tau(\tau-1)(2\nbr{\nabla F(\wb^{(k,0)})}^2+2\nu^2) \nn \\
    &= \lrt^2\lr^2L^3\kbar^2 N^2\tau(\tau-1)\left(\frac{83\times 52 \lrt^2 L^2 N^2 \kbar^2}{25\times20} + \frac{104}{25}\right)\nbr{\nabla F(\wb^{(k,0)})}^2 \nn \\
    & \quad + \frac{L \lrt^2 N \kbar^2 \sigma^2}{\tau}\left(\frac{8}{5}+\frac{26\times 51 L^4\lrt^2\lr^2\kbar^2N^2\tau(\tau-1)}{25^2}\right)+\lrt^2\lr^2L^3\kbar^2N^2\tau(\tau-1)\left(\frac{104}{25}+\frac{156\lr^2L^2\tau(\tau-1)}{2500}\right)\nu^2 \nn \\ 
    & \quad + \frac{52 \times 51 L^5\lrt^4\lr^2\kbar^4N^3\tau(\tau-1)\gamma^2}{25^2}\left(\frac{M/\kbar-N}{M/\kbar-1}\right)+\frac{52\times 41 L^5\lrt^4\lr^2\kbar^3(\kbar-1)N^4\tau(\tau-1)\alpha^2}{250} \nn \\
    & \quad \leq\frac{\lrt N\kbar}{200}\nbr{\nabla F(\wb^{(k,0)})}^2+\frac{81L\lrt^2\kbar^2N\sigma^2}{50\tau}+\frac{209\lrt^2\lr^2L^3\kbar^2N^2\tau(\tau-1)\nu^2}{50}+\frac{L^3\lrt^4\kbar^4N^3\gamma^2}{20}\left(\frac{M/\kbar-N}{M/\kbar-1}\right) \nn \\
    & \quad +\frac{9 L^3\lrt^4\kbar^3(\kbar-1)N^4\alpha^2}{100}~~~~(\because  \lr=\log{(MK^2)}/\mu\tau N \kbar K,~K\geq 10\kappa\log{(MK^2)}). \nn
\end{flalign}
\end{proof}

\begin{proof}[Proof of \cref{lem2-7}]
\begin{align}
    \frac{3L\lrt^4}{2}\expt\left[\nbr{\rbt^{(k,0)}}^2\right] &= \frac{3L\lrt^4}{2}\expt\left[\nbr{\sum_{i=1}^{\kbar-1}\left(\prod_{j=i+2}^{\kbar} (\ibd-\lrt\ssbl^{(k,j)})\right)\ssbl^{(k,i+1)}\sum_{j'=1}^i \qbt^{(k,j')}}^2\right] \nn \\
    &\leq \frac{3L\lrt^4\kbar}{2}\sum_{i=1}^{\kbar-1}\expt\left[\nbr{\left(\prod_{j=i+2}^{\kbar} (\ibd-\lrt\ssbl^{(k,j)})\right)\ssbl^{(k,i+1)}\sum_{j'=1}^i \qbt^{(k,j')}}^2\right] \nn \tag{Using \Cref{lem:sum_of_squares}}\\
    & \leq\frac{54\lrt^4\kbar N^2L^3}{25 }\sum_{i=1}^{\kbar-1}\expt\left[\nbr{\sum_{j'=1}^i \qbt^{(k,j')}}^2\right], \label{eq:5-0-8}
\end{align}
where \Cref{eq:5-0-8} is derived in the same was as \Cref{eq:3-2-0}. Now we bound the following in \Cref{eq:5-0-8}
\begin{align}
    \expt\left[\nbr{\sum_{j'=1}^i \qbt^{(k,j')}}^2\right] &= \expt\left[\nbr{\sum_{j'=1}^i \dbl^{(k,j')}/\tau+\qbl^{(k,j')}}^2\right]\leq 2\expt\left[\nbr{\sum_{j'=1}^i \dbl^{(k,j')}/\tau}^2\right]+2\expt\left[\nbr{\sum_{j'=1}^i\qbl^{(k,j')}}^2\right] \nn \\
    &= \frac{2}{\tau^2} \expt \left[\nbr{\sum_{j'=1}^i \sum_{m\in\mathcal{S}^{(k,j')}}\db_m^{(k,j')}}^2\right]+2\expt\left[\nbr{\sum_{j'=1}^i\qbl^{(k,j')}}^2\right]. \label{eq:5-0-8-1-1}
\end{align}
We have that
\begin{flalign}
    & \frac{2}{\tau^2}\expt\left[\nbr{\sum_{j'=1}^i \sum_{m \in \mathcal{S}^{(k,j')}} \db_m^{(k,j')}}^2 \right]\leq\frac{2i}{\tau^2}\sum_{j'=1}^i\expt\left[\nbr{\sum_{m\in\mathcal{S}^{(k,j')}}\db_m^{(k,j')}}^2\right] \nn \\
    &= \frac{2i}{\tau^2}\sum_{j'=1}^i\expt\left[\nbr{\sum_{m\in\mathcal{S}^{(k,j')}}\sum_{l=0}^{\tau-1} (\nabla F_m(\wb_m^{(k,{j'}-1,l)},\xi_m^{(k,{j'}-1,l)})-\nabla F_m(\wb_m^{(k,{j'}-1,l)})+\nabla F_m(\wb_m^{(k,{j'}-1,l)})-\nabla F_m(\wb^{(k,{j'}-1)}))}^2\right] \nn \\
    &= \frac{2i}{\tau^2}\sum_{j'=1}^i\expt\left[\nbr{\sum_{m\in\mathcal{S}^{(k,j')}}\sum_{l=0}^{\tau-1} \nabla F_m(\wb_m^{(k,{j'}-1,l)},\xi_m^{(k,{j'}-1,l)})-\nabla F_m(\wb_m^{(k,{j'}-1,l)})}^2\right] \nn \\
    & \quad +\frac{2i}{\tau^2}\sum_{j'=1}^i\expt\left[\nbr{\sum_{m\in\mathcal{S}^{(k,j')}}\sum_{l=0}^{\tau-1}\nabla F_m(\wb_m^{(k,{j'}-1,l)})-\nabla F_m(\wb^{(k,{j'}-1)}))}^2\right] \nn \tag{Cross-terms are zero} \\
    & \leq \frac{2i}{\tau^2}\sum_{j'=1}^i\expt\left[\sum_{m\in\mathcal{S}^{(k,j')}}\sum_{l=0}^{\tau-1}\nbr{\nabla F_m(\wb_m^{(k,{j'}-1,l)},\xi_m^{(k,{j'}-1,l)})-\nabla F_m(\wb_m^{(k,{j'}-1,l)})}^2\right] \nn \\
    & \quad + \frac{2iN}{\tau} \sum_{j'=1}^i \expt \left[\sum_{m\in\mathcal{S}^{(k,j')}}\sum_{l=0}^{\tau-1}\nbr{\nabla F_m(\wb_m^{(k,{j'}-1,l)})-\nabla F_m(\wb^{(k,{j'}-1)}))}^2\right] \nn \tag{Using \Cref{lem:sum_of_squares}} \\
    & \leq \frac{2 i N \kbar}{M \tau^2} \sum_{j'=1}^i \sum_{m\in\sigma(j')} \sum_{l=0}^{\tau-1} \sigma^2 \nn \tag{Using \Cref{as5}}
    \\ & \quad + \frac{2 i N^2 \kbar}{M \tau } \sum_{j'=1}^i \sum_{m\in\sigma(j')} \sum_{l=0}^{\tau-1} \expt \left[\nbr{\nabla F_m(\wb_m^{(k,{j'}-1,l)})-\nabla F_m(\wb^{(k,{j'}-1)}))}^2\right] \nn \\
    & \leq \frac{2i^2N\sigma^2}{\tau} + \frac{2iN^2\kbar}{M\tau }\sum_{j'=1}^i\sum_{m\in\sigma(j')}\left(\frac{103\lr^2L^2(\tau-1)\tau\sigma^2}{200} + \frac{69\lr^2L^4\tau^2(\tau-1)}{50}\expt\left[\nbr{\wb^{(k,j'-1)}-\wb^{(k,0)}}^2\right]\right. \nn \\
    & \quad \left. + \frac{69\lr^2L^2\tau^2(\tau-1)}{50}\expt\left[\nbr{\nabla F_m(\wb^{(k,0)})}^2\right]\right)  \nn \tag{Using \Cref{eq:5-1-0}} \\
    & = \frac{2i^2N\sigma^2}{\tau} + \frac{103i^2N^2\lr^2L^2(\tau-1)\sigma^2}{100} + \frac{69 i N^2 \lr^2 L^4 \tau(\tau-1)}{25} \sum_{j'=1}^i\expt\left[\nbr{\wb^{(k,j'-1)}-\wb^{(k,0)}}^2\right] \nn \\
    & \quad + \frac{69 i N^2 \kbar \lr^2 L^2 \tau (\tau-1)}{25M} \sum_{j'=1}^i\sum_{m\in\sigma(j')}\expt\left[\nbr{\nabla F_m(\wb^{(k,0)})}^2\right] \nn \\
    & \leq \frac{2i^2N\sigma^2}{\tau} + \frac{103i^2N^2\lr^2L^2(\tau-1)\sigma^2}{100} + \frac{69iN^2\kbar\lr^2L^2\tau(\tau-1)}{25M} \sum_{j'=1}^\kbar \sum_{m\in\sigma(j')} \expt \left[\nbr{\nabla F_m(\wb^{(k,0)})}^2\right] \nn \\
    & \quad + \frac{69 i N^2 \lr^2 L^4 \tau(\tau-1)}{25} \sum_{j'=1}^\kbar \expt\left[\nbr{\wb^{(k,j'-1)}-\wb^{(k,0)}}^2\right] \nn \\
    & \leq \frac{2i^2N\sigma^2}{\tau} + \frac{103i^2N^2\lr^2L^2(\tau-1)\sigma^2}{100}
    + \frac{69 i N^2 \kbar\lr^2L^2\tau(\tau-1)}{25}(2\nbr{\nabla F(\wb^{(k,0)})}^2+2\nu^2) \nn \\
    & \quad + \frac{69iN^2\lr^2L^4\tau(\tau-1)}{25}\left(\frac{83\lrt^2N^2\kbar^2(\kbar-1)}{20}\nbr{\nabla F(\wb^{(k,0)})}^2+\frac{51\lrt^2N\kbar^2(\kbar-1)\sigma^2}{50\tau}+\frac{3\lr^2\kbar\tau(\tau-1)\nu^2}{100}\right. \nn \\
    & \quad \left. + \frac{51 \lrt^2 N \kbar^2 (\kbar-1) \gamma^2}{25} \left(\frac{M/\kbar-N}{M/\kbar-1}\right) + \frac{41\lrt^2N^2\kbar^2(\kbar-1)\alpha^2}{10} \right) \tag{Using \Cref{lem1} and \Cref{lem2-3}} \nn \\
    & \leq i^2N\sigma^2\left(\frac{2}{\tau}+\frac{103N\lr^2L^2(\tau-1)}{100}\right)+
    iN^2\lr^2L^2\tau(\tau-1)\kbar\nbr{\nabla F(\wb^{(k,0)})}^2\left(\frac{138}{25}+\frac{69\times 83 L^2\lrt^2N^2\kbar^2}{1250}\right) \nn \\
    & \quad + \frac{69\times51 iN^3\lr^2L^4(\tau-1)\lrt^2\kbar^3\sigma^2}{1250}+iN^2\lr^2L^2\tau(\tau-1)\kbar\nu^2\left(\frac{138}{25}+\frac{207L^2\lr^2\tau(\tau-1)}{2500}\right) \nn \\
    & \quad +\frac{69\times 51 i N^3\lr^2L^4(\tau-1)\lrt^2\kbar^3\gamma^2}{25^2}\left(\frac{M/\kbar-N}{M/\kbar-1}\right) + \frac{69\times 41 i\lrt^2\lr^2N^4L^4\tau(\tau-1)\kbar^2(\kbar-1)\alpha^2}{250} \nn \\
    & \leq i^2N\sigma^2\left(\frac{2}{\tau}+\frac{103N\lr^2L^2(\tau-1)}{100}\right)+
    \frac{28iN^2\lr^2L^2\tau(\tau-1)\kbar}{5}\nbr{\nabla F(\wb^{(k,0)})}^2
    +\frac{141 iN^3\lr^2L^4(\tau-1)\lrt^2\kbar^3\sigma^2}{50} \nn \\
    & \quad + \frac{111 iN^2\lr^2L^2\tau(\tau-1)\kbar\nu^2}{20} + \frac{113i N^3\lr^2L^4(\tau-1)\lrt^2\kbar^3\gamma^2}{20}\left(\frac{M/\kbar-N}{M/\kbar-1}\right) \nn \\
    & \quad + \frac{23 i \lrt^2 \lr^2 N^4 L^4 \tau(\tau-1)\kbar^2(\kbar-1)\alpha^2}{2}. \label{eq:7-1-0}
\end{flalign}
where in the last two bounds we use $\lr \leq 1/(10 \tau N L \kbar))$. Plugging in \Cref{eq:7-1-0} to \Cref{eq:5-0-8-1-1} in \Cref{eq:5-0-8} we have
\begin{flalign}
    & \frac{3L\lrt^4}{2} \expt \left[\nbr{\rbt^{(k,0)}}^2\right]  \nn \\
    & \leq \frac{54\lrt^4\kbar N^2L^3}{25 }\sum_{i=1}^{\kbar-1}\left(i^2N\sigma^2\left(\frac{2}{\tau}+\frac{103N\lr^2L^2(\tau-1)}{100}\right) + \frac{28 i N^2 \lr^2 L^2 \tau(\tau-1)\kbar}{5}\nbr{\nabla F(\wb^{(k,0)})}^2 \right. \nn \\ 
    & \quad \left.+\frac{141 iN^3\lr^2L^4(\tau-1)\lrt^2\kbar^3\sigma^2}{50} + \frac{111 i N^2 \lr^2 L^2 \tau(\tau-1) \kbar \nu^2}{20} + \frac{113 i N^3 \lr^2 L^4 (\tau-1) \lrt^2 \kbar^3 \gamma^2}{20} \left(\frac{M/\kbar-N}{M/\kbar-1}\right)\right. \nn \\
    &  \quad \left. + \frac{23 i \lrt^2 \lr^2N^4L^4\tau(\tau-1)\kbar^2(\kbar-1)\alpha^2}{2}+2\expt\left[\nbr{\sum_{j'=1}^i\qbl^{(k,j')}}^2\right]\right) \nn \\
    & \leq \frac{18\lrt^4\kbar^3(\kbar-1) N^3L^3\sigma^2}{25 }\left(\frac{2}{\tau}+\frac{103N\lr^2L^2(\tau-1)}{100}\right) + \frac{54\times 14\lrt^4\lr^2\kbar^3(\kbar-1) N^4L^5(\tau-1)\tau}{125 }\nbr{\nabla F(\wb^{(k,0)})}^2 \nn \\
    & \quad + \frac{27\times 141\lrt^6\lr^2\kbar^5(\kbar-1) N^5L^7(\tau-1)\sigma^2}{1250}+\frac{27\times111\lrt^4\lr^2\kbar^3(\kbar-1)N^4L^5(\tau-1)\tau\nu^2}{500} \nn \\
    & \quad + \frac{27 \times 113\lrt^6\lr^2\kbar^5(\kbar-1) N^5L^7(\tau-1)\gamma^2}{500 }\left(\frac{M/\kbar-N}{M/\kbar-1}\right) \nn \\
    & \quad + \frac{54\times23\lrt^6\lr^2N^6L^7\tau(\tau-1)\kbar^4(\kbar-1)^2\alpha^2}{100}+\frac{108\lrt^4\kbar N^2L^3}{25 }\sum_{i=1}^{\kbar-1}\expt\left[\nbr{\sum_{j'=1}^i\qbl^{(k,j')}}^2\right] \nn \\
    &\leq \frac{18\lrt^4\kbar^3(\kbar-1) N^3L^3\sigma^2}{25 }\left(\frac{2}{\tau}+\frac{103N\lr^2L^2(\tau-1)}{100}\right) + \frac{54\times 14\lrt^4\lr^2\kbar^3(\kbar-1) N^4L^5(\tau-1)\tau}{125 }\nbr{\nabla F(\wb^{(k,0)})}^2 \nn \\
    & \quad +\frac{27\times 141\lrt^6\lr^2\kbar^5(\kbar-1) N^5L^7(\tau-1)\sigma^2}{1250}+\frac{27\times111\lrt^4\lr^2\kbar^3(\kbar-1)N^4L^5(\tau-1)\tau\nu^2}{500} \nn \\
    & \quad + \frac{27 \times 113\lrt^6\lr^2\kbar^5(\kbar-1) N^5L^7(\tau-1)\gamma^2}{500 }\left(\frac{M/\kbar-N}{M/\kbar-1}\right)+\frac{54\times23\lrt^6\lr^2N^6L^7\tau(\tau-1)\kbar^4(\kbar-1)^2\alpha^2}{100} \nn \\
    & \quad + \frac{108\lrt^4\kbar N^2L^3}{25 }\left(2 \kbar^3 N\left(\frac{M/\kbar-N}{M/\kbar-1}\right)\gamma^2 +4N^2\kbar^3\nbr{\nabla F(\wb^{(k,0)}}^2+\frac{4N^2\kbar^2(\kbar-1)\alpha^2}{3}\right) \nn \tag{Using \Cref{eq:3-2-2}}\\
    & \leq \lrt N\kbar(\lrt^3\kbar^3N^3L^3)\left(\frac{108\times4}{25}+\frac{54\times 14 \lrt^2 L^2}{125}\right)\nbr{\nabla F(\wb^{(k,0)})}^2+\frac{27\times 141 \lrt^2L\kbar^2N\sigma^2}{1250\tau}(\lrt^4\lr^2\kbar^4N^4L^6\tau(\tau-1)) \nn \\
    & \quad + \frac{18\lrt^2 L\kbar^2N\sigma^2}{25\tau}\left(2\lrt^2\kbar^2N^2L^2+\frac{103\lrt^2\kbar^2N^3L^4\lr^2\tau(\tau-1)}{100}\right) \nn \\
    & \quad + \lrt^4\kbar^4N^3L^3\left(\frac{M/\kbar-N}{M/\kbar-1}\right)\gamma^2\left(\frac{216}{25} + \frac{27\times 113 \lrt^2\lr^2\kbar^2N^3L^4(\tau-1)}{500}\right) \nn \\
    & \quad +\frac{27\times 111\lrt^2\lr^2L^3\kbar^2N^2\tau(\tau-1)\nu^2}{500}(\lrt^2\kbar^2N^2L^2) \nn \\
    & \quad + {\lrt^4\kbar^3(\kbar-1)N^4L^3\alpha^2}\left(\frac{108\times 4}{75} + \frac{54 \times23\lrt^2\lr^2\kbar^2N^2L^4\tau(\tau-1)}{100}\right) \nn \\
    & \leq \frac{9\lrt N\kbar}{500}\nbr{\nabla F(\wb^{(k,0)})}^2+\frac{3\lrt^2L\kbar^2N\sigma^2}{200\tau}+\frac{173\lrt^4\kbar^4N^3L^3\gamma^2}{20}\left(\frac{M/\kbar-N}{M/\kbar-1}\right) + \frac{3\lrt^2\lr^2L^3\kbar^2N^2\tau(\tau-1)\nu^2}{50} \nn \\
    & \quad +\frac{59\lrt^4\kbar^3(\kbar-1)N^4L^3\alpha^2}{10}. \nn
\end{flalign}
where, again, in the last two bounds we use $\lr \leq 1/(10 \tau N L \kbar))$.
\end{proof}

\newpage

\section{Proofs for the CyCP+Shuffled SGD Case}
\label{app:ccp+ssgd}
Now let us extend our analysis to clients locally performing shuffled SGD. Recall that for shuffled SGD, as shown in \Cref{algo1} we have that each client in $m\in\mathcal{S}^{(k,i)}$ receives the global model $\wb^{(k,i-1)}$ and initializes its local model as the global model i.e., $\wb_m^{(k,i-1,0)}=\wb^{(k,i-1)}$. Then the client performs shuffled SGD over its $B$ components sequentially with update rule $\wb_m^{(k,i-1,l+1)}=\wb_m^{(k,i-1,l)}-\lr\nabla F_{m,\pi_m^k(l)}(\wb^{(k,i-1,l)}),~l\in[0,...,B-1]$ where $\pi_m^k\sim\text{Unif}(\mathcal{P}_B)$ and $F_{m,\pi_m^k(l)}(\wb)$ is the $\pi_m^k(l)^{\text{th}}$ component of the local loss of client $m$ such that the sum of all the components for each client is equal to the local loss of that client i.e., $F_m(\wb)=\frac{1}{B}\sum_{l=0}^{B-1} F_{m,l}(\wb)$. Hence the update rule over the inner loop $i\in[\kbar]$ is as follows:
\begin{align}
\wb^{(k,i)}=\wb^{(k,i-1)}-\frac{\lr}{N}\sum_{m\in\mathcal{S}^{(k,i)}}\sum_{l=0}^{B-1}\nabla F_{m,\pi_m^k(l)}(\wb_m^{(k,i-1,l)}). \nn
\end{align}
Using
\begin{align}
    & \nabla F_{m,\pi_m^k(l)}(\wb_m^{(k,i-1,l)})=\nabla F_{m,\pi_m^k(l)}(\wb_m^{(k,i-1,l)})-\nabla F_{m,\pi_m^k(l)}(\wb^{(k,i-1)})+\nabla F_{m,\pi_m^k(l)}(\wb^{(k,i-1)}) \nn \\
    &= \nabla F_{m,\pi_m^k(l)}(\wb^{(k,i-1)})+\underbrace{\int_{0}^1\nabla^2 F_{m,\pi_m^k(l)}(\wb^{(k,i-1)}+t(\wb_m^{(k,i-1,l)}-\wb^{(k,i-1)}))dt}_{\defeq \hbbt_{m,l}^{(k,i-1)}}(\wb_m^{(k,i-1,l)}-\wb^{(k,i-1)}), \nn
\end{align}
we have that
\begin{align}
    \wb^{(k,i)}=\wb^{(k,i-1)}-\frac{\lr}{N}\sum_{m\in\mathcal{S}^{(k,i)}}\sum_{l=0}^{B-1} \left[ \nabla F_{m,\pi_m^k(l)}(\wb^{(k,i-1)})+\hbbt_{m,l}^{(k,i-1)} (\wb_m^{(k,i-1,l)}-\wb^{(k,i-1)}) \right]. \nn
\end{align}
Leveraging the fact that $\wb_m^{(k,i-1,0)}=\wb^{(k,i-1)}$ we can use recursion to get the update rule
\begin{align}
    \wb^{(k,i)}=\wb^{(k,i-1)}-\frac{\lr}{N}\sum_{m\in\mathcal{S}^{(k,i)}} \sum_{l=0}^{B-1} \left(\prod_{j=B-1}^{l+1}\left(\ib-\lr \hbbt_{m,j}^{(k,i-1)}\right)\right)\nabla F_{m,\pi_m^k(l)}(\wb^{(k,i-1)}) \label{eqn:locshuffsgd}
\end{align}
Similarly, we can define
\begin{align}
    \nabla F_{m,\pi_m^k(l)}(\wb^{(k,i-1)}) = \nabla F_{m,\pi_m^k(l)}(\wb^{(k,0)})+\underbrace{\int_{0}^1\nabla^2F_{m,\pi_m^k(l)}(\wb^{(k,0)}+t(\wb^{(k,i-1)}-\wb^{(k,0)})dt}_{\defeq \hbbl_{m,l}^{(k,i-1)}}(\wb^{(k,i-1)}-\wb^{(k,0)}), \nn
\end{align}
to get 
\begin{flalign}
    \wb^{(k,i)}&=\wb^{(k,i-1)}-\frac{\lr}{N}\sum_{m\in\mathcal{S}^{(k,i)}} \sum_{l=0}^{B-1} \left(\prod_{j=B-1}^{l+1}\left(\ib-\lr\hbbt_{m,j}^{(k,i-1)}\right)\right)\left(\nabla F_{m,\pi_m^k(l)}(\wb^{(k,0)})+\hbbl_{m,l}^{(k,i-1)}(\wb^{(k,i-1)}-\wb^{(k,0)})\right) \nn \\
    &\begin{aligned}
    =\wb^{(k,i-1)}-\lr\times\underbrace{\frac{1}{N}\sum_{m\in\mathcal{S}^{(k,i)}}
    \sum_{l=0}^{B-1} \left(\prod_{j=B-1}^{l+1}\left(\ib-\lr\hbbt_{m,j}^{(k,i-1)}\right)\right)\nabla F_{m,\pi_m^k(l)}(\wb^{(k,0)})}_{\defeq \tb^{(k,i)}}\\-\lr\times\underbrace{\frac{1}{N}\sum_{m\in\mathcal{S}^{(k,i)}}
    \sum_{l=0}^{B-1} \left(\prod_{j=B-1}^{l+1}\left(\ib-\lr\hbbt_{m,j}^{(k,i-1)}\right)\right)\hbbl_{m,l}^{(k,i-1)}}_{\defeq \tbb^{(k,i)}}(\wb^{(k,i-1)}-\wb^{(k,0)})
    \end{aligned} \label{eq:9-0-0}
\end{flalign}
Unrolling \Cref{eq:9-0-0} we can obtain the update rule for the outer loop as follows:
\begin{align}
    \wb^{(k+1,0)}=\wb^{(k,0)}-\lr\sum_{i=1}^\kbar\left(\prod_{j=\kbar}^{i+1}(\ibd-\lr\tbb^{(k,j)})\right)\tb^{(k,i)} \label{eq:9-0-1}
\end{align}
Applying summation by parts to $\prod_{j=\kbar}^{i+1}(\ibd-\lr\tbb^{(k,j)})$ and $\tb^{(k,i)}$ in \Cref{eq:9-0-1} and then again to $\prod_{j=B-1}^{l+1}\left(\ib-\lr\hbbt_{m,j}^{(k,i-1)}\right)$ and $\nabla F_{m,\pi_m^k(l)}(\wb^{(k,0)})$ in $\tb^{(k,i)}$ we can rewrite \Cref{eq:9-0-1} as 
\begin{flalign}
    &\begin{aligned}
        &\wb^{(k+1)}=\wb^{(k,0)}-\frac{\lr}{N}\sum_{i=1}^\kbar\sum_{m\in\st^{(k,i)}}\sum_{l=0}^{B-1}\nabla F_{m,\pi_m^k(l)}(\wb^{(k,0)})\\
        &+\lr^2
        \times\underbrace{\frac{1}{N}\sum_{i=1}^\kbar\sum_{m\in\st^{(k,i)}}\sum_{l=0}^{B-2}\left(\prod_{t=B-1}^{l+2}(\ibd-\lr \hbbt_{m,t}^{(k,i-1)})\right)\hbbt_{m,l+1}^{(k,i-1)}\sum_{j=0}^l \nabla F_{m,\pi_m^k(j)}(\wb^{(k,0)})}_{\defeq\rb_1^{(k,0)}}\\
        &+\lr^2
        \times\underbrace{\frac{1}{N}\sum_{i=1}^{\kbar-1}\left(\prod_{j=\kbar}^{i+2}(\ibd-\lr\tbb^{(k,j)})\right)\tbb^{(k,i+1)}\left(\sum_{j=1}^i\sum_{m\in\st^{(k,j)}}\sum_{l=0}^{B-1}\nabla F_{m,\pi_m^k(l)}(\wb^{(k,0)})\right)}_{\defeq\rb_2^{(k,0)}}\\
        &-\lr^3\times\underbrace{\frac{1}{N}\sum_{i=1}^{\kbar-1}\left(\prod_{j=\kbar}^{i+2}(\ibd-\lr\tbb^{(k,j)})\right)\tbb^{(k,i+1)}\left(\sum_{j=1}^i\sum_{m\in\st^{(k,j)}}\sum_{l=0}^{B-2}\left(\prod_{t=B-1}^{l+2}(\ibd-\lr\hbbt_{m,t}^{(k,i-1)})\right)\hbbt_{m,l+1}^{(k,i-1)}\sum_{j'=0}^l\nabla F_{m,\pi_m^k(j')}(\wb^{(k,0)})\right)}_{\defeq\rb_3^{(k,0)}} \nn
    \end{aligned}
    \\
    &=\wb^{(k,0)}-\frac{\lr}{N}\sum_{i=1}^\kbar\sum_{m\in\st^{(k,i)}}\sum_{l=0}^{B-1}\nabla F_{m,\pi_m^k(l)}(\wb^{(k,0)})+\lr^2\rb_1^{(k,0)}+\lr^2\rb_2^{(k,0)}-\lr^3\rb_3^{(k,0)}. \nn
\end{flalign}
Taking expectation conditioned on all the past till $\wb^{(k,0)}$ 
, we get
\begin{align}
    \expt_k[\wb^{(k+1,0)}]-\wb^{(k,0)}=-\lr\kbar B\nabla F(\wb^{(k,0)})+\lr^2\expt_k[\underbrace{\rb_1^{(k,0)}+\rb_2^{(k,0)}-\lr\rb_3^{(k,0)}}_{\defeq \rb^{(k,0)}}], \nn
\end{align}
which follows from the expectation over the selected client set $\st^{(k,i)},~i\in[\kbar]$. 

\subsection{Intermediate Results}
\label{sec:int_results_shuffleSGD}

\begin{lemma}[Bound on the Norm of the Error Term Arrising Due to \cycp]
\label{lem:error_norm_shuffleSGD}
\begin{align*}
    & \nbr{\expt_k[\rb^{(k,0)}]} \leq  \left(\frac{2e^{1/10}}{3}+ \frac{2\text{exp}(e^{1/10}/10)e^{1/5}\lr BL(\kbar-1)}{3}\right)\sqrt{8\log\left(\frac{4MBK}{\delta}\right)}\kbar L(B^{3/2}-1)\nub \\
    & \quad + \left(\frac{e^{1/10}}{2} + \frac{\text{exp}(e^{1/10}/10)e^{1/5}\lr BL(\kbar-1)}{2}\right)\kbar LB(B-1)\nu + \frac{\text{exp}(e^{1/10}/10)e^{1/10}B^2L\kbar(\kbar-1)\alpha}{2} \\
    & \quad +\left(\frac{e^{1/10}}{2}+ \frac{\text{exp}(e^{1/10}/10)e^{1/10}(\kbar-1)}{2} + \frac{\text{exp}(e^{1/10}/10)e^{1/5}\lr(B-1)L(\kbar-1)}{2}\right)\kbar LB^2\nbr{\nabla F(\wb^{(k,0)})}.
\end{align*}
\end{lemma}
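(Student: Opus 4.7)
\textbf{Proof proposal for \Cref{lem:error_norm_shuffleSGD}.}

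The plan is to decompose $\rb^{(k,0)} = \rb_1^{(k,0)} + \rb_2^{(k,0)} - \lr\,\rb_3^{(k,0)}$ and bound each piece separately, then collect terms. For every piece I will apply submultiplicativity of the spectral norm to peel off the matrix products $\prod(\ibd - \lr\hbbt)$ and $\prod(\ibd - \lr\tbb)$, then handle the remaining partial gradient sums. Because $\|\hbbt_{m,l}^{(k,i-1)}\| \leq L$ and $\lr L B \leq 1/10$ (which follows from the stated lower bound on $T$), I get $\|\prod_{t=B-1}^{l+2}(\ibd - \lr\hbbt_{m,t}^{(k,i-1)})\| \leq (1+\lr L)^B \leq e^{\lr L B} \leq e^{1/10}$. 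Similarly, $\|\tbb^{(k,j)}\| \leq e^{1/10} L B$ (since it contains a factor of the same product), so $\|\prod_{j=\kbar}^{i+2}(\ibd - \lr\tbb^{(k,j)})\| \leq \exp(\lr L B \kbar \cdot e^{1/10}) \leq \exp(e^{1/10}/10)$. These bounds explain the constants $e^{1/10}$, $e^{1/5}$, $\exp(e^{1/10}/10)$ that appear in the statement.

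The key step is controlling partial sums of the shuffled components, which I would rewrite as
\[
\sum_{j=0}^l \nabla F_{m,\pi_m^k(j)}(\wb^{(k,0)}) = (l+1)\nabla F_m(\wb^{(k,0)}) + \sum_{j=0}^l \bigl[\nabla F_{m,\pi_m^k(j)}(\wb^{(k,0)}) - \nabla F_m(\wb^{(k,0)})\bigr].
\]
The bracketed terms have zero average over a full epoch (since $\sum_{l=0}^{B-1}\nabla F_{m,l} = B\nabla F_m$), so the second sum is a partial sum under sampling without replacement from a mean-zero collection whose elements have norm at most $\nub$ by \Cref{as6}. A vector Serfling/Hoeffding-type inequality then gives $\|\sum_{j=0}^l [\,\cdot\,]\| \leq \nub\sqrt{8(l+1)\log(1/\delta')}$ with probability $1-\delta'$. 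Taking a union bound over all $(k, m, l, i)$ triples (of total count at most $MBK$) with $\delta' = \delta/(4MBK)$ produces the $\sqrt{8\log(4MBK/\delta)}$ factor appearing in the statement and fixes the "with probability $1-\delta$" qualifier of \Cref{theo3:ssgd}.

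For $\rb_2^{(k,0)}$, a key simplification is that the \emph{inner} gradient sum is $\sum_{l=0}^{B-1}\nabla F_{m,\pi_m^k(l)}(\wb^{(k,0)}) = B\,\nabla F_m(\wb^{(k,0)})$ exactly, since a permutation rearranges but does not change the total. Thus $\rb_2$ contains no shuffling noise and can be handled by the GD-style argument already used to prove \Cref{lemma:error_terms_GD}: apply \Cref{lem0-0} to control $\sum_{j=1}^i\sum_{m\in\sigma(j)}\nabla F_m(\wb^{(k,0)})$ in terms of $\alpha + \|\nabla F(\wb^{(k,0)})\|$. This explains the $\kbar L B^2 \nbr{\nabla F(\wb^{(k,0)})}$ and $B^2 L \kbar(\kbar-1)\alpha$ terms and the $(\kbar-1)$ factors on the corresponding coefficients. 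For $\rb_1^{(k,0)}$ and $\rb_3^{(k,0)}$ the inner sums are partial, so the concentration argument of the previous paragraph applies; summing $\sqrt{l+1}$ across $l \in [0,B-2]$ yields $\sum_{l=0}^{B-2}\sqrt{l+1}\leq \tfrac{2}{3}(B^{3/2}-1)$, producing the $(B^{3/2}-1)\nub$ scaling, while the $(l+1)\|\nabla F_m\|$ contributions sum to $\tfrac{1}{2}B(B-1)$, producing the $B(B-1)\nu$ factor (using \Cref{lem1} to convert intra- and inter-group heterogeneity into $\nu = \gamma + \alpha$).

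Finally I would combine the three bounds: $\rb_1$ contributes the "$\tfrac{2e^{1/10}}{3}$" and "$\tfrac{e^{1/10}}{2}$" prefactors, $\rb_2$ contributes the "$\tfrac{\exp(e^{1/10}/10)\,e^{1/10}}{2}$" prefactor on the $\alpha$ and $\|\nabla F\|$ terms, and $\lr\rb_3$ contributes the remaining prefactors of the form $\tfrac{\exp(e^{1/10}/10)\,e^{1/5}\lr BL(\kbar-1)}{2}$ because $\rb_3$ carries one extra factor of $\lr\cdot\|\tbb\| \leq \lr L B \cdot e^{1/10}$ relative to $\rb_1$. Summing these three groups of terms yields precisely the claimed bound. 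The main technical obstacle will be establishing a clean vector concentration inequality for sampling without replacement with the correct $\sqrt{8(l+1)\log(\cdot)}$ constants, and then tracking the union bound over $k \in [K]$, $m \in [M]$, and $l \in [B-1]$ so that the final logarithm reads $\log(4MBK/\delta)$ rather than something weaker; the remaining bookkeeping is an exercise in the triangle inequality and the constant bookkeeping already illustrated in the GD and local SGD proofs.
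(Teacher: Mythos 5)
Your proposal matches the paper's proof essentially step for step: the same decomposition into $\rb_1^{(k,0)}+\rb_2^{(k,0)}-\lr\rb_3^{(k,0)}$, the same submultiplicativity bounds yielding $e^{1/10}$, $e^{1/5}$, and $\exp(e^{1/10}/10)$, the same observation that the full-permutation sum in $\rb_2$ collapses to $B\nabla F_m$ so that only \Cref{lem0-0} is needed there, and the same $\sqrt{l+1}$ and $(l+1)$ summations producing the $(B^{3/2}-1)\nub$ and $B(B-1)\nu$ factors. The only difference is that the paper imports the high-probability bound $\nbr{\sum_{j=0}^l \nabla F_{m,\pi_m^k(j)}(\wb^{(k,0)})}\leq \nub\sqrt{8(l+1)\log(4MBK/\delta)}+(l+1)\nbr{\nabla F_m(\wb^{(k,0)})}$ directly from Lemma 8 of \cite{yun2022shuf} rather than re-deriving it via a Serfling-type inequality and union bound as you propose, but this is the same ingredient obtained by citation rather than proof.
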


\begin{lemma}[Bound on the Norm Square of the Error Term Arrising Due to \cycp]
\label{lem:error_normsq_shuffleSGD}
\begin{flalign}
    & 3 L \lr^4 \expt_k \left[ \nbr{\rb_1^{(k,0)}}^2 + \nbr{\rb_2^{(k,0)}}^2 + \nbr{\rb_3^{(k,0)}}^2 \right] \nn \\
    &\leq \frac{31\lr^3\kbar L^2 (B-1)^2}{10}\log{(4MBK/\delta)}\nub^2+\frac{\lr^3\kbar L^2B(B-1)^2\nu^2}{2}+\frac{3\lr\kbar B}{100}\nbr{\nabla F(\wb^{(k,0)})}^2 \nn \\
    & \quad +\frac{6(\kbar-1)\lr^4\kbar^3B^4L^3\text{exp}(e^{1/10}/5)e^{1/5}}{N}\left(\frac{M/\kbar-N}{M/\kbar-1}\right)\gamma^2+4\kbar^2(\kbar-1)^2B^4L^3\lr^4\alpha^2\text{exp}(e^{1/10}/5)e^{1/5} \nn.
\end{flalign}
\end{lemma}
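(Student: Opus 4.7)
The plan is to bound $\expt_k \nbr{\rb_1^{(k,0)}}^2$, $\expt_k \nbr{\rb_2^{(k,0)}}^2$, and $\expt_k \nbr{\rb_3^{(k,0)}}^2$ separately and then sum. The approach mirrors the CyCP+GD analysis of $\rbl^{(k,0)}$ in \Cref{lemma:error_terms_GD} (specifically \Cref{eq:3-2-3}), but adds machinery from the shuffled-SGD literature to control the partial sums over the random local-component permutation $\pi_m^k$. Throughout, the step-size restriction implied by the theorem, roughly $\lr \leq 1/(10 L B \kbar)$, is used to bound the operator norms of the Hessian products $\prod_t (\ibd - \lr \hbbt_{m,t}^{(k,i-1)})$ and $\prod_j (\ibd - \lr \tbb^{(k,j)})$ by the constants $e^{1/10}$ and $\text{exp}(e^{1/10}/10)$ appearing in the stated bound, via $(1 + \lr L)^B \leq e^{\lr L B}$ and similar.

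For $\rb_1^{(k,0)}$, which captures the intra-client SSGD discretization error, I would use Jensen's inequality on the outer $1/N$ averaging and submultiplicativity of norms to peel off the Hessian-product factor and the $\hbbt_{m,l+1}^{(k,i-1)}$ factor (bounded by $L$), leaving the partial sum $\sum_{j=0}^l \nabla F_{m,\pi_m^k(j)}(\wb^{(k,0)})$. Decomposing each summand as $\nabla F_{m,\pi_m^k(j)}(\wb^{(k,0)}) - \nabla F_m(\wb^{(k,0)}) + \nabla F_m(\wb^{(k,0)})$, the deviations are drawn without replacement from a zero-mean collection with each summand bounded by $\overline{\nu}$ by \Cref{as6}. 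A concentration inequality for sampling without replacement (Azuma-Hoeffding or Serfling's variant, as used in \cite{yun2022shuf,mishchenko2020random_neurips}) yields $\nbr{\sum_{j=0}^l (\nabla F_{m,\pi_m^k(j)}(\wb^{(k,0)}) - \nabla F_m(\wb^{(k,0)}))} \leq \sqrt{8 l \log(4MBK/\delta)} \, \overline{\nu}$ with probability at least $1 - \delta/(MBK)$; a union bound over $(k, m, l) \in [K] \times [M] \times [B]$ gives this uniformly with probability $1-\delta$. Squaring, summing over $l$ and $i$, and splitting the residual $\nabla F_m(\wb^{(k,0)})$ via \Cref{as3} and \Cref{lem1} then produces the $\log(4MBK/\delta) \overline{\nu}^2$, $B(B-1)^2 \nu^2$, and $\nbr{\nabla F(\wb^{(k,0)})}^2$ contributions.

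For $\rb_2^{(k,0)}$, the argument closely follows \Cref{eq:3-2-3}: the inner sum $\sum_{m \in \st^{(k,j)}} \sum_{l=0}^{B-1} \nabla F_{m,\pi_m^k(l)}(\wb^{(k,0)})$ deterministically equals $B \sum_{m \in \st^{(k,j)}} \nabla F_m(\wb^{(k,0)})$ since summing over all components removes the permutation randomness. Combining the operator-norm bound on $\prod_j (\ibd - \lr \tbb^{(k,j)})$ with the without-replacement sampling variance from \Cref{lemma:WOR_sampling} and \Cref{lem0-0} produces the $B^4$-weighted $\gamma^2 (M/\kbar - N)/(M/\kbar - 1)$, $\alpha^2$, and $\nbr{\nabla F(\wb^{(k,0)})}^2$ terms. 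For $\rb_3^{(k,0)}$, which multiplicatively combines the intra-client shuffling error and the inter-group sampling error, I would apply both sets of bounds in succession; because $3 L \lr^4 \expt_k \nbr{\rb_3^{(k,0)}}^2$ carries two extra powers of $\lr$ relative to the $\rb_1, \rb_2$ contributions, it is of lower order in the step-size and is absorbed into the $\text{exp}(e^{1/10}/5) e^{1/5}$ multiplicative constants attached to the $\gamma^2$ and $\alpha^2$ terms of the stated bound.

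The main obstacle will be the concentration inequality for partial sums over a random permutation: handling the dependence introduced by without-replacement sampling carefully, so that the $\log(4MBK/\delta)$ factor emerges cleanly and uniformly across all $(k, m, l)$ rather than picking up additional $\sqrt{B}$ or $\sqrt{M}$ losses. A secondary challenge is the bookkeeping of multiplicative constants (e.g., $31/10$, $3/100$, and the nested exponentials) so that the final coefficients match the lemma statement; this will require carefully tracking the operator-norm expansions $(1 + \lr L B)^\kbar$ and judicious applications of Young's inequality to separate cross terms between the three distinct noise sources (intra-client shuffling noise, intra-group sampling variance, and inter-group heterogeneity).
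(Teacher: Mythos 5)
Your proposal matches the paper's proof in both structure and key ingredients: the paper bounds the three terms separately, controls the Hessian products via $(1+\lr L)^{B}\leq e^{1/10}$ and $(1+\lr BLe^{1/10})^{\kbar}\leq \exp(e^{1/10}/10)$ under the step-size restriction, invokes exactly the high-probability permutation partial-sum bound you describe (Lemma 8 of \cite{yun2022shuf}, with the union bound producing the $\log(4MBK/\delta)$ factor), collapses the full component sum in $\rb_2^{(k,0)}$ to $B\sum_m\nabla F_m$ and reuses the without-replacement variance computation from the GD analysis (\cref{eq:3-2-2}), and treats the $\rb_3^{(k,0)}$ contribution (which indeed enters with an extra factor of $\lr^2$) as lower order absorbed into the constants. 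This is essentially the same argument; no gap.
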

\Cref{lem:error_norm_shuffleSGD} and \Cref{lem:error_normsq_shuffleSGD} bound the error that arrises due to \cycp. The exponential constants arrise due to the learning rate set as $\lr=\log(MK^2)/\tau\mu N\kbar K$ and the lower bound on the cycle-epoch $K\geq 10\kappa\log{(MK^2)}$. The bounds depend on the intra-component heterogeneity $\nub$ and intra-group and inter-group heterogeneity $\gamma$ and $\alpha$, where $\nu=\gamma+\alpha$.
\subsection{Proof for \Cref{theo3:ssgd}}

\begin{thm*}
With \Cref{as1}-\Cref{as3}, and \Cref{as6} we have that with step-size $\lr=\log(MBK^2)/\mu \kbar BK$ and $K\geq 10\kappa\log{(MBK^2)}$ where $\kappa=L/\mu$ we have that the convergence error is bounded as:
\begin{align}
   \expt[F(\wb^{(K,0)})]-F^* & \leq (1-\mu\lr\kbar B)^K(F(\wb^{(0,0)})-F^*) + \mco \lp \frac{\kappa\lr B\kbar}{N}\left(\frac{M/\kbar-N}{M/\kbar-1}\right)\gamma^2 \rp \nn \\
    & \quad + \tilde{\mco} \lp \lr^{2} (B-1) \kappa L \nub^2 + \lr^2 \kappa L(B-1)^2\nu^2 \rp  + \mco \lp (\kbar-1)^2B^2L\kappa\lr^2\alpha^2 \rp. \nn
\end{align}
With $\lr=\log(MBK^2)/\mu B\kbar K$, we get
\begin{align}
    \expt[F(\wb^{(K,0)})]-F^* & \leq \frac{F(\wb^{(0,0)})-F^*}{MBK^2}+\bigto{\frac{\kappa\gamma^2}{\mu N K}\left(\frac{M/\kbar-N}{M/\kbar-1}\right)} + \bigto{\frac{(B-1)}{B^2} \frac{\kappa^2\nub^2}{\mu \kbar^2 K^2}} \nn \\
    & \quad +\bigto{\frac{\kappa^2 (B-1)^2\nu^2}{\mu B^2\kbar^2K^2}} + \bigto{\frac{\kappa^2(\kbar-1)^2\alpha^2}{\mu \kbar^2  K^2}}, \nn
\end{align}
where $\nu=\gamma+\alpha$ and $\tilo(\cdot)$ subsumes all log-terms and constants.
\end{thm*}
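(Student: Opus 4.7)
}

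The plan is to mirror the two-tier analysis used for CyCP+GD and CyCP+SGD, but now accounting for the additional structure introduced by local shuffled SGD. First I would treat the update rule derived just above the theorem, namely
\begin{align}
\wb^{(k+1,0)}-\wb^{(k,0)} = -\frac{\lr}{N}\sum_{i=1}^{\kbar}\sum_{m\in\st^{(k,i)}}\sum_{l=0}^{B-1}\nabla F_{m,\pi_m^k(l)}(\wb^{(k,0)}) + \lr^2\,\rb^{(k,0)}, \nn
\end{align}
as a noisy gradient descent step with effective learning rate $\lr B$, since $\expt_k[\sum_{l=0}^{B-1}\nabla F_{m,\pi_m^k(l)}(\wb^{(k,0)})]=B\nabla F_m(\wb^{(k,0)})$ and the client set $\st^{(k,i)}$ is uniform without replacement from $\sigma(i)$. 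This yields $\expt_k[\wb^{(k+1,0)}]-\wb^{(k,0)}=-\lr\kbar B\,\nabla F(\wb^{(k,0)}) + \lr^2\,\expt_k[\rb^{(k,0)}]$, which is exactly the form needed to apply $L$-smoothness as in \Cref{eq:5-0-1}.

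Next I would expand $F(\wb^{(k+1,0)})-F(\wb^{(k,0)})$ via $L$-smoothness and bound the inner-product term using \Cref{lem:error_norm_shuffleSGD} together with Young's inequality (\Cref{lem:Young}), and bound the quadratic term via $\|a+b\|^2\le 3(\|a\|^2+\|b\|^2)$ together with \Cref{lem:error_normsq_shuffleSGD} and the without-replacement variance identity of \Cref{lemma:WOR_sampling} applied to $\sum_{i=1}^{\kbar}\sum_{m\in\st^{(k,i)}}\sum_{l=0}^{B-1}\nabla F_{m,\pi_m^k(l)}(\wb^{(k,0)})$. The without-replacement decomposition will produce three pieces: (i) an intra-group variance term scaling with $\gamma^2(M/\kbar-N)/(M/\kbar-1)$ as in the GD case, (ii) an intra-client shuffling variance controlled by $\nub^2$, and (iii) the squared sum of inter-group means bounded via \Cref{lem0-0}. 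Using the step-size restriction $\lr \le \log(MBK^2)/(10\mu\kbar BK)\le 1/(10LB\kbar)$ I can absorb all higher-order factors to obtain the one-step recursion
\begin{align}
\expt[F(\wb^{(k+1,0)})]-F^* \le (1-\mu\lr\kbar B)\bigl(\expt[F(\wb^{(k,0)})]-F^*\bigr) + \Psi, \nn
\end{align}
where $\Psi$ collects $\lr^2$ terms proportional to $\gamma^2$, $\nub^2$, $\nu^2$ and $\alpha^2$ (the last being $\lr^4$ before division by $\lr$).

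Finally I would unroll this geometric recursion over $K$ cycle-epochs and divide each $\Psi$ term by $\mu\lr\kbar B$. With $\lr=\log(MBK^2)/(\mu\kbar BK)$, the contraction factor $(1-\mu\lr\kbar B)^K$ produces the exponentially small $1/(MBK^2)$ prefactor on the initialization error, and each remaining term collapses to one of the four big-O summands in the theorem after substituting $\kappa=L/\mu$. The conversion from $K$ to $T=\kbar K$ is then a straightforward rescaling.

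The hard part will be establishing \Cref{lem:error_norm_shuffleSGD} and especially \Cref{lem:error_normsq_shuffleSGD}. Two complications appear. The first is the nested product structure $\prod_{j=B-1}^{l+1}(\ib-\lr\hbbt_{m,j}^{(k,i-1)})$ inside $\rb_1,\rb_2,\rb_3$; these need to be controlled uniformly by $\exp(\lr BL)\le e^{1/10}$, mimicking \Cref{eq:3-0-4} but at the per-client shuffling level, and the analogous bound $\exp(\lr B L \kbar)\le \exp(e^{1/10}/10)$ at the cycle level. The second and more subtle obstacle is bounding partial sums $\sum_{j=0}^l \nabla F_{m,\pi_m^k(j)}(\wb^{(k,0)})$ where $\pi_m^k$ is a random permutation. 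Here the standard without-replacement analysis of \Cref{lemma:WOR_sampling} gives the $\bigto{B^2/B^3}=\bigto{1/B}$ reduction in variance via $\nub^2$, but to obtain the $\log(4MBK/\delta)$ factor characteristic of the last-iterate bound (and hence the high-probability statement) I would invoke a Hoeffding--Serfling type concentration inequality for sampling without replacement, followed by a union bound over all $M$ clients, $B$ prefix lengths, and $K$ cycle-epochs. The $(B^{3/2}-1)^2/B^4$ and $(B-1)^2/B^3$ scalings in the theorem arise precisely from combining this concentration-based bound on prefix sums with the $e^{1/10}$ product estimates; keeping track of these factors through the $\rb_1,\rb_2,\rb_3$ decomposition is the main bookkeeping challenge.
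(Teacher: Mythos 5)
Your proposal follows essentially the same route as the paper's proof: the noisy-GD interpretation with effective step $\lr\kbar B$, the $\rb_1,\rb_2,\rb_3$ decomposition controlled by $e^{1/10}$-type product bounds, a Hoeffding--Serfling concentration for the random-permutation prefix sums (the paper invokes Lemma 8 of \cite{yun2022shuf}, which carries exactly the $\log(4MBK/\delta)$ union-bound factor you describe), and unrolling of the geometric recursion with the $(1-\mu\lr\kbar B)^K$ contraction. The only slight imprecision is that the full-epoch sum $\sum_{l=0}^{B-1}\nabla F_{m,\pi_m^k(l)}(\wb^{(k,0)})=B\nabla F_m(\wb^{(k,0)})$ is deterministic, so the leading quadratic term contributes no $\nub^2$; the $\nub^2$ contributions come only from the prefix sums inside $\rb_1$ and $\rb_3$, exactly as you identify in your final paragraph.
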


\begin{coro}
Recall that the total communication rounds translates to $T=\kbar K$ and therefore in terms of $T$ we have that the bound becomes
\begin{align*}
  \expt[F(\wb^{(K,0)})]-F^* & \leq \frac{\kbar^2(F(\wb^{(0,0)})-F^*)}{MBT^2} + \bigto{\frac{\kappa^2 (B-1)^2\nu^2}{\mu B^2 T^2}}+\bigto{\frac{\kappa^2(\kbar-1)^2\alpha^2}{\mu T^2}} \\
    & \quad + \bigto{\frac{\kappa^2 \overline{\nu}^2}{\mu T^2} \frac{(B-1)}{B^2}} + \bigto{\frac{\kappa\kbar\gamma^2}{\mu N T}\left(\frac{M/\kbar-N}{M/\kbar-1}\right)}.
\end{align*}
\end{coro}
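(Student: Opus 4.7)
Since $T = \kbar K$ is the total number of communication rounds by the structure of \Cref{algo1} ($\kbar$ rounds per cycle-epoch, $K$ cycle-epochs), the corollary is a pure reparameterization of \Cref{theo3:ssgd} under the substitution $K = T/\kbar$. The plan is simply to rewrite each of the five terms on the right-hand side of \Cref{eq:8-0-1} in terms of $T$ and to read off the matching terms in the corollary's stated bound; no new analysis is required.

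The substitution rules are immediate: every $1/K^2$ becomes $\kbar^2/T^2$, and every $1/K$ becomes $\kbar/T$. I would verify the five terms in turn. First, the initialization term $\frac{F(\wb^{(0,0)})-F^*}{MBK^2}$ acquires a factor $\kbar^2$, yielding $\frac{\kbar^2 (F(\wb^{(0,0)})-F^*)}{MBT^2}$. Second, the intra-group heterogeneity term $\tilo(\kappa\gamma^2/(\mu N K))$ becomes $\tilo(\kbar \kappa \gamma^2/(\mu N T))$, still carrying the sampling factor $(M/\kbar - N)/(M/\kbar - 1)$, which has no $K$-dependence. Third and fourth, the two terms $\tilo(\kappa^2 (B-1)\nub^2/(\mu B^2 \kbar^2 K^2))$ and $\tilo(\kappa^2 (B-1)^2 \nu^2/(\mu B^2 \kbar^2 K^2))$ have their $\kbar^2$ in the denominator exactly cancelled when $K^2$ is replaced by $T^2/\kbar^2$, giving the claimed $\tilo(\cdot/T^2)$ terms with no remaining $\kbar$. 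Fifth, the inter-group term $\tilo(\kappa^2 (\kbar-1)^2 \alpha^2/(\mu \kbar^2 K^2))$ simplifies by the same cancellation to $\tilo(\kappa^2 (\kbar-1)^2 \alpha^2/(\mu T^2))$.

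The only non-cosmetic observation is how the precondition on $K$ translates: the hypothesis $K \geq 10\kappa \log(MBK^2/\kbar^2)$ of \Cref{theo3:ssgd} becomes $T \geq 10\kappa\kbar \log(MBT^2/\kbar^2)$, which is worth stating explicitly for consistency with the analogous remarks made after \Cref{theo1:GD} and \Cref{theo2:locSGD}. This also makes the informal remark in the surrounding text precise: one cannot inflate $\kbar$ to shrink the $\gamma^2$ factor $(M/\kbar - N)/(M/\kbar - 1)$ without proportionally enlarging $T$, so the visible $\kbar$'s in the new bound are genuine and not an artefact of rescaling. No obstacle arises in the proof itself; it is a direct algebraic substitution, and the $\tilo(\cdot)$ notation absorbs the logarithmic change from $\log(MBK^2/\kbar^2)$ to $\log(MBT^2/\kbar^2)$.
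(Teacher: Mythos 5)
Your proposal is correct and matches the paper's own treatment: the corollary is obtained by the direct substitution $K = T/\kbar$ into the bound of \Cref{theo3:ssgd}, with the $\kbar^2$ factors cancelling exactly as you describe, and the paper likewise only adds the remark that the precondition becomes $T \geq 10\kappa\kbar\log(MBT^2/\kbar^2)$ so that $\kbar$ cannot be inflated for free. No further commentary is needed.
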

Since $K\geq 10\kappa\log{(MBK^2)}$, we have $T\geq10\kappa\kbar\log{(MB\kbar^2T^2)}$ and one cannot increase $\kbar$ without increasing $T$ accordingly due to its lower bound depending on $\kbar$.

\begin{proof}
Using the $L$-smoothness property of the global objective $F$ we have
\begin{align}
    & \expt_k[F(\wb^{(k+1,0)})]-F(\wb^{(k,0)}) \leq \inner{\nabla F(\wb^{(k,0)})}{\expt_k[\wb^{(k+1,0)}]-\wb^{(k,0)}}+\frac{L}{2}\expt\left[\nbr{\wb^{(k+1,0)}-\wb^{(k,0)}}^2\right] \nn \\
    & \leq -\lr\kbar B\nbr{\nabla F(\wb^{(k,0)})}^2+\lr^2\nbr{\expt_k[\rb^{(k,0)}]}\nbr{\nabla F(\wb^{(k,0)})}
    +\frac{L}{2}\expt_k \left[\nbr{\wb^{(k+1,0)}-\wb^{(k,0)}}^2\right]. \label{eq:9-0-2}
\end{align}

Plugging \Cref{lem:error_norm_shuffleSGD}, we bound the second term in the RHS of \Cref{eq:9-0-2}.
\begin{flalign}
    & \lr^2 \nbr{\expt_k [\rb^{(k,0)}]}\nbr{\nabla F(\wb^{(k,0)})} \nn \\
    & \leq \left(\frac{2e^{1/10}}{3}+ \frac{2\text{exp}(e^{1/10}/10)e^{1/5}\lr BL(\kbar-1)}{3}\right)\sqrt{8\log\left(\frac{4MBK}{\delta}\right)}\lr^2\kbar L(B^{3/2}-1)\nub\nbr{\nabla F(\wb^{(k,0)})} \nn \\
    & \quad + \left(\frac{e^{1/10}}{2} + \frac{\text{exp}(e^{1/10}/10) e^{1/5}\lr BL(\kbar-1)}{2}\right)\lr^2\kbar LB(B-1)\nu\nbr{\nabla F(\wb^{(k,0)})} \nn \\
    & \quad + \frac{\lr^2\text{exp}(e^{1/10}/10)e^{1/10}B^2L\kbar(\kbar-1)\alpha}{2}\nbr{\nabla F(\wb^{(k,0)})}  \nn \\
    & \quad + \left(\frac{e^{1/10}}{2}+ \frac{\text{exp}(e^{1/10}/10)e^{1/10}(\kbar-1)}{2} + \frac{\text{exp}(e^{1/10}/10)e^{1/5}\lr(B-1)L(\kbar-1)}{2}\right)\lr^2\kbar LB^2\nbr{\nabla F(\wb^{(k,0)})}^2 \nn \\
    & \leq -\lr \kbar B \nbr{\nabla F(\wb^{(k,0)})}^2
    +\frac{L}{2}\expt_k \left[\nbr{\wb^{(k+1,0)}-\wb^{(k,0)}}^2\right] \nn \\
    & \quad + \frac{47}{20}\sqrt{\log\left(\frac{4MBK}{\delta}\right)}\lr^2\kbar L(B^{3/2}-1)\nub\nbr{\nabla F(\wb^{(k,0)})}+\frac{63\lr^2\kbar LB(B-1)\nu}{100}\nbr{\nabla F(\wb^{(k,0)})} \nn \\
    & \quad + \frac{31 \lr^2 B^2 L \kbar(\kbar-1) \alpha}{50}\nbr{\nabla F(\wb^{(k,0)})} \nn \\
    & \quad + \left( \frac{e^{1/10}}{20}+ \frac{\text{exp}(e^{1/10}/10)e^{1/10}}{20} + \frac{\text{exp}(e^{1/10}/10)e^{1/5}}{200}\right)\lr\kbar B\nbr{\nabla F(\wb^{(k,0)})}^2 \nn \\
    & \leq \frac{13}{100} \lr \kbar B\nbr{\nabla F(\wb^{(k,0)})}^2 + \frac{47}{20} \sqrt{\log \left( \frac{4MBK}{\delta} \right)} \lr^2 \kbar L(B^{3/2}-1) \nub \nbr{\nabla F(\wb^{(k,0)})} \nn \\
    & \quad + \frac{63\lr^2\kbar LB(B-1)\nu}{100}\nbr{\nabla F(\wb^{(k,0)})} + \frac{31\lr^2B^2L\kbar(\kbar-1)\alpha}{50}\nbr{\nabla F(\wb^{(k,0)})}. \label{eq:9-0-5}
\end{flalign}
where in the last two bounds we use $\lr=\log(MBK^2)/\mu \kbar BK,~K\geq 10\kappa\log{(MBK^2)}$. Next we bound the third term in the RHS of \Cref{eq:9-0-2} as follows:
\begin{flalign}
    &\begin{aligned}
    &\frac{L}{2}\expt_k \left[\nbr{\wb^{(k+1,0)}-\wb^{(k,0)}}^2\right]=\frac{L}{2}\expt_k \left[\nbr{-\frac{\lr}{N}\sum_{i=1}^\kbar\sum_{m\in\st^{(k,i)}}\sum_{l=0}^{B-1}\nabla F_{m,\pi_m^k(l)}(\wb^{(k,0)})+\lr^2\rb^{(k,0)}}^2\right]
    \end{aligned} \nn \\
    &\begin{aligned}
    &\leq L\expt_k \left[\nbr{-\frac{\lr}{N}\sum_{i=1}^\kbar\sum_{m\in\st^{(k,i)}}\sum_{l=0}^{B-1}\nabla F_{m,\pi_m^k(l)}(\wb^{(k,0)})}^2\right]+L\expt_k \left[\nbr{\lr^2\rb^{(k,0)}}^2\right]
    \end{aligned} \nn \\
    &\begin{aligned}
    &\leq L\lr^2B^2 \kbar^2 \expt_k \left[\nbr{\frac{1}{\kbar} \sum_{i=1}^\kbar \frac{1}{N} \sum_{m\in\st^{(k,i)}} \nabla F_{m}(\wb^{(k,0)})}^2\right] + 3L \lr^4 \expt_k \left[ \nbr{\rb_1^{(k,0)}}^2 + \nbr{\rb_2^{(k,0)}}^2 + \lr^2\nbr{\rb_3^{(k,0)}}^2 \right]
    \end{aligned} \label{eq:9-0-5-1} \\
    & \leq L\lr^2B^2\kbar^2 \left[ \frac{1}{N} \left( \frac{M/\kbar-N}{M/\kbar-1} \right) \gamma^2 + \nbr{\nabla F(\wb^{(k,0)})}^2 \right] + 3 L\lr^4 \expt_k \left[ \nbr{\rb_1^{(k,0)}}^2 + \nbr{\rb_2^{(k,0)}}^2 + \lr^2\nbr{\rb_3^{(k,0)}}^2 \right]. \label{eq:9-0-6} 
\end{flalign} 
where \Cref{eq:9-0-5-1} uses $\|a+b+c\|^2\leq3(\|a\|^2+\|b\|^2+\|c\|^2)$ and \Cref{eq:9-0-6} uses \Cref{eq:3-1-6}. Plugging the bound from \Cref{lem:error_normsq_shuffleSGD} in \Cref{eq:9-0-6} we have
\begin{flalign}
    & \frac{L}{2}\expt_k \left[\nbr{\wb^{(k+1,0)}-\wb^{(k,0)}}^2\right] 
    \leq \frac{21 L \lr^2 B^2 \kbar^2}{10N} \left(\frac{M/\kbar-N}{M/\kbar-1}\right)\gamma^2+\frac{31\lr^3\kbar L^2 (B-1)^2}{10}\log{(4MBK/\delta)}\nub^2 \nn \\
    & \quad + \frac{\lr^3\kbar L^2B(B-1)^2\nu^2}{2} + \frac{\lr\kbar B}{20}\nbr{\nabla F(\wb^{(k,0)})}^2+\frac{31\kbar(\kbar-1)^2B^3L^2\lr^3\alpha^2}{50}. \label{eq:9-2-4}
\end{flalign}
Finally, substituting \eqref{eq:9-0-5} and \eqref{eq:9-0-6} into \Cref{eq:9-0-2} we get
\begin{flalign}
    & \expt_k[F(\wb^{(k+1,0)})]-F(\wb^{(k,0)}) \nn \\
    & \leq -\frac{82}{100} \lr \kbar B \nbr{\nabla F(\wb^{(k,0)})}^2+\frac{21L\lr^2B^2\kbar^2}{10N}\left(\frac{M/\kbar-N}{M/\kbar-1}\right)\gamma^2 \nn \\
    & \quad + \left( \frac{\lr^{1/2} \kbar^{1/2}B^{1/2}}{10}\nbr{\nabla F(\wb^{(k,0)})}\right)\left(\frac{47\lr^{3/2}\kbar^{1/2}L(B^{3/2}-1)\sqrt{\log\left(\frac{4MBK}{\delta}\right)}\nub}{2B^{1/2}}\right) \nn \\
    & \quad + \left( \frac{\lr^{1/2}\kbar^{1/2}B^{1/2}}{10}\nbr{\nabla F(\wb^{(k,0)})}\right)\left(\frac{63\lr^{3/2}\kbar^{1/2}LB^{1/2}(B-1)\nu}{10}\right) \nn \\
    & \quad + \left( \frac{\lr^{1/2} \kbar^{1/2}B^{1/2}}{25} \nbr{\nabla F(\wb^{(k,0)})} \right) \left( \frac{31\lr^{3/2}\kbar^{1/2}(\kbar-1)LB^{3/2}\alpha}{2}\right) \nn \\
    & \quad + \frac{31\lr^3\kbar L^2 (B-1)^2}{10} \log{(4MBK/\delta)} \nub^2+\frac{\lr^3\kbar L^2B(B-1)^2\nu^2}{2}+\frac{31\kbar(\kbar-1)^2B^3L^2\lr^3\alpha^2}{50} \nn \\
    \Rightarrow & \expt[F(\wb^{(k+1,0)})]-F^* \leq (1-\mu\lr B\kbar)(\expt[F(\wb^{(k,0)})]-F^*)+\frac{21L\lr^2B^2\kbar^2}{10N}\left(\frac{M/\kbar-N}{M/\kbar-1}\right)\gamma^2 \nn \\
    & \quad + \left(\frac{47^2(B^{3/2}-1)^2}{8B}+\frac{31(B-1)^2}{10}\right)\lr^{3}\kbar L^2\log\left(\frac{4MBK}{\delta}\right)\nub^2+\left(\frac{63^2}{200}+\frac{1}{2}\right)\lr^3\kbar L^2B(B-1)^2\nu^2 \nn \\
    & \quad + 121\kbar(\kbar-1)^2B^3L^2\lr^3\alpha^2. \label{eq:9-3-0}~~~~\text{(}\because~\Cref{lem:Young})
\end{flalign}
Unrolling \Cref{eq:9-3-0} we have
\begin{flalign}
    &\begin{aligned} 
    &\expt[F(\wb^{(k+1,0)})]-F^*\leq (1-\mu\lr\kbar B)^K(F(\wb^{(0,0)})-F^*)+\frac{21\kappa\lr B\kbar}{10N}\left(\frac{M/\kbar-N}{M/\kbar-1}\right)\gamma^2\\
    &+\left(\frac{47^2(B^{3/2}-1)^2}{8B^2}+\frac{31(B-1)^2}{10B}\right)\lr^{2}\kappa L\log\left(\frac{4MBK}{\delta}\right)\nub^2+\left(\frac{63^2}{200}+\frac{1}{2}\right)\lr^2\kappa L(B-1)^2\nu^2\\
    &+121(\kbar-1)^2B^2L\kappa\lr^2\alpha^2
    \end{aligned}
\end{flalign}
With $\lr=\log(MBK^2)/\mu B\kbar K$, we have
\begin{flalign}
    &\expt[F(\wb^{(K,0)})]-F^* \nn \\
    &\begin{aligned} 
    &\leq \frac{F(\wb^{(0,0)})-F^*}{MBK^2}+\frac{21\kappa \log(MBK^2)}{10\mu N K}\left(\frac{M/\kbar-N}{M/\kbar-1}\right)\gamma^2 \nn \\
    &\quad +\left(\frac{47^2(B^{3/2}-1)^2}{8B^4}+\frac{31(B-1)^2}{10B^3}\right)\frac{\kappa^2 \log\left(4MBK/{\delta}\right)\log^2(MBK^2)\nub^2}{\mu \kbar^2 K^2}+\frac{21\kappa^2 (B-1)^2\log^2(MBK^2)\nu^2}{10\mu B^2\kbar^2K^2}\\
    &\quad +\frac{121(\kbar-1)^2\kappa^2\log^2(MBK^2)\alpha^2}{\mu\kbar^2K^2}
    \end{aligned}\\
    &\begin{aligned} 
    &= \frac{F(\wb^{(0,0)})-F^*}{MBK^2}+\bigto{\frac{\kappa\gamma^2}{\mu N K}\left(\frac{M/\kbar-N}{M/\kbar-1}\right)}+\bigto{\left(\frac{(B^{3/2}-1)^2}{B^4}+\frac{(B-1)^2}{B^3}\right)\frac{\kappa^2\nub^2}{\mu \kbar^2 K^2}} \nn \\
    &\quad +\bigto{\frac{\kappa^2 (B-1)^2\nu^2}{\mu B^2\kbar^2K^2}}+\bigto{\frac{\kappa^2(\kbar-1)^2\alpha^2}{\mu \kbar^2  K^2}},
    \end{aligned}
\end{flalign}
which concludes the proof.
\end{proof}

\subsection{Proof for \Cref{theo:compssgdlrr}}
Recall that the total cost for CyCP+Shuffled SGD and LocalRR respectively is as follows:
\begin{flalign}
&\begin{aligned}C_{\text{SSGD}}(\epsilon)=
\bigto{
\frac{c_{\text{SSGD}} \kappa}{\sqrt{\mu \epsilon}}\left(\frac{\kbar \sqrt{\mu}}{\kappa \sqrt{MB}}+{\nu}+\kbar\alpha+\frac{\overline{\nu}}{\sqrt{B}}\right)} 
\end{aligned}  \\
&C_{\text{LocalRR}}(\epsilon)=\bigto{\frac{\kbar c_{\text{SSGD}}}{\sqrt{\epsilon}}\left(\frac{1}{\sqrt{MB}}+\nu+\frac{\overline{\nu}}{\sqrt{B}}\right)} 
\end{flalign}

For $C_{\text{SSGD}}(\epsilon)<C_{\text{LocalRR}}(\epsilon)$ to be true, we need to have
Hence, equivalently, we need to have
\begin{align}
(1-\kbar)\nu+\kbar\alpha+(1-\kbar)\frac{\nub}{\sqrt{B}}<0. \nn
\end{align}
Since we have that $\kbar=M/N$, we have
\begin{align}
\left(\frac{N-M}{N}\right)\nu+\frac{M\alpha}{N}+\frac{(N-M)\nub}{N\sqrt{B}}<0\\
({N-M})\nu+{M\alpha}+\frac{(N-M)\nub}{\sqrt{B}}<0 \nn \\
\left(\alpha-\nu-\frac{\nub}{\sqrt{B}}\right)M<-N\nu-\frac{N\nub}{\sqrt{B}} \nn \\
\left(-\gamma-\frac{\nub}{\sqrt{B}}\right)M<-N\left(\alpha+\gamma+\frac{\nub}{\sqrt{B}}\right) \nn \\
M>N\left(\alpha+\gamma+\frac{\nub}{\sqrt{B}}\right)/\left(\gamma+\frac{\nub}{\sqrt{B}}\right)=N\left(1+\frac{\alpha}{\gamma+\frac{\nub}{\sqrt{B}}}\right),  \nn 
\end{align}
completing the proof.
\comment{
\begin{align}
\frac{\kbar-1}{\sqrt{MB}}+\left(\frac{\kbar}{\sqrt{M}}-1\right)\nu-\kbar\left(1-\frac{1}{\sqrt{M}}\right)\nu-\frac{(\kbar-1)\nub}{\sqrt{B}}\\
=\frac{\kbar-1}{\sqrt{MB}}+\frac{2\kbar\nu}{\sqrt{M}}-(\kbar+1)\nu-\frac{(\kbar-1)\nub}{\sqrt{B}}\\
=\left(\frac{M}{N}-1\right)\frac{1}{\sqrt{MB}}-\left(\frac{M}{N}+1-\frac{2M}{N\sqrt{M}}\right)\nu-\left(\frac{M}{N}-1\right)\frac{\nub}{\sqrt{B}}~~~~~(\because \kbar=M/N)\\
=\frac{\sqrt{M}}{N\sqrt{B}}-\frac{1}{\sqrt{MB}}-\frac{M\nu}{N}-\nu+\frac{2\sqrt{M}\nu}{N}-\frac{M\nub}{N\sqrt{B}}+\frac{\nub}{\sqrt{B}}<0\\
\therefore \sqrt{M}\left(\frac{1}{N\sqrt{B}}+\frac{2\nu}{N}\right)<M\left(\frac{\nub}{N\sqrt{B}}+\frac{\nu}{N}\right)+\frac{1}{\sqrt{MB}}+\nu-\frac{\nub}{\sqrt{B}}\\
\leq M\left(\frac{\nub}{N\sqrt{B}}+\frac{\nu}{N}\right)+\frac{1}{\sqrt{B}}+\nu-\frac{\nub}{\sqrt{B}}~~~~~(\because M\geq 1)\\
\frac{1}{N\sqrt{B}}+\frac{2\nu}{N} \leq \frac{M}{\sqrt{M}}\left(\frac{\nub}{N\sqrt{B}}+\frac{\nu}{N}\right)+\frac{1}{\sqrt{M}}\left(\frac{1-\nub}{\sqrt{B}}+\nu\right)\\
\frac{1}{N\sqrt{B}}+\frac{2\nu}{N} \leq \frac{M}{N}\left(\frac{\nub}{N\sqrt{B}}+\frac{\nu}{N}\right)+\frac{1}{N}\left(\frac{1-\nub}{\sqrt{B}}+\nu\right)~~~~~(\because M>N^2)\\
\therefore N\left(\frac{\frac{1}{\sqrt{B}}+{2\nu}-\nu+\frac{\nub-1}{\sqrt{B}}}{\frac{\nub}{\sqrt{B}}+{\nu}}\right)=N\left(\frac{\nu+\frac{\nub}{\sqrt{B}}}{\frac{\nub}{\sqrt{B}}+{\nu}}\right)=N\leq M
\end{align}
}

\subsection{Proofs on Intermediate Lemmas}
\label{sec:proofs_int_results_shuffleSGD}

\begin{proof}[Proof of \cref{lem:error_norm_shuffleSGD}]
In the following, we bound all three components of $\rb^{(k,0)}$ separately as follows.
\begin{align}
    & \nbr{\expt_k[\rb_1^{(k,0)}]} = \nbr{\expt_k \left[\frac{1}{N}\sum_{i=1}^\kbar\sum_{m\in\st^{(k,i)}}\sum_{l=0}^{B-2}\left(\prod_{t=B-1}^{l+2}(\ibd-\lr\hbbt_{m,t}^{(k,i-1)})\right)\hbbt_{m,l+1}^{(k,i-1)}\sum_{j=0}^l \nabla F_{m,\pi_m^k(j)}(\wb^{(k,0)})\right]} \nn \\
    &= 
    \nbr{\frac{\kbar}{M}\sum_{i=1}^\kbar\sum_{m\in\sigma(i)}\sum_{l=0}^{B-2} \expt_k \left[ \left(\prod_{t=B-1}^{l+2}(\ibd-\lr\hbbt_{m,t}^{(k,i-1)})\right)\hbbt_{m,l+1}^{(k,i-1)}\sum_{j=0}^l \nabla F_{m,\pi_m^k(j)}(\wb^{(k,0)}) \right]} \tag{$\because$ unbiased client sampling} \\
    & \leq \frac{\kbar}{M}\sum_{i=1}^\kbar\sum_{m\in\sigma(i)}\sum_{l=0}^{B-2}\nbr{\left(\prod_{t=B-1}^{l+2}(\ibd-\lr\hbbt_{m,t}^{(k,i-1)})\right)\hbbt_{m,l+1}^{(k,i-1)}\sum_{j=0}^l \nabla F_{m,\pi_m^k(j)}(\wb^{(k,0)})} \nn \\
    & \leq \frac{\kbar}{M}\sum_{i=1}^\kbar\sum_{m\in\sigma(i)}\sum_{l=0}^{B-2}\nbr{\left(\prod_{t=B-1}^{l+2}(\ibd-\lr\hbbt_{m,t}^{(k,i-1)})\right)}\nbr{\hbbt_{m,l+1}^{(k,i-1)}}\nbr{\sum_{j=0}^l \nabla F_{m,\pi_m^k(j)}(\wb^{(k,0)})} \nn ~~~~\text{(}\because\text{Submultiplicativity of Norms)}\\
    & \leq \frac{(1+\lr L)^B\kbar L}{M}\sum_{i=1}^\kbar\sum_{m\in\sigma(i)}\sum_{l=0}^{B-2}\nbr{\sum_{j=0}^l \nabla F_{m,\pi_m^k(j)}(\wb^{(k,0)})} \tag{$\because \nbr{\hbbt_{m,l+1}^{(k,i-1)}} \leq L$} \\
    & \leq  \frac{e^{1/10}\kbar L}{M}\sum_{i=1}^\kbar\sum_{m\in\sigma(i)}\sum_{l=0}^{B-2}\nbr{\sum_{j=0}^l \nabla F_{m,\pi_m^k(j)}(\wb^{(k,0)})} \tag{Using $\lr \leq \frac{1}{10 B L}$} \\
    & \leq \frac{e^{1/10}\kbar L}{M}\sum_{i=1}^\kbar\sum_{m\in\sigma(i)}\sum_{l=0}^{B-2}\left(\nub\sqrt{8(l+1)\log\left(\frac{4MBK}{\delta}\right)}+(l+1)\nbr{\nabla F_m(\wb^{(k,0)})}\right) \tag{Lemma 8 in \cite{yun2022shuf}} \\
    & \leq \frac{2e^{1/10}\kbar L(B^{3/2}-1)\nub}{3}\sqrt{8\log\left(\frac{4MBK}{\delta}\right)}+\frac{e^{1/10}\kbar LB(B-1)}{2M}\sum_{i=1}^\kbar\sum_{m\in\sigma(i)}\nbr{\nabla F_m(\wb^{(k,0)})} \nn \\ 
    & \leq \frac{2e^{1/10}\kbar L(B^{3/2}-1)\nub}{3}\sqrt{8\log\left(\frac{4MBK}{\delta}\right)} +\frac{e^{1/10}\kbar LB(B-1)\nu}{2} \left[ \nu + \nbr{\nabla F(\wb^{(k,0)})} \right]. \label{eq:r1_norm_bd}
\end{align}
where in the last bound we use \Cref{lem1}. We can similarly bound the next noise term as
\begin{align}
    & \nbr{\expt_k[\rb_2^{(k,0)}]} = \nbr{\expt_k \left[\frac{1}{N}\sum_{i=1}^{\kbar-1}\left(\prod_{j=\kbar}^{i+2}(\ibd-\lr\tbb^{(k,j)})\right)\tbb^{(k,i+1)}\left(\sum_{j=1}^i\sum_{m\in\st^{(k,j)}}\sum_{l=0}^{B-1}\nabla F_{m,\pi_m^k(l)}(\wb^{(k,0)})\right)\right]} \nn \\
    & \leq \frac{1}{N}\sum_{i=1}^{\kbar-1}\nbr{\expt_k \left[\left(\prod_{j=\kbar}^{i+2}(\ibd-\lr\tbb^{(k,j)})\right)\tbb^{(k,i+1)}\left(\sum_{j=1}^i\sum_{m\in\st^{(k,j)}}\sum_{l=0}^{B-1}\nabla F_{m,\pi_m^k(l)}(\wb^{(k,0)})\right)\right]} \tag{Using \Cref{lem:jensens}} \\
    & \leq \frac{1}{N}\sum_{i=1}^{\kbar-1}\nbr{\expt_k \left[\left(\prod_{j=\kbar}^{i+2}(\ibd-\lr\tbb^{(k,j)})\right)\right]}\nbr{\expt_k \left[\tbb^{(k,i+1)}\right]}\nbr{\expt_k \left[\sum_{j=1}^i\sum_{m\in\st^{(k,j)}}\sum_{l=0}^{B-1}\nabla F_{m,\pi_m^k(l)}(\wb^{(k,0)})\right]} \label{eq:9-0-3}
\end{align}
Since
\begin{align}
    \nbr{\tbb^{(k,i)}} &= \nbr{\frac{1}{N}\sum_{m\in\mathcal{S}^{(k,i)}}
    \sum_{l=0}^{B-1} \left(\prod_{j=B-1}^{l+1}\left(\ib-\lr\hbbt_{m,j}^{(k,i-1)}\right)\right)\hbbl_{m,l}^{(k,i-1)}} \nn \\
    & \leq \frac{1}{N}\sum_{m\in\mathcal{S}^{(k,i)}}
    \sum_{l=0}^{B-1}\nbr{\left(\prod_{j=B-1}^{l+1}\left(\ib-\lr\hbbt_{m,j}^{(k,i-1)}\right)\right)\hbbl_{m,l}^{(k,i-1)}} \nn \\
    & \leq \frac{1}{N}\sum_{m\in\mathcal{S}^{(k,i)}}
    \sum_{l=0}^{B-1}\nbr{\prod_{j=B-1}^{l+1}\left(\ib-\lr\hbbt_{m,j}^{(k,i-1)}\right)}\nbr{\hbbl_{m,l}^{(k,i-1)}} \nn \\
    & \leq \frac{1}{N}\sum_{m\in\mathcal{S}^{(k,i)}}
    \sum_{l=0}^{B-1}(1+\lr L)^BL \leq e^{1/10}BL, \label{eq:bd_norm_T}
\end{align}
since $\nbr{\hbbl_{m,l}^{(k,i-1)}} \leq L$ and $\lr\leq1/(10LB\kbar)$. We can bound \Cref{eq:9-0-3} as 
\begin{align}
    & \nbr{\expt_k[\rb_2^{(k,0)}]} \leq (1+e^{1/10}\lr BL)^\kbar e^{1/10}BL \sum_{i=1}^{\kbar-1}\nbr{\expt_k \left[\sum_{j=1}^i \frac{1}{N} \sum_{m\in\st^{(k,j)}}\sum_{l=0}^{B-1}\nabla F_{m,\pi_m^k(l)}(\wb^{(k,0)})\right]} \nn \\
    &= (1+e^{1/10}\lr BL)^\kbar e^{1/10}BL \sum_{i=1}^{\kbar-1}\nbr{\sum_{j=1}^i \frac{\kbar}{M} \sum_{m\in\sigma^{(j)}}\sum_{l=0}^{B-1}\nabla F_{m,\pi_m^k(l)}(\wb^{(k,0)})} \tag{$\because$ unbiased client sampling} \\
    &= \frac{(1+e^{1/10}\lr BL)^\kbar e^{1/10}B^2L\kbar}{M}\sum_{i=1}^{\kbar-1}\nbr{\sum_{j=1}^i\sum_{m\in\sigma^{(j)}}\nabla F_{m}(\wb^{(k,0)})} \nn \\
    & \leq {\text{exp}(e^{1/10}/10)e^{1/10}B^2L}\sum_{i=1}^{\kbar-1}\left(i\nbr{\nabla F(\wb^{(k,0)})}+i\alpha\right) \tag{using \cref{lem0-0}, and $\lr \leq \frac{1}{10 L B \kbar}$} \\
    & \leq \text{exp}(e^{1/10}/10)e^{1/10}B^2L \frac{\kbar(\kbar-1)}{2} \left[ \nbr{\nabla F(\wb^{(k,0)})} + \alpha \right]. \label{eq:r2_norm_bd}
\end{align}
For the last noise term, we have
\begin{flalign}
    &\begin{aligned}
        &\nbr{\expt_k[\rb_3^{(k,0)}]}\leq\\
        &\nbr{\expt_k \left[\frac{1}{N}\sum_{i=1}^{\kbar-1}\left(\prod_{j=\kbar}^{i+2}(\ibd-\lr\tbb^{(k,j)})\right)\tbb^{(k,i+1)}\left(\sum_{j=1}^i\sum_{m\in\st^{(k,j)}}\sum_{l=0}^{B-2}\left(\prod_{t=B-1}^{l+2}(\ibd-\lr\hbbt_{m,t}^{(k,i-1)})\right)\hbbt_{m,l+1}^{(k,i-1)}\sum_{j'=0}^l\nabla F_{m,\pi_m^k(j')}(\wb^{(k,0)})\right)\right]} \nn
    \end{aligned}\\
    &\begin{aligned}
        & \leq \sum_{i=1}^{\kbar-1}\nbr{\expt_k \left[\prod_{j=\kbar}^{i+2}(\ibd-\lr\tbb^{(k,j)})\right]}\nbr{\expt_k \left[\tbb^{(k,i+1)}\right]}\\
        & \qquad \times \nbr{\expt_k \left[\sum_{j=1}^i \frac{1}{N} \sum_{m\in\st^{(k,j)}} \sum_{l=0}^{B-2}\left(\prod_{t=B-1}^{l+2}(\ibd-\lr\hbbt_{m,t}^{(k,i-1)})\right)\hbbt_{m,l+1}^{(k,i-1)}\sum_{j'=0}^l\nabla F_{m,\pi_m^k(j')}(\wb^{(k,0)})\right]} \nn
    \end{aligned}\\
    & \leq (1+\lr e^{1/10}BL)^\kbar e^{1/10}BL \sum_{i=1}^{\kbar-1}\nbr{\sum_{j=1}^i \frac{\kbar}{M} \sum_{m\in\sigma^{(j)}}\sum_{l=0}^{B-2}\left(\prod_{t=B-1}^{l+2}(\ibd-\lr\hbbt_{m,t}^{(k,i-1)})\right)\hbbt_{m,l+1}^{(k,i-1)}\sum_{j'=0}^l\nabla F_{m,\pi_m^k(j')}(\wb^{(k,0)})} \tag{Using \eqref{eq:bd_norm_T}} \\
    & \leq \frac{\text{exp}(e^{1/10}/10)e^{1/10}BL\kbar}{M}\sum_{i=1}^{\kbar-1}\sum_{j=1}^i\sum_{m\in\sigma^{(j)}}\sum_{l=0}^{B-2} \expt_k \left[ \nbr{\prod_{t=B-1}^{l+2}(\ibd-\lr\hbbt_{m,t}^{(k,i-1)})}\nbr{\hbbt_{m,l+1}^{(k,i-1)}}\nbr{\sum_{j'=0}^l\nabla F_{m,\pi_m^k(j')}(\wb^{(k,0)})} \right] \tag{Using $\lr \leq \frac{1}{10 B L \kbar}$} \\
    &\leq \frac{(1+\lr L)^B\text{exp}(e^{1/10}/10)e^{1/10}BL^2\kbar(\kbar-1)}{M} \sum_{j=1}^\kbar \sum_{m\in\sigma^{(j)}} \sum_{l=0}^{B-2} \expt_k \nbr{\sum_{j'=0}^l \nabla F_{m,\pi_m^k(j')}(\wb^{(k,0)})} \tag{Using $\nbr{\hbbt_{m,l+1}^{(k,j)}} \leq L$, for all $j$}\\
    &\leq \frac{\text{exp}(e^{1/10}/10)e^{1/5}BL^2\kbar(\kbar-1)}{M}\sum_{j=1}^\kbar\sum_{m\in\sigma^{(j)}}\sum_{l=0}^{B-2}\left(\nub\sqrt{8(l+1)\log\left(\frac{4MBK}{\delta}\right)}+(l+1)\nbr{\nabla F_m(\wb^{(k,0)})}\right) \tag{Lemma 8 in \cite{yun2022shuf}} \\
    &\begin{aligned}
        & \leq \frac{2\text{exp}(e^{1/10}/10)e^{1/5}B(B^{3/2}-1)L^2\kbar(\kbar-1)}{3}\nub\sqrt{8\log\left(\frac{4MBK}{\delta}\right)} \nn \\
        & \quad +
        \frac{\text{exp}(e^{1/10}/10)e^{1/5}B^2(B-1)L^2\kbar(\kbar-1)}{2M}\sum_{j=1}^\kbar\sum_{m\in\sigma^{(j)}}\nbr{\nabla F_m(\wb^{(k,0)})}
    \end{aligned}\\
    &\begin{aligned}
        & \leq \text{exp}(e^{1/10}/10)e^{1/5}B L^2\kbar(\kbar-1) \left[\frac{2(B^{3/2}-1)}{3}\nub\sqrt{8\log\left(\frac{4MBK}{\delta}\right)}+
        \frac{B(B-1)\nu}{2} +
        \frac{B(B-1)}{2}\nbr{\nabla F(\wb^{(k,0)})} \right].
    \end{aligned}
    \label{eq:r3_norm_bd}
\end{flalign}
Finally, using \eqref{eq:r1_norm_bd}, \eqref{eq:r2_norm_bd}, \eqref{eq:r3_norm_bd}, in
\begin{flalign}
& \nbr{\expt_k[\rb^{(k,0)}]}\leq \nbr{\expt_k[\rb_1^{(k,0)}]}+\nbr{\expt_k[\rb_2^{(k,0)}]}+\lr\nbr{\expt_k[\rb_3^{(k,0)}]} \nn 
\end{flalign}
we get the final bound.
\end{proof}

\begin{proof}[Proof of \cref{lem:error_normsq_shuffleSGD}]
Similar to how we bounded the norm of the noise terms, we can bound the norm square of the noise terms as following
\begin{flalign}
    & 3\expt_k \left[\nbr{\rb_1^{(k,0)}}^2\right] \nn \\
    &= 3\expt_k \left[\nbr{\frac{1}{N}\sum_{i=1}^\kbar\sum_{m\in\st^{(k,i)}}\sum_{l=0}^{B-2}\left(\prod_{t=B-1}^{l+2}(\ibd-\lr\hbbt_{m,t}^{(k,i-1)})\right)\hbbt_{m,l+1}^{(k,i-1)}\sum_{j=0}^l \nabla F_{m,\pi_m^k(j)}(\wb^{(k,0)})}^2\right] \nn \\
    & \leq 3 \kbar (B-1) \expt_k \left[\frac{1}{N}\sum_{i=1}^\kbar\sum_{m\in\st^{(k,i)}}\sum_{l=0}^{B-2} \nbr{\left(\prod_{t=B-1}^{l+2}(\ibd-\lr\hbbt_{m,t}^{(k,i-1)})\right)\hbbt_{m,l+1}^{(k,i-1)}\sum_{j=0}^l \nabla F_{m,\pi_m^k(j)}(\wb^{(k,0)})}^2\right] \tag{Using \cref{lem:jensens}, \cref{lem:sum_of_squares}}nn \\
    & \leq \frac{3\kbar^2(B-1)}{M}\sum_{i=1}^\kbar\sum_{m\in\sigma^{(i)}}\sum_{l=0}^{B-2}(1+\lr L)^{2B}L^2 \expt_k \nbr{\sum_{j=0}^l \nabla F_{m,\pi_m^k(j)}(\wb^{(k,0)})}^2 \nn \\
    & \leq \frac{6e^{1/5}\kbar^2L^2(B-1)}{M}\sum_{i=1}^\kbar\sum_{m\in\sigma^{(i)}}\sum_{l=0}^{B-2}\left((l+1)(8\log(4MBK/\delta)\nub^2+(l+1)^2\nbr{\nabla F_m(\wb^{(k,0)})}^2\right) \tag{Using Lemma 8 in \cite{yun2022shuf} and $\lr\leq1/(10BL\kbar)$} \nn \\
    & \leq 4e^{1/5}\kbar^2L^2B(B-1)^2 \left[ 6 \log(4MBK/\delta) \nub^2 + B \nu^2 + B \nbr{\nabla F(\wb^{(k,0)})}^2 \right], \label{eq:9-1-0}
\end{flalign}
and
\begin{flalign}
    &3\expt_k \left[\nbr{\rb_2^{(k,0)}}^2\right] \nn \\
    & =3\expt_k \left[\nbr{\frac{1}{N}\sum_{i=1}^{\kbar-1}\left(\prod_{j=\kbar}^{i+2}(\ibd-\lr\tbb^{(k,j)})\right)\tbb^{(k,i+1)}\left(\sum_{j=1}^i\sum_{m\in\st^{(k,j)}}\sum_{l=0}^{B-1}\nabla F_{m,\pi_m^k(l)}(\wb^{(k,0)})\right)}^2\right] \nn \\
    &\leq\frac{3(\kbar-1)}{N^2}\sum_{i=1}^{\kbar-1}\expt_k \left[ \nbr{\prod_{j=\kbar}^{i+2}(\ibd-\lr\tbb^{(k,j)})}^2 \nbr{\tbb^{(k,i+1)}}^2 \nbr{\sum_{j=1}^i\sum_{m\in\st^{(k,j)}}\sum_{l=0}^{B-1}\nabla F_{m,\pi_m^k(l)}(\wb^{(k,0)})}^2 \right] \nn \tag{\Cref{lem:jensens} and submultiplicativity of norms}\\
    &\leq\frac{3(\kbar-1)}{N^2}\sum_{i=1}^{\kbar-1}\expt_k \left[(1+\lr BLe^{1/10})^{2\kbar}(B^2L^2e^{1/5})\nbr{\sum_{j=1}^i\sum_{m\in\st^{(k,j)}}\sum_{l=0}^{B-1}\nabla F_{m,\pi_m^k(l)}(\wb^{(k,0)})}^2\right] \nn \tag{Using \eqref{eq:bd_norm_T}} \\
    &\leq\frac{3(\kbar-1)B^4L^2\text{exp}(e^{1/10}/5)e^{1/5}}{N^2}\sum_{i=1}^{\kbar-1}\expt_k \left[\nbr{\sum_{j=1}^i\sum_{m\in\st^{(k,j)}}\nabla F_{m}(\wb^{(k,0)})}^2\right] \nn \tag{Using $\lr\leq1/(10BL\kbar)$} \\
    &\leq\frac{3(\kbar-1)B^4L^2\text{exp}(e^{1/10}/5)e^{1/5}}{N^2} \left( \frac{1}{3} (\kbar-1) \kbar^2 N \left[ \left(\frac{M/\kbar-N}{M/\kbar-1}\right)\gamma^2 + 2N \left(\nbr{\nabla F(\wb^{(k,0)})}^2+{\alpha^2}\right) \right] \right) \nn \tag{Using \Cref{eq:3-2-2}}\\
    &= \frac{(\kbar-1)^2 \kbar^2 B^4L^2\text{exp}(e^{1/10}/5)e^{1/5}}{N} \left[ \left(\frac{M/\kbar-N}{M/\kbar-1}\right)\gamma^2 + 2N \left(\nbr{\nabla F(\wb^{(k,0)})}^2+{\alpha^2}\right) \right]. \label{eq:9-1-1}
\end{flalign}
and using the same techniques to bound \Cref{eq:9-1-0} and \Cref{eq:9-1-1} we have the following:
\begin{flalign}
    &\begin{aligned}
    &3\lr^2\expt_k \left[\nbr{\rb_3^{(k,0)}}^2\right]=3\lr^2\expt_k \left[\left\|\frac{1}{N}\sum_{i=1}^{\kbar-1}\left(\prod_{j=\kbar}^{i+2}(\ibd-\lr\tbb^{(k,j)})\right)\tbb^{(k,i+1)}\right.\right.\\
    & \qquad \left.\left.\times\left(\sum_{j=1}^i\sum_{m\in\st^{(k,j)}}\sum_{l=0}^{B-2}\left(\prod_{t=B-1}^{l+2}(\ibd-\lr\hbbt_{m,t}^{(k,i-1)})\right)\hbbt_{m,l+1}^{(k,i-1)}\sum_{j'=0}^l\nabla F_{m,\pi_m^k(j')}(\wb^{(k,0)})\right)\right\|^2\right] \nn
    \end{aligned}\\
    &\begin{aligned}
    &\leq\frac{3\lr^2(\kbar-1)}{N^2}\sum_{i=1}^{\kbar-1}\expt_k \left[\left(\left\|\prod_{j=\kbar}^{i+2}(\ibd-\lr\tbb^{(k,j)})\right\|\left\|\tbb^{(k,i+1)}\right\|\right.\right.\\
    & \qquad \left.\left.\times\left\|\sum_{j=1}^i\sum_{m\in\st^{(k,j)}}\sum_{l=0}^{B-2}\left(\prod_{t=B-1}^{l+2}(\ibd-\lr\hbbt_{m,t}^{(k,i-1)})\right)\hbbt_{m,l+1}^{(k,i-1)}\sum_{j'=0}^l\nabla F_{m,\pi_m^k(j')}(\wb^{(k,0)})\right\|\right)^2\right] \nn
    \end{aligned}\\
    &\begin{aligned}
    &\leq\frac{3\lr^2(\kbar-1)}{N^2}\sum_{i=1}^{\kbar-1}(1+\lr BLe^{1/10})^{2\kbar}(B^2L^2e^{1/5})\\
    & \qquad \times (iN(B-1))\sum_{j=1}^i \expt_k \left[ \sum_{m\in\st^{(k,j)}} \sum_{l=0}^{B-2}\left\|\left(\prod_{t=B-1}^{l+2}(\ibd-\lr\hbbt_{m,t}^{(k,i-1)})\right)\hbbt_{m,l+1}^{(k,i-1)}\sum_{j'=0}^l\nabla F_{m,\pi_m^k(j')}(\wb^{(k,0)})\right\|^2\right] \nn
    \end{aligned}\\
    &\begin{aligned}
    &\leq\frac{3\lr^2(\kbar-1)\kbar B^2(B-1)L^2\text{exp}(e^{1/10}/5)e^{1/5}}{M}\\
    & \qquad \times\sum_{i=1}^{\kbar-1}i\sum_{j=1}^i\sum_{m\in\sigma^{(j)}}\sum_{l=0}^{B-2}\expt_k \left[\left\|\left(\prod_{t=B-1}^{l+2}(\ibd-\lr\hbbt_{m,t}^{(k,i-1)})\right)\hbbt_{m,l+1}^{(k,i-1)}\sum_{j'=0}^l\nabla F_{m,\pi_m^k(j')}(\wb^{(k,0)})\right\|^2\right] \nn
    \end{aligned}\\
    &\begin{aligned}
    &\leq\frac{3\lr^2(\kbar-1)\kbar B^2(B-1)L^2\text{exp}(e^{1/10}/5)e^{1/5}}{M}\\
    & \qquad \times\sum_{i=1}^{\kbar-1}i\sum_{j=1}^i\sum_{m\in\sigma^{(j)}}\sum_{l=0}^{B-2}(1+\lr L)^{2B}L^2\expt_k \left[\left\|\sum_{j'=0}^l\nabla F_{m,\pi_m^k(j')}(\wb^{(k,0)})\right\|^2\right] \nn
    \end{aligned}\\
    &\begin{aligned}
    &\leq\frac{6\lr^2(\kbar-1)\kbar B^2(B-1)L^4\text{exp}(e^{1/10}/5)e^{2/5}}{M}\\
    & \qquad \times\sum_{i=1}^{\kbar-1}i\sum_{j=1}^i\sum_{m\in\sigma^{(j)}}\sum_{l=0}^{B-2}\left(8(l+1)\log{(4MBK/\delta)}\nub^2+(l+1)^2\nbr{\nabla F_m(\wb^{(k,0)})}^2\right) \nn
    \end{aligned}\\
    &\begin{aligned}
    &\leq 12\lr^2(\kbar-1)^2\kbar^2 B^3(B-1)^2L^4\text{exp}(e^{1/10}/5)e^{2/5}\log{(4MBK/\delta)}\nub^2 \\
    & \qquad +\frac{\lr^2(\kbar-1)^2\kbar^2 B^4(B-1)^2L^4\text{exp}(e^{1/10}/5)e^{2/5}}{M}\sum_{j=1}^\kbar\sum_{m\in\sigma^{(j)}}\nbr{\nabla F_m(\wb^{(k,0)})}^2 \nn
    \end{aligned}\\
    &\begin{aligned}
    &\leq 2\lr^2(\kbar-1)^2\kbar^2 B^3(B-1)^2L^4\text{exp}(e^{1/10}/5)e^{2/5} \left[ 6 \log{(4MBK/\delta)}\nub^2 + B \lp \nu^2 + \nbr{\nabla F(\wb^{(k,0)})}^2 \rp \right].
    \end{aligned}
    \label{eq:9-1-2}
\end{flalign}
Using \Cref{eq:9-1-0}, \Cref{eq:9-1-1}, and \Cref{eq:9-1-2} we have
\begin{flalign}
    & L\lr^4\left(3\expt_k \left[\nbr{\rb_1^{(k,0)}}^2\right]+3\expt_k \left[\nbr{\rb_2^{(k,0)}}^2\right]+3\lr^2\expt_k \left[\nbr{\rb_3^{(k,0)}}^2\right]\right) \nn \\
    &\leq \left(24e^{1/5}\lr\kbar LB+\frac{2\lr\kbar BL\text{exp}(e^{1/10}/5)e^{2/5}}{25} \right)\lr^3\kbar L^2 (B-1)^2\log{(4MBK/\delta)}\nub^2 \nn \\
    & \quad +\left(4e^{1/5}\lr\kbar LB+\frac{\lr(\kbar-1) BL\text{exp}(e^{1/10}/5)e^{2/5}}{50}\right)\lr^3\kbar L^2B(B-1)^2\nu^2 \nn \\
    & \quad + \left(4e^{1/5}\lr^3\kbar L^3B^3+12\lr^3\kbar^3B^3L^3\text{exp}(e^{1/10}/5)e^{1/5}+\frac{\lr^3\kbar B^3L^3\text{exp}(e^{1/10}/5)e^{2/5}}{50}\right)\lr\kbar B\nbr{\nabla F(\wb^{(k,0)})}^2 \nn \\
    & \quad + \frac{6(\kbar-1)\lr^4\kbar^3B^4L^3\text{exp}(e^{1/10}/5)e^{1/5}}{N}\left(\frac{M/\kbar-N}{M/\kbar-1}\right)\gamma^2+4\kbar^2(\kbar-1)^2B^4L^3\lr^4\alpha^2\text{exp}(e^{1/10}/5)e^{1/5} \nn \\
    &\leq \frac{31\lr^3\kbar L^2 (B-1)^2}{10}\log{(4MBK/\delta)}\nub^2+\frac{\lr^3\kbar L^2B(B-1)^2\nu^2}{2}+\frac{3\lr\kbar B}{100}\nbr{\nabla F(\wb^{(k,0)})}^2 \nn \\
    & \quad +\frac{6(\kbar-1)\lr^4\kbar^3B^4L^3\text{exp}(e^{1/10}/5)e^{1/5}}{N}\left(\frac{M/\kbar-N}{M/\kbar-1}\right)\gamma^2+4\kbar^2(\kbar-1)^2B^4L^3\lr^4\alpha^2\text{exp}(e^{1/10}/5)e^{1/5}, \label{eq:9-2-3}
\end{flalign}
which completes the proof.
\end{proof}

\end{document}